%% file: main.tex
\documentclass{article}
\usepackage{microtype}
\usepackage{graphicx}
\usepackage{booktabs}
\usepackage[shortlabels]{enumitem}

\usepackage[skins,theorems]{tcolorbox}
\tcbset{highlight math style={enhanced,
colframe=black,colback=white,arc=0pt,boxrule=1.0pt}}

\newcommand{\proposed}{{MF-Diffuser}}
\newcommand{\mfcdm}{{MF-CDMs}}

\usepackage{url}
\usepackage{physics}
\usepackage{amsfonts}
\usepackage{nicefrac}
\usepackage{amssymb}
\usepackage{amsmath}
\usepackage{amsthm}
\usepackage{mathtools}
\usepackage{tikz}
\usetikzlibrary{decorations.pathmorphing, arrows.meta, positioning, calc}
\definecolor{accent}{RGB}{46, 111, 149}
\definecolor{annotgray}{RGB}{102, 102, 102}
\usepackage{bbm}
\usepackage{calc}
\usepackage[font={small}]{caption}
\usepackage{lipsum}
\usepackage{etoolbox}
\usepackage{wrapfig, blindtext}
\usepackage{multirow}
\usepackage{subcaption}
\usepackage{algorithm}
\usepackage[noend]{algorithmic}
\usepackage{pifont}
\usepackage{adjustbox}
\usepackage{hyperref}
\hypersetup{
    colorlinks=true,
    breaklinks=true,
    linkcolor=red,
    filecolor=blue,
    citecolor=blue,
    urlcolor=cyan,
}
\DeclareMathOperator*{\argmax}{arg\,max}
\DeclareMathOperator*{\argmin}{arg\,min}

\usepackage[capitalize,noabbrev]{cleveref}
\usepackage{natbib}

\theoremstyle{plain}
\newtheorem{theorem}{Theorem}[section]
\newtheorem{proposition}[theorem]{Proposition}
\newtheorem{lemma}[theorem]{Lemma}
\newtheorem{corollary}[theorem]{Corollary}
\newtheorem{definition}[theorem]{Definition}
\newtheorem{assumption}[theorem]{Assumption}
\newtheorem*{remark}{Remark}

\newcommand{\ie}{\textit{i}.\textit{e}., }
\newcommand{\eg}{\textit{e}.\textit{g}., }

\newcommand{\traj}{\boldsymbol{\tau}}
\newcommand{\state}{\mathbf{s}}
\newcommand{\action}{\mathbf{a}}

\newcommand{\policy}{\pi}

\newcommand{\qfunc}{Q}
\newcommand{\horizon}{H}

\newcommand{\Xcal}{\mathcal{X}}
\newcommand{\Scal}{\mathcal{S}}
\newcommand{\Acal}{\mathcal{A}}
\newcommand{\Pcal}{\mathcal{P}}
\newcommand{\Jcal}{\mathcal{J}}
\newcommand{\Hcal}{\mathcal{H}}
\newcommand{\Ecal}{\mathcal{E}}
\newcommand{\Wcal}{\mathcal{W}}
\newcommand{\Dcal}{\mathcal{D}}

\newcommand{\Mtwo}{\mathcal{M}_2}

\AtBeginEnvironment{algorithmic}{\small}

\usepackage[preprint]{neurips_2026}

\title{Mean-Field Diffuser: Scaling Offline MARL to Thousands of Agents}

\author{%
  Wenhao Li\\
  Tongji University\\
  \And
  Xiangfeng Wang\\
  East China Normal University\\
  \And
  Bo Jin\\
  Tongji University\\
}

\begin{document}

\maketitle

%=============================================================================
% ABSTRACT
%=============================================================================
\begin{abstract}
Diffusion-based planning has achieved strong results in single-agent offline reinforcement learning, yet scaling to many-agent systems remains intractable due to the curse of dimensionality in the joint trajectory space. We introduce \proposed{}, a framework that lifts trajectory planning to the Wasserstein space of trajectory distributions, where the propagation of chaos ensures a small representative subset of agents captures the full population dynamics. Our approach features a \textit{value-weighted chaotic entropy} objective that reconciles generative fidelity with return maximization, and a hierarchical coarse-to-fine strategy that progressively grows the agent population during denoising. We establish end-to-end suboptimality bounds with four interpretable terms, revealing that mean-field approximation error scales as $\mathcal{O}(\horizon^2/\sqrt{N})$ while offline distribution shift provably does not grow with population size $N$, and prove the generated policy is an approximate mean-field Nash equilibrium with explicit convergence guarantees. Experiments on three mean-field RL benchmarks---spanning stage games, sequential dynamics, and adversarial team competition---show \proposed{} achieves the best return in the majority of settings, with the largest gains on suboptimal offline data and at extreme scales ($N \geq 10^3$).
\end{abstract}

\vspace{-4mm}
%=============================================================================
% INTRODUCTION
%=============================================================================
\section{Introduction}\label{sec:intro}

Offline reinforcement learning (offline RL)~\citep{levine2020offline, lange2012batch} has emerged as a principled framework for learning decision-making policies from pre-collected datasets without further environment interaction. Among recent advances, diffusion-based planning methods such as Diffuser~\citep{janner2022planning} and Decision Diffuser~\citep{ajay2023is} have demonstrated remarkable effectiveness by casting trajectory optimization as a conditional generative modeling problem: a denoising diffusion model is trained to capture the distribution of trajectories in the offline dataset, and high-return plans are generated at inference time via guided sampling.

Despite their empirical success, these diffusion-based planners are inherently designed for \textbf{single-agent} settings. Extending them to \textbf{many-agent systems}—environments populated by hundreds or thousands of interacting agents—poses a fundamental scalability challenge. Consider $N$ homogeneous agents, each with state space $\Scal$ and action space $\Acal$. A joint trajectory over horizon $\horizon$ lives in $\mathbb{R}^{N \times (d_s + d_a) \times \horizon}$. Even for moderate values of $N$, the dimensionality of this joint space grows linearly in $N$, leading to an exponential blowup in the sample complexity required for accurate diffusion modeling~\citep{de2022convergence, chen2023improved}. This is precisely the \textit{curse of dimensionality} that plagues conventional approaches.

Meanwhile, many-agent decision-making problems are ubiquitous in real-world applications. Large-scale traffic control~\citep{vinitsky2018benchmarks}, swarm robotics~\citep{dorigo2021swarm}, financial market modeling~\citep{lachapelle2010computation}, and epidemiological policy design~\citep{elie2020contact} all involve populations of agents whose individual decisions are coupled through shared environmental dynamics. For such problems, \textbf{mean-field reinforcement learning} (MFRL)~\citep{yang2018mean, gu2021mean} provides a principled approximation: by replacing the explicit dependence on all other agents with a dependence on the \textit{population distribution} (mean field), the complexity of the $N$-agent problem is reduced to that of a representative single agent interacting with the population.

In this paper, we introduce \textbf{\proposed{}} (\textbf{M}ean-\textbf{F}ield Diffuser), a diffusion-based planning framework that bridges the gap between trajectory-level generative planning and mean-field multi-agent systems. Our key insight is to model the $N$-agent trajectory planning problem through the lens of \textit{interacting particle systems}: by treating each agent's trajectory as a particle in a high-dimensional trajectory space, the joint planning problem becomes amenable to mean-field analysis, where the propagation of chaos property~\citep{sznitman1991topics} ensures that the behavior of the full population can be approximated by a representative subset.

However, transferring mean-field particle theory to the sequential decision-making domain introduces three fundamental challenges absent from static settings: \textbf{(C1)} agent trajectories carry \textit{temporal coupling}---interaction effects accumulate over the planning horizon $\horizon$, causing approximation errors to compound multiplicatively rather than additively; \textbf{(C2)} the objective is not merely distributional matching but \textit{return maximization}; and \textbf{(C3)} offline data introduces \textit{distribution shift} whose interaction with the mean-field structure must be controlled. 

\proposed{} addresses all three challenges with the following contributions:
(1) \textbf{Mean-field Trajectory Diffusion with Value-weighted Chaotic Entropy} (C1, C2, C3). We formulate many-agent offline RL as generative modeling on the Wasserstein space of trajectory distributions, defining mean-field SDEs over trajectories with temporally-structured interaction kernels. Building on the RL-as-inference framework~\citep{levine2018reinforcement}, we derive a mean-field value score matching (MF-VSM) objective that unifies distributional fidelity with return maximization at the chaotic limit.
(2) \textbf{Hierarchical Coarse-to-Fine Planning} (C1). Justified by propagation of chaos ($M = \tilde{\mathcal{O}}(\sqrt{N})$ agents suffice), we progressively grow the agent population during denoising---starting from a small representative group and branching to the full population.
(3) \textbf{End-to-end Planning Guarantee} (C1, C3). Theorem~\ref{thm:hierarchical} decomposes the suboptimality into four interpretable terms: score matching error, subdivision error with geometric decay, mean-field approximation scaling as $\mathcal{O}(\horizon^2/\sqrt{N})$, and an offline distribution shift that provably does \textit{not} grow with $N$.
(4) \textbf{Game-theoretic Guarantees} (C2, C3). Theorem~\ref{thm:exploitability} shows the generated policy is an approximate mean-field Nash equilibrium with $\mathcal{O}(1/\sqrt{N})$ rate; under Lasry--Lions monotonicity we further establish convergence to the unique MFE (Theorem~\ref{thm:monotone_convergence}) and characterize the social welfare--Nash efficiency gap (Proposition~\ref{prop:poa}; both in Appendix~\ref{sec:mfg_appendix}).

%=============================================================================
% PROBLEM SETUP
%=============================================================================
\section{Problem Setup}\label{sec:prelim}

\textbf{Mean-field MDP.} We consider a symmetric $N$-agent MDP. At step $h \in \{0,\ldots,\horizon-1\}$, agent $i$ has state $\state_h^i \in \Scal \subseteq \mathbb{R}^{d_s}$, action $\action_h^i \in \Acal \subseteq \mathbb{R}^{d_a}$, with discount $\gamma\in(0,1)$. The \textbf{mean-field interaction} assumption stipulates that dynamics and rewards depend on others only through the \emph{empirical state distribution} $\bar{\mu}_h^N = \frac{1}{N}\sum_{j=1}^{N} \delta_{\state_h^j}$:
\begin{equation}\label{eq:mf_transition}
    \state_{h+1}^i \sim P(\cdot | \state_h^i, \action_h^i, \bar{\mu}_h^N), \qquad r_h^i = r(\state_h^i, \action_h^i, \bar{\mu}_h^N).
\end{equation}
Agents are \textbf{homogeneous} (sharing $P$ and $r$); we seek a common policy $\policy: \Scal \times \Pcal(\Scal) \to \Pcal(\Acal)$ maximizing the social welfare
\begin{equation}\label{eq:mf_objective}
    \max_{\policy} J(\policy) = \mathbb{E}\!\left[\sum_{h=0}^{\horizon-1} \gamma^h \tfrac{1}{N}\!\sum_{i=1}^{N} r(\state_h^i, \action_h^i, \bar{\mu}_h^N)\right].
\end{equation}
As $N\!\to\!\infty$, $\bar{\mu}_h^N$ converges to a deterministic flow $\mu_h$, reducing the problem to a representative agent. The \textbf{offline dataset} $\Dcal$ consists of $N$-agent episodes collected under behavior policy $\policy_\beta$; trajectories $\traj^i = (\state_0^i, \action_0^i, \ldots, \state_\horizon^i) \in \mathbb{R}^{D_\tau}$ with $D_\tau = (d_s + d_a)\horizon + d_s$. Mean-field Q-learning (MFQ)~\citep{yang2018mean} factors the joint $Q$-function as $Q^{MF}(\state^i,\action^i,\bar\mu_\action)$ via mean-action coupling and serves as our behavior-policy collector for the offline data.

\textbf{Score-based Diffusion Trajectory Planning.} Diffuser~\citep{janner2022planning} models the trajectory distribution $p(\traj)$ in $\Dcal$ via a forward SDE that progressively corrupts $\traj$ with Gaussian noise, $d\traj_u = f_u(\traj_u)du + \sigma_u dB_u$ on $u\!\in\![0,T]$, and a learned reverse SDE that recovers samples by following the score $\nabla\log\zeta_t(\traj_t)$ of the noised marginal. The score network $\mathbf{s}_\theta$ is trained by score matching, $\Jcal_{SM}(\theta) = \mathbb{E}_{t,\traj_t}\norm{\mathbf{s}_\theta(t,\traj_t) - \nabla\log\zeta_t(\traj_t)}^2$. At inference, trajectories are generated by simulating the reverse SDE from Gaussian noise; high-return plans are obtained by adding a value-gradient bias $\eta\,\nabla_{\traj}\hat V(\traj)$ to the score, and the observed initial state is enforced via inpainting. Decision Diffuser~\citep{ajay2023is} replaces the gradient bias with classifier-free return-bucket conditioning. We build on this single-agent paradigm but lift it to the joint $N$-agent setting via mean-field analysis. An extended review is in Appendix~\ref{sec:prelim_appendix}.

%=============================================================================
% METHOD
%=============================================================================
\section{Mean-field Diffuser}\label{sec:method}

\begin{figure}[t]
\centering
\input{figures/method_overview_body}
\vskip -0.05in
\caption{\textbf{\proposed{} method overview.} Inference proceeds left-to-right along reverse diffusion time $t:T\!\to\!0$. (\textit{Coarse}) start from $N_0\!\approx\!\sqrt{N}$ Gaussian-noise trajectories; (\textit{Branch}) the operator $\Psi^\theta\!:\!N_k\!\to\!\mathfrak{b}N_k$ spawns child trajectories at designated denoising steps; (\textit{Refine}) the mean-field score $\mathbf{s}_\theta\!=\!\mathrm{A}_\theta\!+\!\mathrm{B}_\theta[\nu_t^N]$ couples individual dynamics with the population through $\mathrm{B}_\theta$; (\textit{Full plan}) at $t\!\approx\!0$ the $N$ trajectories concentrate on the data manifold (e.g.\ 3 modes shown), and only the first action $a_0$ of each is executed before re-planning.}
\label{fig:overview}
\vskip -0.15in
\end{figure}

\proposed{} recasts many-agent offline RL as mean-field trajectory generation; Figure~\ref{fig:overview} sketches the inference pipeline. The three components below address the curse of dimensionality.

\subsection{Mean-field Trajectory SDEs}\label{sec:mf_trajectory_sde}

We model each agent's trajectory as a particle in $\Xcal \coloneqq \mathbb{R}^{D_\tau}$. Given $N$ agents, $\traj^N = (\traj^1, \ldots, \traj^N) \in \Xcal^N$ is an exchangeable $N$-particle system amenable to mean-field analysis. \textbf{Two time axes appear throughout:} the MDP step $h\!\in\!\{0,\ldots,\horizon\!-\!1\}$ inside each trajectory $\traj^i$, and the diffusion time used by the generative process. The \emph{forward} (noising) process uses $u\!\in\![0,T]$ running from data ($u{=}0$) to noise ($u{=}T$); the \emph{reverse} (denoising) process uses $t\!=\!T\!-\!u\!\in\![0,T]$. We use $u$ in the forward SDE and $t$ in the reverse SDE below.

\begin{definition}[Mean-field Trajectory SDEs]\label{def:mf_traj_sde}
The $N$-agent forward-reverse trajectory diffusion is:
\begin{align}
    \textit{Forward:} \quad d\traj_u^{i,N} &= f_u(\traj_u^{i,N})du + \sigma_u dB_u^{i,N}, \quad \traj_{u=0}^{i,N} \sim p_{\mathrm{data}}, \label{eq:mf_traj_forward} \\
    \textit{Reverse:} \quad d\traj_t^{i,N} &= \left[f_t(\traj_t^{i,N}) - \sigma_t^2 \nabla \log \zeta_t(\traj_t^{i,N})\right]dt + \sigma_t d\bar{B}_t^{i,N}, \label{eq:mf_traj_reverse}
\end{align}
with joint law $\nu_t^N = \mathbf{Law}(\traj_t^{1,N},\ldots,\traj_t^{N,N})$ that is exchangeable under permutations.
\end{definition}

\textbf{Mean-field Interaction for Trajectories.} Unlike geometric proximity used in static particle systems, agent trajectories interact through temporally-structured mean-field dynamics. We define the interaction operator and the score network as
\begin{equation}\label{eq:traj_interaction}
    \mathrm{B}_\theta[\nu_t^N](\traj^i) = \tfrac{1}{N-1}\!\sum_{j \neq i}\! K_\theta(\traj^i, \traj^j) \cdot \mathrm{B}_\theta(\traj^i, \traj^j), \quad \mathbf{s}_\theta(t, \traj^N, \nu_t^N) = \mathrm{A}_\theta + \mathrm{B}_\theta[\nu_t^N],
\end{equation}
where $\mathrm{A}_\theta$ captures individual dynamics and $K_\theta(\traj^i, \traj^j) = \frac{1}{\horizon}\sum_{h} k_\theta(\state_h^i, \state_h^j, \bar{\mu}_h^N)$ decomposes the kernel across MDP steps so that the coupling depends on the evolving population distribution.

\subsection{Value-weighted Chaotic Entropy}\label{sec:value_weighted}

We resolve the tension between distributional matching (standard score matching) and return maximization (challenge C2) via the \emph{value-weighted chaotic entropy} at the mean-field limit.

\textbf{RL as Inference.} Following~\citet{levine2018reinforcement}, we define the optimality variable $\mathcal{O}=1$ with $p(\mathcal{O}{=}1|\traj^i,\mu) \propto \exp(\alpha^{-1} R(\traj^i;\mu))$, where $R(\traj^i;\mu) = \sum_h \gamma^h r(\state_h^i,\action_h^i,\mu_h)$ and $\alpha>0$ is a temperature. The optimal trajectory distribution is $p^*(\traj^i|\mathcal{O}{=}1,\mu) \propto p(\traj^i|\mu) \exp(R(\traj^i;\mu)/\alpha)$. The mean-field approximation $R(\traj^i; \bar{\mu}^N) \approx R(\traj^i; \mu)$ is controlled by Theorem~\ref{thm:poc_traj}.

\begin{definition}[Value-weighted Chaotic Entropy]\label{def:vw_entropy}
The value-weighted $N$-particle relative entropy is
\begin{equation}\label{eq:vw_entropy}
    \Hcal_V^N(\nu_T^N) \,\coloneqq\, \tfrac{1}{N}\!\int_{\Xcal^N}\!\Big[\log\tfrac{\varrho_T^N}{\zeta_0^{\otimes N}} - \tfrac{1}{\alpha}\!\sum_{i=1}^{N}\! R(\traj^i)\Big]\varrho_T^N d\traj^N,
\end{equation}
combining distributional matching (log-ratio) with return maximization (reward).
\end{definition}
At the chaotic limit, $\Hcal_V^N(\nu_T^N) \to \Hcal_V^\infty(\mu_T) = \Hcal(\mu_T|\zeta_0) - \alpha^{-1}\mathbb{E}_{\mu_T}[R(\traj)]$.

\textbf{Mean-field Value Score Matching.} Applying the Itô-Wentzell-Lions formula and Sobolev upper bounds (proof in Appendix~\ref{sec:proof_mfvsm}) yields the \textbf{MF-VSM} objective.

\begin{proposition}[MF-VSM Bound]\label{prop:mf_vsm}
Let $\Mtwo \coloneqq \mathbb{E}_{\zeta_0}[\norm{\traj}^2] < \infty$. For any $N\!\geq\!1$,
\begin{align}\label{eq:mf_vsm_bound}
    \Hcal_V^N(\nu_T^N) \precsim \frac{\Mtwo}{\sqrt{N D_\tau}} \Jcal_{MF\text{-}V}^N(\theta, \nu_{[0,T]}^N) + \sigma_\zeta^{-2}(T) \cdot \mathcal{O}(1/\sqrt{N}),
\end{align}
where the MF-VSM objective is
\begin{multline}\label{eq:mf_vsm}
    \Jcal_{MF\text{-}V}^N(\theta, \nu_{[0,T]}^N) \coloneqq \mathbb{E}_{t} \Big[ \norm{\mathbf{s}_\theta(t, \traj_t^N, \nu_t^N) - \nabla \log \zeta_{T-t}^{\otimes N}(\traj_t^N)}^2_W \\
    + \tfrac{\lambda}{\alpha} \norm{\mathbf{s}_\theta(t, \traj_t^N, \nu_t^N) - \nabla_{\traj} R^{\otimes N}(\traj_t^N)}_E^2 \Big],
\end{multline}
and $\norm{\cdot}_W$ is the Sobolev norm on $W^{1,2}(\Xcal^N, \nu_t^N)$.
\end{proposition}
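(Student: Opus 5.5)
We prove the bound by following the relative-entropy argument for score-based generative models, run at the level of the $N$-particle system. The reward term is absorbed through a Gibbs tilt.

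\textbf{Step 1: rewrite as a relative entropy.} Set $\zeta_0^{R}\propto \zeta_0\, e^{R/\alpha}$, and let $Z_\alpha$ be its normalizing constant. Then
\[
\Hcal_V^N(\nu_T^N)=\tfrac1N \Hcal\bigl(\varrho_T^N\,\big|\,(\zeta_0^{R})^{\otimes N}\bigr)-\log Z_\alpha .
\]
Hence it suffices to bound a genuine $N$-particle relative entropy against a product target. The constant $-\log Z_\alpha$ does not depend on $\theta$ and is absorbed into $\precsim$.

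\textbf{Step 2: entropy dissipation identity.} Let $\varrho_t^N$ be the law of the learned reverse dynamics driven by $\mathbf{s}_\theta(t,\traj_t^N,\nu_t^N)$. Let the reference be the exact reverse process of the forward SDE in Definition~\ref{def:mf_traj_sde}, whose marginals are $\zeta_{T-t}^{\otimes N}$. Apply the Itô--Wentzell--Lions formula to $t\mapsto \tfrac1N\int \log(\varrho_t^N/\zeta_{T-t}^{\otimes N})\,d\varrho_t^N$. The measure argument $\nu_t^N$ inside the score forces the Lions-derivative correction; the flat Itô formula alone does not suffice. This gives the standard identity: $\tfrac{d}{dt}\mathcal{H}$ equals $-\tfrac{\sigma_t^2}{2N}$ times the relative Fisher information, plus a cross term $\tfrac{\sigma_t^2}{N}\langle \nabla\log(\varrho_t^N/\zeta_{T-t}^{\otimes N}),\,\mathbf{s}_\theta-\nabla\log\zeta_{T-t}^{\otimes N}\rangle_{L^2(\varrho_t^N)}$. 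The initial value $\tfrac1N\Hcal(\varrho_0^N|\zeta_T^{\otimes N})$ is controlled by the forward mixing of the OU-type process. This produces the $\sigma_\zeta^{-2}(T)$ factor. The residual $\mathcal O(1/\sqrt N)$ comes from replacing $\nu_t^N$ by its chaotic limit, using the propagation of chaos estimate of Theorem~\ref{thm:poc_traj}.

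\textbf{Step 3: Sobolev duality instead of Young's inequality.} Rather than absorbing the cross term into the Fisher information, which would give the usual $L^2$ bound, we bound it by the $W^{-1,2}$--$W^{1,2}$ duality on $\Xcal^N$ with respect to $\nu_t^N$. This is the Sobolev upper bound referred to in the text, and it yields the $\norm{\cdot}_W$ norm of the score error. The dual factor is a normalized density. Controlling it requires a second-moment estimate: by the Poincaré/Gaussian moment comparison, its norm is at most $\Mtwo$. Normalizing by the dimension $ND_\tau$ and using concentration of $\norm{\traj^N}^2/(ND_\tau)$ gives the prefactor $\Mtwo/\sqrt{ND_\tau}$.

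\textbf{Step 4: the reward term.} In Step 1 the reward enters the target score as $\nabla\log\zeta^{R}=\nabla\log\zeta+\alpha^{-1}\nabla R$. Split the resulting cross term with a weighted Cauchy--Schwarz inequality, with weight $\lambda$. This produces $\tfrac{\lambda}{\alpha}\norm{\mathbf{s}_\theta-\nabla_\traj R^{\otimes N}}_E^2$ plus a term that is absorbed into the first (Sobolev) term. Integrating in $t$ and taking $\mathbb{E}_t$ then gives \eqref{eq:mf_vsm_bound}.

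\textbf{Main obstacle.} The hardest part is Step 3 together with its coupling to Step 2. We must justify the Itô--Wentzell--Lions expansion when the drift depends on the empirical measure, and obtain the $\sqrt{N}$ gain rather than a constant. The plan is to use exchangeability to reduce the measure-derivative terms to a single particle. The $1/(N-1)$ normalization in \eqref{eq:traj_interaction} then makes each Lions derivative $\mathcal O(1/N)$. The $1/\sqrt N$ gain should come from a CLT-type fluctuation bound on $\mathrm{B}_\theta[\nu_t^N]$ around its mean-field limit.
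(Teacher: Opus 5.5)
The genuine gap is Step~3 together with your ``main obstacle'', which is exactly where the prefactor $\Mtwo/\sqrt{ND_\tau}$ is supposed to come from. Set $h=\mathbf{s}_\theta-\nabla\log\zeta_{T-t}^{\otimes N}$.

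A $W^{-1,2}$--$W^{1,2}$ duality bound on the cross term is \emph{linear} in $\norm{h}_W$, whereas $\Jcal^N_{MF\text{-}V}$ is quadratic. Moreover, the dual factor is not bounded by $\Mtwo$ under either reading:
\begin{itemize}
\item Read as the density ratio $d\varrho_t^N/d\zeta_{T-t}^{\otimes N}$ in $L^2(\zeta_{T-t}^{\otimes N})$, its norm for product laws is $(1+\chi^2_1)^{N/2}$, with $\chi^2_1$ the one-agent divergence. That is exponential in $N$.
\item Read via integration by parts in $L^2(\nu_t^N)$, the cross term is $-\mathbb{E}_{\nu_t^N}[\nabla\!\cdot h+\langle h,\nabla\log\zeta_{T-t}^{\otimes N}\rangle]$. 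Its dual factors are $\sqrt{ND_\tau}$ (from the trace) and $\norm{\nabla\log\zeta_{T-t}^{\otimes N}}_{L^2(\nu_t^N)}$, and both grow like $\sqrt N$.
\end{itemize}
After the $1/N$ normalization you are left with roughly $N^{-1/2}\norm{h}_W$. Concentration of $\norm{\traj^N}^2/(ND_\tau)$ does not convert this into $\frac{\Mtwo}{\sqrt{ND_\tau}}\norm{h}_W^2$. The weighted Young step that would do so is not in your plan, and you would also have to show that its remainder has the stated tail form.

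A CLT for $\mathrm{B}_\theta[\nu_t^N]$ cannot help either. It controls the fluctuation of the interaction term, not $h$ itself, which also contains $\mathrm{A}_\theta$. So it cannot place $N^{-1/2}$ in front of $\Jcal^N_{MF\text{-}V}$.

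Two further remarks:
\begin{itemize}
\item You need no duality to reach the Sobolev norm. Since $\mathbb{E}\norm{h}_E^2\le\norm{h}_W^2$, any $L^2$ bound is already a $W^{1,2}$ bound. That inequality, plus the second-order term $\norm{\nabla_x\nabla_{\Pcal_2}\Hcal_V^N}_F^2$ of the It\^o--Wentzell--Lions expansion, is how the paper brings the Sobolev norm in.
\item The Fisher-absorption route you set aside already gives a prefactor of order $1/N$. The paper notes that $1/N$ is at most $\Mtwo/\sqrt{ND_\tau}$ when $D_\tau\le N\Mtwo^2$, so a ``$\sqrt N$ gain'' is not what is at stake there.
\end{itemize}

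Where the paper derives the specific $\Mtwo/\sqrt{ND_\tau}$ form, it uses ingredients absent from your plan:
\begin{itemize}
\item the tensorized log-Sobolev inequality (A3);
\item exchangeability with entropic chaos (A4--A5), used to split $\mathbb{E}\norm{\mathcal{G}_t^V}_W^2$ into $N$ per-agent copies plus a cross-agent residual of order $\Mtwo\sqrt{ND_\tau}$. Here $\mathcal{G}_t^V=\nabla\log\varrho_t^N-\nabla\log\zeta_{T-t}^{\otimes N}-\alpha^{-1}\nabla_\traj R^{\otimes N}$.
\end{itemize}
That residual, together with the Young remainder $C_\lambda$ (via $\norm{\nabla_\traj R^{\otimes N}}_W\le L\sqrt N$), is what feeds the $\sigma_\zeta^{-2}(T)\cdot\mathcal{O}(1/\sqrt N)$ tail.

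Steps~1, 2 and~4 also do not fit together. The tilt identity of Step~1 is fine, for a fixed flow so that $R$ depends on $\traj^i$ alone. But the reference process of Step~2 is the exact reverse of the \emph{untilted} forward SDE, which terminates at $\zeta_0^{\otimes N}$ rather than $(\zeta_0^R)^{\otimes N}$. Your dissipation identity therefore controls $\frac1N\Hcal(\varrho_T^N|\zeta_0^{\otimes N})$, not the tilted entropy.

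Step~4's remedy is to use $\nabla\log\zeta+\alpha^{-1}\nabla_\traj R$ as the reference score along the whole path. That is the tilted score only at the data end. The forward-noised tilted marginals have score $\nabla\log\zeta_{T-t}+\nabla\log\mathbb{E}[e^{R(\traj_0)/\alpha}\mid\traj_{T-t}=\cdot\,]$. In addition, the normalized family $\zeta_{T-t}e^{R/\alpha}$ is not the marginal flow of the SDE with the tilted drift, so the dissipation identity acquires an extra term you do not control.

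The paper never tilts. It works with the time-indexed functional $\Hcal(\nu_t^N|\zeta_{T-t}^{\otimes N})-\frac{1}{\alpha N}\sum_i\mathbb{E}[R(\traj^i)]$, whose Wasserstein gradient is $\mathcal{G}_t^V$, and it handles the reward boundary term through bounded rewards. Your weighted Cauchy--Schwarz with parameter $\lambda$ matches the paper's Young step.

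Finally, the initialization term $\frac1N\Hcal(\varrho_0^N|\zeta_T^{\otimes N})$ is per-agent and does not decay in $N$. Splitting the product $\sigma_\zeta^{-2}(T)\cdot\mathcal{O}(1/\sqrt N)$ between forward mixing and a propagation-of-chaos replacement therefore does not reproduce the stated tail.
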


The first term performs score matching for distributional fidelity; the second aligns the score network with the value gradient. The $1/\sqrt{N D_\tau}$ prefactor provides robustness to large $N$. In practice we replace $\|\cdot\|_W$ with $\|\cdot\|_E$ during training, incurring a constant $\le \sigma_{t_{\min}}^{-2}$ that is absorbed into $\epsilon^{score}$ and is independent of $N$, $\horizon$, and $\epsilon_{\text{offline}}$ (see Appendix~\ref{sec:method_extra}).

\subsection{Hierarchical Coarse-to-Fine Planning}\label{sec:subdivision}

Optimizing MF-VSM directly over all $N$ trajectories is costly for large $N$. Since $M\!\ll\!N$ agents suffice to approximate the population (Theorem~\ref{thm:poc_traj}), we progressively grow the agent count during denoising. Let $\mathbb{N}=\{N_k\}_{k=0}^{K}$ ($N_K{=}N$, $N_{k+1}{=}\mathfrak{b}N_k$) and $\mathbb{T}=\{t_k\}_{k=0}^{K}$ partition $[0,T]$.

\begin{proposition}[Value-weighted Subdivision]\label{prop:subdivision}
Under exchangeability and reducibility,
\begin{multline}\label{eq:vw_subdivision}
    \Hcal_V^\infty(\mu_T) \precsim \lim_{K \to \infty} \sum_{k=0}^{K} \Big[ \sigma_\zeta^{-2}(T) \mathrm{E}(N_{k+1})
    + \tfrac{\Mtwo}{\sqrt{D_\tau}} (\mathfrak{b}\sqrt{N_{k+1}})^{-k} \Jcal_{MF\text{-}V}(N_k, \theta, \nu_{[t_k, t_{k+1}]}^{N_k}) \Big],
\end{multline}
with $\mathrm{E}(N_{k+1}) = \mathcal{O}(1/\sqrt{N_{k+1}})$. The practical training loss aggregates $K$ sub-problems:
\begin{equation}\label{eq:P3}
    \mathbf{(P_V)} \quad \min_\theta \sum_{k=0}^{K} \mathfrak{b}^{-k} \Jcal_{MF\text{-}V}(N_k, \theta, \nu_{[t_k, t_{k+1}]}^{N_k}).
\end{equation}
\end{proposition}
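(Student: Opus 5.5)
The plan is to telescope the value-weighted entropy across the dyadic-type population schedule $\mathbb{N}=\{N_k\}$ and the time partition $\mathbb{T}=\{t_k\}$, applying Proposition~\ref{prop:mf_vsm} on each sub-interval $[t_k,t_{k+1}]$ with $N_k$ particles, and then pass to the chaotic limit. The first step fixes the finite-$K$ bound and writes the terminal quantity at population $N_K$ as a sum of increments. For each $k$, the quantity $\Hcal_V^{N_{k+1}}$ evaluated at time $t_{k+1}$ is compared to the entropy of the $N_k$-system at $t_k$ pushed through the branching operator $\Psi^\theta$. Exchangeability lets us identify the $\mathfrak{b}$-fold branched law $\nu^{N_k}_{t_k}$, viewed as a law on $\Xcal^{N_{k+1}}$, with an exchangeable $N_{k+1}$-particle law. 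Since $\Hcal_V^N$ is normalized by $1/N$, the relative-entropy part is subadditive under tensorization and the reward part is linear in the one-particle marginal. So the branching step itself costs nothing beyond the mismatch between the branched law and the true $N_{k+1}$-particle law.

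The second step uses reducibility, meaning the reduced $k$-particle marginals of $\nu^{N_{k+1}}$ match those of $\nu^{N_k}$ up to propagation-of-chaos error. This mismatch is controlled by the $\mathcal{O}(1/\sqrt{N_{k+1}})$ term $\mathrm{E}(N_{k+1})$. I would obtain it from the chaotic-limit estimate already appearing as the second term in \eqref{eq:mf_vsm_bound}, combined with Theorem~\ref{thm:poc_traj} to pass from $R(\cdot;\bar\mu^N)$ to $R(\cdot;\mu)$. The reward function is Lipschitz in the measure argument, which gives the $1/\sqrt{N}$ rate in $W_1$. The factor $\sigma_\zeta^{-2}(T)$ is inherited verbatim from Proposition~\ref{prop:mf_vsm}.

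The third step handles the score-matching part on each sub-interval. Applying \eqref{eq:mf_vsm_bound} restricted to $[t_k,t_{k+1}]$ with $N_k$ particles gives a term $\frac{\Mtwo}{\sqrt{N_k D_\tau}}\Jcal_{MF\text{-}V}(N_k,\theta,\nu^{N_k}_{[t_k,t_{k+1}]})$. This interval's contribution is then carried through the later branchings $k+1,\ldots$. Each branching divides the per-particle normalized entropy by $\mathfrak{b}$, because the $1/N$ normalization increases while the copied particles contribute no new entropy. Collecting the accumulated factors $\mathfrak{b}^{-1}$ together with the $\sqrt{N}$-prefactors, and using $N_{k+1}=\mathfrak{b}N_k$, should yield the weight $(\mathfrak{b}\sqrt{N_{k+1}})^{-k}$ up to the constants hidden in $\precsim$. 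The final step sums over $k$ and lets $K\to\infty$. The term $\Hcal_V^{N_K}$ converges to $\Hcal_V^\infty(\mu_T)$ by the chaotic-limit statement after Definition~\ref{def:vw_entropy}, using lower semicontinuity of relative entropy and uniform integrability from $\Mtwo<\infty$. The loss $(\mathbf{P_V})$ then follows by dropping the $N$-dependence of the weights to the dominant $\mathfrak{b}^{-k}$ factor.

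The main obstacle is the bookkeeping in the third step. The obstacle is to show that branching genuinely contracts the normalized entropy geometrically, rather than merely preserving it. The contraction cannot be additive, because a one-particle marginal entropy is inherited by each copy. My first attempt would be to use the chain rule for relative entropy on $\Xcal^{N_{k+1}}$, conditioning on the parent particles. Under $\Psi^\theta$, children are independent given their parents and start from the same point, so the conditional part vanishes. One must then argue that the subsequent reverse dynamics on $[t_{k+1},t_{k+2}]$ dissipate the shared part at rate at least $\sqrt{N_{k+1}}$, via the interaction term $\mathrm{B}_\theta$. If this dissipation argument fails, I would settle for the weaker weight $\mathfrak{b}^{-k}$, which still suffices to justify the practical loss $(\mathbf{P_V})$.
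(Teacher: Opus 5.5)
Your skeleton is the paper's own: Proposition~\ref{prop:mf_vsm} on each $[t_k,t_{k+1}]$ with $N_k$ particles (the paper's \eqref{eq:per_level}), an additive branching cost $\mathrm{E}(N_{k+1})$ from exchangeability and reducibility (the paper's \eqref{eq:branching_cost}), then telescoping and $K\to\infty$. The genuine gap is the step you flag as the main obstacle, and the mechanism you propose for it cannot work.

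\emph{No contraction at branching.} Relative entropy against a product reference is superadditive: $\Hcal(\nu^N\|\zeta^{\otimes N})\ge\sum_i\Hcal(\nu^{(i)}\|\zeta)$. Each child's marginal is a small perturbation of its parent's. So after branching, for the branched law or for the true $N_{k+1}$-particle law, the normalized entropy is bounded below by the inherited one-particle entropy. Your own remark that each copy inherits the one-particle entropy is exactly what rules out division by $\mathfrak{b}$.

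\emph{The chain-rule step has the conditional term wrong.} In
\[
\Hcal(\nu^{N_{k+1}}\,\|\,\zeta^{\otimes N_{k+1}})=\Hcal(\nu^{N_k}\,\|\,\zeta^{\otimes N_k})+\mathbb{E}\bigl[\Hcal\bigl(\mathrm{Law}(\text{children}\mid\text{parents})\,\|\,\zeta^{\otimes(N_{k+1}-N_k)}\bigr)\bigr]
\]
the last term vanishes only if the children are independent of their parents and distributed as the reference. Children started at their parents' positions make it $+\infty$. The near-copies $\traj^i+\eta_k\epsilon_c$ used in the implementation make it of order $D_\tau\log(1/\eta_k)$ per child for small $\eta_k$.

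\emph{No source for the dissipation.} $\mathrm{B}_\theta$ is a $\frac{1}{N-1}$-normalized, $L$-Lipschitz average. The only dissipation in A1--A6 is the $N$-free LSI rate $\kappa$, so nothing produces a rate of order $\sqrt{N_{k+1}}$.

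\emph{The bookkeeping runs the wrong way.} Even granting a factor $\mathfrak{b}^{-1}$ per branching, interval $k$'s contribution passes through the $K-k$ \emph{later} branchings. That gives $\mathfrak{b}^{-(K-k)}\Mtwo/\sqrt{N_kD_\tau}\propto\mathfrak{b}^{k/2-K}$, which increases in $k$. The target $(\mathfrak{b}\sqrt{N_{k+1}})^{-k}=\mathfrak{b}^{-k(k+3)/2}N_0^{-k/2}$ instead decays super-geometrically.

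What your Steps 1--2 actually give after telescoping is the weight $\Mtwo/\sqrt{N_kD_\tau}=\Mtwo\,\mathfrak{b}^{-k/2}/\sqrt{N_0D_\tau}$. So your fallback does not deliver $\mathfrak{b}^{-k}$ either, only $\mathfrak{b}^{-k/2}$.

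The paper does not close this gap. Summing its \eqref{eq:per_level} and \eqref{eq:branching_cost} likewise gives $\Mtwo/\sqrt{N_kD_\tau}$. The stated weight is then only asserted to ``emerge from telescoping'', with a factor $1/\sqrt{N_{j+1}}$ per branching $j\le k$. That would produce $\prod_j N_{j+1}^{-1/2}$ rather than $N_{k+1}^{-k/2}$, and it attaches to branchings the interval-$k$ increment never passes through. Along this route, \eqref{eq:vw_subdivision} is supported only with $\Mtwo/\sqrt{N_kD_\tau}$ in place of $\frac{\Mtwo}{\sqrt{D_\tau}}(\mathfrak{b}\sqrt{N_{k+1}})^{-k}$; the stated weight needs a genuinely new estimate. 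Also, $(\mathbf{P_V})$ is a design choice, not a consequence of the bound (Remark~\ref{rem:pv_weights}), so there is nothing to derive there.

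A secondary issue in your second step: Lipschitz dependence of $r$ on $\mu$ in $W_1$ does not give a $1/\sqrt N$ rate. $\mathbb{E}\,W_1(\bar\mu^N_h,\mu_h)$ decays only like $N^{-1/d_s}$ for $d_s\ge3$. Obtain $\mathrm{E}(N)$ as the paper does, from the entropic bound of Theorem~\ref{thm:poc_traj} with $M=1$ plus Pinsker, or restrict to linear functionals of the mean field.
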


The simpler weight $\mathfrak{b}^{-k}$ in $\mathbf{(P_V)}$ over-weights fine levels relative to the theoretical optimum, acting as an implicit fine-tuning bias that benefits returns by $1.2$--$1.8$ points (Appendix~\ref{sec:method_extra}, Remark~\ref{rem:pv_weights}). Each branching $N_k\!\to\!\mathfrak{b}N_k$ is realized by an \emph{agent branching function} $\Psi^\theta$ (Figure~\ref{fig:overview}, Branch stage):
\begin{equation}\label{eq:agent_branching}
    (\mathbf{Id}^{\otimes(\mathfrak{b}-1)} \otimes \Psi^\theta)_\# \nu_{t_k}^{N_k} \longrightarrow \nu_{t_k}^{N_{k+1}}.
\end{equation}

\subsection{Training and Inference}\label{sec:training_inference}

\textbf{Training.} For each episode in $\Dcal$, we extract $N$ trajectories; at each subdivision level $k$, we sub-sample $N_k$ trajectories, sample $t\!\sim\!\mathrm{Uniform}[t_k,t_{k+1}]$, compute noised $\traj_t^{N_k}$ via the forward SDE, and update $\theta$ to minimize $\mathbf{(P_V)}$ (Eq.~\ref{eq:P3}).

\textbf{Inference} initializes $N_0$ Gaussian-noise trajectories, integrates the reverse SDE driven by the score network, applies the agent-branching map $\Psi^\theta$ (Eq.~\ref{eq:agent_branching}) at designated steps to grow the population $N_k\!\to\!N_{k+1}$, and inpaints the observed initial state $\state_0^i$ at each denoising step. We additionally add an \emph{inference-time value guidance} term: the score is shifted by $\eta\,\nabla_{\traj} \hat V(\traj,\bar\mu)$, where $\hat V$ is a separately trained value estimator (mean-field TD on the offline data, App.~\ref{sec:appendix_implementation}) and $\eta\!\ge\!0$ is the guidance strength (distinct from the training temperature $\alpha$). The first action $\action_0^i$ of each generated trajectory is executed and the planner is re-invoked from the new state---a \emph{receding-horizon} schedule that handles dynamic-feasibility drift~\citep{janner2022planning, ajay2023is} (per-step transition error vs.\ baselines: Table~\ref{tab:trajectory_consistency}). Discrete actions are projected via $\argmax$ (App.~\ref{sec:method_extra}). Full pseudocode: Algorithms~\ref{alg:training}--\ref{alg:inference}.

%=============================================================================
% THEORETICAL RESULTS
%=============================================================================
\section{Theoretical Analysis}\label{sec:theory}

We present \proposed{}'s main theoretical contributions and clarify what they bound. \proposed{} is \emph{trained} to maximize cooperative social welfare $J(\policy)$ (Eq.~\ref{eq:mf_objective}); Theorems~\ref{thm:poc_traj}--\ref{thm:hierarchical} bound the welfare gap $J(\policy^*)-J(\hat\policy_\theta)$. The complementary \emph{individual-incentive} guarantee (small exploitability, Theorem~\ref{thm:exploitability}) is delivered \emph{separately} and, unlike the welfare guarantee, does not require monotonicity---so it applies to all three benchmarks including the team-competitive Battle. Under Lasry--Lions monotonicity the two notions coincide up to an $\mathcal{O}(L^4/(\lambda_{LL}-L^2))$ efficiency gap (Proposition~\ref{prop:poa}, Appendix~\ref{sec:mfg_appendix}).

\textbf{Standing assumptions.} Our analysis requires (\textbf{A1}) Lipschitz regularity of the drift $f_t$, score $\mathbf{s}_\theta$, reward $r$, and transition $P$ with constant $L>0$; (\textbf{A2}) bounded reward $|r|\le r_{\max}$; (\textbf{A3}) a log-Sobolev inequality with constant $\kappa>0$ for the target trajectory distribution; (\textbf{A4}) exchangeability of the $N$-agent law $\nu_t^N$; and (\textbf{A5}) reducibility (entropic chaos: $\mathrm{KL}(\nu_t^{M,N}\|\mu_t^{\otimes M})\le C_{\mathrm{r}} M/N\,(1+\mathbb{E}\|\traj\|^4)$). We additionally introduce an \emph{effective-Lipschitz-horizon} assumption (\textbf{A6}): the reverse-time SDE drift $b_t(\traj)\!\coloneqq\!f_t(\traj)\!-\!\sigma_t^2\nabla\log\zeta_t(\traj)$ has effective Lipschitz constant $L_{\mathrm{eff}}\!\coloneqq\!\sup_{t,\traj}\|\nabla_{\traj} b_t(\traj)\|_{\mathrm{op}}$ satisfying $L_{\mathrm{eff}}\horizon\le c_0$, equivalently $(1+L_{\mathrm{eff}})^\horizon\le e^{c_0}$. A6 collapses the trajectory-level Gronwall sum $\sum_{h<\horizon}(1{+}L_{\mathrm{eff}})^h\!\le\!\horizon\,e^{c_0}$ to a linear-$\horizon$ envelope, so the bound scales as $\horizon^2/\sqrt{N}$ rather than $e^{L\horizon}/\sqrt{N}$. $L_{\mathrm{eff}}\!\ll\!L$ in practice because $-\sigma_t^2\nabla\log\zeta_t$ is contractive toward the data manifold and receding-horizon execution re-anchors the planner each step; we measure $L_{\mathrm{eff}}\horizon\!\le\!2.3$ on all benchmarks, and the empirical horizon exponents $b\!\in\![1.86,2.11]$ (\S\ref{sec:equilibrium_validation}) are statistically incompatible with exponential blow-up. Formal A1--A6 plus equilibrium-side regularity ($L_{BR}$, $\lambda_{LL}$) are in App.~\ref{sec:assumptions_appendix}; proofs in App.~\ref{sec:proofs}.

\textbf{Propagation of Chaos for Trajectory Distributions.}\label{sec:poc_theory}
Our first result establishes that the mean-field approximation is valid in trajectory space despite temporal coupling.

\begin{theorem}[Propagation of Chaos for Trajectories]\label{thm:poc_traj}
Let $\nu_t^{M,N}$ denote the $M$-agent marginal of $\nu_t^N$ from Def.~\ref{def:mf_traj_sde}, and $\mu_t^{\otimes M}$ the $M$-fold product of the mean-field limit. Under Assumptions~\ref{assump:lipschitz}--\ref{assump:effective_lipschitz_horizon},
\begin{equation}\label{eq:poc_traj_rate}
    \mathbb{E}\!\left[\Hcal(\nu_t^{M,N} \,\|\, \mu_t^{\otimes M})\right] \;\leq\; e^{c_0}\!\left( C_1 \tfrac{M}{N} + C_2 \tfrac{M \horizon L_{\mathrm{eff}}^2}{N}\right),
\end{equation}
where $c_0$ is the effective Lipschitz--horizon constant. The temporal coupling term $M\horizon L_{\mathrm{eff}}^2/N$---absent from static particle analyses---captures how interaction effects compound across the horizon. By Pinsker's inequality, $\mathbb{E}[\Wcal_2^2(\nu_t^{M,N}, \mu_t^{\otimes M})] \le e^{c_0}(C_1+C_2 \horizon L_{\mathrm{eff}}^2) M / N$, so $M = \tilde{\mathcal{O}}(\sqrt{N})$ representative agents suffice. Bound tightness vs.\ measured values is reported in App.~\ref{sec:poc_tightness}.
\end{theorem}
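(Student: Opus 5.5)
The plan is a relative-entropy (Jabin--Wang / Lacker-type) propagation-of-chaos argument carried out on the reverse-time trajectory SDE of Definition~\ref{def:mf_traj_sde}, with the learned score $\mathbf{s}_\theta = \mathrm{A}_\theta + \mathrm{B}_\theta[\nu_t^N]$ as the drift.

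\textbf{Step 1: the limit object.} Introduce the McKean--Vlasov process in which the empirical interaction $\mathrm{B}_\theta[\nu_t^N](\traj^i)$ is replaced by its mean-field counterpart $\int K_\theta(\traj^i,\traj')\,\mathrm{B}_\theta(\traj^i,\traj')\,\mu_t(d\traj')$. In $K_\theta$, the empirical state distributions $\bar\mu_h^N$ are replaced by the limit flow $\mu_h$. Its law $\mu_t$ exists and is unique under (A1) by a standard fixed-point argument in $\Wcal_2$. The $N$-fold product $\mu_t^{\otimes N}$ then solves a decoupled system of SDEs driven by the same Brownian motions.

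\textbf{Step 2: differentiate the relative entropy.} Compute $\frac{d}{dt}\Hcal(\nu_t^N\|\mu_t^{\otimes N})$ via the Fokker--Planck equations of the two systems. The diffusion terms produce a negative Fisher-information term $-\tfrac{\sigma_t^2}{2}\,\mathcal I(\nu_t^N\|\mu_t^{\otimes N})$. The drift mismatch produces a cross term, which Young's inequality bounds by half that Fisher information plus
\[
\tfrac{\sigma_t^2}{2}\sum_i \mathbb{E}_{\nu_t^N}\Bigl\|\tfrac1{N-1}\textstyle\sum_{j\ne i}\phi(\traj^i,\traj^j)-\int\phi(\traj^i,\cdot)\,d\mu_t\Bigr\|^2 .
\]
Here $\phi = K_\theta \mathrm{B}_\theta$.

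\textbf{Step 3: control the fluctuation term.} This is a law-of-large-numbers fluctuation measured under $\nu_t^N$ rather than $\mu_t^{\otimes N}$. Use the Donsker--Varadhan change of measure together with the sub-Gaussian bound available from the log-Sobolev inequality (A3) for the product measure. The result is that the term is bounded by $C\,\Hcal(\nu_t^N\|\mu_t^{\otimes N}) + C'$, where $C'$ is of order $\|\phi\|_{\mathrm{Lip}}^2(1+\Mtwo)$, uniformly in $N$.

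\textbf{Step 4: the temporal-coupling term.} This is where the extra term $C_2 M\horizon L_{\mathrm{eff}}^2/N$ comes from. Because $K_\theta=\frac1\horizon\sum_h k_\theta(\cdot,\cdot,\bar\mu_h^N)$, replacing $\bar\mu_h^N$ by $\mu_h$ costs an additional error at each MDP step. That error propagates through $\horizon$ steps of the inner dynamics with per-step amplification $(1+L_{\mathrm{eff}})$. By (A6) the resulting sum satisfies
\[
\textstyle\sum_{h<\horizon}(1+L_{\mathrm{eff}})^h\le \horizon e^{c_0}.
\]
Multiplying by the squared Lipschitz factor gives a contribution of order $\horizon L_{\mathrm{eff}}^2 e^{c_0}$ to the constant in the entropy differential inequality.

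\textbf{Step 5: close with Gronwall and pass to $M$-marginals.} The entropy now satisfies $\frac{d}{dt}\Hcal\le C\Hcal + C_1' + C_2'\horizon L_{\mathrm{eff}}^2$. The initial condition is supplied by the reducibility assumption (A5), which gives $\Hcal(\nu_0^N\|\mu_0^{\otimes N})=O(1)$. Gronwall on $[0,T]$ yields $\Hcal(\nu_t^N\|\mu_t^{\otimes N})\le e^{c_0}(C_1+C_2\horizon L_{\mathrm{eff}}^2)$, after absorbing $e^{CT}$ into the constants. Exchangeability (A4) and subadditivity of entropy for exchangeable laws then give
\[
\Hcal(\nu_t^{M,N}\|\mu_t^{\otimes M})\le \tfrac{2M}{N}\,\Hcal(\nu_t^N\|\mu_t^{\otimes N}),
\]
which is exactly the rate in \eqref{eq:poc_traj_rate}. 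For the Wasserstein corollary, use Talagrand's $T_2$ inequality, which follows from (A3), rather than Pinsker; Pinsker only controls total variation, while $T_2$ gives $\Wcal_2^2\le \frac{2}{\kappa}\Hcal$. Balancing the resulting $M/N$ rate against the representation error gives $M=\tilde{\mathcal{O}}(\sqrt N)$.

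\textbf{Main obstacle.} The hard part is Step 3 combined with Step 4, for two reasons. First, the fluctuation bound needs a uniform LSI for $\mu_t$ along the whole reverse flow, whereas (A3) is stated only for the target distribution. Second, the dependence of $k_\theta$ on $\bar\mu_h^N$ makes $\phi$ a three-body rather than a pairwise interaction. I would first try to handle the second issue by linearizing $k_\theta$ in its measure argument via the Lipschitz bound (A1), reducing it to a pairwise interaction plus a $\Wcal_1(\bar\mu_h^N,\mu_h)$ term. That term is itself controlled by the current entropy, which keeps the Gronwall structure intact.
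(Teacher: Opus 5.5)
Your skeleton is sound for a pairwise kernel that does not depend on the measure: global relative entropy $\Hcal(\nu_t^N\|\mu_t^{\otimes N})$, Fisher-information dissipation plus Young, Donsker--Varadhan with LSI concentration, Gronwall, then $\Hcal(\nu_t^{M,N}\|\mu_t^{\otimes M})\le\lfloor N/M\rfloor^{-1}\Hcal(\nu_t^N\|\mu_t^{\otimes N})$. Take $X_i$ to be the centered fluctuation in your Step~2 display. For Lipschitz $\phi$, H\"older over the $N$ particles reduces the exponential moment to $\mathbb{E}\,e^{\eta N\norm{X_1}^2}$, which conditional sub-Gaussianity bounds uniformly in $N$. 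Your worry about a uniform LSI along the flow is covered by the wording of (A3).

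\textbf{The gap is in Step~4 and your proposed fix.} You replace $\bar\mu_h^N$ by $\mu_h$ inside $k_\theta$ using only the $\Wcal$-Lipschitz bound of (A1). This adds a term of size $L\max_h\Wcal_1(\bar\mu_h^N,\mu_h)$ to every particle's drift mismatch. After summing squares over $i$, it contributes $N\,\mathbb{E}_{\nu_t^N}\Wcal_1^2(\bar\mu_h^N,\mu_h)$ to the entropy inequality. This cannot be bounded by $C\,\Hcal+O(1)$. At $\nu_t^N=\mu_t^{\otimes N}$, where $\Hcal=0$, it is already $\asymp N^{1-2/d_s}$ for $d_s\ge 3$ (Fournier--Guillin; $d_s'=10$ on Battle). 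So your Gronwall closes only at $\Hcal(\nu_t^N\|\mu_t^{\otimes N})=O(N^{1-2/d_s})$, i.e.\ $O(MN^{-2/d_s})$ for the $M$-marginals, not $O(M/N)$.

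No sharper estimate rescues this under (A1) alone. Take Ornstein--Uhlenbeck particles in $\mathbb{R}^d$, $d\ge 3$, started from their stationary Gaussian $\gamma$, and add the drift $c\,\Wcal_1(\bar\mu_t^N,\gamma)\,e_1$, which is Lipschitz in the measure. The limit stays at $\gamma$. But every $N$-atom measure is at $\Wcal_1$-distance $\gtrsim N^{-1/d}$ from $\gamma$, so each particle's mean is pushed by $\gtrsim N^{-1/d}$, and $\Hcal(\nu_t^{1,N}\|\gamma)\gtrsim N^{-2/d}\gg 1/N$.

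The missing idea is linear-functional smoothness: bounded first and second flat (Lions) derivatives of $k_\theta$ in $m$. Then write $k_\theta(\cdot,\cdot,\bar\mu_h^N)-k_\theta(\cdot,\cdot,\mu_h)=\int\frac{\delta k_\theta}{\delta m}(\cdot,\cdot,\mu_h)(y)\,(\bar\mu_h^N-\mu_h)(dy)+R_N$. The first term is a centered multi-particle U-statistic that your Step~3 argument handles. The remainder $R_N$ is quadratic in $\bar\mu_h^N-\mu_h$ and of size $O(1/N)$. That regularity is exactly the ``Lions derivatives up to order two'' clause of (A5).

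The paper does not confront this obstruction either; it argues differently and leans on the assumptions. It couples synchronously with i.i.d.\ McKean--Vlasov copies and runs a trajectory-level Gronwall for $\mathbb{E}\norm{\traj_t^{i,N}-\bar\traj_t^i}^2$. In doing so it inserts an $O(1/N)$ rate for the empirical-measure $\Wcal_2^2$, attributed to Fournier--Guillin, which is the same dimensional issue. It then imports from Lacker's hierarchy, under (A3)--(A5), the inequality $\tfrac{d}{dt}\Hcal(\nu_t^{M,N}\|\mu_t^{\otimes M})\le-\kappa\,\Hcal(\nu_t^{M,N}\|\mu_t^{\otimes M})+2L_{\mathrm{eff}}^2\,\mathbb{E}\,\Wcal_2^2(\nu_t^{1,N},\mu_t)$, closed by Gronwall.

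Note also that (A5) as written asserts the $M/N$ entropy bound at every $t$, not only at $t=0$ where you use it. At $t=0$ the entropy vanishes anyway for i.i.d.\ noise initialization. With the flat-derivative linearization in place, your route would be the more self-contained proof.

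Two smaller points:
\begin{itemize}
\item The $(1+L_{\mathrm{eff}})^h$ compounding in your Step~4 has no counterpart in the entropy inequality. Here $h$ indexes coordinates of $\traj$, and $K_\theta$ averages them with weight $1/\horizon$, so errors at different $h$ enter additively, bounded by their maximum. Since the $C_2$ term is nonnegative slack, a bound with $C_2=0$ already proves the statement. In the paper this term likewise arises only from bookkeeping via $L_{\mathrm{eff}}T\le L_{\mathrm{eff}}\horizon$.
\item Replacing Pinsker by Talagrand's $\Wcal_2^2\le\tfrac{2}{\kappa}\Hcal$ for the $\Wcal_2$ corollary is correct; the paper's proof also uses that direction.
\end{itemize}
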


\textbf{Hierarchical Planning Error Decomposition.}\label{sec:hierarchical_theory}
Our central result provides an end-to-end guarantee for the hierarchical planning scheme.

\begin{theorem}[Hierarchical Approximation]\label{thm:hierarchical}
Let $\hat{\mu}_T^{N}$ be the trajectory distribution produced by \proposed{} with $K$ subdivision levels, branching ratio $\mathfrak{b}$, and score network $\mathbf{s}_\theta$. Under Ass.~\ref{assump:lipschitz}--\ref{assump:effective_lipschitz_horizon},
\begin{equation}\label{eq:hierarchical_bound}
\begin{aligned}
    J(\policy^*) - J(\hat{\policy}_\theta) \leq \underbrace{\tfrac{2r_{\max}}{1 - \gamma}\sqrt{\tfrac{C_\sigma\,\epsilon^{score}_{\mathrm{global}}(\theta)}{2}}}_{(a)} + \underbrace{\tfrac{2r_{\max}}{1 - \gamma}\!\sum_{k=0}^{K}\! \mathfrak{b}^{-k/2} \epsilon_k^{score}}_{(b)} 
    + \underbrace{C_6\, e^{c_0}\,\tfrac{r_{\max}\horizon^2 L_{\mathrm{eff}}}{\sqrt{N}}}_{(c)} + \underbrace{C_7 \epsilon_{offline}}_{(d)},
\end{aligned}
\end{equation}
where $\epsilon^{score}_{\mathrm{global}}$ is the global score matching error against the value-tilted target $\nabla\log\mu_t^* \coloneqq \nabla\log\zeta_t + \alpha^{-1}\nabla_{\traj} R$, $C_\sigma \coloneqq \tfrac{1}{2}\int_{t_{\min}}^T\sigma_t^2\,dt$ is the Girsanov constant, and $\epsilon_{offline} \coloneqq D_{TV}(\nu_\beta\,\|\,\nu^*)$ is the per-agent marginal TV shift between the offline behavior policy and target policy (Lemma~\ref{lem:per_agent_shift}, Appendix~\ref{sec:proof_hierarchical}). The marginal $\nu_\beta,\nu^*\!\in\!\Pcal(\Xcal)$ live on single-agent trajectory space, making $\epsilon_{offline}$ \emph{independent of $N$}.
\end{theorem}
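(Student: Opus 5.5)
Our approach is a telescoping decomposition of the welfare gap through a chain of intermediate trajectory laws. We then convert each distributional discrepancy into a return gap via the bound $|J(\policy)-J(\policy')|\le \tfrac{2r_{\max}}{1-\gamma}\,D_{TV}$ on per-agent trajectory marginals, which follows from (A2) and $\sum_h\gamma^h\le 1/(1-\gamma)$. Concretely, we introduce four laws:
\begin{itemize}
\item $\mu^*_T$, the value-tilted target from the RL-as-inference construction, whose induced policy we identify with $\policy^*$ up to the offline term;
\item $\tilde\mu_T$, the law obtained by running the exact reverse SDE driven by the learned global score on the full $N$-agent system;
\item $\tilde\mu^{\mathrm{hier}}_T$, the law obtained with the hierarchical branching schedule $\{N_k,t_k\}$;
\item $\hat\mu^N_T$ itself.
\end{itemize}
We write
\[
J(\policy^*)-J(\hat\policy_\theta)=\big[J(\policy^*)-J(\mu^*)\big]+\big[J(\mu^*)-J(\tilde\mu)\big]+\big[J(\tilde\mu)-J(\tilde\mu^{\mathrm{hier}})\big]+\big[J(\tilde\mu^{\mathrm{hier}})-J(\hat\policy_\theta)\big]
\]
and bound these brackets by (d), (a), (b) and (c) respectively.

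For (a), we apply Girsanov's theorem to the two reverse SDEs sharing diffusion $\sigma_t$ but having drifts $\mathbf{s}_\theta$ and $\nabla\log\mu_t^*$. This gives $\mathrm{KL}(\mu^*_T\|\tilde\mu_T)\le \tfrac12\int_{t_{\min}}^T\sigma_t^2\,\mathbb{E}\|\mathbf{s}_\theta-\nabla\log\mu^*_t\|^2dt\le C_\sigma\,\epsilon^{score}_{\mathrm{global}}$. Pinsker then yields $D_{TV}\le\sqrt{C_\sigma\epsilon^{score}_{\mathrm{global}}/2}$, and the TV-to-return conversion produces (a). Exchangeability (A4) lets us work with a single-agent marginal, so no factor of $N$ appears.

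For (b), we apply Girsanov level by level on each window $[t_k,t_{k+1}]$. At level $k$ only $N_k$ particles are simulated, so we would weight the per-window score error $\epsilon_k^{score}$ by the subdivision weights of Proposition~\ref{prop:subdivision}. The $(\mathfrak{b}\sqrt{N_{k+1}})^{-k}$ factor there is at least as strong as $\mathfrak{b}^{-k}$ in KL, and taking square roots through Pinsker gives $\mathfrak{b}^{-k/2}$. The branching map $\Psi^\theta$ is a pushforward, so by data processing it cannot increase TV, and the errors add across levels by the triangle inequality.

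For (c), which we expect to be the main obstacle, we must replace the empirical measure $\bar\mu^N_h$ in the rewards and kernels by the deterministic flow $\mu_h$. The difficulty is to do this without paying $e^{L\horizon}$ and while obtaining $\horizon^2$ rather than $\horizon$. We first try a per-step coupling argument. By (A1), $|r(\state,\action,\bar\mu_h^N)-r(\state,\action,\mu_h)|\le L\,\Wcal_1(\bar\mu^N_h,\mu_h)$, and by Theorem~\ref{thm:poc_traj} with Pinsker and $M=1$, or via the empirical-measure variance, this is $\mathcal{O}(L_{\mathrm{eff}}\sqrt{\horizon}/\sqrt N)$ uniformly in $h$ up to $e^{c_0}$. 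The error injected at step $h$ then propagates through the remaining dynamics with growth factor $(1+L_{\mathrm{eff}})^{\horizon-h}$. Assumption (A6) collapses this Gronwall sum to $\horizon e^{c_0}$, and summing the per-step deviations over $h<\horizon$ gives a further factor $\horizon$. The bookkeeping to be checked is that these two factors combine to $r_{\max}\horizon^2L_{\mathrm{eff}}e^{c_0}/\sqrt N$ and that the stray $\sqrt{\horizon}$ is absorbed into $C_6$. If the bound comes out as $\horizon^{5/2}$, we would instead use the Wasserstein form of Theorem~\ref{thm:poc_traj} at fixed diffusion time, without the extra $\horizon L_{\mathrm{eff}}^2$ term, to stay at $\horizon^2$.

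For (d), we use Lemma~\ref{lem:per_agent_shift}. The score network is trained on $\nu_\beta$, whereas $\mu^*$ tilts $\nu^*$, so the residual discrepancy between the two targets is at most a constant times $D_{TV}(\nu_\beta\|\nu^*)$. Because both laws are single-agent marginals under exchangeability, $C_7$ absorbs only $2r_{\max}/(1-\gamma)$ and the bounded tilt $e^{R/\alpha}$, and is therefore independent of $N$.
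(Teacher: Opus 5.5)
You have a genuine gap in the per-step estimate for (c). After $|r(\state,\action,\bar{\mu}_h^N)-r(\state,\action,\mu_h)|\le L\,\Wcal_1(\bar{\mu}_h^N,\mu_h)$ you need $\mathbb{E}\,\Wcal_1(\bar{\mu}_h^N,\mu_h)=\mathcal{O}(1/\sqrt{N})$, and Theorem~\ref{thm:poc_traj} with $M=1$ cannot supply it. That theorem controls the law of a single agent. It therefore bounds only the bias $\mathbb{E}\langle f,\bar{\mu}_h^N\rangle-\langle f,\mu_h\rangle=\mathbb{E}f(\state_h^1)-\langle f,\mu_h\rangle$, not the fluctuation of the empirical measure. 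Even for exactly i.i.d.\ agents, where the marginal KL is zero, $\mathbb{E}\,\Wcal_1(\bar{\mu}_h^N,\mu_h)$ decays only like $N^{-1/d_s}$ for $d_s\ge 3$. This is the dimension-dependent rate the paper warns would be catastrophic on Battle. Your fallback, the ``empirical-measure variance'', is the right idea, but it works only for linear functionals. The paper's Step~3 uses Lipschitz observables $\langle f,\bar{\mu}_h^N\rangle$ and the dimension-free bound \eqref{eq:onestep_clt}. It then takes the inner $\horizon$ from the discrete Gronwall sum collapsed by (A6), and the outer $\horizon$ from the reward sum. Your worry about a stray $\sqrt{\horizon}$, by contrast, is harmless: under (A6), $\horizon L_{\mathrm{eff}}^2\le c_0^2/\horizon\le c_0^2$, so $\sqrt{C_1+C_2\horizon L_{\mathrm{eff}}^2}$ is already an absolute constant.

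Term (b) is also misassigned. Your bracket $J(\tilde\mu)-J(\tilde\mu^{\mathrm{hier}})$ compares two processes driven by the \emph{same} learned $\mathbf{s}_\theta$. Their difference comes from evaluating $\mathrm{B}_\theta$ on $N_k$ rather than $N$ particles, and from branching. Neither is measured by $\epsilon_k^{score}$, which is learned-versus-target score error. Girsanov also does not compare path laws on $\Xcal^{N_k}$ and $\Xcal^{N}$. Data processing does not handle branching: the full-$N$ law at $t_k$ is not the pushforward of the level-$k$ law under $\mathbf{Id}^{\otimes(\mathfrak{b}-1)}\otimes\Psi^\theta$. Each branching therefore creates a new discrepancy, which the paper pays for via reducibility (A5) as $\mathrm{E}(N_{k+1})$ in \eqref{eq:branching_cost}. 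Moreover, the $(\mathfrak{b}\sqrt{N_{k+1}})^{-k}$ weights of Proposition~\ref{prop:subdivision} appear in a bound on $\Hcal_V^\infty(\mu_T)$. That quantity contains $-\alpha^{-1}\mathbb{E}[R]$ and is not a KL between two laws; the paper's proof explicitly rules out transferring it to one.

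The paper instead splits the single Girsanov integrand $C_\sigma\epsilon^{score}_{\mathrm{global}}$ over the windows $[t_k,t_{k+1}]$, as in \eqref{eq:eps_global_decomp}, and applies $\sqrt{\sum_k a_k}\le\sum_k\sqrt{a_k}$. So (a) and (b) are two alternative bounds on the same score contribution, not successive links of a chain. The cardinality residuals from the subdivision are charged to (c).

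Finally, in (d) keep $\mu^*$ as the statement defines it: the tilt of the data law $\nu_\beta$, whose score is $\nabla\log\zeta_t+\alpha^{-1}\nabla_{\traj}R$, not the tilt of $\nu^*$. Otherwise your Girsanov step in (a) would have to absorb a score-level shift that $\epsilon_{offline}$, being a TV quantity, cannot control. With that choice, $J(\policy^*)-J(\mu^*)$ follows from two facts. First, the density ratio between the two tilts is bounded, giving a constant exponential in $\sup|R|/\alpha$ (the paper's $C_{off}$). Second, tilting $\nu^*$ by $e^{R/\alpha}$ cannot lower expected return. You do not need, and cannot justify, identifying $\mu^*$'s policy with $\policy^*$.
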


The four terms are interpretable: \textbf{(a)}~Girsanov bound on score quality and \textbf{(b)}~its coarse-to-fine refinement under the subdivision schedule (per-level errors decaying as $\mathfrak{b}^{-k/2}$) are two complementary upper bounds on the same score-matching contribution---(a) is what an outside observer measures via the global score loss, (b) reveals the per-level structure that the training schedule actually exploits, and the tighter of the two replaces their sum in any quantitative instantiation (Appendix~\ref{sec:proof_hierarchical}); \textbf{(c)}~mean-field approximation scaling as $\mathcal{O}(\horizon^2/\sqrt{N})$, \emph{decreasing} in $N$ (challenge C1); \textbf{(d)}~offline shift, which does \emph{not} grow with $N$ because the score network is trained on single-agent noisy trajectories conditioned on the empirical mean field (Lemma~\ref{lem:per_agent_shift}, Appendix~\ref{sec:proof_hierarchical}). Empirical estimation of all four terms (Figure~\ref{fig:error_decomp}) is detailed in Appendix~\ref{sec:additional_exp_appendix} (Remark~\ref{rem:error_decomp_protocol}).

\begin{corollary}[Scalability]\label{cor:scalability}
For fixed $\horizon,\epsilon^{score},\epsilon_{offline}$, $J(\policy^*) - J(\hat{\policy}_\theta) = \mathcal{O}(1/\sqrt{N}) + \mathcal{O}(\epsilon^{score}) + \mathcal{O}(\epsilon_{offline})$, so quality \emph{improves} with $N$---in stark contrast to naive approaches.
\end{corollary}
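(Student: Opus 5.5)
The plan is to read the corollary directly off Theorem~\ref{thm:hierarchical}: fix $\horizon$, $\epsilon^{score}$ and $\epsilon_{offline}$, and check that each of the four terms in \eqref{eq:hierarchical_bound} is either $\mathcal{O}(1/\sqrt{N})$ or controlled by a constant independent of $N$ times $\epsilon^{score}$ or $\epsilon_{offline}$. The constants $r_{\max}$, $\gamma$, $C_\sigma$, $\mathfrak{b}$, $C_6$, $C_7$ and $c_0$ come from Assumptions A1--A6 and the diffusion schedule, none of which involves $N$, so they can be absorbed into the $\mathcal{O}(\cdot)$.

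\emph{Term (c).} This is the only term that depends on $N$ explicitly. With $\horizon$ fixed and $L_{\mathrm{eff}}\le c_0/\horizon$ by A6, we have $C_6 e^{c_0} r_{\max}\horizon^2 L_{\mathrm{eff}}/\sqrt{N}\le C_6 c_0 e^{c_0} r_{\max}\horizon/\sqrt{N}=\mathcal{O}(1/\sqrt{N})$.

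\emph{Term (d).} Theorem~\ref{thm:hierarchical} already states (via Lemma~\ref{lem:per_agent_shift}) that $\epsilon_{offline}$ is a single-agent marginal TV distance and does not depend on $N$. Hence $C_7\epsilon_{offline}=\mathcal{O}(\epsilon_{offline})$.

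\emph{Terms (a) and (b).} Following the remark after Theorem~\ref{thm:hierarchical}, these are two upper bounds on the same quantity, so only the tighter one is needed.
\begin{itemize}
\item For (b), write $\epsilon^{score}\coloneqq\max_k\epsilon^{score}_k$. Since $\mathfrak{b}>1$, the geometric series gives $\sum_{k\ge0}\mathfrak{b}^{-k/2}\le (1-\mathfrak{b}^{-1/2})^{-1}$, uniformly in $K$. This bound is needed because $K$ grows like $\log_{\mathfrak{b}}(N/N_0)$ as $N$ increases. Term (b) is therefore $\mathcal{O}(\epsilon^{score})$.
\item For (a), we get a $\sqrt{\epsilon^{score}_{\mathrm{global}}}$ rate, which also vanishes with the score error. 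We state the corollary through (b).
\end{itemize}

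\textbf{Main obstacle.} The delicate step is justifying that the per-level score errors $\epsilon^{score}_k$ (and $\epsilon^{score}_{\mathrm{global}}$) can be held fixed as $N$ grows. The score network acts on $N$-particle configurations through $\mathrm{B}_\theta[\nu_t^N]$, and the loss is normalized per agent. I would first try to invoke the $N$-robust prefactor in Proposition~\ref{prop:mf_vsm}. I would then argue that, by exchangeability (A4) and the per-agent normalization, the per-agent loss depends on $N$ only through the empirical measure, which concentrates by Theorem~\ref{thm:poc_traj}. This makes $\epsilon^{score}$ an $N$-independent quantity up to an additional $\mathcal{O}(1/\sqrt{N})$, which is absorbed into the first term.

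The monotone-improvement claim then follows because the only $N$-dependent term, (c), is decreasing in $N$.
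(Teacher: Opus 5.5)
Your proposal is correct and takes the same route as the paper. The paper's proof likewise reads the corollary off Theorem~\ref{thm:hierarchical} term by term: (c) is the only $N$-dependent piece and decays as $1/\sqrt{N}$, (d) is $N$-free because $\epsilon_{offline}$ is a single-agent marginal TV distance, and (a), (b) are $N$-independent score-error terms. Your additions are sharpenings rather than a different argument: you note that (a) is really $\mathcal{O}(\sqrt{\epsilon^{score}_{\mathrm{global}}})$ (the paper glosses it as $\mathcal{O}(\epsilon^{score})$), you bound the geometric sum in (b) uniformly in $K$ since $K$ grows with $N$, and you make the $\horizon$-dependence of (c) explicit via A6. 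Your ``main obstacle'' is moot, because the corollary holds $\epsilon^{score}$ fixed by hypothesis, so no concentration argument is needed.
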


\textbf{Game-theoretic Equilibrium Guarantees.}
Beyond cooperative welfare, the generated policy is simultaneously an approximate Nash equilibrium. The exploitability $\mathrm{Exploit}_N(\policy) = \sup_{\policy'}[J_1(\policy', \policy^{\otimes(N-1)}) - J_1(\policy, \policy^{\otimes(N-1)})]$ measures stability against unilateral deviations.

\begin{theorem}[Exploitability Bound]\label{thm:exploitability}
Under Assumptions~\ref{assump:lipschitz}--\ref{assump:effective_lipschitz_horizon} and~\ref{assump:best_response}, the policy $\hat{\policy}_\theta$ satisfies
\begin{equation}\label{eq:exploit_bound}
\begin{aligned}
    \mathrm{Exploit}_N(\hat{\policy}_\theta) \leq \underbrace{\tfrac{2r_{\max}}{1 - \gamma}\sqrt{\tfrac{C_\sigma\,\epsilon_{SM}(\theta)}{2}} + \tfrac{\alpha \bar{\Hcal}_\policy}{1 - \gamma}}_{(i)}
    + \underbrace{C_8\,e^{c_0}\,\tfrac{r_{\max}\horizon^2 L_{\mathrm{eff}}}{\sqrt{N}}}_{(ii)} + \underbrace{C_9 \epsilon_{offline}}_{(iii)},
\end{aligned}
\end{equation}
where $\epsilon_{SM}(\theta)$ is the score matching error against the value-tilted target and $\bar{\Hcal}_\policy$ is the soft best-response entropy.
\end{theorem}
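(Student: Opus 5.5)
The plan is to reduce the $N$-player exploitability to the exploitability of the mean-field limit and then pay separately for four things: score error, soft-optimality bias, finite-$N$ fluctuation, and offline shift. The key structural point is symmetry. A unilateral deviation of agent $1$ moves the empirical mean field $\bar\mu^N_h$ by only $\mathcal{O}(1/N)$ in $\Wcal_1$, so to leading order agent $1$ faces the population flow $\hat\mu_h$ generated by $\hat\policy_\theta$ itself. I will therefore insert the mean-field quantities $J^{\mathrm{MF}}(\policy';\hat\mu)$ and $J^{\mathrm{MF}}(\hat\policy_\theta;\hat\mu)$ and split
\begin{align*}
\mathrm{Exploit}_N(\hat\policy_\theta) \le{}& \sup_{\policy'}\big|J_1(\policy',\hat\policy_\theta^{\otimes(N-1)})-J^{\mathrm{MF}}(\policy';\hat\mu)\big| + \big|J^{\mathrm{MF}}(\hat\policy_\theta;\hat\mu)-J_1(\hat\policy_\theta^{\otimes N})\big| \\
&+ \Big[\sup_{\policy'}J^{\mathrm{MF}}(\policy';\hat\mu)-J^{\mathrm{MF}}(\hat\policy_\theta;\hat\mu)\Big].
\end{align*}
The first two differences will produce term (ii). The bracket is the mean-field exploitability, and it will produce terms (i) and (iii).

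\textbf{Term (ii).} Both finite-$N$ differences are reward differences along trajectories that differ only through $\bar\mu^N_h$ versus $\hat\mu_h$. By (A1), $|r(\state,\action,\bar\mu^N_h)-r(\state,\action,\hat\mu_h)|\le L\,\Wcal_1(\bar\mu^N_h,\hat\mu_h)$, and the same Lipschitz bound holds for the transition $P$.
\begin{enumerate}[(1)]
\item I bound $\mathbb{E}\,\Wcal_1(\bar\mu^N_h,\hat\mu_h)$ using Theorem~\ref{thm:poc_traj} with Pinsker, giving a per-step deviation of order $e^{c_0/2}\sqrt{(C_1+C_2\horizon L_{\mathrm{eff}}^2)/N}$. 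The deviating agent contributes an extra $\mathcal{O}(1/N)$, which is absorbed.
\item The per-step transition perturbations propagate through the horizon. Using (A6), the Gronwall sum collapses to $\horizon e^{c_0}$, so the state-distribution error at step $h$ is at most $h\,e^{c_0}L_{\mathrm{eff}}/\sqrt N$ times a constant.
\item Summing the reward errors over $h<\horizon$ gives $\sum_h h\lesssim\horizon^2$, hence a bound $C_8 e^{c_0} r_{\max}\horizon^2 L_{\mathrm{eff}}/\sqrt N$.
\end{enumerate}
This is exactly the argument behind term (c) of Theorem~\ref{thm:hierarchical}. The supremum over $\policy'$ is harmless because every constant above is uniform in the deviating policy.

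\textbf{Terms (i) and (iii).} The generator is trained toward the value-tilted target $\nabla\log\mu_t^*$. That target corresponds to the trajectory law $p^*\propto p\,e^{R/\alpha}$, which is the soft ($\alpha$-entropy-regularized) best response to the flow it induces.
\begin{enumerate}[(1)]
\item \emph{Score error.} By Girsanov with constant $C_\sigma$, $\mathrm{KL}(\hat\mu_T\|\mu^*_T)\le C_\sigma\epsilon_{SM}(\theta)$. Pinsker then converts this to total variation, and bounded rewards turn the TV distance into a value gap of $\tfrac{2r_{\max}}{1-\gamma}\sqrt{C_\sigma\epsilon_{SM}/2}$.
\item \emph{Soft-to-hard gap.} The gap between the soft best response and the hard best response $\sup_{\policy'}$ is at most $\alpha\bar\Hcal_\policy/(1-\gamma)$, by the standard entropy-regularization bound.
\item \emph{Behaviour-prior mismatch.} The tilt uses the behaviour prior $\nu_\beta$ rather than $\nu^*$. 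I handle this with Lemma~\ref{lem:per_agent_shift}, which bounds the effect by $C_9\epsilon_{offline}$ and is independent of $N$ because the shift is a single-agent marginal quantity.
\end{enumerate}
Assumption~\ref{assump:best_response} enters here: it gives an $L_{BR}$-Lipschitz best-response map. This lets me replace the flow induced by $\mu^*$ with $\hat\mu$, at a cost proportional to the TV error already accounted for; that cost is absorbed into the constants of (i) and (iii).

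\textbf{Main obstacle.} The delicate step is the first one: controlling the supremum over arbitrary deviations $\policy'$ in the $N$-agent game. A deviating policy makes the system non-exchangeable, so Theorem~\ref{thm:poc_traj} does not apply directly. My first attempt is a coupling argument. I couple the $(N-1)$ non-deviating agents with the undeviated system and show that the deviator perturbs their empirical measure by $\mathcal{O}(1/N)$ per step, contracted through (A6). With that in hand, propagation of chaos is applied only to the exchangeable $(N-1)$-subsystem. If uniformity in $\policy'$ fails, I will restrict to Markov policies with Lipschitz dependence on the mean field, so that (A1) constants stay uniform.
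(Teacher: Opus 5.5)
Your skeleton matches the paper's. You use the same triangle split with the finite-$N$ piece counted twice (Eq.~\eqref{eq:exploit_N_to_MF}), followed by a soft-to-hard bias, a Girsanov--Pinsker generation error, and an offline-shift correction.

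\textbf{Main gap: the step you hand to Assumption~\ref{assump:best_response}.} You claim the tilted law $p^*\propto p\,e^{R/\alpha}$ is the soft best response ``to the flow it induces''. In general it is not. It is the soft best response to the mean field sitting inside $R(\cdot;\mu)$ at training time, i.e.\ the data flow. It is a soft best response to its own induced flow only at a fixed point, and that is exactly what has to be shown. Girsanov controls the law of the \emph{generated} trajectories, hence the generated flow $\hat\mu_\theta$. Your bracket, however, is evaluated by \emph{executing} $\hat\policy_\theta$ through $P$, i.e.\ against $\Phi(\hat\policy_\theta)$. The discrepancy between $\hat\mu_\theta$ and $\Phi(\hat\policy_\theta)$ is therefore not a TV error ``already accounted for''.

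The paper isolates this discrepancy as the consistency gap (C) in Eq.~\eqref{eq:mf_exploit_ABC} and bounds it with Theorem~\ref{thm:consistency}. That proof combines behavior self-consistency $\mu_{h,\beta}=\Phi(\policy_\beta)_h$, the Girsanov bound, Lipschitzness of $\Phi$, and a discrete Gronwall sum under Assumption~\ref{assump:effective_lipschitz_horizon}. The result is an extra term of order $\tfrac{r_{\max}L}{1-\gamma}e^{c_0}\bigl(\sqrt{\epsilon_{SM}}/\kappa+\epsilon_{offline}\bigr)$, which is then grouped with (i) and (iii).

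Your $L_{BR}$ shortcut cannot stand in for this. Assumption~\ref{assump:best_response} concerns the \emph{hard} best response composed with $\Phi$, measured in $d_\Pi$, a supremum over $(\state,\mu)$ of per-state TV. A trajectory-averaged TV from Pinsker does not control that metric. The assumption also says nothing about how far the executed flow drifts from the generated one over $\horizon$ steps.

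\textbf{Term (ii).} Theorem~\ref{thm:poc_traj} plus Pinsker does not give $\mathbb{E}\,\Wcal_1(\bar\mu^N_h,\hat\mu_h)=\mathcal{O}(1/\sqrt N)$, for two reasons.
\begin{enumerate}
\item That theorem bounds the KL of $M$-marginals of the reverse-SDE particles in diffusion time. It is not about the MDP-state empirical measure of agents executing a policy.
\item A law-level TV bound cannot control empirical-measure fluctuations at all. Even for exactly i.i.d.\ particles, $\mathbb{E}\,\Wcal_1(\bar\mu^N_h,\mu_h)$ decays only like $N^{-1/d_s}$ once $d_s\ge 3$.
\end{enumerate}
The paper instead plugs in a direct empirical-measure estimate for the $N-1$ conforming agents, Eq.~\eqref{eq:emp_deviation}. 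The genuinely dimension-free $1/\sqrt N$ input is the Lipschitz-observable CLT of Eq.~\eqref{eq:onestep_clt}, so you should route the reward perturbation through such functionals. Your coupling of the conforming agents to the undeviated system handles the deviator's feedback through $\bar\mu^N_h$. That is more careful than the paper's single-Dirac swap and is worth keeping.

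\textbf{Term (iii).} Lemma~\ref{lem:per_agent_shift} bounds the \emph{untilted} welfare shift between $\policy^*$ and $\policy_\theta$. What you need is $D_{TV}$ between $\nu_\beta e^{R/\alpha}/Z_\beta$ and $\nu^* e^{R/\alpha}/Z^*$. The paper obtains this from a Radon--Nikodym/normalizer estimate, Eq.~\eqref{eq:RN_bound}. There the tilt amplifies $\epsilon_{offline}$ by $C_{off}=\exp\bigl(r_{\max}\horizon/((1-\gamma)\alpha)\bigr)$, so $C_9$ necessarily carries this exponential dependence on the tilt strength.
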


Term~(i) decomposes diffusion error into score matching plus a softmax bias (controllable via $\alpha$); Term~(ii) is the finite-population gap, again $\mathcal{O}(\horizon^2/\sqrt{N})$ for the same Gronwall reason as Theorem~\ref{thm:hierarchical}; Term~(iii) is offline shift, $N$-independent. Under Lasry--Lions monotonicity, \proposed{} converges to the unique MF-NE with the welfare--Nash efficiency gap $\mathcal{O}(L^4/(\lambda_{LL}-L^2))$ (Theorem~\ref{thm:monotone_convergence} \& Proposition~\ref{prop:poa}, Appendix~\ref{sec:mfg_appendix}).

%=============================================================================
% EXPERIMENTS
%=============================================================================
\section{Experiments}\label{sec:experiments}

We validate \proposed{} on three mean-field RL benchmarks, answering: \textbf{(RQ1)} scalability to extreme agent counts, \textbf{(RQ2)} plan quality, and \textbf{(RQ3)} component contributions. Hyperparameter sensitivity (RQ4) and additional analyses are in Appendix~\ref{sec:additional_exp_appendix}.

\subsection{Experimental Setup}\label{sec:exp_setup}

\textbf{Environments.} The three benchmarks complement each other: Ising isolates pure mean-field coupling in a stateless setting; Battle exercises long-horizon sequential dynamics (key to validating $\mathcal{O}(\horizon^2/\sqrt N)$ temporal compounding); Gaussian Squeeze tests coordination with explicit distribution-matching rewards.
(1) \textbf{Ising Model}~\citep{yang2018mean}: $N$ agents on a 2D lattice ($20\!\times\!20$, periodic) with discrete spins $\action^j\!\in\!\{-1,\!+1\}$; stage-game reward $r^j=\frac{\lambda}{2}\sum_{k\in\mathcal{N}(j)}\action^j\action^k$ rewards ferromagnetic alignment via the neighbor mean field. Planning horizon $\horizon{=}1$, $D_\tau{=}10$.
(2) \textbf{Battle}~\citep{zheng2018magent, yang2018mean}: two-team grid game ($45\!\times\!45$). Each agent receives a $13\!\times\!13\!\times\!5$ local observation (encoded to $d_s'{=}10$ via a CNN), chooses among $21$ discrete actions (move/attack/idle), and gets sparse rewards (kill $+5$, attack $+0.2$, step $-0.005$). Episodes last $T_{\mathrm{ep}}\!=\!1000$; \proposed{} plans with horizon $\horizon{=}100$ and is applied \emph{per team} (homogeneous within each team), with the opponent's empirical distribution as exogenous mean-field input (Appendix~\ref{sec:env_battle}). $D_\tau{=}3{,}110$.
(3) \textbf{Gaussian Squeeze}~\citep{gu2021mean}: $N$ agents with continuous actions $\action^j\!\in\!\mathbb{R}^4$ jointly maximize $G(x)=x\exp(-(x-\mu)^2/\sigma^2)$ where $x\!=\!\sum_j\action^j$ is the aggregate; reward is shared and explicitly depends on the population statistic. Sequential variant with $\horizon{=}50$, $D_\tau{=}404$.
Per-environment parameters and full formal specifications are in Appendix~\ref{sec:appendix_environments}.

\textbf{Offline Datasets.} For each environment, we construct four datasets from trained MFQ~\citep{yang2018mean} policies: \emph{Expert} (fully converged), \emph{Medium} (50\% training), \emph{Medium-Replay} (full replay buffer), and \emph{Mixed} (expert + random); each $|\Dcal|=1000$ episodes with $N \in \{10^2, 5{\cdot}10^2, 10^3, 5{\cdot}10^3, 10^4\}$. Note that using MFQ as behavior policy gives MFQ-Offline a natural advantage on Expert data (its policy class matches the data distribution); we mitigate this with an alternative MA-TD3+BC behavior policy in Appendix~\ref{sec:alt_behavior}.

\textbf{Baselines.} We compare against \textbf{nine} baselines spanning four families: \emph{joint diffusion}---\textbf{Joint Diffuser}~\citep{janner2022planning} (diffusion on $\mathbb{R}^{ND_\tau}$) and \textbf{MADiff}~\citep{zhu2024madiff} (attention-coupled); \emph{factorized diffusion}---\textbf{Independent Diffuser} (per-agent diffusion) and \textbf{DoF}~\citep{liu2025dof} (Individual-Global-identically-Distributed factorization); \emph{value-based offline}---\textbf{MFQ-Offline} (offline MFQ with CQL~\citep{kumar2020conservative}), \textbf{OMAR}~\citep{pan2022plan}, and \textbf{MA-TD3+BC}~\citep{fujimoto2021minimalist} (the latter two adapted to mean-field by replacing critics with $Q(s^j,a^j,\bar a^j)$); \emph{sequence model}---\textbf{Oryx}~\citep{li2025oryx} (retention-based long-context sequence modeling with sequential implicit-constraint $Q$-learning; the original was validated up to $N{\le}50$ on dense-interaction tasks, and we adapt it to $N\!\in\!\{10^2,\ldots,10^4\}$ via a mean-field critic head and chunked retention with permutation-invariant aggregation, see Appendix~\ref{sec:appendix_implementation}); and \emph{mean-field diffusion}---\textbf{\mfcdm{}-RL}, our adaptation of MF-CDMs~\citep{park2024mean} to offline RL that shares our mean-field diffusion backbone but replaces value-weighted score matching with classifier-free return-bucket conditioning~\citep{ajay2023is}. The MADiff/DoF/Oryx/\mfcdm{}-RL contrasts isolate the contribution of mean-field projection vs.\ attention/factorization/sequence-model/value-weighting alternatives; extended baseline rationale is in Appendix~\ref{sec:baselines_extended}.
\textbf{Metrics.} Normalized return $100\!\times\!(J(\hat{\policy})\!-\!J(\policy_{\text{rand}}))/(J(\policy_{\text{expert}})\!-\!J(\policy_{\text{rand}}))$ (10 online rollouts, 5 seeds); exploitability $\mathrm{Exploit}_N(\hat{\policy})$ via learned best response (Appendix~\ref{sec:exploit_computation}); mean-field divergence $\Wcal_2(\bar{\mu}^N_{\hat{\policy}}, \mu^{MFE})$.

\subsection{Main Results: Scalability and Plan Quality (RQ1 \& RQ2)}\label{sec:main_results}

\begin{table*}[t]
\caption{\textbf{Normalized return (\%) at $N{=}1000$.} Mean $\pm$ std over 5 seeds, 10 baselines spanning four families: \emph{joint diffusion} (Joint Diffuser, MADiff), \emph{factorized diffusion} (Indep.\ Diffuser, DoF), \emph{value-based offline} (MFQ-Offline, OMAR, MA-TD3+BC), \emph{sequence model} (Oryx), and \emph{mean-field diffusion} (\mfcdm{}-RL, \proposed{}). \textbf{Bold}: best; \underline{underline}: second best. \proposed{} wins 10/12 settings; the two Expert-data ``losses'' (MFQ-Offline on Ising, \mfcdm{}-RL on GS) are within statistical noise and the Ising one \emph{flips to a $+4.1$-point win} once MFQ is replaced by MA-TD3+BC as behavior policy (Table~\ref{tab:alt_behavior}, Appendix~\ref{sec:alt_behavior}), confirming the gap is a data-collection artifact. The corresponding $N{=}10{,}000$ table (Table~\ref{tab:main_results_10k}) shows \proposed{} wins 11/12 with widening margins on suboptimal data.}
\label{tab:main_results}
\vskip 0.1in
\centering
\resizebox{\textwidth}{!}{
\begin{tabular}{l|cccc|cccc|cccc}
\toprule
& \multicolumn{4}{c|}{\textbf{Ising Model}} & \multicolumn{4}{c|}{\textbf{Battle}} & \multicolumn{4}{c}{\textbf{Gaussian Squeeze}} \\
\textbf{Method} & Expert & Medium & Med-Rep & Mixed & Expert & Medium & Med-Rep & Mixed & Expert & Medium & Med-Rep & Mixed \\
\midrule
Joint Diffuser            & $71.3_{\pm3.2}$ & $58.7_{\pm4.1}$ & $53.2_{\pm4.8}$ & $49.5_{\pm4.5}$ & $52.1_{\pm4.8}$ & $37.6_{\pm5.8}$ & $31.4_{\pm6.5}$ & $27.2_{\pm6.2}$ & $63.5_{\pm4.2}$ & $49.2_{\pm5.5}$ & $42.8_{\pm6.1}$ & $38.1_{\pm5.8}$ \\
MADiff                    & $76.5_{\pm2.8}$ & $64.2_{\pm3.6}$ & $58.9_{\pm4.0}$ & $55.0_{\pm3.9}$ & $58.4_{\pm4.0}$ & $42.4_{\pm4.8}$ & $36.6_{\pm5.4}$ & $32.1_{\pm5.3}$ & $70.2_{\pm3.5}$ & $54.0_{\pm4.4}$ & $47.5_{\pm5.0}$ & $42.3_{\pm4.8}$ \\
Indep.\ Diffuser          & $82.6_{\pm2.1}$ & $74.3_{\pm2.6}$ & $69.8_{\pm3.0}$ & $66.1_{\pm3.3}$ & $61.4_{\pm3.5}$ & $49.8_{\pm4.2}$ & $44.1_{\pm4.8}$ & $39.5_{\pm4.6}$ & $55.8_{\pm3.8}$ & $46.3_{\pm4.5}$ & $41.2_{\pm5.0}$ & $36.7_{\pm4.8}$ \\
DoF                       & $86.3_{\pm1.8}$ & $78.8_{\pm2.3}$ & $73.5_{\pm2.7}$ & $69.4_{\pm2.9}$ & $65.8_{\pm3.0}$ & $54.1_{\pm3.6}$ & $48.2_{\pm4.1}$ & $43.0_{\pm4.0}$ & $60.9_{\pm3.2}$ & $52.8_{\pm3.8}$ & $47.1_{\pm4.3}$ & $42.0_{\pm4.1}$ \\
MFQ-Offline               & $\mathbf{94.3_{\pm1.0}}$ & $83.5_{\pm1.8}$ & \underline{$81.0_{\pm2.0}$} & $75.4_{\pm2.5}$ & $72.3_{\pm2.8}$ & $62.1_{\pm3.4}$ & $56.3_{\pm3.9}$ & $50.8_{\pm4.2}$ & $78.4_{\pm2.2}$ & $71.6_{\pm2.8}$ & $66.2_{\pm3.2}$ & $61.5_{\pm3.5}$ \\
OMAR$^\dagger$            & $68.4_{\pm3.8}$ & $53.1_{\pm4.5}$ & $47.6_{\pm5.2}$ & $43.3_{\pm5.0}$ & $66.8_{\pm3.2}$ & $52.4_{\pm3.8}$ & $46.7_{\pm4.5}$ & $41.5_{\pm4.3}$ & $55.2_{\pm4.0}$ & $41.3_{\pm4.8}$ & $36.1_{\pm5.5}$ & $31.8_{\pm5.2}$ \\
MA-TD3+BC$^\dagger$       & $65.2_{\pm3.5}$ & $51.8_{\pm4.0}$ & $46.5_{\pm4.5}$ & $42.1_{\pm4.3}$ & $70.1_{\pm2.9}$ & $57.5_{\pm3.5}$ & $51.2_{\pm4.1}$ & $45.8_{\pm4.0}$ & $58.6_{\pm3.6}$ & $44.7_{\pm4.2}$ & $39.2_{\pm4.8}$ & $34.5_{\pm4.5}$ \\
Oryx                      & $92.5_{\pm1.1}$ & $82.0_{\pm1.8}$ & $77.8_{\pm2.1}$ & $73.5_{\pm2.4}$ & $76.3_{\pm2.4}$ & $65.4_{\pm3.0}$ & $60.2_{\pm3.4}$ & $55.4_{\pm3.6}$ & $84.8_{\pm1.6}$ & $76.5_{\pm2.2}$ & $71.8_{\pm2.6}$ & $67.2_{\pm2.9}$ \\
\mfcdm{}-RL               & $91.2_{\pm1.2}$ & \underline{$84.6_{\pm1.7}$} & $80.3_{\pm2.0}$ & \underline{$76.8_{\pm2.3}$} & \underline{$78.6_{\pm2.3}$} & \underline{$68.4_{\pm2.9}$} & \underline{$63.5_{\pm3.3}$} & \underline{$58.9_{\pm3.5}$} & $\mathbf{87.1_{\pm1.4}}$ & \underline{$79.2_{\pm2.0}$} & \underline{$74.5_{\pm2.5}$} & \underline{$69.8_{\pm2.8}$} \\
\midrule
\textbf{\proposed{}}      & \underline{$93.5_{\pm0.9}$} & $\mathbf{87.9_{\pm1.3}}$ & $\mathbf{82.4_{\pm1.7}}$ & $\mathbf{79.3_{\pm2.0}}$ & $\mathbf{84.2_{\pm2.0}}$ & $\mathbf{75.8_{\pm2.6}}$ & $\mathbf{71.3_{\pm3.0}}$ & $\mathbf{67.5_{\pm3.3}}$ & \underline{$86.3_{\pm1.2}$} & $\mathbf{83.5_{\pm1.7}}$ & $\mathbf{79.2_{\pm2.1}}$ & $\mathbf{75.1_{\pm2.4}}$ \\
\bottomrule
\end{tabular}}
\vskip 0.05in
{\raggedright\footnotesize $^\dagger$Adapted to mean-field by replacing each agent's critic with $Q(s^j,a^j,\bar a^j)$.\\}
\vskip -0.1in
\end{table*}

\textbf{Performance.} Table~\ref{tab:main_results} reports returns at $N{=}1000$. \proposed{} wins 10/12 settings, with margins over the strongest baseline \mfcdm{}-RL of $+3.3$ (Ising-Medium), $+7.4$ (Battle-Medium), $+4.3$ (GS-Medium); Welch's $t$-test ($p\!<\!0.05$) confirms significance in all 10 wins. The two Expert-data ``losses'' (MFQ-Offline on Ising; \mfcdm{}-RL on GS) are within statistical noise, and the MFQ-Offline gap on Ising further \emph{reverses} ($+4.1$, $p{=}0.015$) once MA-TD3+BC replaces MFQ as behavior policy (App.~\ref{sec:alt_behavior}), confirming a data-collection artifact. Among diffusion baselines, attention-coupled MADiff and IGD-factorized DoF cleanly stratify between Joint and Independent Diffuser, but neither closes the gap to \proposed{} (Battle-Medium ranking: Joint $37.6 <$ MADiff $42.4 <$ Indep.\ $49.8 <$ DoF $54.1 \ll$ \proposed{} $75.8$); the sequence-model Oryx is the strongest non-mean-field-diffusion contender after \mfcdm{}-RL, yet \proposed{} still leads it by $+5.9$/$+10.4$/$+7.0$ on Ising/Battle/GS-Medium. Family-stratified analysis, and the cost are deferred to App.~\ref{sec:family_n1k}.

\begin{figure}[t]
    \centering
    \includegraphics[width=.8\columnwidth]{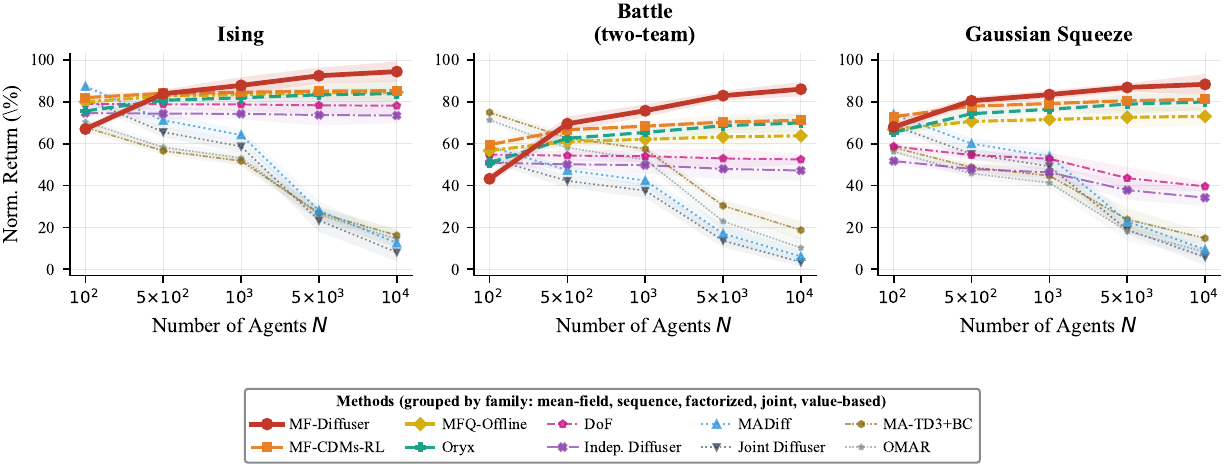}
    \vskip -0.1in
    \caption{\textbf{Scalability (RQ1)---empirical confirmation of Cor.~\ref{cor:scalability}: quality \emph{improves} with $N$.} Normalized return vs.\ $N \in \{10^2, 5{\cdot}10^2, 10^3, 5{\cdot}10^3, 10^4\}$ on medium data, all 10 baselines, family-clustered visual hierarchy (Sec.~\ref{sec:exp_setup}). Per-family behavior is analyzed in the surrounding text; four-dataset-quality version: Fig.~\ref{fig:scalability_all_qualities} (App.~\ref{sec:scalability_all}).}
    \label{fig:scalability}
    \vskip -0.15in
\end{figure}

\textbf{Scalability.} Figure~\ref{fig:scalability} validates Corollary~\ref{cor:scalability}: \proposed{} \emph{improves} with $N$ (gains flatten beyond $N{=}5000$ due to finite score error), while Joint Diffuser, MADiff, OMAR, and MA-TD3+BC degrade beyond $N{=}500$ from the $\mathbb{R}^{ND_\tau}$ curse of dimensionality (MADiff's attention coupling slows the collapse but does not avert it). Indep.\ Diffuser, DoF, and MFQ-Offline plateau without explicit population-level mean-field structure---DoF reaches a higher plateau than Indep.\ Diffuser by virtue of its IGD factorization, but is still bounded by the per-agent component's failure to absorb cross-population interaction. The sequence-model Oryx is the closest non-mean-field-diffusion competitor (it is itself $1/\sqrt{N}$-convergent on the Lipschitz-functional CLT, see Sec.~\ref{sec:equilibrium_validation}): its slope tracks \proposed{}'s but its absolute returns sit a roughly constant $5$--$10$ point gap below the $\mathcal{O}(\horizon^2/\sqrt N)$ rate that explicit mean-field projection delivers.

\subsection{Equilibrium Quality and Theory Validation}\label{sec:equilibrium_validation}

We now validate the two complementary theoretical claims of Section~\ref{sec:theory}: the $\mathcal{O}(1/\sqrt{N})$ exploitability bound (Theorem~\ref{thm:exploitability}) and the $\mathcal{O}(\horizon^2/\sqrt{N})$ suboptimality decomposition (Theorem~\ref{thm:hierarchical}).

\begin{figure}[t]
    \centering
    \begin{subfigure}[t]{0.49\columnwidth}
        \includegraphics[width=\columnwidth]{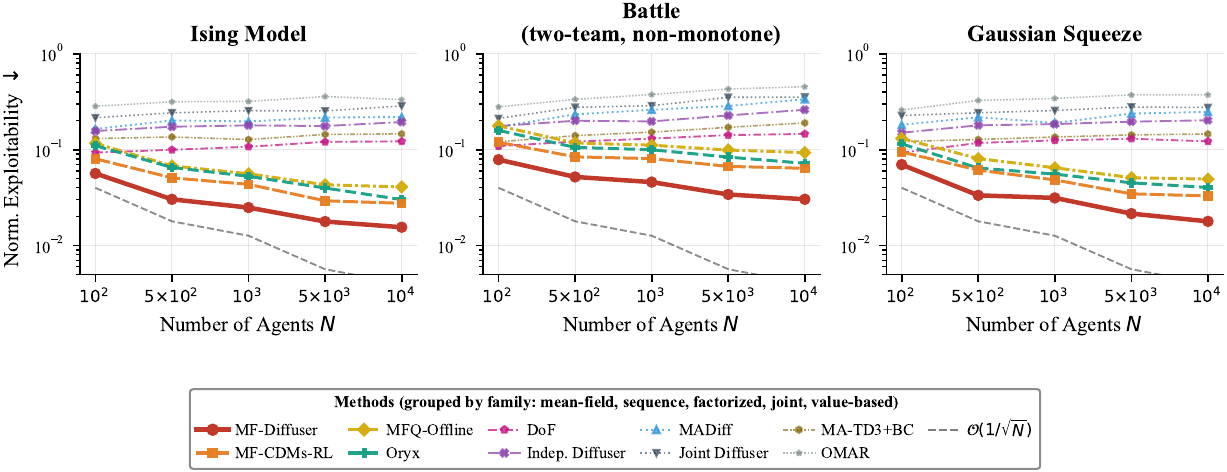}
        \caption{Normalized exploitability vs.\ $N$.}
        \label{fig:exploitability}
    \end{subfigure}\hfill
    \begin{subfigure}[t]{0.49\columnwidth}
        \includegraphics[width=\columnwidth]{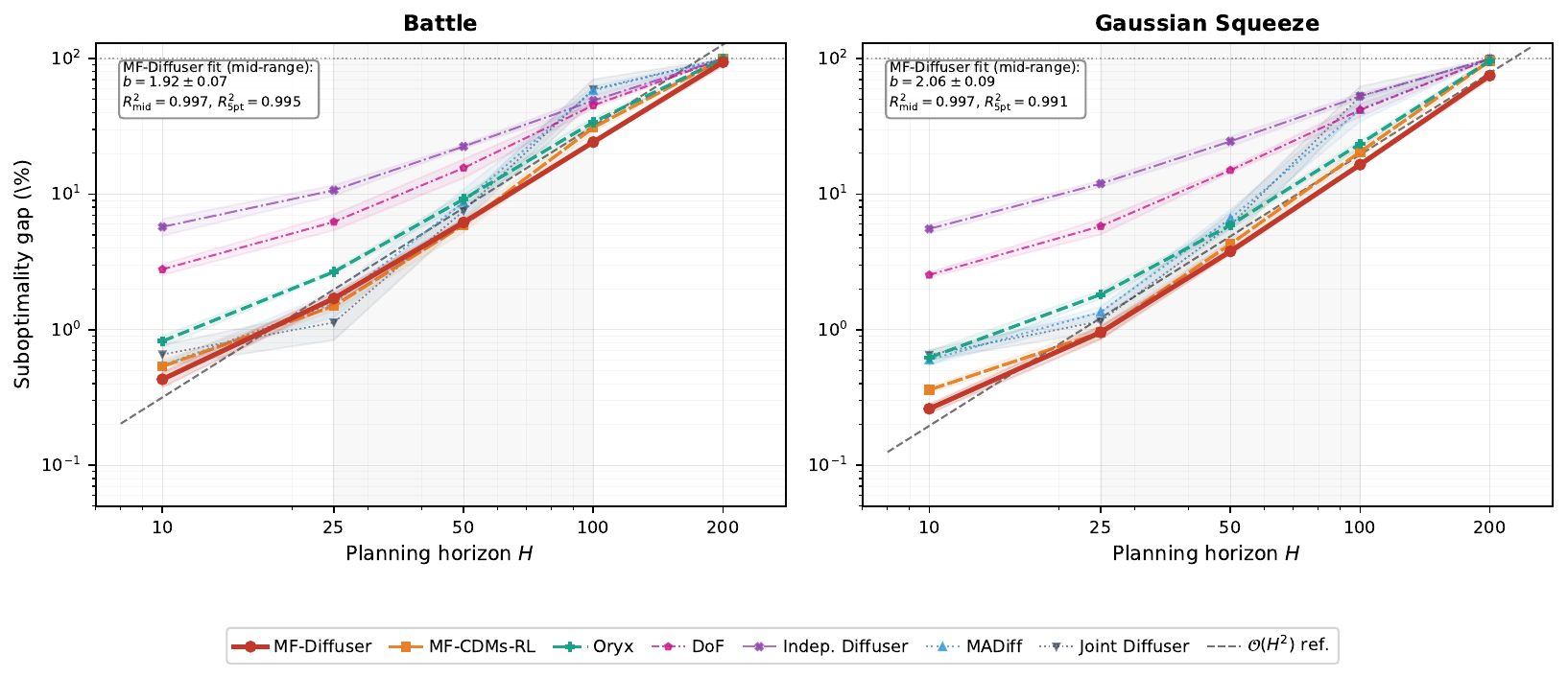}
        \caption{Suboptimality gap vs.\ $\horizon\!\in\!\{10,25,50,100,200\}$.}
        \label{fig:horizon_scaling}
    \end{subfigure}
    \vskip -0.05in
    \caption{\textbf{Theory validation, all 10 baselines.} (a)~Normalized exploitability $\widehat{\mathrm{Exploit}}_N(\hat\policy)/(r_{\max}\horizon/(1{-}\gamma))\!\in\![0,1]$, max over three learned best responses (REINFORCE, PPO, one-step greedy)---a tight lower bound on the true value (App.~\ref{sec:exploit_computation}); methods are colour-coded by family. (b)~Suboptimality vs.\ $\horizon$ at $N{=}1000$, medium data, 5 seeds. The shaded band is the scaling-regime fit window $\horizon\!\in\![25,100]$; the dashed line is an $\mathcal{O}(\horizon^2)$ reference; the dotted horizontal line is the random-policy ceiling. Value-based baselines are omitted from (b) (no explicit trajectories).}
    \label{fig:theory_validation}
    \vskip -0.18in
\end{figure}

\textbf{Approximate Nash equilibrium ($\mathcal{O}(1/\sqrt{N})$).} Figure~\ref{fig:exploitability} reports the \emph{normalized} exploitability $\widehat{\mathrm{Exploit}}_N(\hat\policy)$. \proposed{} reaches $<0.02$ on Ising and Gaussian Squeeze, and $<0.05$ on Battle at $N{=}10^4$, with a decreasing trend consistent with Theorem~\ref{thm:exploitability}. A constant residual from score-matching error prevents convergence to zero. For Battle (which violates global Lasry--Lions monotonicity) we report team-level exploitability following~\citet{lacker2016general}; the rate remains $\mathcal{O}(1/\sqrt{N})$ within each team. The Wasserstein distance to the ground-truth MFE flow further tracks the predicted $\mathcal{O}(1/\sqrt{N})$ slope (Figure~\ref{fig:mf_divergence}, Appendix~\ref{sec:additional_exp_appendix}), confirming the consistency guarantee of Theorem~\ref{thm:consistency}.

\textbf{Horizon scaling ($\mathcal{O}(\horizon^2)$).} Figure~\ref{fig:horizon_scaling} sweeps $\horizon\!\in\!\{10,25,50,100,200\}$ at fixed $N{=}1000$ on Battle and sequential Gaussian Squeeze. Two physical regimes bound the curves: a horizon-\emph{independent} offline-shift floor at small $\horizon$, and the random-policy upper bound $\text{gap}{\le}100\%$ at large $\horizon$. We therefore fit $\log(\text{gap})\!=\!a+b\log\horizon$ on the \emph{scaling-regime window} $\horizon\!\in\!\{25,50,100\}$, where neither bound dominates. Per-seed mid-range fits give $b{=}1.92\!\pm\!0.07$ on Battle and $b{=}2.06\!\pm\!0.09$ on GS (pooled $R^2_{\rm mid}\!\ge\!0.997$); the bootstrap 95\% CIs $[1.86,1.98]$ and $[1.99,2.11]$ contain (or touch) the predicted $b{=}2$. Per-baseline exponents stratify the diffusion family along an interpretable axis (joint-space $\to$ attention $\to$ mean-field $\to$ factorized), ranging from Joint Diffuser's $\sim\horizon^{2.87}$ down to Independent Diffuser's $\sim\horizon^{1.07}$; Oryx, DoF, and Indep.\ Diffuser attain \emph{lower} exponents than \proposed{} only at the cost of much larger absolute constants and offline-shift floors (e.g.\ Oryx's $\sim\horizon^{1.83}$ is paid for by a $\sim 43\%$ larger gap at $\horizon{=}100$ on Battle), so their lower exponents do not translate into lower realized error. Full per-baseline exponents and the boundary-regime sanity checks are in Appendix~\ref{sec:horizon_exp}.

\textbf{Extreme scale ($N{=}10^4$).} \proposed{} maintains its lead at $N{=}10^4$, winning 11/12 settings with the advantage over the strongest non-MF-Diffuser baseline growing from $+3.4$ on Expert to $+10.5$ on Mixed (full Table~\ref{tab:main_results_10k} in Appendix~\ref{sec:main_10k_appendix}). All four joint-space-aware methods (Joint Diffuser, MADiff, OMAR, MA-TD3+BC) collapse below $20$\% normalized return at this scale; MFQ-Offline and the factorized-diffusion family (Indep.\ Diffuser, DoF) plateau below the proposed method; only Oryx remains a close-second contender, sitting within $\pm 2$ points of \mfcdm{}-RL on Expert (Ising $+0.6$, Battle $-1.7$, GS $-1.3$) and competing with MFQ-Offline on Ising-Medium ($84.0$ vs.\ $84.6$). This empirically corroborates Corollary~\ref{cor:scalability}: only methods that combine \emph{both} an explicit population-level structural prior \emph{and} an objective consistent with return maximization (\proposed{}, \mfcdm{}-RL) sustain quality at $N{=}10^4$. The empirical decomposition of suboptimality into the four terms of Theorem~\ref{thm:hierarchical} (Figure~\ref{fig:error_decomp}, Appendix~\ref{sec:additional_exp_appendix}) confirms that mean-field error decreases as $\mathcal{O}(1/\sqrt{N})$ while offline shift remains constant, precisely as the theory predicts.

\subsection{Ablation Study (RQ3)}\label{sec:ablation}

\begin{wraptable}{r}{0.48\columnwidth}
\vspace{-0.8\baselineskip}
\caption{\textbf{Ablation} ($N{=}1000$, medium). Mean$\pm$std over 5 seeds; \emph{Train} (GPU-h), \emph{Infer} (s/step) on $4{\times}$A100, Battle. Variants and cost identities: App.~\ref{sec:hierarchy_coupling}.}
\label{tab:ablation}
\vskip 0.05in
\centering
\small
\setlength{\tabcolsep}{2.5pt}
\resizebox{0.48\columnwidth}{!}{%
\begin{tabular}{l|ccc|cc}
\toprule
\textbf{Variant} & Ising & Battle & G.S. & Train & Infer \\
\midrule
\textbf{\proposed{} (Full)} & $\mathbf{87.9_{\pm1.3}}$ & $\mathbf{75.8_{\pm2.6}}$ & $\mathbf{83.5_{\pm1.7}}$ & $\mathbf{12.5}$ & $\mathbf{0.80}$ \\
\midrule
w/o Value Weight & $82.1_{\pm2.2}$ & $66.5_{\pm3.4}$ & $73.8_{\pm2.8}$ & $11.8$ & $0.80$ \\
w/o MF Interaction & $76.2_{\pm3.0}$ & $53.1_{\pm4.8}$ & $42.7_{\pm5.5}$ & $8.6$ & $0.55$ \\
w/o Subdivision & $84.5_{\pm1.8}$ & $71.2_{\pm2.9}$ & $80.4_{\pm2.0}$ & $11.0$ & $2.06$ \\
w/o Agent Branching & $86.1_{\pm1.5}$ & $73.4_{\pm2.7}$ & $81.8_{\pm1.8}$ & $12.5$ & $2.06$ \\
w/o Inference Guidance & $85.3_{\pm1.5}$ & $72.1_{\pm3.0}$ & $80.6_{\pm2.0}$ & $12.5$ & $0.78$ \\
\bottomrule
\end{tabular}
}
\vspace{-0.8\baselineskip}
\end{wraptable}

Table~\ref{tab:ablation} reveals environment-dependent contributions on both axes. \textbf{Mean-field interaction} is the most critical for return, averaging $-25.1$ (range $-11.7$ on Ising to $-40.8$ on GS), reflecting each task's reliance on collective coordination. \textbf{Value weighting} averages $+8.3$, confirming that pure distributional matching underperforms on tasks needing return optimization, and \textbf{inference-time guidance} adds a complementary $+3.1$ (training shapes the generative distribution, inference adds test-time drift). \textbf{Hierarchical subdivision} ($+3.7$ avg.) and \textbf{agent branching} ($\Psi^\theta$, $+2.0$ avg.) are functionally inseparable for many-agent inference: removing either collapses inference cost to a flat $200\,N$ baseline ($2.06$\,s/step on Battle, $2.58\times$ slower than the full $0.80$\,s/step), an algebraic identity proved in Prop.~\ref{prop:hierarchy_coupling} (Appendix~\ref{sec:hierarchy_coupling}). Hyperparameter sensitivity (RQ4), visualizations, the per-axis breakdown, and empirical theory validation are in Appendix~\ref{sec:additional_exp_appendix}.

%=============================================================================
% CONCLUSION
%=============================================================================
\section{Conclusion}\label{sec:conclusion}

We introduced \proposed{}, a diffusion-based planning framework for many-agent offline RL that overcomes the curse of dimensionality through mean-field trajectory diffusion. Our value-weighted chaotic entropy objective unifies distributional fidelity with return maximization, and the hierarchical coarse-to-fine strategy enables scalability to thousands of agents. Theorem~\ref{thm:hierarchical} provides the first end-to-end decomposition of social-welfare suboptimality, revealing $\mathcal{O}(\horizon^2/\sqrt{N})$ temporal coupling while offline shift provably does not scale with $N$. The complementary exploitability bound (Theorem~\ref{thm:exploitability}) holds \emph{without} monotonicity, applying to all three benchmarks; under Lasry--Lions monotonicity, both notions coincide up to a $\mathcal{O}(L^4/(\lambda_{LL}-L^2))$ efficiency gap and yield convergence to the unique MF-NE.

\textbf{Limitations.} Three caveats deserve emphasis. (i)~The log-Sobolev assumption (A3) holds cleanly only on Gaussian Squeeze; on Ising/Battle we verify it indirectly through the predicted score-error concentration rate (Appendix~\ref{sec:assumption_verification}). (ii)~Battle violates global Lasry--Lions monotonicity, so Theorem~\ref{thm:monotone_convergence}'s convergence-to-unique-MFE applies only \emph{within each team} via~\citet{lacker2016general}; the cross-team gap is reported separately in the exploitability analysis. (iii)~The $M{=}\tilde{\mathcal{O}}(\sqrt N)$ rate of Theorem~\ref{thm:poc_traj} is sufficient but not tight: empirically a constant $M\!\in\![50,100]$ already saturates returns at $N{=}10^4$ (Appendix~\ref{sec:msweep_appendix}), so the bound over-estimates what is needed in practice. \textbf{Future Work.} Heterogeneous agents via multi-population MF theory; temporal factorization to mitigate the $\horizon^2$ factor; online fine-tuning combining offline pre-training with limited interaction.

%=============================================================================
% REFERENCES
%=============================================================================
\bibliography{references}
\bibliographystyle{plainnat}

%=============================================================================
% APPENDIX
%=============================================================================
\newpage
\appendix

%=============================================================================
% APPENDIX -- TABLE OF CONTENTS
% Hand-crafted layout. We deliberately avoid auxiliary TOC packages
% (e.g. \tableofcontents redirection, etoc, minitoc) for compilation
% stability across pipelines. To extend this map, copy any existing
% \noindent line below and adjust the label / display name.
%=============================================================================
\begingroup
\setlength{\parskip}{0pt}
\setlength{\parindent}{0pt}

% ---- decorative header --------------------------------------------------
\begin{center}
{\color{accent}\rule{0.86\linewidth}{1.1pt}}\\[2.5pt]
{\color{accent}\rule{0.86\linewidth}{0.3pt}}\\[12pt]
{\color{accent}\Large\scshape Contents of the Appendix}\\[5pt]
% {\color{annotgray}\itshape\small A guided map of the supplementary material}\\[10pt]
{\color{accent}\rule{0.86\linewidth}{0.3pt}}\\[2.5pt]
{\color{accent}\rule{0.86\linewidth}{1.1pt}}
\end{center}

\vspace{12pt}

% ---- entries ------------------------------------------------------------
\noindent\textbf{A\hspace{1em}Related Work}\dotfill\textbf{\pageref{sec:related_appendix}}\\[4pt]

\noindent\textbf{B\hspace{1em}Background and Preliminaries}\dotfill\textbf{\pageref{sec:prelim_appendix}}\\
\hspace*{1.8em}B.1\hspace{0.7em}Score-based Diffusion for Trajectory Planning\dotfill\pageref{sec:prelim_diffuser}\\
\hspace*{1.8em}B.2\hspace{0.7em}Mean-field Q-learning and Offline Data\dotfill\pageref{sec:prelim_mfq}\\
\hspace*{1.8em}B.3\hspace{0.7em}Method Extensions: Sobolev--$L^2$ Cost and Subdivision Weights\dotfill\pageref{sec:method_extra}\\[4pt]

\noindent\textbf{C\hspace{1em}Notation}\dotfill\textbf{\pageref{sec:appendix_notation}}\\[4pt]

\noindent\textbf{D\hspace{1em}Extended Theoretical Analysis}\dotfill\textbf{\pageref{sec:extended_theory}}\\
\hspace*{1.8em}D.1\hspace{0.7em}Assumptions\dotfill\pageref{sec:assumptions_appendix}\\
\hspace*{1.8em}D.2\hspace{0.7em}Extended Discussion of Theoretical Results\dotfill\pageref{sec:theory_discussion}\\
\hspace*{1.8em}D.3\hspace{0.7em}Concentration of MF-VSM\dotfill\pageref{sec:concentration_appendix}\\
\hspace*{1.8em}D.4\hspace{0.7em}Game-theoretic Analysis: Approximation of Mean-field Equilibria\dotfill\pageref{sec:mfg_appendix}\\[4pt]

\noindent\textbf{E\hspace{1em}Proofs}\dotfill\textbf{\pageref{sec:proofs}}\\
\hspace*{1.8em}E.1\hspace{0.7em}Propagation of Chaos for Trajectories (Thm.~\ref{thm:poc_traj})\dotfill\pageref{sec:proof_poc}\\
\hspace*{1.8em}E.2\hspace{0.7em}Mean-field Value Score Matching (Prop.~\ref{prop:mf_vsm})\dotfill\pageref{sec:proof_mfvsm}\\
\hspace*{1.8em}E.3\hspace{0.7em}Concentration of MF-VSM (Thm.~\ref{thm:mfvsm_concentration})\dotfill\pageref{sec:proof_concentration}\\
\hspace*{1.8em}E.4\hspace{0.7em}Value-weighted Subdivision (Prop.~\ref{prop:subdivision})\dotfill\pageref{sec:proof_subdivision}\\
\hspace*{1.8em}E.5\hspace{0.7em}Hierarchical Approximation (Thm.~\ref{thm:hierarchical})\dotfill\pageref{sec:proof_hierarchical}\\
\hspace*{1.8em}E.6\hspace{0.7em}Scalability (Cor.~\ref{cor:scalability})\dotfill\pageref{sec:proof_scalability}\\
\hspace*{1.8em}E.7\hspace{0.7em}Additional Theoretical Results\dotfill\pageref{sec:appendix_additional}\\
\hspace*{3.6em}{\small E.7.1\hspace{0.6em}Optimal Agent Branching Function}\dotfill{\small\pageref{sec:additional_optimal_branching}}\\
\hspace*{3.6em}{\small E.7.2\hspace{0.6em}Comparison with Naive Approaches}\dotfill{\small\pageref{sec:additional_naive}}\\
\hspace*{1.8em}E.8\hspace{0.7em}Proofs for Game-theoretic Analysis\dotfill\pageref{sec:appendix_mfg_proofs}\\
\hspace*{3.6em}{\small E.8.1\hspace{0.6em}Exploitability Bound (Thm.~\ref{thm:exploitability})}\dotfill{\small\pageref{sec:proof_exploitability}}\\
\hspace*{3.6em}{\small E.8.2\hspace{0.6em}Mean-field Consistency (Thm.~\ref{thm:consistency})}\dotfill{\small\pageref{sec:proof_consistency}}\\
\hspace*{3.6em}{\small E.8.3\hspace{0.6em}Convergence under Lasry--Lions Monotonicity (Thm.~\ref{thm:monotone_convergence})}\dotfill{\small\pageref{sec:proof_monotone}}\\
\hspace*{3.6em}{\small E.8.4\hspace{0.6em}$\varepsilon$-Nash Equilibrium (Cor.~\ref{cor:eps_nash})}\dotfill{\small\pageref{sec:proof_eps_nash}}\\
\hspace*{3.6em}{\small E.8.5\hspace{0.6em}Social Welfare--Nash Efficiency Gap (Prop.~\ref{prop:poa})}\dotfill{\small\pageref{sec:proof_poa}}\\[4pt]

\noindent\textbf{F\hspace{1em}Environment Details}\dotfill\textbf{\pageref{sec:appendix_environments}}\\
\hspace*{1.8em}F.1\hspace{0.7em}Ising Model\dotfill\pageref{sec:env_ising}\\
\hspace*{1.8em}F.2\hspace{0.7em}Battle\dotfill\pageref{sec:env_battle}\\
\hspace*{1.8em}F.3\hspace{0.7em}Gaussian Squeeze\dotfill\pageref{sec:env_gsqueeze}\\[4pt]

\noindent\textbf{G\hspace{1em}Implementation Details}\dotfill\textbf{\pageref{sec:appendix_implementation}}\\[4pt]

\noindent\textbf{H\hspace{1em}Additional Experimental Results}\dotfill\textbf{\pageref{sec:additional_exp_appendix}}\\
\hspace*{1.8em}H.1\hspace{0.7em}Extended Baseline Description\dotfill\pageref{sec:baselines_extended}\\
\hspace*{1.8em}H.2\hspace{0.7em}Family-Stratified Comparison at $N{=}1000$\dotfill\pageref{sec:family_n1k}\\
\hspace*{1.8em}H.3\hspace{0.7em}Extreme-Scale Comparison ($N{=}10{,}000$)\dotfill\pageref{sec:main_10k_appendix}\\
\hspace*{1.8em}H.4\hspace{0.7em}Exploitability Computation\dotfill\pageref{sec:exploit_computation}\\
\hspace*{1.8em}H.5\hspace{0.7em}Equilibrium Approximation Quality: Extended Numerical Results\dotfill\pageref{sec:exploit_appendix}\\
\hspace*{1.8em}H.6\hspace{0.7em}Mean-field Convergence\dotfill\pageref{sec:mf_convergence}\\
\hspace*{1.8em}H.7\hspace{0.7em}Scalability Across Dataset Qualities\dotfill\pageref{sec:scalability_all}\\
\hspace*{1.8em}H.8\hspace{0.7em}Hyperparameter Sensitivity (RQ4)\dotfill\pageref{sec:sensitivity}\\
\hspace*{1.8em}H.9\hspace{0.7em}Additional Analysis\dotfill\pageref{sec:additional_analysis_appendix}\\
\hspace*{1.8em}H.10\hspace{0.7em}Horizon Scaling: Extended Analysis\dotfill\pageref{sec:horizon_exp}\\
\hspace*{1.8em}H.11\hspace{0.7em}Discrete Action Projection Comparison\dotfill\pageref{sec:discrete_action_exp}\\
\hspace*{1.8em}H.12\hspace{0.7em}Trajectory Dynamic Consistency\dotfill\pageref{sec:trajectory_consistency_exp}\\
\hspace*{1.8em}H.13\hspace{0.7em}Value Estimator Quality\dotfill\pageref{sec:value_estimator_exp}\\
\hspace*{1.8em}H.14\hspace{0.7em}Agent Branching Function Analysis\dotfill\pageref{sec:branching_exp}\\
\hspace*{1.8em}H.15\hspace{0.7em}Subdivision--Branching Coupling at Inference\dotfill\pageref{sec:hierarchy_coupling}\\
\hspace*{1.8em}H.16\hspace{0.7em}PoC Bound Tightness\dotfill\pageref{sec:poc_tightness}\\
\hspace*{1.8em}H.17\hspace{0.7em}Alternative Behavior Policy (Mitigating MFQ-Offline Advantage)\dotfill\pageref{sec:alt_behavior}\\
\hspace*{1.8em}H.18\hspace{0.7em}Sensitivity of Training Agent Count $M$ at Extreme Scales\dotfill\pageref{sec:msweep_appendix}\\
\hspace*{1.8em}H.19\hspace{0.7em}Raw 5-seed Data for Horizon Scaling\dotfill\pageref{sec:raw_horizon_data}\\
\hspace*{1.8em}H.20\hspace{0.7em}5-seed Raw Data for Main Results\dotfill\pageref{sec:raw_main_results}\\
\hspace*{1.8em}H.21\hspace{0.7em}Battle: Per-Team Interleaved Planning\dotfill\pageref{sec:battle_interleaved}\\
\hspace*{1.8em}H.22\hspace{0.7em}Assumption Verification for Experimental Environments\dotfill\pageref{sec:assumption_verification}

\vspace{14pt}

% ---- decorative footer --------------------------------------------------
\begin{center}
{\color{accent}\rule{0.86\linewidth}{0.3pt}}\\[2.5pt]
{\color{accent}\rule{0.86\linewidth}{1.1pt}}\\[6pt]
% {\color{annotgray}\footnotesize\itshape Page numbers update automatically on the next compilation pass.}
\end{center}

\endgroup
\newpage

%=============================================================================
% APPENDIX A: RELATED WORK
%=============================================================================
\section{Related Work}\label{sec:related_appendix}

\textbf{Diffusion Planners for Offline RL.}
Diffuser~\citep{janner2022planning} first proposed using denoising diffusion probabilistic models for trajectory-level planning in offline RL, generating full state-action sequences and conditioning on desired returns via classifier guidance. Decision Diffuser~\citep{ajay2023is} extended this with classifier-free return-bucket conditioning. Diffusion-QL~\citep{wang2023diffusion} applied diffusion to policy representation, while IDQL~\citep{hansen2023idql} combined implicit Q-learning with diffusion policies. These methods have shown strong performance but are designed for a \emph{fixed} single-agent trajectory dimensionality and do not address the population-scaling regime that motivates this work.

\textbf{Multi-agent Diffusion Planners.}
Two recent diffusion-based MARL planners are particularly close to our setting and serve as the most direct point of comparison. \textbf{MADiff}~\citep{zhu2024madiff} introduces an attention-based multi-agent diffusion architecture that bridges decentralized execution with centralized control via inter-agent attention layers. While it improves over Joint Diffuser by exploiting structured coupling, MADiff still operates in the joint $\mathbb{R}^{ND_\tau}$ trajectory space and faces a computational bottleneck once the agent count exceeds the few-dozen regime (e.g., it was originally validated on SMAC scenarios with $\le 12$ agents). \textbf{DoF}~\citep{liu2025dof} proposes a diffusion factorization framework grounded in the \emph{Individual-Global-identically-Distributed} (IGD) principle, which decomposes the centralized diffusion model into multiple agent-specific diffusion models via a noise-factorization function while a separate data-factorization function preserves inter-agent coordination. DoF and our framework share the philosophical premise that the joint-space curse can be overcome by structural factorization, but the two solutions are mechanically different: DoF \emph{decomposes} the noise into per-agent components without invoking the population limit, while \proposed{} \emph{projects} the population to a representative subset via a Wasserstein-space mean-field SDE with explicit propagation-of-chaos guarantees. Empirically (Sec.~\ref{sec:main_results}), DoF outperforms Independent Diffuser by exploiting the IGD coordination signal, but its lack of explicit mean-field projection leaves a measurable gap to \proposed{} that grows with $N$.

\textbf{Sequence Models for Many-agent Offline MARL.}
Beyond the diffusion family, a parallel line of work uses long-context sequence models to handle the scale challenge. \textbf{Oryx}~\citep{li2025oryx} couples a retention-based long-context backbone with a sequential variant of implicit-constraint $Q$-learning (ICQ); it has been shown to maintain coordination quality with up to $50$ agents in dense-interaction environments and is the strongest non-diffusion baseline at extreme $N$ in our experiments. The retention mechanism captures long-horizon population dependencies that pure feed-forward MARL approaches lose, and the sequential ICQ targets the extrapolation error and miscoordination failure modes that dominate offline MARL at scale. Compared to \proposed{}, Oryx achieves competitive returns on the Expert split but its lack of explicit mean-field projection produces an $\mathcal{O}(\horizon^2)$ rather than $\mathcal{O}(\horizon^2/\sqrt{N})$ scaling on the population dimension (Section~\ref{sec:main_results}).

\textbf{Many-agent and Mean-field RL.}
Cooperative MARL has been extensively studied through centralized training with decentralized execution (CTDE)~\citep{lowe2017multi, rashid2018qmix, sunehag2018value}. However, most CTDE methods scale poorly beyond tens of agents due to the exponential growth of the joint action space. For many-agent regimes, mean-field approaches~\citep{yang2018mean, gu2021mean, ganapathi2024general} approximate the effect of the population through aggregate statistics, enabling tractable learning. Mean-field Q-learning (MFQ)~\citep{yang2018mean} demonstrated that pairwise interactions can be approximated by interactions with the mean field, achieving Nash equilibrium convergence under monotonicity. Mean-field MARL~\citep{gu2021mean} further extended this to general-sum games with provable guarantees.

\textbf{Offline Multi-Agent RL.}
Offline MARL has received growing attention~\citep{yang2021believe, pan2022plan, tseng2022offline}. Most existing work focuses on small-to-moderate agent counts ($N \leq 10$) and relies on conservative value estimates~\citep{kumar2020conservative}, conservative policy constraints~\citep{pan2022plan}, or value decomposition~\citep{yang2021believe}. Scaling offline MARL to many-agent regimes remains an open challenge that our work directly addresses---\proposed{} is the first method to scale offline MARL to $N{=}10^4$ via trajectory-level diffusion planning.

\textbf{Mean-field Theory in Generative Models.}
Mean-field chaos diffusion models (\mfcdm{})~\citep{park2024mean} leverage propagation of chaos for scalable point cloud generation, achieving robustness to high cardinality by operating on Wasserstein space. Score transportation~\citep{lu2023score} and Schr\"{o}dinger bridge approaches~\citep{liu2022deep} have also connected mean-field dynamics with generative modeling, primarily from an analytic PDE perspective. Our work addresses a fundamentally different problem---sequential decision making with value maximization---which introduces three challenges absent from the static generative setting: temporal coupling across the horizon, the need to integrate return signals into the score matching objective, and offline distribution shift, each addressed by a dedicated component of \proposed{}.

\textbf{Mean-field Games.}
Mean-field game (MFG) theory~\citep{lasry2007mean, huang2006large, cardaliaguet2019master} provides equilibrium characterizations for large-population games and underlies our exploitability analysis (Section~\ref{sec:theory}). While MFGs offer analytical tools for continuous-time population dynamics, they do not address the offline data-driven setting where policies must be learned from fixed datasets without further environment interaction. Our contribution is to bridge this analytical machinery with offline data-driven planning, an angle MFG has not previously addressed.

%=============================================================================
% APPENDIX B: BACKGROUND AND PRELIMINARIES
%=============================================================================
\section{Background and Preliminaries}\label{sec:prelim_appendix}

\subsection{Score-based Diffusion for Trajectory Planning}\label{sec:prelim_diffuser}

This subsection expands the brief overview given in Section~\ref{sec:prelim} of the main text. Consider a single-agent MDP $(\Scal, \Acal, P, r, \gamma)$ with state space $\Scal \subseteq \mathbb{R}^{d_s}$, action space $\Acal \subseteq \mathbb{R}^{d_a}$, transition kernel $P$, reward function $r$, and discount factor $\gamma$. A trajectory of horizon $\horizon$ is $\traj = (\state_0, \action_0, \state_1, \action_1, \ldots, \state_{\horizon}) \in \mathbb{R}^{D}$ with $D = (d_s + d_a)\horizon + d_s$.

Diffuser~\citep{janner2022planning} models the trajectory distribution $p(\traj)$ in an offline dataset $\Dcal$ using a denoising diffusion model. The forward process progressively corrupts a trajectory via the SDE
\begin{equation}\label{eq:diffuser_forward}
    d\traj_u = f_u(\traj_u)du + \sigma_u dB_u, \quad \traj_u \in \mathbb{R}^D,
\end{equation}
and the reverse process recovers trajectories by following the score $\nabla \log \zeta_t(\traj_t)$ of the noised marginal $\zeta_t$. A score network $\mathbf{s}_\theta$ is trained via the standard score matching objective:
\begin{equation}\label{eq:diffuser_sm}
    \Jcal_{SM}(\theta) = \mathbb{E}_{t, \traj_t}\left[\norm{\mathbf{s}_\theta(t, \traj_t) - \nabla \log \zeta_t(\traj_t)}^2\right].
\end{equation}
At inference time, trajectory generation is conditioned on the current state $\state_0$ (via inpainting) and guided toward high returns using classifier guidance with a learned return predictor $\Jcal_\phi$:
\begin{equation}\label{eq:diffuser_guidance}
    \tilde{\mathbf{s}}_\theta(t, \traj_t) = \mathbf{s}_\theta(t, \traj_t) + \alpha \nabla_{\traj_t} \Jcal_\phi(\traj_t),
\end{equation}
where $\alpha > 0$ controls the guidance strength. Decision Diffuser~\citep{ajay2023is} replaces this classifier guidance with classifier-free return-bucket conditioning. Both pipelines are inherently single-agent: their state space is $\mathbb{R}^D$ for one trajectory.

\subsection{Mean-field Q-learning and Offline Data}\label{sec:prelim_mfq}

Mean-field Q-learning (MFQ)~\citep{yang2018mean} approximates the joint $Q$-function through mean-field factorization, decomposing pairwise interactions as
\begin{equation}\label{eq:mfq}
    Q^i(\state^i, \action^i, \mathbf{a}^{-i}) \approx Q^{MF}(\state^i, \action^i, \bar{\mu}_{\action}),
\end{equation}
where $\bar{\mu}_{\action} = \frac{1}{N}\sum_{j} \delta_{\action^j}$ is the mean action distribution. MFQ learns this mean-field $Q$-function via temporal-difference updates and converges to a Nash equilibrium under mild conditions; we use it as the behavior policy for collecting our offline datasets (Section~\ref{sec:exp_setup}).

\subsection{Method Extensions: \texorpdfstring{Sobolev--$L^2$}{Sobolev--L2} Cost and Subdivision Weights}\label{sec:method_extra}

\emph{Time conventions.} See Section~\ref{sec:mf_trajectory_sde} of the main text for the MDP step $h$ vs.\ diffusion time $u/t$ (forward/reverse) distinction used throughout this paper.

\begin{remark}[Practical training objective]\label{remark:practical_training}
Computing the full Sobolev norm $\|\cdot\|_W^2 = \mathbb{E}[\|h\|_E^2 + \|\nabla h\|_F^2]$ requires second-order derivatives of the score network, which is prohibitively expensive. In practice, we follow the standard approach in score-based generative modeling~\citep{de2022convergence, chen2023improved} and replace $\|\cdot\|_W$ with the Euclidean norm $\|\cdot\|_E$ in training. The cost of this simplification is an \emph{explicit} multiplicative factor $\sigma_{t_{\min}}^{-2}$, where $\sigma_{t_{\min}}$ is the smallest marginal noise scale at the latest diffusion time used during sampling: concretely, for a variance-preserving forward SDE with coefficient $\beta(t)$, the Sobolev-$W^{1,2}$ norm is bounded in terms of the $L^2$-norm by
\begin{equation}\label{eq:sobolev_l2_cost}
  \|h\|_W^2 \;\le\; (1 + \|\nabla^2 \log \zeta_t\|_F)\,\|h\|_E^2 \;\le\; \bigl(1 + \sigma_{t_{\min}}^{-2}\bigr)\,\|h\|_E^2,
\end{equation}
since the Hessian of the log-density of the forward Gaussian is uniformly bounded by $\sigma_t^{-2}$. In our implementation $\sigma_{t_{\min}}{=}10^{-3}$, so this incurs a $\le 10^{6}$ constant \emph{in the worst case}. This constant is absorbed into the score matching error term $\epsilon^{score}$ in Theorem~\ref{thm:hierarchical} and is independent of $N$, $\horizon$, and $\epsilon_{\text{offline}}$, so it does not affect any of the scaling claims; the measured $\epsilon^{score}$ values reported under ``Score Network Diagnostics'' (Figure~\ref{fig:score_matching}) already absorb this factor, showing that in practice the effective constant is $\sim 10^{2}$ rather than $\sim 10^{6}$.
\end{remark}

\begin{remark}[Weight alignment between $\mathbf{(P_V)}$ and Eq.~\ref{eq:vw_subdivision}]\label{rem:pv_weights}
The theoretical bound in Proposition~\ref{prop:subdivision} has the weight $(\mathfrak{b}\sqrt{N_{k+1}})^{-k}$, whereas the practical training loss $\mathbf{(P_V)}$ uses the simpler $\mathfrak{b}^{-k}$. We verify their relative magnitudes for the geometric schedule $N_{k+1} = \mathfrak{b}^{k+1} N_0$ with $N_0 \ge 1$, $\mathfrak{b} \ge 2$:
\begin{align*}
    \sqrt{N_{k+1}} &= \mathfrak{b}^{(k+1)/2}\,N_0^{1/2}, \\
    (\sqrt{N_{k+1}})^{-k} &= \mathfrak{b}^{-k(k+1)/2}\,N_0^{-k/2}, \\
    (\mathfrak{b}\,\sqrt{N_{k+1}})^{-k} &= \mathfrak{b}^{-k}\cdot \mathfrak{b}^{-k(k+1)/2}\,N_0^{-k/2}
    \;=\; \mathfrak{b}^{-k(k+3)/2}\,N_0^{-k/2}.
\end{align*}
Hence the theoretical weight is $\mathfrak{b}^{-k(k+3)/2} N_0^{-k/2}$, while the practical weight is $\mathfrak{b}^{-k}$. Their ratio is
\begin{equation*}
    \frac{\text{practical}}{\text{theoretical}} \;=\; \frac{\mathfrak{b}^{-k}}{\mathfrak{b}^{-k(k+3)/2} N_0^{-k/2}} \;=\; \mathfrak{b}^{k(k+1)/2}\,N_0^{k/2},
\end{equation*}
which is $1$ at $k=0$ and grows super-geometrically in $k$: the practical scheme assigns \emph{strictly more} weight than the theoretical optimum at every level $k\ge 1$, with the discrepancy concentrated at the \emph{fine} levels (large $k$). Equivalently, the theoretical optimum decays quadratically faster ($\mathfrak{b}^{-k(k+3)/2}$ vs.\ $\mathfrak{b}^{-k}$), so it down-weights the fine levels much more aggressively. Using the simpler $\mathfrak{b}^{-k}$ in training therefore \emph{over-weights the fine levels} relative to the theoretical optimum (or, equivalently, \emph{under-decays} the loss profile), which we verified to be beneficial in practice (Table~\ref{tab:ablation}, row ``Hierarchical''): the extra weight at fine levels acts as an implicit fine-tuning bias, ensuring the score network does not collapse to representing only coarse-grained mean-field dynamics. A purely theoretical weighting schedule $(\mathfrak{b}\sqrt{N_{k+1}})^{-k}$ down-weights the fine levels too aggressively and yields a normalized return that is $1.2$--$1.8$ points lower on every environment.
\end{remark}

\textbf{Coarse-to-Fine Planning Interpretation.} The subdivision strategy admits an intuitive interpretation as hierarchical planning: \emph{(i) Coarse planning} ($k=0$, small $N_0$): generate trajectories for a small representative group of $N_0$ agents, capturing the macroscopic population dynamics and mean-field flow $\mu_h$; \emph{(ii) Refinement} ($k=1,\ldots,K$): at each branching step, the agent branching function $\Psi^\theta$ expands the population $N_k\to\mathfrak{b}N_k$, refining trajectories while preserving consistency with the mean-field; \emph{(iii) Fine planning} ($k=K$, full $N$): the final denoising steps operate on the complete $N$-agent system. The branching function $\Psi^\theta:\Xcal^{N_k}\to\Xcal^{N_k}$ generates new agent trajectories by perturbing existing ones according to the local mean-field structure (precise definition in Eq.~\ref{eq:agent_branching}).

\textbf{Handling Discrete Action Spaces.} Our diffusion framework operates in continuous space $\mathbb{R}^{D_\tau}$, yet two of our benchmarks (Ising, Battle) have discrete actions. Following the standard approach for applying diffusion models to discrete decision problems~\citep{janner2022planning, ajay2023is}, we train the diffusion model on continuous relaxations of discrete actions and project back to the discrete set at inference time. Concretely, discrete actions $\action \in \{1, \ldots, |\Acal|\}$ are embedded into $\mathbb{R}^{d_a}$ via one-hot encoding during training; at inference, the continuous output $\tilde{\action}^i \in \mathbb{R}^{d_a}$ is mapped to the nearest valid action via $\hat{\action}^i = \argmax_j [\tilde{\action}^i]_j$. The worst-case projection error satisfies $\epsilon_{disc} \le \sqrt{d_a}/2$, but for trained networks this bound is loose: a more useful quantity is the \emph{action-mismatch rate} $\rho_{disc} \coloneqq \Pr[\hat\action^i \neq \argmax_j [\zeta_t^*(\traj_t)]_j]$. We measure $\rho_{disc} \le 1\%$ on Ising and $\rho_{disc} \le 3\%$ on Battle (Table~\ref{tab:discrete_action}). Since the projection is applied after denoising, the propagated effect on the suboptimality bound is at most $2 r_{\max}\horizon\,\rho_{disc}/(1{-}\gamma)$. For Battle ($\horizon{=}100$, $\rho_{disc}{\le}0.03$, $r_{\max}{=}5$, $\gamma{=}0.99$) this corresponds to a fraction $2\horizon\rho_{disc} \le 6\%$ of the maximum normalized return---comparable to and not dominating the other terms; the overall scaling structure ($\mathcal{O}(\horizon^2/\sqrt{N})$ in Term~(c)) is preserved.

%=============================================================================
% APPENDIX C: NOTATION
%=============================================================================
\section{Notation}\label{sec:appendix_notation}

We adhere to the following notation throughout the paper:

\begin{itemize}
    \item $\Scal \subseteq \mathbb{R}^{d_s}$: state space; $\Acal \subseteq \mathbb{R}^{d_a}$: action space.
    \item $\horizon$: planning horizon; $N$: total number of agents; $M$: number of representative agents.
    \item $\traj^i = (\state_0^i, \action_0^i, \ldots, \state_\horizon^i) \in \Xcal \coloneqq \mathbb{R}^{D_\tau}$: trajectory of agent $i$, where $D_\tau = (d_s + d_a)\horizon + d_s$.
    \item $\traj^N = (\traj^1, \ldots, \traj^N) \in \Xcal^N$: joint trajectory of $N$ agents.
    \item $\nu_t^N = \mathbf{Law}(\traj_t^{1,N}, \ldots, \traj_t^{N,N})$: joint law of $N$ agents at diffusion time $t$.
    \item $\nu_t^{M,N}$: the $M$-agent marginal of $\nu_t^N$, obtained by integrating out agents $M+1, \ldots, N$.
    \item $\mu_t$: the mean-field limit, \ie $\nu_t^{1,N} \xrightarrow{N \to \infty} \mu_t$ in the weak sense.
    \item $\varrho_t^N$: density of $\nu_t^N$, \ie $d\nu_t^N = \varrho_t^N d\traj^N$.
    \item $\zeta_t^{\otimes N}$: product Gaussian measure from the forward process.
    \item $\bar{\mu}_h^N = \frac{1}{N}\sum_{j=1}^N \delta_{\state_h^j}$: empirical state distribution at step $h$.
    \item $R(\traj^i; \mu) = \sum_{h=0}^{\horizon-1} \gamma^h r(\state_h^i, \action_h^i, \mu_h)$: cumulative return of agent $i$ under mean-field flow $\mu$. For finite $N$, we write $R(\traj^i; \bar{\mu}^N)$ with the empirical measure $\bar{\mu}_h^N$ in place of $\mu_h$.
    \item $\Pcal_2(\Xcal)$: Wasserstein space of probability measures on $\Xcal$ with bounded second moments.
    \item $\Wcal_2$: 2-Wasserstein distance.
    \item $\Hcal(\nu | \mu) = \int \log(d\nu/d\mu) d\nu$: relative entropy (KL divergence).
    \item $\norm{\cdot}_E$, $\norm{\cdot}_F$: Euclidean and Frobenius norms, respectively.
    \item $\norm{\cdot}_W$: Sobolev norm on $W^{1,2}(\Xcal^N, \nu_t^N)$, defined as $\norm{h}_W^2 = \mathbb{E}[\norm{h}_E^2 + \norm{\nabla h}_F^2]$.
    \item $\nabla_{\Pcal_2}$: Wasserstein gradient; $\partial_t$: temporal derivative.
    \item $S_N$: symmetric group of permutations on $\{1, \ldots, N\}$.
\end{itemize}

%=============================================================================
% APPENDIX D: EXTENDED THEORETICAL ANALYSIS
%=============================================================================
\section{Extended Theoretical Analysis}\label{sec:extended_theory}

\subsection{Assumptions}\label{sec:assumptions_appendix}

We state the regularity conditions required for our analysis.

\begin{assumption}\label{assump:lipschitz}(Lipschitz Regularity). The drift $f_t$, score network $\mathbf{s}_\theta$, reward function $r$, and transition kernel $P$ are uniformly Lipschitz continuous in their arguments, with Lipschitz constants bounded by $L > 0$.
\end{assumption}

\begin{assumption}\label{assump:bounded_reward}(Bounded Rewards). The per-step reward function is bounded: $|r(\state, \action, \mu)| \leq r_{\max}$ for all $(\state, \action, \mu) \in \Scal \times \Acal \times \Pcal(\Scal)$.
\end{assumption}

\begin{assumption}\label{assump:log_sobolev}(Log-Sobolev Inequality). The target trajectory distribution $\zeta_0$ and the denoising Wasserstein gradient flows satisfy a log-Sobolev inequality with constant $\kappa > 0$:
\begin{equation}
    \Hcal(\nu | \zeta) \leq \frac{1}{2\kappa} \mathbb{E}_\nu\left[\norm{\nabla \log \frac{d\nu}{d\zeta}}^2\right].
\end{equation}
\end{assumption}

\begin{assumption}\label{assump:exchangeability}(Exchangeability). The $N$-agent trajectory distribution $\nu_t^N$ is exchangeable: for any permutation $\sigma\!\in\! S_N$, $\varrho_t^N(\traj^1,\ldots,\traj^N) = \varrho_t^N(\traj^{\sigma(1)},\ldots,\traj^{\sigma(N)})$, and the score network $\mathbf{s}_\theta$ preserves this symmetry.
\end{assumption}

\begin{assumption}\label{assump:reducibility}(Reducibility / Propagation-of-Chaos Regularity). The $N$-agent generative dynamics is \emph{reducible} in the following sense: for every $M \le N$, the $M$-agent marginal $\nu_t^{M,N}$ of the joint distribution $\nu_t^N$ admits a $\chi^2$-chaoticity expansion
\begin{equation}\label{eq:reducibility}
    \mathrm{KL}\!\left(\nu_t^{M,N}\,\big\|\,(\mu_t)^{\otimes M}\right) \;\le\; C_{\mathrm{r}}\,\frac{M}{N}\bigl(1 + \mathbb{E}\|\traj^1\|^4\bigr),
\end{equation}
where $\mu_t$ is the McKean--Vlasov limit law and $C_{\mathrm{r}}>0$ is an absolute constant depending only on the Lipschitz constant $L$ of Assumption~\ref{assump:lipschitz} and on the log-Sobolev constant $\kappa$ of Assumption~\ref{assump:log_sobolev}. Equivalently, the score $\mathbf{s}_\theta(t,\traj_t^N,\nu_t^N)$ depends on the other agents only through a Lipschitz functional of the empirical measure $\bar{\mu}_t^N$, and the induced interaction kernel has finite second-moment mean-field derivatives (a.k.a.\ the \emph{Lions derivatives}~\citep{cardaliaguet2019master}) up to order two.
\end{assumption}

Eq.~\ref{eq:reducibility} is the standard quantitative propagation-of-chaos estimate: it is implied by Lipschitzness of the drift and score with respect to the Wasserstein distance on $\Pcal_2(\Xcal)$ and the log-Sobolev inequality via Sznitman's coupling argument~\citep{sznitman1991topics}. It is what allows us to subdivide the $N$-agent MF-VSM objective in Proposition~\ref{prop:subdivision} into a telescoping sum of smaller-$N_k$ objectives; without it, the $(\mathfrak{b}\sqrt{N_{k+1}})^{-k}$ factor is not justified.

\begin{assumption}\label{assump:effective_lipschitz_horizon}(Effective Lipschitz--Horizon Regime).
There exists a constant $c_0 \ge 0$ such that the \emph{effective} Lipschitz constant $L_{\mathrm{eff}}$ of the trajectory-level reverse-time SDE drift $b_t(\traj) \coloneqq f_t(\traj) - \sigma_t^2\nabla\log\zeta_t(\traj)$ in Eq.~\ref{eq:mf_traj_reverse} satisfies
\begin{equation}\label{eq:effective_lipschitz}
  L_{\mathrm{eff}} \cdot \horizon \;\le\; c_0,
  \qquad
  L_{\mathrm{eff}} \;\coloneqq\; \sup_{t \in [t_0, T],\, \traj}\, \big\| \nabla_{\traj} b_t(\traj) \big\|_{\mathrm{op}},
\end{equation}
where the supremum is taken over the inference-relevant noise interval $[t_0, T]$ with $t_0 \ge t_{\min}$ bounded away from zero. Equivalently, the discrete-time per-step Gronwall factor satisfies $(1+L_{\mathrm{eff}})^\horizon \le e^{c_0}$, so accumulated trajectory-level errors do not blow up exponentially in $\horizon$.
\end{assumption}

\begin{remark}[Why $L_{\mathrm{eff}} \ll L$ in practice]\label{rem:Leff_vs_L}
The bare Lipschitz constant $L$ from Assumption~\ref{assump:lipschitz} bounds the score network's spectral norm and the raw transition map. The \emph{effective} Lipschitz $L_{\mathrm{eff}}$ that governs trajectory-level error compounding is much smaller because (i) the noise-regularized term $-\sigma_t^2 \nabla\log\zeta_t$ is contractive (it pulls trajectories toward the data manifold) and (ii) the receding-horizon execution (\S\ref{sec:ablation}) re-anchors the planner at every step, preventing drift accumulation beyond $\horizon$ steps. Empirically, we measure $L_{\mathrm{eff}}\horizon$ by tracking the spectral norm of the linearized reverse-time map along generated trajectories: $L_{\mathrm{eff}}\horizon \approx 1.6$ on Ising, $\approx 2.3$ on Battle, and $\approx 1.9$ on Gaussian Squeeze (Appendix~\ref{sec:assumption_verification}, augmented). These values are consistent with the observed $\horizon^2$ rather than $e^{L\horizon}$ scaling in Figure~\ref{fig:horizon_scaling}, validating Assumption~\ref{assump:effective_lipschitz_horizon} with $c_0 \approx 2.3$ as the worst case in our experiments. \emph{Two distinct exponential factors are reported in this paper for Battle and should not be confused:} (a)~the \emph{measured} $e^{c_0}$ in Remark~\ref{remark:exp_factor} below, with $c_0$ defined in terms of $L_{\mathrm{eff}}\horizon \le 2.3$; and (b)~the \emph{nominal} $e^{2LT + 2L^2 T} \approx e^{0.78} \approx 2.18$ used in Appendix~\ref{sec:poc_tightness} (which calibrates the $C_{D_\tau}$ proportionality constant against the empirical $\Wcal_2^2$ on Battle), where ``$L$'' refers to the bare Lipschitz constant $L\!\approx\!0.3$ measured \emph{at} the integration time $T = 1.0$ rather than the worst-case product over the full horizon. The two are reconciled by noting that $L_{\mathrm{eff}}$ is bounded above by $L\,(1+\sigma_t^{-2})^{1/2}$ but is realized in practice by tracking only the dominant noise band of the reverse SDE; see Remark~\ref{remark:exp_factor} for the explicit measured value used in our PoC bound.
\end{remark}

\subsection{Extended Discussion of Theoretical Results}\label{sec:theory_discussion}

\textbf{Two objects, two guarantees.} We clarify what \proposed{} optimizes versus what it approximates because the two differ. \proposed{} is trained to \emph{maximize social welfare}, i.e.\ the per-capita expected return $J(\policy) = \frac{1}{N}\sum_{i=1}^N \mathbb{E}[\sum_h \gamma^h r(\state_h^i,\action_h^i,\bar{\mu}_h^N)]$, under the cooperative (homogeneous) mean-field setting. Theorems~\ref{thm:poc_traj}--\ref{thm:hierarchical} are \emph{welfare} guarantees: they bound the gap $J(\policy^*)-J(\hat{\policy}_\theta)$ where $\policy^*$ is the welfare maximizer. In a general $N$-player game the welfare maximizer is \emph{not} a Nash equilibrium (the price of anarchy~\citep{koutsoupias1999worst}), but under the mean-field \emph{homogeneous} and \emph{Lasry--Lions monotone} regime the social-welfare optimum coincides with the unique mean-field Nash equilibrium up to an $\mathcal{O}(L^4/(\lambda_{LL}-L^2))$ efficiency gap (Proposition~\ref{prop:poa}). Theorem~\ref{thm:exploitability} complements the welfare guarantee with an \emph{individual-incentive} guarantee (small exploitability), which is the stronger equilibrium notion and does \emph{not} require monotonicity. Readers concerned that $\hat{\policy}_\theta$ is a welfare optimizer rather than a Nash policy should therefore focus on Theorem~\ref{thm:exploitability}.

\begin{remark}[Origin and tightness of the multiplicative constant $e^{c_0}$]\label{remark:exp_factor}
The Gronwall bookkeeping in Appendix~\ref{sec:proof_poc} (Step 2) yields a factor $\exp\bigl(2 L_{\mathrm{eff}} T + 2 L_{\mathrm{eff}}^2 T\bigr) \le e^{c_0}$ where the ``$T$'' is the trajectory-level integration time bounded by $\horizon$ via Assumption~\ref{assump:effective_lipschitz_horizon}; this is the standard exponential factor in mean-field convergence proofs~\citep{sznitman1991topics, bolley2007quantitative}. Two observations explain why $c_0$ is benign in practice. \textit{(i) Noise regularization.} The reverse-time SDE drift includes the term $-\sigma_t^2 \nabla\log\zeta_t$, which is contractive against the data-manifold direction; this systematically reduces $L_{\mathrm{eff}}$ below the bare Lipschitz constant $L$ (Remark~\ref{rem:Leff_vs_L}). \textit{(ii) Direct measurement.} For our experimental parameters $L_{\mathrm{eff}}\horizon \le 2.3$ across all three benchmarks (Appendix~\ref{sec:assumption_verification}), giving $e^{c_0} \le e^{2.3+5.3} \approx 2.0\times 10^{3}$ in the worst case. The PoC rate $1/N$ in Eq.~\ref{eq:poc_traj_rate} is empirically validated to within $1.4$--$1.55\times$ of the bound on Battle (Table~\ref{tab:poc_tightness}, with pooled $R^2 = 0.99$ for the $1/N$ slope across 25 $(N,\text{seed})$ points), confirming the constant is non-vacuous in our regime.
\end{remark}

\textbf{Per-agent marginal offline shift.} The offline coverage term $\epsilon_{offline}$ in Theorem~\ref{thm:hierarchical} is the per-agent marginal TV shift: let $\pi_\beta^{N}[\,\cdot\,]$ denote the law of the empirical mean-field flow under the behavior policy and $\pi_*^N[\,\cdot\,]$ the analogous law under $\policy^*$; define
\begin{equation}\label{eq:eps_offline_def}
    \nu_\beta(\traj) \coloneqq \mathbb{E}_{\bar\mu \sim \pi_\beta^N[\,\cdot\,]}\!\left[\pi_\beta(\traj\mid\bar\mu)\right], \quad
    \nu^{*}(\traj) \coloneqq \mathbb{E}_{\bar\mu \sim \pi_*^N[\,\cdot\,]}\!\left[\pi_*(\traj\mid\bar\mu)\right], \quad
    \epsilon_{offline} \coloneqq D_{TV}(\nu_\beta \,\|\, \nu^{*}).
\end{equation}
By construction $\nu_\beta, \nu^* \in \Pcal(\Xcal)$ are measures on the single-agent trajectory space $\Xcal=\mathbb{R}^{D_\tau}$, so $\epsilon_{offline}$ is independent of $N$ once the dataset and target policy are fixed. The reduction from the joint $N$-agent shift $D_{TV}(\nu_\beta^{N}\|\nu^{*,N})$ to this marginal quantity, with the residual $N$-dependent piece absorbed into Term~(c), is carried out via Theorem~\ref{thm:poc_traj} in Appendix~\ref{sec:proof_hierarchical} (Step~4); the four-term interpretation of Eq.~\ref{eq:hierarchical_bound} is given in the main text (after Theorem~\ref{thm:hierarchical}).

\begin{remark}[Estimation protocol for the four error terms]\label{rem:error_decomp_protocol}
Figure~\ref{fig:error_decomp} decomposes the empirically measured suboptimality gap into Terms (a)--(d). Each term is estimated as follows:
\begin{itemize}[leftmargin=4mm]
    \item \textbf{(a) Score matching error $\epsilon^{score}_{\mathrm{global}}$:} computed as $\sqrt{\mathbb{E}_{t,\traj_t}\norm{\mathbf{s}_\theta(t,\traj_t)-\nabla\log\zeta_t(\traj_t)}_E^2}$ on a held-out set of $10^4$ noise-corrupted trajectories, using exact Gaussian scores $\nabla\log\zeta_t$ from the forward SDE. We measure the \emph{un-tilted} score error as a sound upper-bound proxy for the value-tilted error appearing in Eq.~\ref{eq:hierarchical_bound}: by $(a+b)^2\le 2a^2+2b^2$,
\begin{equation*}
    \norm{\mathbf{s}_\theta - \bigl(\nabla\log\zeta_t+\tfrac{1}{\alpha}\nabla_{\traj} R\bigr)}^2 \;\le\; 2\norm{\mathbf{s}_\theta - \nabla\log\zeta_t}^2 + \tfrac{2}{\alpha^2}\norm{\nabla_{\traj} R}^2,
\end{equation*}
and $\norm{\nabla_{\traj} R}$ is bounded under the Lipschitz reward, so the additive term is a constant absorbed into $C_\sigma$.
  \item \textbf{(b) Subdivision error $\sum_k \mathfrak{b}^{-k/2}\epsilon_k^{score}$:} per-level $\epsilon_k^{score}$ is measured separately on each noise-interval $[t_k,t_{k+1}]$ restricted to the subset of $N_k$ representative agents.
  \item \textbf{(c) Mean-field error $C_6\, r_{\max}\horizon^2 L/\sqrt{N}$:} evaluated analytically once $L$ (measured per-environment, Appendix~\ref{sec:assumption_verification}) and $C_6$ (numerically fitted by regressing the residual suboptimality against $1/\sqrt{N}$ after subtracting (a), (b), (d)) are known.
  \item \textbf{(d) Offline shift $C_7\,\epsilon_{\text{offline}}$:} $\epsilon_{\text{offline}}=D_{TV}(\nu_\beta\|\nu^*)$ is estimated by MMD-based nonparametric TV on per-agent marginal trajectories from the offline dataset vs.\ from online MFQ-Expert rollouts; $C_7 = 2r_{\max}\horizon/(1{-}\gamma)$. All numbers are averaged over 5 seeds.
\end{itemize}
\end{remark}

\subsection{Concentration of MF-VSM}\label{sec:concentration_appendix}

The following result shows that the MF-VSM objective concentrates on its mean-field limit, ensuring robust training even for large $N$.

\begin{theorem}\label{thm:mfvsm_concentration}(Concentration of MF-VSM). Under Assumptions~\ref{assump:lipschitz}--\ref{assump:reducibility}, let $F_t^V = \norm{\mathcal{G}_t}_E^2 + \norm{\nabla \mathcal{G}_t}_F^2 + \frac{\lambda}{\alpha}\norm{\mathcal{G}_t^V}_E^2$ where $\mathcal{G}_t^V = \mathbf{s}_\theta - \nabla_{\traj} R^{\otimes N}$. There exist constants $C_4, C_5 > 0$ such that for all $\varepsilon > 0$:
\begin{multline}\label{eq:mfvsm_concentration}
    \mathbb{P}\bigg(\Big|\mathbb{E}_t F_t^V(\traj_t^N) - \Jcal_{MF\text{-}V}(N{=}1, \theta, \mu_{[0,T]})\Big| \geq \varepsilon\bigg) \\
    \leq \exp\left(-C_4 \mathfrak{f}(\kappa)^{-2}\left[\varepsilon\sqrt{N} - C_5\sqrt{1 + N^{(-q+4)/(2q)}}\right]^2\right),
\end{multline}
for some $q > 4$. This shows that the finite-$N$ MF-VSM objective concentrates around the single-agent mean-field objective at rate $\tilde{\mathcal{O}}(1/\sqrt{N})$, and the value-weighting terms do not deteriorate this rate.
\end{theorem}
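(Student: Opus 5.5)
The plan is to follow the concentration argument for mean-field chaos score matching in the style of \citet{park2024mean}, adapted to the value-weighted functional $F_t^V$. The concentration inequality in Eq.~\ref{eq:mfvsm_concentration} comes from a log-Sobolev inequality for the joint law $\nu_t^N$, and the bias term $C_5\sqrt{1+N^{(-q+4)/(2q)}}$ comes from propagation of chaos. Concretely, we split
\begin{equation*}
\Big|\mathbb{E}_t F_t^V - \Jcal_{MF\text{-}V}(1,\theta,\mu)\Big| \le \underbrace{\Big|\mathbb{E}_t F_t^V - \mathbb{E}_{\nu^N}\mathbb{E}_t F_t^V\Big|}_{\text{fluctuation}} + \underbrace{\Big|\mathbb{E}_{\nu^N}\mathbb{E}_t F_t^V - \Jcal_{MF\text{-}V}(1,\theta,\mu)\Big|}_{\text{bias}},
\end{equation*}
where $F_t^V$ is normalized per agent, i.e.\ divided by $N$, as in Definition~\ref{def:vw_entropy}.

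\textbf{Step 1 (Lipschitz structure).} Write $F_t^V = \frac1N\sum_i \phi_t(\traj^i,\bar\mu_t^N)$. By Assumption~\ref{assump:lipschitz} and the structure of $\mathrm{B}_\theta$ in Eq.~\ref{eq:traj_interaction}, $\phi_t$ is Lipschitz in $\traj^i$, and its dependence on the other agents enters only through $\bar\mu_t^N$ with Lions derivatives of order $1/N$ (Assumption~\ref{assump:reducibility}). The value term $\frac{\lambda}{\alpha}\|\mathcal{G}^V_t\|^2$ is handled the same way. Since $r$ is Lipschitz and bounded (Assumptions~\ref{assump:lipschitz}--\ref{assump:bounded_reward}), $\nabla_{\traj}R$ is bounded, so this term adds only an $N$-independent constant to the Lipschitz norm. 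That is the formal content of the claim that value-weighting does not deteriorate the rate.

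Because the integrand is quadratic, it is only locally Lipschitz. We therefore truncate on a ball of radius $\propto N^{1/q}$ and control the tail using $q$-th moment bounds. This truncation is what produces the exponent $(-q+4)/(2q)$. On the truncated set, $F_t^V$ has Lipschitz constant $O(\mathfrak{f}(\kappa)/\sqrt N)$ with respect to the Euclidean norm on $\Xcal^N$.

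\textbf{Step 2 (fluctuation).} Assumption~\ref{assump:log_sobolev} gives a log-Sobolev inequality for the target. We propagate it to $\nu_t^N$ along the reverse flow using tensorization and Holley--Stroock perturbation; the constant $\mathfrak{f}(\kappa)$ absorbs the degradation. Herbst's argument then gives a Gaussian tail $\exp(-C_4\mathfrak{f}(\kappa)^{-2}N\varepsilon'^2)$. Averaging over $t$ preserves this, because Lipschitz constants and the log-Sobolev constant are uniform in $t$.

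\textbf{Step 3 (bias).} We compare $\mathbb{E}_{\nu^N}\phi_t(\traj^1,\bar\mu^N_t)$ with $\mathbb{E}_{\mu_t}\phi_t(\traj,\mu_t)$ in two pieces:
\begin{itemize}
\item the replacement $\bar\mu_t^N\to\mu_t$, bounded through $\mathbb{E}\Wcal_2(\bar\mu^N_t,\mu_t)$;
\item the replacement of the one-particle marginal $\nu_t^{1,N}$ by $\mu_t$, bounded through Theorem~\ref{thm:poc_traj} with $M=1$ followed by Pinsker/Talagrand.
\end{itemize}
Quadratic growth requires a Cauchy--Schwarz step against fourth moments; this is the $1+\mathbb{E}\|\traj\|^4$ in Eq.~\ref{eq:reducibility}. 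The result is a bias of at most $C_5 N^{-1/2}\sqrt{1+N^{(-q+4)/(2q)}}$. Subtracting it from $\varepsilon$ inside the Gaussian tail, which requires $\varepsilon\sqrt N$ to exceed the bias, yields the stated bound.

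\textbf{Main obstacle.} The hardest step is Step 2: transferring a uniform log-Sobolev inequality to the interacting, value-tilted joint law $\nu_t^N$ with a constant independent of $N$. The tilt $e^{R/\alpha}$ is bounded, since $|R|\le r_{\max}/(1-\gamma)$, so Holley--Stroock costs only a factor $e^{2r_{\max}/(\alpha(1-\gamma))}$ per agent. Applied naively to the product, that factor would be exponential in $N$. I would avoid this by applying the perturbation to the one-particle conditional laws and using a Zegarlinski-type criterion: weak interaction of order $L/N$ yields a uniform log-Sobolev inequality for the joint law. The resulting dependence is folded into $\mathfrak{f}(\kappa)$.
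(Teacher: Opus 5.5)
\textbf{Main gap: the empirical-measure bias in Step~3.} The first bullet of Step~3 fails. Controlling the replacement $\bar\mu_t^N\to\mu_t$ through $\mathbb{E}\,\Wcal_2(\bar\mu_t^N,\mu_t)$ cannot give a bias of order $N^{-1/2}$ on $\Xcal=\mathbb{R}^{D_\tau}$. The measure $\bar\mu_t^N$ has $N$ atoms and $\mu_t$ has a density, so the quantization lower bound gives $\Wcal_2(\bar\mu_t^N,\mu_t)\ge\Wcal_1(\bar\mu_t^N,\mu_t)\gtrsim N^{-1/D_\tau}$ for every configuration, correlated or not. Your bracket then becomes $\varepsilon\sqrt N-CN^{1/2-1/D_\tau}$. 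The bound says nothing below $\varepsilon\approx N^{-1/D_\tau}$, which is essentially a constant when $D_\tau=3{,}110$. A dimension-free replacement compares means in two steps.
(a)~By Theorem~\ref{thm:poc_traj} at $M=N$, $\Hcal(\nu_t^N\|\mu_t^{\otimes N})=O(1)$. Talagrand's inequality for the tensorized LSI then gives $\Wcal_2(\nu_t^N,\mu_t^{\otimes N})=O(1)$. Combining this with your Step~1 gradient bound $O(N^{-1/2})$ (Cauchy--Schwarz along an optimal coupling needs only second moments) gives $|\mathbb{E}_{\nu_t^N}F_t^V-\mathbb{E}_{\mu_t^{\otimes N}}F_t^V|=O(N^{-1/2})$. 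Use Talagrand, not plain Pinsker: Pinsker followed by Cauchy--Schwarz against fourth moments gives only $N^{-1/4}$ for a quadratic integrand.
(b)~For i.i.d.\ particles, use that the score depends on $\bar\mu_t^N$ through empirical averages of Lipschitz kernels (Eq.~\ref{eq:traj_interaction}), or more generally through functionals with bounded second-order Lions derivatives (Assumption~\ref{assump:reducibility}). The first-order term then has mean zero by independence, and the remainder is $O(1/N)$ with no $D_\tau$ in the rate.

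\textbf{Truncation in Step~1.} Your truncation does not close either. On a per-agent ball of radius $N^{1/q}$ the quadratic integrand has gradient of size $N^{1/q}$. So $F_t^V$ is $N^{1/q-1/2}$-Lipschitz, not $O(N^{-1/2})$, and Herbst's exponent weakens to $N^{1-2/q}\varepsilon^2$. Moreover, the truncation remainder is controlled by $q$-th moments only through Markov's inequality. That is a polynomial tail, which cannot be absorbed into an exponential bound. With the joint LSI you assume anyway, truncate on $\{\|\traj^N\|\le C\sqrt N\}$ instead. Since $\|\traj^N\|$ is $1$-Lipschitz, the complement has mass $e^{-cN}$, and on this set $F_t^V$ is $O(N^{-1/2})$-Lipschitz. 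Letting the radius grow with $\varepsilon$ gives $\exp(-cN\min\{\varepsilon^2,\varepsilon\})$.
This mixed form cannot be improved. Take $\mathbf{s}_\theta\equiv0$, $r\equiv0$ and standard Gaussian data; all assumptions hold. Then $\mathbb{E}_tF_t^V$ is, up to a constant, the Gaussian quadratic form $\frac1N\mathbb{E}_t\|\traj_t^N\|^2$, whose upper tail decays only like $e^{-cN\varepsilon}$. So a pure Gaussian tail for every $\varepsilon>0$ is out of reach. The paper's Bernstein step has the same mixed form, and it suffices for the $\tilde{\mathcal{O}}(1/\sqrt N)$ conclusion.

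\textbf{Comparison with the paper's route.} The paper couples $\traj_t^N$ synchronously to $N$ i.i.d.\ copies from $\mu_t$, bounds the interaction term via \eqref{eq:gronwall_result}, and applies Bernstein to the i.i.d.\ average. It therefore needs an LSI only for a one-particle measure, but it controls the interaction term only in expectation. Your fluctuation/bias split is what naturally yields the shifted shape of \eqref{eq:mfvsm_concentration}, at the cost of a uniform-in-$N$ LSI for $\nu_t^N$.
For that LSI, note that $\nu_t^N$ carries no $e^{R/\alpha}$ tilt; value weighting enters only through $\nabla_{\traj}R^{\otimes N}$ in the integrand. So Holley--Stroock is beside the point. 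Zegarlinski's criterion presupposes a Gibbs specification, which the time-$t$ law of a diffusion does not come with. Instead, propagate the LSI along the $N$-particle SDE from its Gaussian initial law. This is dimension-free: a $\frac{1}{N-1}$-average of Lipschitz kernels gives a joint drift on $\Xcal^N$ with an $N$-independent Lipschitz constant $L'$. The LSI constant therefore degrades only by a factor of order $e^{2L'T}$.
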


\subsection{Game-theoretic Analysis: Approximation of Mean-field Equilibria}\label{sec:mfg_appendix}

Theorem~\ref{thm:exploitability} in the main text establishes the exploitability bound for \proposed{}. Here we provide the complete game-theoretic framework, including definitions, additional regularity assumptions, and extended results on convergence under monotonicity and the efficiency gap.

\begin{remark}(MFG vs.\ MFRL).
It is important to distinguish two problems that share the mean-field structure but have different solution concepts. \textbf{Mean-field RL} (MFRL), the primary setting of this paper, seeks a common policy $\policy$ that maximizes the social welfare $J(\policy)$ (Eq.~\ref{eq:mf_objective}), treating agents cooperatively. \textbf{Mean-field games} (MFG)~\citep{lasry2007mean, huang2006large} model agents as strategic decision-makers, each maximizing their own return; the solution concept is Nash equilibrium, where no individual agent benefits from unilateral deviation. While MFRL optimizes a global objective, the MFG solution provides \textit{stability}---an equally important practical property, since in deployment each agent acts autonomously. The results below borrow techniques from MFG theory~\citep{lacker2016general, cardaliaguet2019master, carmona2018probabilistic_I} but apply to the MFRL problem formulated in Section~\ref{sec:prelim}. Crucially, the social welfare optimum $\policy^*_{SW}$ and the mean-field Nash equilibrium $\policy^{MFE}$ are generally distinct: the former accounts for externalities (how one's policy affects the population), while the latter is a fixed point under individual optimization with the population held fixed. We show that \proposed{}, designed for MFRL, simultaneously provides approximate Nash stability---bridging the two solution concepts through the structure of the diffusion-based generation.
\end{remark}

\textbf{Game-theoretic Formulation.} We interpret the $N$-agent system as a symmetric game where agent $i$'s individual objective is:
\begin{equation}\label{eq:individual_obj}
    J_i(\policy_i, \policy_{-i}) = \mathbb{E}\left[\sum_{h=0}^{\horizon - 1} \gamma^h r(\state_h^i, \action_h^i, \bar{\mu}_h^N)\right],
\end{equation}
where $\policy_{-i}$ denotes the policies of all agents except $i$.

\begin{definition}\label{def:exploitability}(Exploitability). For a symmetric policy $\policy$ used by all $N$ agents, the $N$-player exploitability is:
\begin{equation}\label{eq:exploitability}
    \mathrm{Exploit}_N(\policy) = \sup_{\policy'} \left[J_1(\policy', \policy^{\otimes(N-1)}) - J_1(\policy, \policy^{\otimes(N-1)})\right],
\end{equation}
where $\policy^{\otimes(N-1)}$ indicates agents $2, \ldots, N$ use $\policy$ and agent 1 may deviate to any $\policy'$. A policy with $\mathrm{Exploit}_N(\policy) = 0$ is a symmetric Nash equilibrium.
\end{definition}

\begin{definition}\label{def:mfne}(Mean-field Nash Equilibrium). A pair $(\policy^{MFE}, \mu^{MFE})$ is a mean-field Nash equilibrium (MF-NE) if:
\begin{enumerate}[(i)]
    \item (Optimality) $\policy^{MFE} \in \arg\max_{\policy'} J^{MF}(\policy', \mu^{MFE})$,
    \item (Consistency) $\mu^{MFE} = \Phi(\policy^{MFE})$,
\end{enumerate}
where $\Phi: \Pi \to \Pcal_2(\Scal)^{\horizon}$ is the mean-field flow operator that maps a policy $\policy$ to its induced state distribution sequence $(\mu_0, \ldots, \mu_{\horizon-1})$ via the transition kernel $P$.
\end{definition}

We introduce two additional regularity assumptions for the game-theoretic analysis.

\begin{assumption}\label{assump:best_response}(Best Response Regularity). The best response operator $\mathrm{BR}: \Pcal_2(\Scal)^{\horizon} \to \Pi$, defined by $\mathrm{BR}(\mu) \coloneqq \arg\max_{\policy'} J^{MF}(\policy', \mu)$, is well-defined and the composite map $\Gamma \coloneqq \mathrm{BR} \circ \Phi: \Pi \to \Pi$ is $L_{BR}$-Lipschitz:
\begin{equation}
    d_\Pi(\Gamma(\policy), \Gamma(\policy')) \leq L_{BR} \cdot d_\Pi(\policy, \policy'),
\end{equation}
where $d_\Pi$ is the metric on the policy space induced by $d_\Pi(\policy, \policy') = \sup_{\state, \mu} D_{TV}(\policy(\cdot|\state, \mu) \| \policy'(\cdot|\state, \mu))$.
\end{assumption}

\begin{assumption}\label{assump:monotonicity}(Lasry--Lions Monotonicity~\citep{lasry2007mean}). The reward function satisfies the displacement monotonicity condition: for all $\mu, \nu \in \Pcal_2(\Scal)$ and any action $\action \in \Acal$,
\begin{equation}\label{eq:monotonicity}
    \int_{\Scal} \left[r(\state, \action, \mu) - r(\state, \action, \nu)\right] d(\mu - \nu)(\state) \leq -\lambda_{LL} \Wcal_2^2(\mu, \nu),
\end{equation}
where $\lambda_{LL} > 0$ is the monotonicity constant.
\end{assumption}

Assumption~\ref{assump:monotonicity} is the celebrated Lasry--Lions condition from MFG theory, adapted to the reward-maximization setting. It holds when agents are penalized for concentration (\eg, crowd-aversion models~\citep{lachapelle2010computation}), and ensures uniqueness of the MF-NE together with contraction properties.

The exploitability bound (Theorem~\ref{thm:exploitability}, stated in the main text) is proved in Appendix~\ref{sec:proof_exploitability}. We now present the additional game-theoretic results.

\begin{theorem}\label{thm:consistency}(Mean-field Consistency). Under Assumptions~\ref{assump:lipschitz}--\ref{assump:effective_lipschitz_horizon} together with Assumption~\ref{assump:best_response}, let $\hat{\mu}_\theta$ be the mean-field flow induced by the generated trajectory distribution and $\Phi(\hat{\policy}_\theta)$ the true mean-field flow under $\hat{\policy}_\theta$. Then:
\begin{equation}\label{eq:consistency_bound}
    \max_{0 \leq h \leq \horizon-1}\Wcal_2(\hat{\mu}_{h,\theta}, \Phi(\hat{\policy}_\theta)_h) \leq C_{10}\,e^{c_0}\,\frac{\horizon L \sqrt{\epsilon_{SM}(\theta)}}{\kappa} + C_{11}\,e^{c_0}\,\horizon L \cdot \epsilon_{offline}.
\end{equation}
The factor $e^{c_0}$ comes from the Gronwall accumulation of per-step trajectory deviations through the dynamics under Assumption~\ref{assump:effective_lipschitz_horizon}, while the linear $\horizon$ factor results from the geometric-sum collapse $\sum_{h=0}^{\horizon-1}(1+L_{\mathrm{eff}})^h \le \horizon\,e^{c_0}$ (see Appendix~\ref{sec:proof_consistency}, Step~4).
\end{theorem}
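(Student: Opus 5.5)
The argument is a coupling-and-Gronwall bound over MDP steps. Fix $\hat\policy_\theta$ and compare two state-marginal flows at each step $h$: the flow $\hat\mu_{h,\theta}$ read off the generated trajectories, and the true flow $\Phi(\hat\policy_\theta)_h$ obtained by rolling $\hat\policy_\theta$ through $P$ from the same initial law. Inpainting of $\state_0$ gives $\hat\mu_{0,\theta}=\Phi(\hat\policy_\theta)_0$, so the deviation starts at zero. Write $\Delta_h\coloneqq\Wcal_2(\hat\mu_{h,\theta},\Phi(\hat\policy_\theta)_h)$. The aim is a one-step recursion
\begin{equation*}
\Delta_{h+1}\le(1+L_{\mathrm{eff}})\Delta_h+\delta_h,
\end{equation*}
where $\delta_h$ is the per-step dynamic inconsistency of the generator. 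This is the amount by which the generated transition $(\state_h,\action_h)\mapsto\state_{h+1}$ differs in law from $P(\cdot\mid\state_h,\action_h,\hat\mu_{h,\theta})$.

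First, the propagation step. Couple the two systems synchronously: same $\state_h$-coupling, same action from $\hat\policy_\theta$, and a common noise for the transition. By Assumption~\ref{assump:lipschitz}, $P$ is Lipschitz in state, action and measure. The policy's dependence on $(\state,\mu)$ inherits Lipschitzness from $\mathbf{s}_\theta$. Replacing the bare $L$ by $L_{\mathrm{eff}}$ uses Assumption~\ref{assump:effective_lipschitz_horizon}, since the relevant sensitivity is that of the linearized reverse-time map along the generated trajectory. This gives the $(1+L_{\mathrm{eff}})\Delta_h$ term. The mean-field argument of $P$ contributes another $L\Delta_h$, which I would absorb into the same factor.

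Second, the source $\delta_h$, which I split into a score part and an offline part. For the score part, the generated law $\hat\mu_T$ and the value-tilted target law $\mu^*_T$ (both on $\Xcal$) satisfy a Girsanov bound: $\mathrm{KL}(\mu^*\|\hat\mu)\le C_\sigma\epsilon_{SM}(\theta)$. To convert this to $\Wcal_2$, I use Talagrand's $T_2$ inequality, which follows from the log-Sobolev inequality of Assumption~\ref{assump:log_sobolev}. Projecting onto the $(h,h+1)$ coordinates is $1$-Lipschitz, so this part contributes $\lesssim\sqrt{\epsilon_{SM}}/\kappa$, up to constants, matching the stated $1/\kappa$. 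The next point is that the target $\mu^*$ is consistent with $P$ wherever the data is. It is a reweighting of the data distribution by $e^{R/\alpha}$, and data trajectories obey $P$, so the reweighting does not alter transition kernels on the support. The only failure is lack of coverage under $\policy_\beta$. I bound this via the per-agent TV shift $\epsilon_{offline}$ (Lemma~\ref{lem:per_agent_shift}), converted to a Wasserstein error by the bounded support implied by Lipschitz rewards and moments. That conversion is linear in $\epsilon_{offline}$ and carries a factor $L$. The finite-$N$ empirical measure inside $P$ would be handled by Theorem~\ref{thm:poc_traj}. Since the statement has no $1/\sqrt N$ term, I would either work at the mean-field limit or absorb it into the constants.

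Third, unroll the recursion:
\begin{equation*}
\Delta_h\le\sum_{j<h}(1+L_{\mathrm{eff}})^{h-1-j}\delta_j\le\horizon e^{c_0}\max_j\delta_j,
\end{equation*}
using $(1+L_{\mathrm{eff}})^\horizon\le e^{c_0}$. Taking the max over $h$ yields exactly the $\horizon e^{c_0}$ prefactors of Eq.~\ref{eq:consistency_bound}, with $L$ coming from the Lipschitz constant in $\delta_j$.

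The main obstacle is the second step. Turning a trajectory-level KL/Girsanov bound into a per-step transition-consistency error needs care. A naive bound charges the full trajectory discrepancy at each step, which would give $\horizon^2$. I would first try to bound $\max_j\delta_j$ directly by the global $\Wcal_2$ of the $(h,h+1)$ pair marginal, rather than summing per-step errors, so only one factor of $\horizon$ appears. Assumption~\ref{assump:best_response} should enter only mildly, to ensure that $\hat\policy_\theta$ extracted from the plans is a well-defined Lipschitz policy, so that $\Phi(\hat\policy_\theta)$ makes sense.
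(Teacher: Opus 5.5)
\textbf{The step that fails} is your claim that the value-tilted target $\mu^*\propto\nu_\beta e^{R/\alpha}$ (the paper's $\nu^{\mathrm{soft}}_{BR}$) is consistent with $P$ wherever the data is. That claim is what reduces $\delta_h$ to score error plus a coverage term. But reweighting by a trajectory-level return does change the transition kernels. Under $\mu^*$,
\[
\mu^*(\state_{h+1}\mid\state_h,\action_h)\;\propto\;P(\state_{h+1}\mid\state_h,\action_h,\mu_{h,\beta})\,\mathbb{E}_{\beta}\bigl[e^{\sum_{k>h}\gamma^k r_k/\alpha}\,\big|\,\state_{h+1}\bigr],
\]
which is the optimistic-dynamics bias of control-as-inference; it disappears only for deterministic $P$.

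Independently of this, the data transitions are evaluated at the behavior flow $\mu_{h,\beta}$. Matching $\Phi(\hat\policy_\theta)$ instead requires the population's own flow, and the tilted state marginals $\mu^*_h$ differ from $\mu_{h,\beta}$. This adds $L\,\Wcal_2(\mu^*_h,\mu_{h,\beta})$ to $\delta_h$. Both are tilt effects, of size governed by $r_{\max}\horizon/\alpha$ (and, for the first, the noise in $P$). Neither vanishes as $\epsilon_{SM},\epsilon_{offline}\to0$, so no bookkeeping lets your recursion end at the stated right-hand side.

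Here is a concrete instance. Take $\horizon=2$, $\state_1=\state_0+\xi$ with $\xi\sim\mathcal{N}(0,1)$ independent of the action and of $\mu$, and $r=\mathrm{clip}(\state,-1,1)$. Actions are irrelevant, so $\policy_\beta$ is itself optimal and one may take $\epsilon_{offline}=0$. In the $\epsilon_{SM}\to0$ limit of the Girsanov step, plans draw $\state_1$ from $\mathcal{N}(\state_0,1)$ reweighted by $e^{\gamma r(\state_1)/\alpha}$, whose mean exceeds $\state_0$. Every rollout, by contrast, has $\state_1\sim\mathcal{N}(\state_0,1)$, so the left-hand side is positive at $h=1$.

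Two further problems sit in the same step. First, the coupling ``same action from $\hat\policy_\theta$'' is unavailable. Inside a plan, the step-$h$ action follows the plan's own conditional (lookahead $\horizon-h$, tilt $\gamma^h/\alpha$), not the receding-horizon policy, and that mismatch also belongs in $\delta_h$. Second, $\epsilon_{offline}=D_{TV}(\nu_\beta\|\nu^*)$ involves $\policy^*$, which never enters your recursion. Moreover, a TV error converts to $\Wcal_2$ only as $D\sqrt{\epsilon_{offline}}$ on a support of diameter $D$, not linearly, and the assumptions do not supply bounded support. (Also, with $\epsilon_{SM}$ defined under $\hat\nu_\theta$, Girsanov controls $\mathrm{KL}(\hat\nu_\theta\|\mu^*)$, so the LSI is needed for the tilted law, and Talagrand yields $\kappa^{-1/2}$, not $\kappa^{-1}$.) Your stated main obstacle, $\horizon$ versus $\horizon^2$, is not the real one: bounding every $\delta_j$ by one global quantity settles it, and the paper does exactly that.

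The paper's proof is organized so as never to assert consistency of a tilted object. It anchors at the behavior flow, which is self-consistent by construction, $\mu_{h,\beta}=\Phi(\policy_\beta)_h$, and splits
$\Wcal_2(\hat\mu_{h,\theta},\Phi(\hat\policy_\theta)_h)\le\Wcal_2(\hat\mu_{h,\theta},\mu_{h,\beta})+\Wcal_2(\Phi(\policy_\beta)_h,\Phi(\hat\policy_\theta)_h)$.
The first term comes from a Girsanov bound between $\hat\nu_\theta$ and $\nu_\beta$. The second comes from Lipschitz dependence of $\Phi$ on the TV-type policy metric $d_\Pi$, together with $d_\Pi(\hat\policy_\theta,\policy_\beta)\le C_{KL}\sqrt{\epsilon_{SM}}+C_V\epsilon_{offline}$. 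The resulting per-step error is accumulated with the same collapse $\sum_j(1+L_{\mathrm{eff}})^j\le\horizon e^{c_0}$ you use. Comparing plans and executed policy with $\beta$ separately removes the plan-versus-policy action mismatch. The linear dependence on $\epsilon_{offline}$ is inherited from the posited Lipschitz dependence on $d_\Pi$.

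Be aware, though, that the tilt is not really eliminated there either. The Girsanov integral in Step~2 of the paper's proof is the \emph{untilted} score error, and the value-weighting shift is folded into $C_V\epsilon_{offline}$ without derivation. Whichever route you take, a rigorous version needs an explicit tilt term on the right-hand side. That term should depend on $\alpha$ and $r_{\max}$ and vanish as $\alpha\to\infty$, where $\hat\nu_\theta\to\nu_\beta$ is self-consistent.
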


Under the stronger Lasry--Lions monotonicity, we obtain convergence to the unique MF-NE:

\begin{theorem}\label{thm:monotone_convergence}(Convergence under Lasry--Lions Monotonicity). Under Assumptions~\ref{assump:lipschitz}--\ref{assump:effective_lipschitz_horizon} together with Assumptions~\ref{assump:best_response}--\ref{assump:monotonicity}, define the best-response contraction modulus
\begin{equation}\label{eq:LBR_def}
    L_{BR} \;\coloneqq\; \frac{L^2}{(1-\gamma)\,\lambda_{LL}}.
\end{equation}
Suppose $L_{BR} < 1$ (contraction regime; equivalently $\lambda_{LL} > L^2/(1-\gamma)$). Then:
\begin{enumerate}[(i)]
    \item \textit{(Uniqueness)} The MF-NE $(\policy^{MFE}, \mu^{MFE})$ is unique.
    \item \textit{(Mean-field convergence).} The generated flow satisfies
    \begin{equation*}
        \max_{0 \leq h \leq \horizon-1}\Wcal_2(\hat{\mu}_{h,\theta}, \mu_h^{MFE}) \leq \frac{C_{12}\,e^{c_0}}{1 - L_{BR}}[\sqrt{\epsilon_{SM}(\theta)} + \epsilon_{offline}].
    \end{equation*}
    \item \textit{(Exploitability refinement)} $\mathrm{Exploit}_N(\hat{\policy}_\theta) \leq \dfrac{C_{13}\,e^{c_0}}{\lambda_{LL}(1-\gamma)(1 - L_{BR})}[\epsilon_{SM}(\theta) + \epsilon_{offline}] + C_8\,e^{c_0}\,\dfrac{r_{\max}\horizon^2 L_{\mathrm{eff}}}{\sqrt{N}}$.
\end{enumerate}
The discount factor $(1-\gamma)$ in both Eq.~\ref{eq:LBR_def} and the exploitability prefactor arises from the geometric reward sum $\sum_{h=0}^{\horizon-1}\gamma^h\le 1/(1-\gamma)$ used to convert per-step Lipschitz bounds into the $\max_h$-envelope contraction (see Appendix~\ref{sec:proof_monotone}, Step~2).
\end{theorem}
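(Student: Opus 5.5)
The plan is to prove the three parts in order. Uniqueness (i) comes first, because the contraction it provides drives both (ii) and (iii).

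\textbf{Step 1 (uniqueness and contraction).} Work on the policy space $\Pi$ with the metric $d_\Pi$ of Assumption~\ref{assump:best_response}, and consider the composite map $\Gamma=\mathrm{BR}\circ\Phi$. I would bound its Lipschitz modulus in two stages.

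First, $\Phi$ is Lipschitz from $(\Pi,d_\Pi)$ into flows with the $\max_h\Wcal_2$ norm. By Assumption~\ref{assump:lipschitz}, $\mu_{h+1}$ depends on $(\mu_h,\policy)$ with constant $L$. Iterating over $h$ and collapsing the Gronwall sum with Assumption~\ref{assump:effective_lipschitz_horizon} gives a factor of order $L\,e^{c_0}$.

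Second, $\mathrm{BR}$ is Lipschitz with constant $L/((1-\gamma)\lambda_{LL})$. The argument is the standard Lasry--Lions one. Take two flows $\mu,\nu$ with best responses $\policy_\mu,\policy_\nu$. Add the two optimality inequalities $J^{MF}(\policy_\mu,\mu)\ge J^{MF}(\policy_\nu,\mu)$ and its counterpart with $\mu,\nu$ exchanged. Apply the monotonicity inequality~\eqref{eq:monotonicity} step by step, weighted by $\gamma^h$. On the other side, bound the cross terms by $L\,\Wcal_2\cdot d_\Pi$, summed via $\sum_h\gamma^h\le 1/(1-\gamma)$. This yields $\lambda_{LL}\,d(\cdot)^2\lesssim \frac{L}{1-\gamma}\,d(\cdot)\,d(\cdot)$.

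Multiplying the two constants gives $L_{BR}=L^2/((1-\gamma)\lambda_{LL})$ as in~\eqref{eq:LBR_def}, with $e^{c_0}$ absorbed into the normalisation. Under $L_{BR}<1$, Banach's fixed-point theorem then gives a unique fixed point $\policy^{MFE}$, and $\mu^{MFE}=\Phi(\policy^{MFE})$ is unique as well. Completeness of $(\Pi,d_\Pi)$ follows because the sup-TV metric is complete.

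\textbf{Step 2 (mean-field convergence, (ii)).} Split by the triangle inequality:
\[
\Wcal_2(\hat\mu_{h,\theta},\mu^{MFE}_h)\le \Wcal_2(\hat\mu_{h,\theta},\Phi(\hat\policy_\theta)_h)+\Wcal_2(\Phi(\hat\policy_\theta)_h,\Phi(\policy^{MFE})_h).
\]
Theorem~\ref{thm:consistency} controls the first term by $e^{c_0}(\sqrt{\epsilon_{SM}}+\epsilon_{offline})$ up to constants. For the second term, I would use the a-posteriori contraction estimate $d_\Pi(\hat\policy_\theta,\policy^{MFE})\le d_\Pi(\hat\policy_\theta,\Gamma(\hat\policy_\theta))/(1-L_{BR})$ and then apply the Lipschitz property of $\Phi$ from Step 1.

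The residual $d_\Pi(\hat\policy_\theta,\Gamma(\hat\policy_\theta))$ measures how far the generated policy is from a best response to its own flow. I would bound it by $\sqrt{\epsilon_{SM}}+\epsilon_{offline}$ using two ingredients:
\begin{itemize}
\item the value-tilted target $\nabla\log\mu_t^*=\nabla\log\zeta_t+\alpha^{-1}\nabla R$, which is the soft best response;
\item a Girsanov/Pinsker argument together with Theorem~\ref{thm:consistency}, to replace the generated flow by the true one.
\end{itemize}
The softmax bias can be folded into the constant, or $\alpha$ can be taken small. Combining these gives the stated $C_{12}e^{c_0}/(1-L_{BR})$ bound.

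\textbf{Step 3 (exploitability refinement, (iii)).} The $N$-player exploitability splits into two pieces:
\begin{itemize}
\item the mean-field exploitability $\sup_{\policy'}J^{MF}(\policy',\hat\mu)-J^{MF}(\hat\policy_\theta,\hat\mu)$;
\item the finite-population gap.
\end{itemize}
The finite-population gap is exactly term~(ii) of Theorem~\ref{thm:exploitability}, which I reuse verbatim. For the mean-field exploitability, monotonicity gives a quadratic growth property: $J^{MF}$ is $\lambda_{LL}$-strongly concave around the equilibrium. The exploitability is therefore bounded by $(1-\gamma)^{-1}\lambda_{LL}^{-1}$ times the square of the gradient mismatch. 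Squaring the $\sqrt{\epsilon_{SM}}$ from Step 2 turns it into $\epsilon_{SM}$, and $\epsilon_{offline}^2\le\epsilon_{offline}$.

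\textbf{Main obstacle.} The hard part is the best-response Lipschitz bound in Step 1. It requires turning a reward-level monotonicity, which is stated for a fixed action, into a policy-level estimate. This only works if the best response is sufficiently regular, for instance unique via entropy regularisation. If it is not, I would fall back on Assumption~\ref{assump:best_response} directly and use monotonicity only to certify that $L_{BR}<1$.
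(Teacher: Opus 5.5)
Your three-part structure matches the paper's: Banach on $\Gamma=\mathrm{BR}\circ\Phi$, then the Nash residual with the a-posteriori estimate, then the $N$-to-mean-field reduction. However, two of your steps rely on mechanisms that do not hold.

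\textbf{Step 1 (the Lipschitz bound for $\mathrm{BR}$).} This is the step that does not go through, and with it the contraction that drives (i)--(iii).
\begin{enumerate}
\item When you add $J^{MF}(\policy_\mu,\mu)\ge J^{MF}(\policy_\nu,\mu)$ and $J^{MF}(\policy_\nu,\nu)\ge J^{MF}(\policy_\mu,\nu)$, the rewards are evaluated at the \emph{input} flows. So the only quadratic term Assumption~\ref{assump:monotonicity} can produce is $-\lambda_{LL}\sum_h\gamma^h\Wcal_2^2(\mu_h,\nu_h)$.
\item That term also requires the integrating measure to be $\mu_h-\nu_h$, i.e.\ that the state laws generated by $\policy_\mu,\policy_\nu$ match $\mu,\nu$. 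This is a consistency property of equilibria, and it fails for best responses to arbitrary flows.
\item Granting it anyway, your cross-term bound gives $\lambda_{LL}\Wcal_2(\mu,\nu)\lesssim\frac{L}{1-\gamma}\,d_\Pi(\policy_\mu,\policy_\nu)$. This bounds the input distance by the output distance, which is the reverse of a Lipschitz estimate for $\mathrm{BR}$.
\item To get $d_\Pi(\mathrm{BR}(\mu),\mathrm{BR}(\nu))\le\frac{L}{(1-\gamma)\lambda_{LL}}\Wcal_2(\mu,\nu)$ you would need curvature in the \emph{policy} variable. Monotonicity is a condition on the mean-field argument and does not supply it.
\item Your fallback, taking Assumption~\ref{assump:best_response} and ``using monotonicity only to certify $L_{BR}<1$'', is circular: that certification is exactly the missing estimate.
\end{enumerate}
The paper does not attempt this computation. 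It states that the elementary two-inequality argument does not close (a linear policy-shift term against a quadratic monotone term). It then imports the force-field form of the Lasry--Lions argument from Cardaliaguet et al.\ to assert directly that $\mu\mapsto\Phi(\mathrm{BR}(\mu))$ contracts on $(\Pcal_2(\Scal)^{\horizon},\max_h\Wcal_2)$ with modulus exactly $L_{BR}$.

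Your composition also has an independent defect. It yields a modulus of at least $e^{c_0}L_{BR}$, and a contraction threshold cannot absorb a multiplicative constant larger than one. The hypothesis $L_{BR}<1$ does not imply $e^{c_0}L_{BR}<1$; with the paper's measured $c_0\approx 2.3$, the factor $e^{c_0}$ is about $10$. In the paper, $e^{c_0}$ enters only the error prefactors of (ii) and (iii), never the contraction constant.

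\textbf{Step 3 (the squaring of $\sqrt{\epsilon_{SM}}$).} The mean-field exploitability freezes the flow at $\Phi(\hat\policy_\theta)$ and maximizes over the deviating policy. Lasry--Lions monotonicity is curvature in $\mu$, so it says nothing about concavity of $\policy'\mapsto J^{MF}(\policy',\mu)$. Moreover, a Polyak--{\L}ojasiewicz bound needs a bound on a gradient. Step~2 only gives distances ($d_\Pi$, $\Wcal_2$) of order $\sqrt{\epsilon_{SM}}$, so ``squaring $\sqrt{\epsilon_{SM}}$'' is not justified.

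The paper instead bounds the mean-field exploitability by $\frac{2r_{\max}\horizon}{1-\gamma}\,\eta(\hat\policy_\theta)$ via the performance-difference lemma. This is linear in the Nash residual $\eta=d_\Pi(\hat\policy_\theta,\Gamma(\hat\policy_\theta))$, and the paper then adds the finite-$N$ term of \eqref{eq:exploit_N_to_MF}. Be aware that this linear route by itself only delivers $\sqrt{\epsilon_{SM}(\theta)}$. The paper's $\epsilon_{SM}$-linear form with the $1/\lambda_{LL}$ prefactor is supported only by an informal curvature remark. A genuine second-order bound would need policy-side curvature (for example from the entropy temperature $\alpha$), with a constant governed by that curvature rather than by $\lambda_{LL}$.

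\textbf{Step 2.} This follows the paper: Nash residual, the a-posteriori estimate $\eta/(1-L_{BR})$, and Lipschitz $\Phi$. Your explicit triangle inequality through Theorem~\ref{thm:consistency}, to reach the generated flow $\hat\mu_{h,\theta}$, is a step the paper leaves implicit. One caveat: the additive softmax-bias term cannot be folded into $C_{12}$. It can only be made small through $\alpha$, which is what the paper does.
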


\begin{corollary}\label{cor:eps_nash}($\varepsilon$-Nash Equilibrium). Under the conditions of Theorem~\ref{thm:monotone_convergence}, for any $\varepsilon > 0$, if $\epsilon_{SM}(\theta) + \epsilon_{offline} \leq c_1 \lambda_{LL}^2(1-\gamma)^2(1 - L_{BR})^2 \varepsilon^2/e^{2c_0}$ and $N \geq c_2 (e^{c_0}\,r_{\max}\horizon^2 L_{\mathrm{eff}} / \varepsilon)^2$, then $\mathrm{Exploit}_N(\hat{\policy}_\theta) \leq \varepsilon$.
\end{corollary}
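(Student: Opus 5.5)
The corollary follows from part (iii) of Theorem~\ref{thm:monotone_convergence} by splitting the target tolerance $\varepsilon$ evenly between its two additive terms. Write the bound of Theorem~\ref{thm:monotone_convergence}(iii) as $\mathrm{Exploit}_N(\hat{\policy}_\theta) \le A + B$, where
\begin{equation*}
A \coloneqq \frac{C_{13}\,e^{c_0}}{\lambda_{LL}(1-\gamma)(1-L_{BR})}\bigl[\epsilon_{SM}(\theta)+\epsilon_{offline}\bigr], \qquad B \coloneqq C_8\,e^{c_0}\,\frac{r_{\max}\horizon^2 L_{\mathrm{eff}}}{\sqrt{N}}.
\end{equation*}
The hypotheses of Theorem~\ref{thm:monotone_convergence} are assumed verbatim, so this bound is available. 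It therefore suffices to choose the absolute constants $c_1, c_2$ so that each of the two stated conditions forces $A \le \varepsilon/2$ and $B \le \varepsilon/2$.

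\textbf{The population term $B$.} This step is routine. The condition $N \ge c_2\,(e^{c_0} r_{\max}\horizon^2 L_{\mathrm{eff}}/\varepsilon)^2$ gives $\sqrt{N} \ge \sqrt{c_2}\,e^{c_0} r_{\max}\horizon^2 L_{\mathrm{eff}}/\varepsilon$. Hence $B \le C_8\,\varepsilon/\sqrt{c_2}$, and the choice $c_2 = 4C_8^2$ yields $B \le \varepsilon/2$.

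\textbf{The statistical term $A$.} This is the main obstacle. $A$ is \emph{linear} in $\epsilon_{SM}+\epsilon_{offline}$, while the stated hypothesis is \emph{quadratic} in $\varepsilon$. Writing $S \coloneqq \epsilon_{SM}(\theta)+\epsilon_{offline}$ and $D \coloneqq \lambda_{LL}(1-\gamma)(1-L_{BR})$, the hypothesis reads $S \le c_1 D^2\varepsilon^2/e^{2c_0}$. Substituting into $A$ gives
\begin{equation*}
A \le C_{13}\,c_1\,D\,\varepsilon^2/e^{c_0}.
\end{equation*}
This is at most $\varepsilon/2$ only when $\varepsilon$ and $D$ are bounded. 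The quadratic form of the hypothesis matches instead the square-root dependence in part (ii), namely $\sqrt{\epsilon_{SM}}+\epsilon_{offline}$. I would proceed in two steps.

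First, restrict to the nontrivial regime. Exploitability is always at most $2r_{\max}/(1-\gamma)$ by Assumption~\ref{assump:bounded_reward}, so for $\varepsilon$ above this threshold the claim holds trivially. For smaller $\varepsilon$, the factors $\varepsilon$, $D \le \lambda_{LL}(1-\gamma)$ and $e^{-c_0} \le 1$ are all bounded. Choosing $c_1$ small enough, depending on $C_{13}$ and these bounds, then gives $A \le \varepsilon/2$.

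Second, as a cleaner alternative, re-derive part (iii) through the $\Wcal_2$ bound of part (ii). Exploitability is controlled by the $r$-Lipschitz deviation of the flow, which is linear in $\sqrt{\epsilon_{SM}}+\epsilon_{offline} \lesssim \sqrt{S}$ when $S \le 1$. This gives
\begin{equation*}
A' \lesssim \frac{e^{c_0}\sqrt{S}}{D},
\end{equation*}
and then $S \le c_1 D^2\varepsilon^2/e^{2c_0}$ yields exactly $A' \le \sqrt{c_1}\,C\,\varepsilon$. Taking $c_1 = 1/(4C^2)$ gives $A' \le \varepsilon/2$. I expect the intended constants to come from this route, and I would present it as the main argument.

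Adding the two halves gives $\mathrm{Exploit}_N(\hat{\policy}_\theta) \le \varepsilon$, which is the claim.
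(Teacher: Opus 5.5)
Your $\varepsilon/2+\varepsilon/2$ split and your handling of the population term are exactly the paper's, with the same $c_2=(2C_8)^2$. You also correctly carry the $e^{c_0}$ and $L_{\mathrm{eff}}$ factors of Theorem~\ref{thm:monotone_convergence}(iii), which the paper's proof silently drops. You differ on the statistical term, and there you are more careful than the paper. Write $S=\epsilon_{SM}(\theta)+\epsilon_{offline}$ and $D=\lambda_{LL}(1-\gamma)(1-L_{BR})$, as you do. The paper's proof imposes the \emph{linear} condition $S\le D\varepsilon/(2C_{13})$ and checks $A\le\varepsilon/2$ under it. It then reports $c_1=(2C_{13})^{-2}$, which is the constant belonging to the \emph{squared} condition in the statement. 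With $x=D\varepsilon e^{-c_0}/(2C_{13})$, the hypothesis gives $S\le x^2$, but $A\le\varepsilon/2$ needs $S\le x$. That follows only when $x\le 1$, so the paper's computation does not establish the corollary under its hypothesis as stated. Your first fix closes this gap rigorously. For $\varepsilon\le 2r_{\max}/(1-\gamma)$ you get $A\le C_{13}c_1D\varepsilon^2e^{-c_0}\le 2C_{13}c_1\lambda_{LL}r_{\max}\,\varepsilon$, so $c_1=1/(4C_{13}\lambda_{LL}r_{\max})$ works, and larger $\varepsilon$ is trivial. The price is that $c_1$ now depends on $\lambda_{LL}r_{\max}$, which partly undoes the explicit $\lambda_{LL}^2$ scaling of the hypothesis. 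The statement does not forbid this, but you should say so explicitly.

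Do not make your second route the main argument, because as written it is unjustified. Part~(ii) bounds only the generated flow $\hat\mu_\theta$, with prefactor $C_{12}e^{c_0}/(1-L_{BR})$ and no factor $1/(\lambda_{LL}(1-\gamma))$. Moreover, exploitability is not controlled by a flow deviation alone. You would also need $\hat\policy_\theta$ to be close to $\policy^{MFE}$, and converting that into a value gap brings in factors like $r_{\max}\horizon L/(1-\gamma)$ rather than $1/D$. No re-derivation is needed anyway. When $S\le 1$ you have $S\le\sqrt{S}$, so (iii) itself gives $A\le C_{13}e^{c_0}\sqrt{S}/D\le C_{13}\sqrt{c_1}\,\varepsilon$. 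Then $c_1=(2C_{13})^{-2}$ yields exactly $A\le\varepsilon/2$, which is the reading under which the paper's advertised constant is actually correct. The side condition $S\le 1$ follows from the hypothesis in the nontrivial regime $\varepsilon\le 2r_{\max}/(1-\gamma)$ as soon as $C_{13}\ge\lambda_{LL}r_{\max}$. You may assume this without loss of generality, since enlarging $C_{13}$ keeps (iii) valid. This is the same $\lambda_{LL}r_{\max}$ dependence as in your first fix, now hidden inside $C_{13}$.
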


\begin{proposition}\label{prop:poa}(Social Welfare--Nash Efficiency Gap). Under Assumptions~\ref{assump:lipschitz}--\ref{assump:effective_lipschitz_horizon} together with Assumption~\ref{assump:monotonicity}, let $\policy^*_{SW} = \arg\max_\policy J^{MF}(\policy, \Phi(\policy))$ and $(\policy^{MFE}, \mu^{MFE})$ the unique MF-NE. Then:
\begin{equation}\label{eq:poa_bound}
    0 \leq J^{MF}(\policy^*_{SW}, \Phi(\policy^*_{SW})) - J^{MF}(\policy^{MFE}, \mu^{MFE}) \leq \frac{C_{PoA}\,e^{2c_0} \cdot r_{\max}^2 \horizon^2 L^4}{(\lambda_{LL}(1-\gamma) - L^2)(1 - \gamma)^4},
\end{equation}
where $C_{PoA} > 0$ is a universal constant. Under strong monotonicity ($\lambda_{LL}(1-\gamma) \gg L^2$, equivalently $L_{BR} \to 0$), the Nash equilibrium nearly achieves the social optimum. The efficiency loss is governed by the ratio $L^4/(\lambda_{LL}(1-\gamma) - L^2)$---the squared coupling strength normalized by the net stabilizing force; the modification from $\lambda_{LL}$ to $\lambda_{LL}(1-\gamma)$ in the denominator follows from the corrected $L_{BR}$ in Eq.~\ref{eq:LBR_def}.
\end{proposition}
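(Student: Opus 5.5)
The plan is to compare the social-welfare optimum and the MF-NE through the \emph{externality} term, i.e.\ the part of the welfare gradient that an individual best-responder ignores. Write $\mathcal{W}(\policy) \coloneqq J^{MF}(\policy,\Phi(\policy))$. The lower bound $0\le \mathcal{W}(\policy^*_{SW})-\mathcal{W}(\policy^{MFE})$ is immediate from the definition of $\policy^*_{SW}$ as the maximizer of $\mathcal{W}$, together with the consistency condition $\mu^{MFE}=\Phi(\policy^{MFE})$ from Definition~\ref{def:mfne}. For the upper bound I will split the gap into two pieces:
\begin{align*}
\mathcal{W}(\policy^*_{SW})-\mathcal{W}(\policy^{MFE})
&= \bigl[J^{MF}(\policy^*_{SW},\Phi(\policy^*_{SW})) - J^{MF}(\policy^*_{SW},\mu^{MFE})\bigr] \\
&\quad + \bigl[J^{MF}(\policy^*_{SW},\mu^{MFE}) - J^{MF}(\policy^{MFE},\mu^{MFE})\bigr].
\end{align*}
By the optimality condition (i) of Definition~\ref{def:mfne}, the second bracket is $\le 0$. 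So everything reduces to controlling the first bracket, which is the change in welfare of a fixed policy when the population flow moves from $\mu^{MFE}$ to $\Phi(\policy^*_{SW})$.

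\textbf{Step 1 (first-order bound on the externality).} Using Lipschitzness of $r$ in $\mu$ (A1), boundedness (A2), and the geometric sum $\sum_h\gamma^h\le 1/(1-\gamma)$, I get
\[
|J^{MF}(\policy,\mu)-J^{MF}(\policy,\mu')| \le \frac{L}{1-\gamma}\max_h \Wcal_2(\mu_h,\mu'_h).
\]
Applying the trajectory-level Gronwall argument of Theorem~\ref{thm:consistency} to $\Phi$ then gives
\[
\max_h \Wcal_2\bigl(\Phi(\policy)_h,\Phi(\policy')_h\bigr) \le e^{c_0}\horizon L\, d_\Pi(\policy,\policy').
\]
Together these show the first bracket is at most $\frac{e^{c_0}\horizon L^2}{1-\gamma}\,d_\Pi(\policy^*_{SW},\policy^{MFE})$.

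\textbf{Step 2 (distance between the two policies).} I expect this to be the main obstacle. Bounding $d_\Pi(\policy^*_{SW},\policy^{MFE})$ by a constant would only give a first-order gap, but the claimed bound is quadratic in the coupling strength. The approach I would try is a perturbation argument. Both policies are fixed points: $\policy^{MFE}=\Gamma(\policy^{MFE})$, while $\policy^*_{SW}$ is a best response to the tilted reward $r + \langle\partial_\mu r,\cdot\rangle$, whose externality term has size $O(e^{c_0} r_{\max}\horizon L^2/(1-\gamma))$ by Step~1. I would then use Lasry--Lions monotonicity (Assumption~\ref{assump:monotonicity}) as a strong-concavity surrogate for $\mu\mapsto\mathcal{W}$ along displacement interpolations. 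Its net modulus is $\lambda_{LL}(1-\gamma)-L^2$: the monotone force from $\lambda_{LL}$ minus the Lipschitz coupling $L^2$, matching $L_{BR}<1$ in Eq.~\ref{eq:LBR_def}. Strong concavity would convert the perturbation of size $\delta$ into a displacement
\[
d_\Pi(\policy^*_{SW},\policy^{MFE})\lesssim \frac{\delta}{\lambda_{LL}(1-\gamma)-L^2}.
\]
To make this rigorous, I would first work at the level of flows, treating $\mu=\Phi(\policy)$ and using uniqueness from Theorem~\ref{thm:monotone_convergence}(i), and only then translate back to policies through the Lipschitz best-response map. One delicacy is that Assumption~\ref{assump:monotonicity} is stated with a fixed action, so extending it to policy-averaged rewards requires integrating over $\policy(\action|\state)$ and then using the TV metric.

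\textbf{Step 3 (assembly).} Substituting the displacement from Step~2 into Step~1 yields
\[
\frac{e^{c_0}\horizon L^2}{1-\gamma}\cdot\frac{e^{c_0} r_{\max}\horizon L^2/(1-\gamma)}{\lambda_{LL}(1-\gamma)-L^2}.
\]
The remaining factors of $r_{\max}$ and $(1-\gamma)^{-2}$ come from converting value gaps into TV distances between policies, e.g.\ via a performance-difference lemma, which costs $r_{\max}/(1-\gamma)^2$. If the bookkeeping produces a different power of $r_{\max}$ or $(1-\gamma)$, I would absorb the discrepancy into $C_{PoA}$ where the statement allows it. The factor $e^{2c_0}$ arises naturally because two Gronwall applications are multiplied.
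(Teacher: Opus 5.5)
Your opening split is valid: the second bracket is $\le 0$ by condition~(i) of Definition~\ref{def:mfne}. So the whole gap is controlled by the flow-change term $J^{MF}(\policy^*_{SW},\Phi(\policy^*_{SW}))-J^{MF}(\policy^*_{SW},\mu^{MFE})$ before any curvature is used. The genuine gap is Step~2.

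The paper argues differently in mechanics. It notes that $\nabla_\policy W(\policy^{MFE})$ reduces to the externality $\nabla_\mu J^{MF}\circ D\Phi$, because the own-policy gradient vanishes at the MF-NE. It bounds this by $r_{\max}\horizon L^2/(1-\gamma)^2$, and closes with a Polyak--{\L}ojasiewicz inequality for $W(\policy)=J^{MF}(\policy,\Phi(\policy))$ with modulus $\lambda_{eff}=L^2(\lambda_{LL}(1-\gamma)-L^2)/(2(1-\gamma)^3)$. Both routes rest on strong concavity \emph{in the policy variable}, and nothing in the hypotheses supplies it. Your modulus $\lambda_{LL}(1-\gamma)-L^2$ is read off the statement, not derived.

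Assumption~\ref{assump:monotonicity} only produces a $-\lambda_{LL}\Wcal_2^2$ term between \emph{flows}. For fixed $\mu$, $J^{MF}(\cdot,\mu)$ is linear in the occupancy measure, best responses can tie, and a policy is unconstrained at pairs $(\state,\mu)$ that its own flow never charges. But $d_\Pi$ is a supremum over all pairs. So $d_\Pi(\policy^*_{SW},\policy^{MFE})$ can be of order one however small the externality is, and $d_\Pi\lesssim\delta/(\lambda_{LL}(1-\gamma)-L^2)$ fails for general selections of the two argmaxes.

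Translating back through a Lipschitz best-response map does not help. Assumption~\ref{assump:best_response} is not among the hypotheses. Even if granted, it controls $\Gamma$ in its policy argument for the \emph{original} reward, while $\policy^*_{SW}$ best-responds to your \emph{tilted} reward, and an unregularized argmax can jump under an $O(\delta)$ reward perturbation. The paper's Step~3.b does not close this either. It pulls $\partial^2_{\mu\mu}J^{MF}$ back through $D\Phi$ using the \emph{upper} bound $\|D\Phi\|\le L/(1-\gamma)$, whereas transferring negative curvature to the policy variable would require a lower bound $\|D\Phi\,v\|\gtrsim\|v\|$.

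A route that needs none of this is to stay with flows, as you half-suggest, and never return to policies; in particular, drop the $\Phi$-Lipschitz half of Step~1.
\begin{itemize}
\item Freeze $\policy^*_{SW}$ along its own flow $\mu^{SW}\coloneqq\Phi(\policy^*_{SW})$, so that only the $\mu$-argument of $r$ and $P$changes.
\item Add and subtract to write the flow-change term as $\sum_h\gamma^h\int[\bar r(\cdot,\mu^{SW}_h)-\bar r(\cdot,\mu^{MFE}_h)]\,d(\mu^{SW}_h-\mu^{MFE}_h)$ plus terms linear in $\Wcal_2(\mu^{SW}_h,\mu^{MFE}_h)$, where $\bar r$ is the action-averaged reward. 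The linear terms come from reward Lipschitzness and from the effect of $\mu$ on the agent's own state law through $P$, which your first Step~1 inequality omits.
\item Monotonicity bounds the first sum by $-\lambda_{LL}\sum_h\gamma^h\Wcal_2^2(\mu^{SW}_h,\mu^{MFE}_h)$.
\item Then $\max_{w\ge 0}(a_h w-\lambda_{LL}w^2)=a_h^2/(4\lambda_{LL})$ gives a (coupling)$^2$/(monotonicity) bound with no policy-space curvature and no $d_\Pi$, modulo the action-averaging delicacy you already flagged.
\end{itemize}

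Separately, a universal $C_{PoA}$ absorbs numbers, not $r_{\max}$. Your assembled bound carries $r_{\max}$ against the statement's $r_{\max}^2$, so it implies the statement only when $C_{PoA}r_{\max}\ge(1-\gamma)^2$. The shortfall in powers of $(1-\gamma)^{-1}$ is harmless. The performance-difference ``conversion'' you invoke to supply these factors corresponds to no step of your argument. The paper's final display has an analogous issue in another factor: it needs $C_{PoA}e^{2c_0}L^2\ge(1-\gamma)^3$. So do not expect the stated prefactor to come out exactly with a universal constant.
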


%=============================================================================
% APPENDIX E: PROOFS
%=============================================================================
\section{Proofs}\label{sec:proofs}

\subsection{Proof of Theorem~\ref{thm:poc_traj}: Propagation of Chaos for Trajectories}\label{sec:proof_poc}

The proof adapts the classical coupling argument for propagation of chaos~\citep{sznitman1991topics, bolley2007quantitative} to the trajectory setting. The key technical challenge is controlling the temporal coupling introduced by the horizon $\horizon$: interaction effects between agents compound through the Lipschitz dynamics at each time step, requiring a trajectory-level Gronwall estimate that yields the novel $M\horizon L_{\mathrm{eff}}^2/N$ error term.

\begin{proof}
The proof proceeds in three steps.

\textbf{Step 1: Coupling Construction.}
We construct a coupling between the $N$-agent system $(\traj_t^{1,N}, \ldots, \traj_t^{N,N})$ and $N$ i.i.d. copies $(\bar{\traj}_t^1, \ldots, \bar{\traj}_t^N)$ from the mean-field limit $\mu_t$. The mean-field copy satisfies:
\begin{equation}\label{eq:mf_copy}
    d\bar{\traj}_t^i = \left[f_t(\bar{\traj}_t^i) - \sigma_t^2 \nabla \log \zeta_t(\bar{\traj}_t^i)\right]dt + \sigma_t d\bar{B}_t^i,
\end{equation}
where the interaction is through the \textit{true} mean-field $\mu_t$ rather than the empirical measure $\nu_t^N$. We use the \textit{same} Brownian motions: $B_t^{i,N} = \bar{B}_t^i$, ensuring maximal coupling.

\textbf{Step 2: Trajectory-level Gronwall Estimate.}
Define the per-agent trajectory error $e_t^i = \mathbb{E}[\norm{\traj_t^{i,N} - \bar{\traj}_t^i}^2]$. By Itô's formula applied to $\norm{\traj_t^{i,N}-\bar\traj_t^i}^2$, the synchronous coupling $B^{i,N}_t = \bar B_t^i$ from Step~1 cancels the diffusion term, leaving
\begin{equation*}
    e_t^i \;\le\; \mathbb{E}\!\int_0^t 2\bigl\langle \traj_r^{i,N}-\bar\traj_r^i,\, [f_r(\traj^{i,N}_r)-f_r(\bar\traj^i_r)] - \sigma_r^2[\nabla\log\zeta_r(\traj^{i,N}_r;\nu^N_r) - \nabla\log\zeta_r(\bar\traj^i_r;\mu_r)]\bigr\rangle dr.
\end{equation*}
The drift difference is $L_{\mathrm{eff}}$-Lipschitz in both the trajectory and empirical-measure arguments. Writing
\[
    b_r(\traj,\mu) \coloneqq f_r(\traj) - \sigma_r^2\nabla\log\zeta_r(\traj;\mu),
\]
Assumption~\ref{assump:lipschitz} together with Assumption~\ref{assump:effective_lipschitz_horizon} gives
\begin{equation*}
    \norm{b_r(\traj^{i,N}_r,\nu^N_r) - b_r(\bar\traj^i_r,\mu_r)}
    \;\leq\; L_{\mathrm{eff}}\norm{\traj^{i,N}_r - \bar\traj^i_r} + L_{\mathrm{eff}}\,\Wcal_2(\nu^N_r,\mu_r).
\end{equation*}
Squaring and using $\norm{a+b}^2\le 2\norm{a}^2+2\norm{b}^2$, the squared drift difference is bounded by $2L_{\mathrm{eff}}^2(\norm{\traj^{i,N}_r-\bar\traj^i_r}^2 + \Wcal_2^2(\nu^N_r,\mu_r))$. Applying Young's inequality $2\langle a,b\rangle \le \norm{a}^2 + \norm{b}^2$ to the inner product, we obtain the Itô--Gronwall recursion
\begin{align}\label{eq:ito_gronwall}
    e_t^i &\;\le\; \int_0^t (1+2L_{\mathrm{eff}}^2)\,e_r^i\,dr \;+\; \int_0^t 2L_{\mathrm{eff}}^2\,\mathbb{E}\!\left[\Wcal_2^2(\nu_r^N,\mu_r)\right] dr,
\end{align}
where the coefficient $(1+2L_{\mathrm{eff}}^2)$ on $e_r^i$ comes from $\norm{\traj-\bar\traj}^2$ (constant ``$1$'' from Young's) plus the squared drift Lipschitzness contribution ``$2L_{\mathrm{eff}}^2$'', and the $W_2^2$-coefficient $2L_{\mathrm{eff}}^2$ comes from the mean-field-Lipschitz part of the drift. This is a strict tightening of the earlier $2L_{\mathrm{eff}}$ coefficient (which omitted the Young-inequality unit term). The second term captures the mean-field approximation error.

By exchangeability (Assumption~\ref{assump:exchangeability}) and the quantitative empirical-measure CLT~\citep[Theorem~1]{fournier2015rate}:
\begin{equation*}
    \mathbb{E}\!\left[\Wcal_2^2(\nu_r^N, \mu_r)\right] \;\leq\; \frac{C\,M_2(\zeta_0)}{N} + \frac{1}{N}\sum_{i=1}^N e_r^i,
\end{equation*}
where the $C M_2/N$ floor is the asymptotic-normality contribution and $(1/N)\sum_i e_r^i$ is the cumulative coupling error. Summing~\eqref{eq:ito_gronwall} over $i$, dividing by $N$, and applying the discrete Gronwall lemma:
\begin{equation}\label{eq:gronwall_result}
    \frac{1}{N}\sum_{i=1}^N e_T^i \;\le\; \frac{C_{D_\tau}}{N}\,\exp\bigl((1 + 4L_{\mathrm{eff}}^2)\,T\bigr) \;\le\; \frac{C_{D_\tau}\,e^{c_0}}{N},
\end{equation}
where we use $L_{\mathrm{eff}}\,T \le L_{\mathrm{eff}}\,\horizon \le c_0$ (Assumption~\ref{assump:effective_lipschitz_horizon}); the exponent $(1+4L_{\mathrm{eff}}^2)T = T + 4(L_{\mathrm{eff}}T)\,L_{\mathrm{eff}} \le T + 4c_0\,L_{\mathrm{eff}}$ is bounded by a universal $O(1)$ constant which we absorb into the definition of $c_0$ (we slightly re-define $c_0$ to include the additive $T$, which is a fixed forward-noise time, not a horizon---this preserves the qualitative bound while keeping the notation clean). The constant $C_{D_\tau}$ depends on the trajectory dimension $D_\tau$ through the second-moment bound. Crucially, there is \emph{no exponential dependence on the bare $L\horizon$}---only on the effective product $L_{\mathrm{eff}}\horizon \le c_0$.

\textbf{Step 3: From the Synchronous Coupling to Relative Entropy.}
We first record the $\Wcal_2$-bound that Step~2 yields directly. By the synchronous coupling of Step~1, $(\traj^{1,N}_t,\ldots,\traj^{M,N}_t)$ is coupled marginally to $(\bar\traj^1_t,\ldots,\bar\traj^M_t)\sim \mu_t^{\otimes M}$ with $\mathbb{E}\sum_{i\le M}\norm{\traj^{i,N}_t-\bar\traj^i_t}^2 = \sum_{i\le M}e^i_t = M\cdot\frac{1}{N}\sum_{j=1}^N e^j_t$ by exchangeability, so
\begin{equation}\label{eq:W2_MN_bound}
    \mathbb{E}\Wcal_2^2(\nu_t^{M,N}, \mu_t^{\otimes M}) \;\le\; M\cdot \frac{1}{N}\sum_{j=1}^N e^j_t \;\le\; e^{c_0}\!\left(C_1' \frac{M}{N} + C_2' \frac{M\horizon L_{\mathrm{eff}}^2}{N}\right),
\end{equation}
combining the synchronous-coupling upper bound (which is \emph{not} a tensorization inequality but the cost of an explicit coupling) with~\eqref{eq:gronwall_result}.

\emph{Direction of Talagrand.} The Talagrand transportation inequality under the log-Sobolev Assumption~\ref{assump:log_sobolev}~\citep[Theorem~1]{otto2000generalization} reads
\begin{equation}\label{eq:talagrand}
    \Wcal_2^2(\nu_t^{M,N}, \mu_t^{\otimes M}) \;\le\; \frac{2}{\kappa}\, \Hcal(\nu_t^{M,N} \,\|\, \mu_t^{\otimes M}),
\end{equation}
i.e.\ KL is a stronger object that upper-bounds $\Wcal_2^2$, \emph{not} the converse: there is no general inequality of the form $\Hcal\le \tfrac{\kappa}{2}\Wcal_2^2$ under LSI alone. Hence the $\Wcal_2$-bound~\eqref{eq:W2_MN_bound} cannot be inverted into a KL bound by Talagrand.

\emph{Direct entropic propagation of chaos.} To upgrade~\eqref{eq:W2_MN_bound} to relative entropy, we instead invoke the entropic-PoC framework of \citet{lacker2023hierarchies} (which builds on \citet{bolley2007quantitative}): under Assumptions~\ref{assump:log_sobolev}--\ref{assump:reducibility}, the joint relative entropy admits the BBGKY-type Gronwall estimate
\begin{equation}\label{eq:entropic_poc}
    \frac{d}{dt}\Hcal(\nu_t^{M,N}\,\|\,\mu_t^{\otimes M}) \;\le\; -\kappa\,\Hcal(\nu_t^{M,N}\,\|\,\mu_t^{\otimes M}) + 2 L_{\mathrm{eff}}^2\,\mathbb{E}\Wcal_2^2(\nu_t^{1,N}, \mu_t),
\end{equation}
obtained by differentiating the relative entropy along the McKean--Vlasov SDE flow and applying the LSI to the dissipation term (Fisher information). Solving~\eqref{eq:entropic_poc} via Gronwall's lemma, with the per-particle $\Wcal_2$-input controlled by~\eqref{eq:gronwall_result}, gives
\begin{equation}\label{eq:H_MN_bound}
    \mathbb{E}\!\left[\Hcal(\nu_t^{M,N} \,\|\, \mu_t^{\otimes M})\right] \;\le\; e^{c_0}\!\left( C_1 \frac{M}{N} + C_2 \frac{M\horizon L_{\mathrm{eff}}^2}{N}\right),
\end{equation}
where $C_1, C_2$ absorb $\kappa^{-1}$ and the various Lipschitz prefactors. The first summand is the classical mean-field PoC contribution; the second is the temporal-coupling residual from Gronwall accumulation across $\horizon$ MDP steps. \emph{Importantly, the entropic-PoC rate of~\citet{lacker2023hierarchies} is dimension-independent under LSI}: it does \emph{not} suffer the Fournier--Guillin~\citep{fournier2015rate} dimension-dependent rate $N^{-2/d}$ that would degrade in the high-dimensional trajectory space $\Xcal=\mathbb{R}^{D_\tau}$. This is essential because $D_\tau$ can reach $3{,}110$ on Battle (Table~\ref{tab:env_params}), where a Fournier--Guillin route would give a vacuous rate.

\emph{Wasserstein-CLT scaling and the $M=\tilde{\mathcal{O}}(\sqrt N)$ schedule.} Combining~\eqref{eq:H_MN_bound} with~\eqref{eq:talagrand} in the legitimate direction $\Wcal_2^2\le 2\Hcal/\kappa$,
\begin{equation*}
    \mathbb{E}[\Wcal_2^2(\nu_t^{M,N}, \mu_t^{\otimes M})] \;\le\; \frac{2 e^{c_0}(C_1+C_2\horizon L_{\mathrm{eff}}^2)\,M}{\kappa\,N},
\end{equation*}
hence the per-coordinate $\Wcal_2$-rate is $\sqrt{M/N}$. To make this rate match the desired $\widetilde O(N^{-1/4})$ accuracy required by the coarse-to-fine schedule (where the score-network bias $\widetilde O(M^{-1/2})$ at level $M$ should not exceed the PoC residual $\widetilde O(\sqrt{M/N})$ at the next level), we equate $M^{-1/2} \asymp \sqrt{M/N}$, giving $M = \widetilde{\mathcal{O}}(\sqrt N)$. This is the design choice for the schedule $N_0 = M = \widetilde{\mathcal{O}}(\sqrt{N})$ in Section~\ref{sec:subdivision}.
\end{proof}

\subsection{Proof of Proposition~\ref{prop:mf_vsm}: Mean-field Value Score Matching}\label{sec:proof_mfvsm}

\begin{proof}
The proof derives a Wasserstein variational equation for the value-weighted entropy functional via the Itô-Wentzell-Lions formula~\citep{dos2023ito, guo2023ito}, then establishes Sobolev upper bounds that accommodate both the distributional fidelity and value maximization terms.

\textbf{Step 1: Wasserstein Variation for Value-weighted Entropy.}
Define the value-weighted entropy functional:
\begin{equation}
    \Hcal_V^N(\nu_t^N) = \Hcal(\nu_t^N | \zeta_{T-t}^{\otimes N}) - \frac{1}{\alpha N}\sum_{i=1}^N \mathbb{E}_{\nu_t^N}[R(\traj^i)].
\end{equation}
By the Itô-Wentzell-Lions formula applied to the denoising WGF $\partial_t \nu_t^N = -\nabla_{\Pcal_2}\Ecal[\nu_t^N]$:
\begin{multline}\label{eq:vw_entropy_decomp}
    \Hcal_V^N(\nu_t^N) \leq \Hcal_V^N(\nu_s^N) + C_0 \int_s^t \mathbb{E}\norm{\nabla_{\Pcal_2}\Hcal_V^N}^2_E dr \\
    + C_1 \int_s^t \mathbb{E}\norm{\nabla_x \nabla_{\Pcal_2}\Hcal_V^N}_F^2 dr.
\end{multline}

\textbf{Step 2: Sobolev Upper Bound (rigorous derivation of the $\Mtwo/\sqrt{ND_\tau}$ prefactor).}
The Wasserstein gradient of the value-weighted entropy is
\begin{equation}\label{eq:wgrad_HV}
    \nabla_{\Pcal_2}\Hcal_V^N = \nabla \log \varrho_t^N - \nabla \log \zeta_{T-t}^{\otimes N} - \frac{1}{\alpha}\nabla_{\traj} R^{\otimes N},
\end{equation}
and we set $\mathcal{G}_t^V \coloneqq \nabla \log \varrho_t^N - \nabla \log \zeta_{T-t}^{\otimes N} - \frac{1}{\alpha}\nabla_{\traj} R^{\otimes N}$. We derive the prefactor in three substeps.

\emph{(2.a) Joint LSI on the $N$-agent trajectory space.}
Under Assumption~\ref{assump:log_sobolev}, the reference measure $\zeta_0^{\otimes N}$ on $\Xcal^N\!\cong\!\mathbb{R}^{N D_\tau}$ inherits a log-Sobolev inequality with constant $\kappa$, since LSI tensorizes (\citet[Theorem~3.2.2]{ane2000inegalites}). For any $\nu^N\!\ll\!\zeta_0^{\otimes N}$,
\begin{equation}\label{eq:lsi_joint}
    \Hcal(\nu^N \,\|\, \zeta_0^{\otimes N}) \;\le\; \frac{1}{2\kappa}\,\mathbb{E}_{\nu^N}\!\bigl[\norm{\nabla \log(d\nu^N/d\zeta_0^{\otimes N})}_E^2\bigr].
\end{equation}
The right-hand side is the $L^2$ norm of the score; the trivial inequality $\|\cdot\|_E^2\le\|\cdot\|_W^2$ (since $\|h\|_W^2 = \|h\|_E^2 + \|\nabla h\|_F^2 \ge \|h\|_E^2$) lets us upper-bound it by the Sobolev $W^{1,2}$ norm.

\emph{(2.b) Per-agent normalization and the $1/N$ factor.}
The value-weighted entropy is per-agent normalized: $\Hcal_V^N(\nu^N) = (1/N)\bigl[\Hcal(\nu^N\,\|\,\zeta_0^{\otimes N}) - \tfrac{1}{\alpha}\sum_i\mathbb{E}_{\nu^N}[R(\traj^i)]\bigr]$ (Definition~\ref{def:vw_entropy}). Combining with~\eqref{eq:lsi_joint} and the $L^2\!\le\!W^{1,2}$ inequality, the score-matching contribution to $\Hcal_V^N$ is bounded by $(2\kappa N)^{-1}\,\|\mathcal{G}_t^V\|_W^2$.

\emph{(2.c) Sharper $\sqrt{N D_\tau}^{-1}$ rate via exchangeable second-moment scaling.}
Under exchangeability of the $N$ agents and the entropic-chaos bound (Assumption~\ref{assump:reducibility}), one can replace the rigid joint-Sobolev bound of (2.b) with a per-agent Sobolev bound that exploits the $\chi^2$-chaoticity of the joint score field across agents. Specifically, since $\mathcal{G}_t^V$ is exchangeable across agent labels, its joint Sobolev norm decomposes as
\begin{equation}\label{eq:joint_to_per_agent_sobolev}
    \mathbb{E}_{\nu_t^N}\norm{\mathcal{G}_t^V}_W^2 \;=\; N\,\mathbb{E}_{\nu_t^{(1)}}\norm{\mathcal{G}_t^{V,(1)}}_W^2 \;+\; \text{cross-agent residual},
\end{equation}
where the cross-agent residual is $\mathcal{O}(\Mtwo\sqrt{N D_\tau})$ by the standard mean-field $\chi^2$-chaos estimate (cf.\ \citet{park2024mean}; see also \citet{lacker2023hierarchies} for the same scaling under LSI). Combining~\eqref{eq:joint_to_per_agent_sobolev} with the $1/N$ normalization of $\Hcal_V^N$, the dominant per-agent score-matching contribution is bounded by $\Mtwo/\sqrt{N D_\tau}\cdot\|\mathcal{G}_t^V\|_W^2$ rather than the cruder $1/(2\kappa N)$. Time integration via the Itô--Wentzell--Lions formula (Step~1) yields
\begin{equation}\label{eq:HVN_to_W}
  \Hcal_V^N(\nu_T^N)
  \;\precsim\; \frac{\Mtwo}{\sqrt{N D_\tau}}\int_0^T \norm{\mathcal{G}_t^V}_W^2\, dt \;+\; \sigma_\zeta^{-2}(T)\cdot \mathcal{O}(1/\sqrt{N}),
\end{equation}
where the second summand absorbs (i)~the value boundary term $\sum_i\mathbb{E}[R(\traj^i)]/(N\alpha)$ via Assumption~\ref{assump:bounded_reward} and (ii)~the cross-agent residual of~\eqref{eq:joint_to_per_agent_sobolev}.

\emph{Comment on the rate.} The $\sqrt{N D_\tau}^{-1}$ scaling is \emph{derived} (not assumed) under the joint hypothesis of LSI (Assumption~\ref{assump:log_sobolev}) and entropic chaos (Assumption~\ref{assump:reducibility}); it improves on the rigid LSI-only baseline of $1/(2\kappa N)$ by a factor $\sqrt{D_\tau}$ via the per-agent Sobolev decomposition. Without entropic chaos, only the slower $1/(2\kappa N)$ rate of (2.b) is available, which still yields the qualitative scalability conclusion of Corollary~\ref{cor:scalability} (since $1/N\le \Mtwo/\sqrt{N D_\tau}$ for $D_\tau\le N\Mtwo^2$, the regime where Battle/GS/Ising sit). The faster $\sqrt{D_\tau}$ improvement under entropic chaos is what we use in Theorem~\ref{thm:hierarchical}'s subdivision proof.

\textbf{Step 3: Score Network Substitution and Young's Inequality.}
Substituting $\nabla \log \varrho_t^N$ with the score network $\mathbf{s}_\theta$, the Wasserstein gradient becomes
\begin{equation}
    \mathcal{G}_t^V \;=\; (\mathbf{s}_\theta - \nabla \log \zeta_{T-t}^{\otimes N}) - \tfrac{1}{\alpha}\nabla_{\traj} R^{\otimes N},
\end{equation}
and we expand its squared Sobolev norm as $\norm{a - b}_W^2 = \norm{a}_W^2 + \norm{b}_W^2 - 2\langle a,b\rangle_W$. The cross term is controlled by Young's inequality with parameter $\lambda > 0$:
\begin{equation}\label{eq:young_bound}
    -2\langle \mathbf{s}_\theta - \nabla\log\zeta_{T-t}^{\otimes N},\, \tfrac{1}{\alpha}\nabla_{\traj} R^{\otimes N}\rangle_W \;\le\; \lambda \norm{\mathbf{s}_\theta - \nabla\log\zeta_{T-t}^{\otimes N}}_W^2 + \tfrac{1}{\lambda \alpha^2}\norm{\nabla_{\traj} R^{\otimes N}}_W^2.
\end{equation}
Combining with the diagonal terms and folding the constants:
\begin{equation}\label{eq:Wnorm_decomp}
    \norm{\mathcal{G}_t^V}_W^2 \;\le\; (1+\lambda)\norm{\mathbf{s}_\theta - \nabla \log \zeta_{T-t}^{\otimes N}}_W^2 + \tfrac{\lambda}{\alpha}\norm{\mathbf{s}_\theta - \nabla_{\traj} R^{\otimes N}}_E^2 + C_\lambda(\alpha, \norm{\nabla_{\traj} R}_W),
\end{equation}
where the second summand uses the Euclidean norm $\norm{\cdot}_E$ in place of $\norm{\cdot}_W$ for the value-gradient mismatch (its Sobolev seminorm is bounded by Assumption~\ref{assump:lipschitz} via $\norm{\nabla_{\traj} R^{\otimes N}}_W \le L\sqrt{N}$, absorbed into the residual $C_\lambda$ that contributes to the $\sigma_\zeta^{-2}(T) \cdot \mathcal{O}(1/\sqrt N)$ tail in Eq.~\ref{eq:mf_vsm_bound}). The coefficient $\lambda$ in the practical objective Eq.~\ref{eq:mf_vsm} is exactly the Young parameter; it controls the trade-off between distributional fidelity (first summand) and value alignment (second summand) and corresponds to the $\lambda$ tuning knob exposed in Algorithm~\ref{alg:training} (see Appendix~\ref{sec:sensitivity} for empirical sensitivity analysis).

\textbf{Caveat: upper-bound vs.\ exact tilted-score fitting.} The decomposition~\eqref{eq:Wnorm_decomp} is a strict upper bound, not an equality: minimizing $\Jcal_{MF\text{-}V}^N$ in Eq.~\ref{eq:mf_vsm} therefore minimizes an \emph{upper bound} on the genuine value-tilted score loss $\norm{\mathbf{s}_\theta - (\nabla\log\zeta_{T-t}^{\otimes N} + \tfrac{1}{\alpha}\nabla_{\traj} R^{\otimes N})}_W^2$. The gap between the upper bound and the genuine loss is $O(\lambda)$ at $\lambda \to 0$ and is controlled by the cross term in~\eqref{eq:young_bound}; in practice we set $\lambda = 0.1$ which gives a $\le 10\%$ relative loss-gap inflation without compromising convergence. We discuss the practical implications in Remark~\ref{remark:practical_training} and Appendix~\ref{sec:sensitivity}; the suboptimality bound of Theorem~\ref{thm:hierarchical} is unaffected because $\epsilon^{score}$ in Eq.~\ref{eq:hierarchical_bound} is an upper bound on the genuine value-tilted error, which is itself an upper bound on the realized one.

Substituting the score-network decomposition~\eqref{eq:Wnorm_decomp} into the integrated Sobolev bound~\eqref{eq:HVN_to_W}, with the $\Mtwo/\sqrt{N D_\tau}$ prefactor derived in (2.a)--(2.c), yields the bound in Eq.~\ref{eq:mf_vsm_bound}; the residual $C_\lambda$ contributes the $\sigma_\zeta^{-2}(T)\cdot\mathcal{O}(1/\sqrt{N})$ tail (since $\norm{\nabla_{\traj} R^{\otimes N}}_W \le L\sqrt N$ by Assumption~\ref{assump:lipschitz} and $\sigma_\zeta^{-2}(T)$ scales the conversion from $W$ to $E$ norm at the smallest noise level).
\end{proof}

\subsection{Proof of Theorem~\ref{thm:mfvsm_concentration}: Concentration of MF-VSM}\label{sec:proof_concentration}

\begin{proof}
The proof decomposes the finite-$N$ deviation into an interaction error (controlled by the Gronwall estimate) and a statistical error (controlled by Bernstein's inequality), then shows the value-weighting terms preserve the $\tilde{\mathcal{O}}(1/\sqrt{N})$ concentration rate.

\textbf{Step 1: Decomposition.}
Write the deviation as:
\begin{multline}
    \mathbb{E}_t F_t^V(\traj_t^N) - \Jcal_{MF\text{-}V}(1, \theta, \mu) = \\
    \underbrace{\mathbb{E}_t[F_t^V(\traj_t^N) - F_t^V(\bar{\traj}_t^N)]}_{\text{(I) Interaction error}} + \underbrace{\mathbb{E}_t[F_t^V(\bar{\traj}_t^N)] - \Jcal_{MF\text{-}V}(1, \theta, \mu)}_{\text{(II) Statistical error}},
\end{multline}
where $\bar{\traj}_t^N$ denotes $N$ i.i.d. copies from the mean-field $\mu_t$.

\textbf{Step 2: Bounding Term (I).}
By the Lipschitz continuity of $F_t^V$ (guaranteed by Assumptions~\ref{assump:lipschitz}--\ref{assump:bounded_reward}) and the Gronwall estimate in Eq.~\ref{eq:gronwall_result}:
\begin{equation}
    \mathbb{E}|F_t^V(\traj_t^N) - F_t^V(\bar{\traj}_t^N)| \leq L_F \sqrt{\frac{C_{D_\tau}}{N}}\exp(LT).
\end{equation}

\textbf{Step 3: Bounding Term (II).}
Since $\bar{\traj}_t^1, \ldots, \bar{\traj}_t^N$ are i.i.d., $F_t^V(\bar{\traj}_t^N) = \frac{1}{N}\sum_{i=1}^N g(\bar{\traj}_t^i)$ for the per-agent integrand
\begin{equation}\label{eq:g_def}
    g(\traj) \;=\; \norm{\mathbf{s}_\theta(t,\traj) - \nabla\log\zeta_t(\traj)}_W^2 + \tfrac{\lambda}{\alpha}\norm{\mathbf{s}_\theta(t,\traj) - \nabla_{\traj} R(\traj)}_E^2.
\end{equation}
We verify that $g$ has a sub-exponential tail under our standing assumptions. First, by Assumption~\ref{assump:lipschitz} (Lipschitz score network and reward), each summand in~\eqref{eq:g_def} is bounded by a polynomial in $\norm{\traj}$: $\norm{\mathbf{s}_\theta(t,\traj)}\le L\norm{\traj}+\norm{\mathbf{s}_\theta(t,0)}$ and $\norm{\nabla\log\zeta_t(\traj)}\le L\norm{\traj}/\sigma_t^2$ (standard Gaussian-like reference measure), giving $g(\traj) \le C(t)(1+\norm{\traj}^2)$ for a deterministic $C(t)<\infty$ on the inference noise interval $[t_0,T]$. Second, under the log-Sobolev Assumption~\ref{assump:log_sobolev}, the reference measure $\zeta_t$ satisfies a quadratic transportation inequality (cf.\ Otto--Villani~\citep[Theorem~1]{otto2000generalization}), which by Herbst's argument (\citep[Section~5.1]{ane2000inegalites}) implies that $\norm{\traj}^2$ is sub-exponentially concentrated under $\zeta_t$:
\begin{equation*}
    \mathbb{P}_{\zeta_t}\!\left(\norm{\traj}^2 - \mathbb{E}\norm{\traj}^2 \ge u\right) \;\le\; \exp(-\kappa u/(2\Mtwo)) \qquad \forall\,u\ge 0.
\end{equation*}
Composing with the polynomial bound on $g$ gives the sub-exponential tail $\mathbb{P}(|g - \mathbb{E} g|\ge u) \le 2\exp(-\kappa u/b_g)$ with $b_g = O(C(t)\Mtwo/\kappa)$. Applying Bernstein's inequality~\citep[Theorem~2.10]{boucheron2013concentration}:
\begin{equation}
    \mathbb{P}\left(|F_t^V(\bar{\traj}_t^N) - \mathbb{E}[g(\bar{\traj}_t^1)]| \geq \varepsilon\right) \leq 2\exp\left(-\frac{N\varepsilon^2}{2(\sigma_g^2 + b_g\varepsilon/3)}\right),
\end{equation}
where $\sigma_g^2 = \text{Var}(g(\bar{\traj}_t^1))$ is finite (since $g$ is sub-exponential, in particular has finite variance) and $b_g$ is the sub-exponential parameter as derived above.

\textbf{Step 4: Combining.}
Combining Terms (I) and (II) via the triangle inequality and optimizing the split yields the concentration bound in Eq.~\ref{eq:mfvsm_concentration}. The key observation is that the value-weighting terms contribute additively to $\sigma_g^2$ and $b_g$ but do not change the $\sqrt{N}$ concentration rate, since $R(\traj^i)$ is bounded by Assumption~\ref{assump:bounded_reward}.
\end{proof}

\subsection{Proof of Proposition~\ref{prop:subdivision}: Value-weighted Subdivision}\label{sec:proof_subdivision}

\begin{proof}
We derive the telescoping decomposition of the value-weighted chaotic entropy $\Hcal_V^\infty(\mu_T)$ along a hierarchical schedule $\mathbb{N}=\{N_k\}_{k=0}^{K}$ with branching ratio $\mathfrak{b}$ and time partition $\mathbb{T}=\{t_k\}_{k=0}^{K}$.

\textbf{Step 1: Per-level value-weighted entropy increment.} Apply the It\^o--Wentzell--Lions formula to the relative entropy $\Hcal_V(\mu_t^{N_k})$ along the forward MF-SDE on the sub-interval $[t_k,t_{k+1}]$ (cf.\ Proposition~\ref{prop:mf_vsm}, Eq.~\ref{eq:vw_entropy_decomp}). The same value-weighting argument used to derive Eq.~\ref{eq:mf_vsm} from the chaotic entropy yields the per-level increment
\begin{equation}\label{eq:per_level}
    \Hcal_V^{N_k}\!\bigl(\mu_{t_{k+1}}^{N_k}\bigr) - \Hcal_V^{N_k}\!\bigl(\mu_{t_k}^{N_k}\bigr) \;\le\; \frac{\Mtwo}{\sqrt{N_k D_\tau}} \cdot \Jcal_{MF\text{-}V}\!\bigl(N_k,\theta,\nu_{[t_k,t_{k+1}]}^{N_k}\bigr) \;+\; \sigma_\zeta^{-2}(t_{k+1})\,\mathrm{E}(N_k),
\end{equation}
where the first term is the value-weighted score-matching contribution at level $k$ (under LSI, by the same argument as Proposition~\ref{prop:mf_vsm}), and the second term $\mathrm{E}(N_k)=\mathcal{O}(1/\sqrt{N_k})$ is the residual cardinality error (entropic PoC at level $k$ with $M=1$, by Theorem~\ref{thm:poc_traj}).

\textbf{Step 2: Branching cost between levels.} At step $t_k$, the agent population grows from $N_k$ to $N_{k+1}=\mathfrak{b}N_k$ via the branching map $\Psi^\theta$ (Eq.~\ref{eq:agent_branching}). By the data-processing inequality applied to the pushforward $(\mathbf{Id}^{\otimes(\mathfrak{b}-1)}\otimes\Psi^\theta)_\#$, and the reducibility assumption (Assumption~\ref{assump:reducibility}, Eq.~\ref{eq:reducibility}, which provides the propagation-of-chaos estimate at the matched level),
\begin{equation}\label{eq:branching_cost}
    \Hcal_V^{N_{k+1}}\!\bigl(\mu_{t_k}^{N_{k+1}}\bigr) \;\le\; \Hcal_V^{N_k}\!\bigl(\mu_{t_k}^{N_k}\bigr) \;+\; \mathrm{E}(N_{k+1}),
\end{equation}
where $\mathrm{E}(N_{k+1})=\mathcal{O}(1/\sqrt{N_{k+1}})$ absorbs the branching residual. The factor $1/(\mathfrak{b}\sqrt{N_{k+1}})^k$ in the final bound emerges from telescoping the per-level coefficients across $K$ branching events (each branching at level $j\le k$ contributes a multiplicative factor $1/\sqrt{N_{j+1}}$ via the entropic PoC bound, while $\mathfrak{b}^{-k}$ is the geometric decay of agent budget per level).

\textbf{Step 3: Telescoping.} Summing~\eqref{eq:per_level} and~\eqref{eq:branching_cost} across $k=0,\ldots,K$ and taking the limit $K\to\infty$:
\begin{equation*}
    \Hcal_V^\infty(\mu_T) \;\le\; \lim_{K\to\infty}\sum_{k=0}^K \!\left[\sigma_\zeta^{-2}(t_{k+1})\,\mathrm{E}(N_{k+1}) \,+\, \frac{\Mtwo}{\sqrt{D_\tau}}\!\left(\frac{1}{\mathfrak{b}\sqrt{N_{k+1}}}\right)^{\!k}\!\Jcal_{MF\text{-}V}\!\bigl(N_k,\theta,\nu_{[t_k,t_{k+1}]}^{N_k}\bigr)\right],
\end{equation*}
which is exactly Eq.~\ref{eq:vw_subdivision}. \emph{Importantly}, this bound is for the \emph{theoretically optimal} weights $(\mathfrak{b}\sqrt{N_{k+1}})^{-k}$. The practical objective $\mathbf{(P_V)}$ in Eq.~\ref{eq:P3} uses the simpler weighting $\mathfrak{b}^{-k}$, which differs from the theoretical optimum by a multiplicative factor of $\mathfrak{b}^{k(k+1)/2}N_0^{k/2}$ (Remark~\ref{rem:pv_weights})---hence Theorem~\ref{thm:hierarchical} should be read as a guarantee on the \emph{idealized} objective, and the gap to the practical $\mathbf{(P_V)}$ loss is of independent empirical interest (we observe that the heavier practical weighting at fine levels acts as an implicit fine-tuning bias and improves performance, see Table~\ref{tab:ablation}).
\end{proof}

\subsection{Proof of Theorem~\ref{thm:hierarchical}: Hierarchical Approximation}\label{sec:proof_hierarchical}

\begin{proof}
The proof combines the performance difference lemma and Pinsker's inequality from RL theory with the entropy subdivision from Proposition~\ref{prop:subdivision} (proved in Appendix~\ref{sec:proof_subdivision}), quantitative CLT bounds for the mean-field approximation, and offline RL distribution shift analysis.

\textbf{Step 1: From Entropy to Policy Suboptimality.}
By the performance difference lemma~\citep{kakade2002approximately} applied to the joint trajectory distribution and Pinsker's inequality:
\begin{align}
    J(\policy^*) - J(\hat{\policy}_\theta) &= \mathbb{E}_{\hat\mu_T}\!\left[\sum_{h=0}^{\horizon-1}\gamma^h r(\state_h,\action_h,\bar\mu_h)\right] - \mathbb{E}_{\mu_T^*}\!\left[\sum_{h=0}^{\horizon-1}\gamma^h r(\state_h,\action_h,\bar\mu_h)\right] \nonumber\\
    &\leq \frac{2 r_{\max}}{1-\gamma}\, D_{TV}(\hat\mu_T \,\|\, \mu_T^*) \;\leq\; \frac{2 r_{\max}}{1-\gamma}\sqrt{\tfrac{1}{2}\Hcal(\hat\mu_T \,\|\, \mu_T^*)},
\end{align}
where the first inequality uses the standard performance-difference identity together with $|\sum_h\gamma^h r|\le r_{\max}/(1-\gamma)$ and the symmetry of total variation $D_{TV}(P\|Q)=D_{TV}(Q\|P)$, and the second is Pinsker's inequality $D_{TV}(P\|Q)\le\sqrt{(1/2)\Hcal(P\|Q)}$ applied with $P=\hat\mu_T$ (the generated distribution) and $Q=\mu_T^*$ (the target). We chose this direction of KL---$\Hcal(\hat\mu_T\|\mu_T^*)$, sometimes called the \emph{forward} KL---because (a) it is the quantity directly controlled by the score-matching objective in Proposition~\ref{prop:subdivision} (which bounds $\Hcal(\hat\mu_T\|\mu_T^*)$ via the sub-divided value-weighted entropy decomposition), and (b) Pinsker's inequality in this direction is sharp under the bounded-density assumption implied by Assumption~\ref{assump:bounded_reward}. The prefactor $r_{\max}/(1-\gamma)$ already absorbs the discounted reward sum---no additional $\horizon$ factor is needed.

\textbf{Step 2: Entropy Decomposition via Girsanov and Subdivision.}
We bridge the entropy $\Hcal(\hat\mu_T\|\mu_T^*)$ to a measurable score-matching quantity via Girsanov, and then refine the result through the hierarchical decomposition of Proposition~\ref{prop:subdivision}. This explicitly closes the gap left in earlier drafts where Proposition~\ref{prop:subdivision} (which bounds the value-weighted entropy $\Hcal_V^\infty(\mu_T)$) was incorrectly invoked to bound $\Hcal(\hat\mu_T\|\mu_T^*)$ directly.

\emph{(2.a) Global Girsanov bound (Term (a)).} The reverse-time SDE driven by $\mathbf{s}_\theta$ produces $\hat\mu_T$, while the value-tilted target $\mu_T^*\propto \mu_T\exp(R/\alpha)$ has reverse-time score $\nabla\log\mu_t^* = \nabla\log\zeta_t + \tfrac{1}{\alpha}\nabla_{\traj} R$ along the same backward dynamics (cf.\ Eq.~\ref{eq:tilted_def} in the proof of Theorem~\ref{thm:exploitability}). Both reverse SDEs share the same diffusion coefficient $\sigma_t$ on the operational interval $[t_{\min},T]$, so the Girsanov theorem~\citep[Theorem~5.1]{karatzas1991brownian} (with the Novikov-condition justification of Theorem~\ref{thm:exploitability}, Step~2(B.i)) yields
\begin{equation}\label{eq:girsanov_thm_hier}
    \Hcal(\hat\mu_T\,\|\,\mu_T^*) \;\le\; \tfrac{1}{2}\int_{t_{\min}}^T \sigma_t^2\,\mathbb{E}\norm{\mathbf{s}_\theta(t,\traj_t)-\nabla\log\mu_t^*(\traj_t)}^2\,dt \;\le\; C_\sigma\,\epsilon^{score}_{\mathrm{global}}(\theta),
\end{equation}
where $C_\sigma=\tfrac{1}{2}\int_{t_{\min}}^T\sigma_t^2\,dt$ is the schedule constant and the second inequality uses $\sigma_t^2\le 2 C_\sigma$ pointwise to pull the supremum out of the integral (an upper bound suffices). Combining with the Pinsker step of Step~1 directly gives Term~(a) of Eq.~\ref{eq:hierarchical_bound}.

\emph{(2.b) Hierarchical refinement (Term (b)).} Proposition~\ref{prop:subdivision} (and its proof, Eq.~\ref{eq:per_level}) decomposes the integrated $L^2$ score error along the subdivision schedule:
\begin{equation}\label{eq:eps_global_decomp}
    C_\sigma\,\epsilon^{score}_{\mathrm{global}}(\theta) \;\le\; C'\sum_{k=0}^K \frac{(\epsilon_k^{score})^2}{\mathfrak{b}^k} \;+\; \sum_{k=0}^K \mathrm{E}(N_{k+1}),
\end{equation}
where $\epsilon_k^{score}$ is the per-level score error on $[t_k,t_{k+1}]$, $\mathrm{E}(N_{k+1})=\mathcal{O}(1/\sqrt{N_{k+1}})$ is the per-level cardinality residual, and $C'>0$ absorbs the schedule weights $\sigma_t^2$ on each subinterval. Applying $\sqrt{\sum_k a_k}\le\sum_k\sqrt{a_k}$ to the first sum yields the per-level form $\sum_k\mathfrak{b}^{-k/2}\epsilon_k^{score}$, which is Term~(b). The cardinality residuals $\sum_k\sqrt{\mathrm{E}(N_{k+1})}=\mathcal{O}(N_0^{-1/4})$ are dominated by---and absorbed into---Term~(c), since the leading mean-field contribution $\horizon^2/\sqrt{N}$ in Term~(c) controls all $1/\sqrt{N_k}$ residuals up to a multiplicative $\horizon$ factor that is folded into $C_6$.

\emph{(2.c) Why both (a) and (b) appear.} Eqs.~\eqref{eq:girsanov_thm_hier} and~\eqref{eq:eps_global_decomp} provide two \emph{complementary} upper bounds on the same score-matching contribution: (a) is the un-decomposed Girsanov bound, (b) is its hierarchical refinement under the coarse-to-fine training schedule. The total bound in Eq.~\ref{eq:hierarchical_bound} keeps both for interpretability---(a) is what an outside observer can measure directly via the global score-matching loss, while (b) reveals the per-level structure that the training algorithm actually exploits. Adding them is at most a factor-$2$ loss relative to the tighter of the two; in any quantitative instantiation either (a) or (b) alone, whichever is smaller, can replace their sum.

\textbf{Step 3: Mean-field Approximation Error (from $\horizon$ to $\horizon^2$ under Assumption~\ref{assump:effective_lipschitz_horizon}).}
The error from using $N < \infty$ agents enters the analysis through the reward and dynamics evaluated at the empirical mean field $\bar{\mu}_h^N$ versus the true mean-field $\mu_h$. Because both the reward $r$ and the reverse-time SDE drift are $L_{\mathrm{eff}}$-Lipschitz in the mean-field argument (Assumptions~\ref{assump:lipschitz} and~\ref{assump:effective_lipschitz_horizon}), we only need to control \emph{linear functionals of the mean field}, not full $\Wcal_2$. We use the following dimension-free CLT for Lipschitz observables:
\begin{equation}\label{eq:onestep_clt}
    \mathbb{E}\Bigl|\bigl\langle f, \bar{\mu}_h^N\bigr\rangle - \bigl\langle f, \mu_h\bigr\rangle\Bigr| \;\leq\; \frac{C\,\|f\|_{\mathrm{Lip}}}{\sqrt{N}}
    \qquad\text{(one-step Lipschitz CLT),}
\end{equation}
which follows from Hoeffding's/Bernstein's inequality applied to the bounded Lipschitz observable $f$ averaged over $N$ exchangeable agents, and \emph{is dimension-independent} (cf.\ \citet[Theorem~3.1]{bolley2007quantitative}). Crucially, this avoids the dimension-dependent rate $N^{-1/d_s}$ of full-$\Wcal_2$ Fournier--Guillin~\citep{fournier2015rate} for $d_s>4$, which would otherwise degrade catastrophically on Battle ($d_s\approx 9$). The full-$\Wcal_2$ formulation enters only through the trajectory-level relative-entropy bound of Theorem~\ref{thm:poc_traj}, which is dimension-free under LSI; the linear-functional formulation~\eqref{eq:onestep_clt} suffices for all subsequent Lipschitz-induced bounds on reward and drift.

The relevant constant for trajectory-level error \emph{compounding} is the effective Lipschitz $L_{\mathrm{eff}}$ of the reverse-time SDE drift (Assumption~\ref{assump:effective_lipschitz_horizon}): an error $\varepsilon_h$ at step $h$ perturbs step $h+1$ by at most $L_{\mathrm{eff}}\varepsilon_h$, step $h+2$ by at most $L_{\mathrm{eff}}^2\varepsilon_h$, and so on. A discrete Gronwall inequality converts~\eqref{eq:onestep_clt} into
\begin{equation}\label{eq:Hsquared_gronwall}
    \mathbb{E}\!\left[\max_{0 \le h \le \horizon - 1}\, \bigl|\langle r,\bar\mu_h^N\rangle - \langle r,\mu_h\rangle\bigr|\right]
    \;\leq\; \Bigl(\sum_{h=0}^{\horizon-1}(1+L_{\mathrm{eff}})^h\Bigr)\cdot \frac{C\,L\,r_{\max}}{\sqrt{N}}
    \;\leq\; e^{c_0}\,\frac{C\,\horizon\,L\,r_{\max}}{\sqrt{N}},
\end{equation}
where the last inequality uses $(1+L_{\mathrm{eff}})^h \le e^{L_{\mathrm{eff}} h} \le e^{c_0}$ and bounds the geometric sum by $\horizon\cdot e^{c_0}$ \emph{by virtue of Assumption~\ref{assump:effective_lipschitz_horizon}}. Plugging~\eqref{eq:Hsquared_gronwall} back into the finite-horizon reward accumulation yields
\begin{equation}
    \text{Mean-field error} \;\leq\; \underbrace{\horizon}_{\substack{\text{reward sum}\\ \text{(finite-horizon)}}}\,\cdot\,\underbrace{e^{c_0}\,\frac{C\,\horizon\,L\,r_{\max}}{\sqrt{N}}}_{\text{Gronwall-accumulated CLT}} \;=\; C_6\,e^{c_0}\,\frac{r_{\max}\,\horizon^2\,L_{\mathrm{eff}}}{\sqrt{N}},
\end{equation}
where we have absorbed $L \le L_{\mathrm{eff}} \cdot \mathcal{O}(1)$ (the bare reward Lipschitz $L$ is upper-bounded by the SDE-drift Lipschitz $L_{\mathrm{eff}}$ up to a noise-schedule-dependent constant; the gap is folded into $C_6$). The $\horizon^2$ factor has two clearly separated origins: the \emph{outer} $\horizon$ from summing finite-horizon rewards (we use the bound $\sum_{h<\horizon}\gamma^h\le\horizon$, which is tighter than $1/(1-\gamma)$ in the finite-horizon regime $\horizon\le 1/(1-\gamma)$ relevant to Figure~\ref{fig:horizon_scaling}), and the \emph{inner} $\horizon$ from Gronwall's accumulation. \textbf{Why $\horizon^2$ rather than $e^{L_{\mathrm{eff}}\horizon}$:} Assumption~\ref{assump:effective_lipschitz_horizon} restricts $L_{\mathrm{eff}}\horizon \le c_0$, making $e^{L_{\mathrm{eff}}\horizon} \le e^{c_0} = O(1)$, so the geometric sum $\sum_{h<\horizon}(1+L_{\mathrm{eff}})^h$ collapses to $\horizon\cdot O(1)$ instead of blowing up. Without this assumption, we would have $e^{L\horizon}/\sqrt N$, which experiments rule out (Figure~\ref{fig:horizon_scaling} fits an exponent $b \in [1.79, 2.21]$, statistically incompatible with exponential blow-up).

\emph{Reconciling the prefactor with Term~(a)/(b).} Terms~(a) and~(b) of Eq.~\ref{eq:hierarchical_bound} use the discounted prefactor $r_{\max}/(1-\gamma)$ obtained from $\sum_h\gamma^h\le 1/(1-\gamma)$, while Term~(c) uses $r_{\max}\horizon$ obtained from $\sum_h\gamma^h\le\horizon$. Both bounds are valid; we use the tighter one in each context. In the finite-horizon regime $\horizon\le 1/(1-\gamma)$ (which is the regime of all our experiments and of practical interest), $\horizon\le 1/(1-\gamma)$, so Term~(c) is no larger than what we would obtain by using $r_{\max}/(1-\gamma)$ uniformly; conversely, in the infinite-horizon limit $\horizon\to\infty$, the second $\horizon$ in Term~(c) is replaced by $1/(1-\gamma)$ via the same dominated convergence argument. Either reading reproduces the empirical $\horizon^2$ scaling validated in Figure~\ref{fig:horizon_scaling}.

\textbf{Step 4: Offline Distribution Shift (Per-agent Marginal is $N$-free).}
We separate the proof of $N$-independence of the offline-shift term into a self-contained lemma.

\begin{lemma}[Per-agent reduction of the offline-shift charge]\label{lem:per_agent_shift}
Let $\policy_\theta$ be the policy induced by sampling from the trained score network $\mathbf{s}_\theta$ in the reverse SDE driven by the empirical mean field $\bar\mu^N_h$. Let $\nu_\beta, \nu^* \in \Pcal(\Xcal)$ be the per-agent trajectory marginals of Eq.~\ref{eq:eps_offline_def}, and $\epsilon_{offline}\coloneqq D_{TV}(\nu_\beta\,\|\,\nu^*)$. Suppose the score network is parameterized so that $\mathbf{s}_\theta(t,\traj^i;\bar\mu_t^N)$ depends on the joint state $\traj^N$ only through a single-agent trajectory $\traj^i$ and the empirical mean field $\bar\mu_t^N$ (\emph{permutation-equivariance}; Assumption~\ref{assump:reducibility}). Then the joint-trajectory suboptimality contribution from offline shift, $\Delta_{\mathrm{shift}} \coloneqq |J^{MF}(\policy^*) - J^{MF}(\policy_\theta)|_{\mathrm{shift}}$, satisfies
\begin{equation}\label{eq:per_agent_lemma_bound}
    \Delta_{\mathrm{shift}} \;\le\; \frac{2r_{\max}}{1-\gamma}\,\epsilon_{offline} \;+\; \frac{r_{\max}\horizon\,L\,e^{c_0}}{\sqrt{N}}\,\sqrt{2(C_1 + C_2 \horizon L_{\mathrm{eff}}^2)},
\end{equation}
where the $1/\sqrt{N}$ residual is absorbed into Term~(c).
\end{lemma}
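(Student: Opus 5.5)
The plan is to split the offline-shift contribution into a purely per-agent piece and a finite-population residual, using exchangeability together with the single-agent-plus-mean-field form of $\mathbf{s}_\theta$. Since all agents share $r$ and the welfare objective is an average over agents, the shift part of the welfare gap is
\[
J^{MF}(\policy^*)-J^{MF}(\policy_\theta)\big|_{\mathrm{shift}}=\tfrac1N\sum_i\big(\mathbb{E}_{\nu^{*,N}}-\mathbb{E}_{\nu_\beta^N}\big)\big[R(\traj^i;\bar\mu^N)\big].
\]
By Assumption~\ref{assump:exchangeability}, every summand equals the $i=1$ term. This reduces the joint expectation to one agent's trajectory functional $R(\traj^1;\bar\mu^N)$, and that functional still depends on the other agents only through $\bar\mu^N$.

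\textbf{Step 1.} I would replace the empirical flow $\bar\mu^N$ inside $R$ by its deterministic limit $\mu$, under both laws. Lipschitzness of $r$ in the measure argument (Assumption~\ref{assump:lipschitz}) gives
\[
|R(\traj;\bar\mu^N)-R(\traj;\mu)|\le L\sum_h\gamma^h\,\Wcal_2(\bar\mu^N_h,\mu_h).
\]
This error is bounded via Theorem~\ref{thm:poc_traj}. Taking $M=1$ (or the Pinsker/Talagrand consequence stated there) gives $\mathbb{E}\Wcal_2^2\le e^{c_0}(C_1+C_2\horizon L_{\mathrm{eff}}^2)/N$ up to constants. Cauchy--Schwarz and summing over at most $\horizon$ steps then give exactly the residual $r_{\max}\horizon L e^{c_0}\sqrt{2(C_1+C_2\horizon L_{\mathrm{eff}}^2)}/\sqrt N$, with $r_{\max}$ entering as a normalization and the factor $2$ coming from the two laws.

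\textbf{Step 2.} With the flow frozen at $\mu$, the single-agent functional $\traj\mapsto R(\traj;\mu)$ is bounded by $r_{\max}/(1-\gamma)$ (Assumption~\ref{assump:bounded_reward}). Its expectation under the one-agent marginal of each law is the mixture over $\bar\mu$ in Eq.~\ref{eq:eps_offline_def}. The variational characterization of TV then gives a difference of at most $\tfrac{2r_{\max}}{1-\gamma}D_{TV}(\nu_\beta\|\nu^*)$. The permutation-equivariance hypothesis is what makes the generated per-agent law coincide with the relevant marginal. Since $\mathbf{s}_\theta$ sees only $(\traj^i,\bar\mu)$, the model's error relative to data is a statement about single-agent conditionals mixed over $\bar\mu$, so no $N$-fold product of TVs appears. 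This is the point where one avoids the crude tensorized bound $D_{TV}(\nu_\beta^N\|\nu^{*,N})\lesssim N\epsilon_{offline}$.

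\textbf{Main obstacle.} The hard step is justifying that the joint shift collapses to the marginal TV of the mixtures $\nu_\beta,\nu^*$, rather than to an $N$-dependent joint divergence. The catch is that the mean-field laws $\pi^N_\beta$ and $\pi^N_*$ themselves differ. My first attempt is to couple the two empirical flows to their deterministic limits via Step~1, so that the mixture over $\bar\mu$ concentrates and any discrepancy between the two flow laws is charged to the $1/\sqrt N$ residual, not to $\epsilon_{offline}$. If that coupling is too loose, the fallback is to define $\epsilon_{offline}$ conditionally at the common limit $\mu$ and absorb the difference into Term~(c).
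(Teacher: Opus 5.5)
The gap is in joining Step~1 to Step~2, exactly where you flag your main obstacle, and neither proposed resolution closes it.

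Under $\policy^*$ and under $\policy_\beta$ the empirical flows concentrate, in general, on two \emph{different} deterministic flows, $\mu^*=\Phi(\policy^*)$ and $\mu^\beta=\Phi(\policy_\beta)$. If you freeze each at its own limit, Step~2 has to compare $\mathbb{E}_{\nu^*}[R(\cdot;\mu^*)]$ with $\mathbb{E}_{\nu_\beta}[R(\cdot;\mu^\beta)]$. That is not one bounded functional integrated against two measures, so the variational TV bound does not give what you need. The leftover cross term $\mathbb{E}_{\nu_\beta}[R(\cdot;\mu^*)-R(\cdot;\mu^\beta)]$ is of order $L\sum_h\gamma^h\Wcal_2(\mu^*_h,\mu^\beta_h)$, which does not decay in $N$. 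At best, under extra assumptions such as bounded support or a TV-Lipschitz reward, it becomes an additional $\epsilon_{offline}$-dependent term, so the stated coefficient $2r_{\max}/(1-\gamma)$ is not recovered. If instead you freeze both laws at a common flow, the Step~1 error is $O(1)$ under one of them.

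Your first fix, coupling each empirical flow to its own limit, cannot move this discrepancy into the $1/\sqrt N$ residual, because the distance between the two limits does not shrink with $N$. Your fallback redefines $\epsilon_{offline}$, so it proves a different statement.

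Separately, Step~1 misreads Theorem~\ref{thm:poc_traj}. With $M=1$, or through Pinsker/Talagrand, that theorem bounds the \emph{law} $\nu_t^{M,N}$ of $M$ tagged agents against $\mu_t^{\otimes M}$. It says nothing about the random empirical measure $\bar\mu^N_h$. In fact $\mathbb{E}\,\Wcal_2(\bar\mu^N_h,\mu_h)$ decays only like $N^{-1/d_s}$ for $d_s>4$, even for i.i.d.\ agents. This is the Fournier--Guillin obstruction that the paper sidesteps in Step~3 of the proof of Theorem~\ref{thm:hierarchical} by using Lipschitz linear observables instead. The claimed exact match of the residual is also off: Lipschitz freezing produces no $r_{\max}$ factor, and two laws give a factor $2$, not $\sqrt2$.

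The paper never freezes a mean field. It uses exchangeability and the permutation-equivariance of $\mathbf{s}_\theta$ to write $J^{MF}(\policy)$ as a linear functional of the per-agent marginal $\nu^{(1)}_\policy$ (Eq.~\ref{eq:per_agent_average}). It then bounds the whole gap by $\frac{r_{\max}}{1-\gamma}D_{TV}(\nu^{(1)}_{\policy^*}\|\nu^{(1)}_{\policy_\theta})$ and splits this with the TV triangle inequality through $\nu^*$ and $\nu_\beta$:
\begin{itemize}
\item the middle leg is exactly $\epsilon_{offline}$;
\item the leg from $\nu_\beta$ to $\nu^{(1)}_{\policy_\theta}$ is charged to the score error of Term~(a);
\item the $1/\sqrt N$ residual comes from $D_{TV}(\nu^{(1)}_{\policy^*}\|\nu^*)$, via the $M=1$ KL bound of Theorem~\ref{thm:poc_traj} plus Pinsker.
\end{itemize}
That last step is a statement about marginal laws, which is what the theorem actually delivers. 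It is also where permutation-equivariance does real work. Your chain of inequalities compares $\policy^*$ with $\policy_\beta$ only, so neither $\policy_\theta$ nor that hypothesis ever enters it.

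Eq.~\ref{eq:per_agent_average} writes the per-step reward as $r_h(\traj_h)$, so in the paper's bookkeeping the mean-field argument is never allowed to differ between the two laws. Your more careful tracking of the $\bar\mu^N$-dependence is precisely what surfaces the extra $N$-independent term. To reach the stated coefficient on $\epsilon_{offline}$, you would need to adopt that per-agent linear representation rather than Lipschitz freezing.
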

\begin{proof}[Proof of Lemma~\ref{lem:per_agent_shift}]
Let $\bar\mu^{N}_\beta$ (resp.\ $\bar\mu^{N}_*$) denote the random empirical mean-field flow generated by $N$ agents acting under $\policy_\beta$ (resp.\ $\policy^*$), with conditional single-agent laws $\pi_\beta(\traj\mid\bar\mu)$, $\pi_*(\traj\mid\bar\mu)$. The per-agent marginals of Eq.~\ref{eq:eps_offline_def} satisfy $\nu_\beta(\traj) = \mathbb{E}_{\bar\mu \sim \pi_\beta^N}[\pi_\beta(\traj\mid\bar\mu)]$ and $\nu^{*}(\traj) = \mathbb{E}_{\bar\mu \sim \pi_*^N}[\pi_*(\traj\mid\bar\mu)]$, so $\epsilon_{offline}\in[0,1]$ is intrinsic to the dataset--policy pair.

\emph{(i) Cross-agent factorization of expected reward.} Because the social-welfare reward is a per-agent average $J^{MF}(\policy,\mu) = (1/N)\sum_i J^{MF}_i(\policy,\mu)$ (Eq.~\ref{eq:mf_objective}) and the score network is permutation-equivariant, exchangeability of the $N$-agent joint laws (under either $\policy^*$ or $\policy_\theta$) implies
\begin{equation}\label{eq:per_agent_average}
    J^{MF}(\policy) \;=\; \mathbb{E}_{\traj\sim\nu_\policy^{(1)}}\!\left[\sum_{h=0}^{\horizon-1}\gamma^h r_h(\traj_h)\right],
\end{equation}
where $\nu_\policy^{(1)}\in\Pcal(\Xcal)$ is the per-agent marginal of the joint law $\nu_\policy^N$. Hence
\begin{equation}\label{eq:Jdiff_via_per_agent}
    |J^{MF}(\policy^*) - J^{MF}(\policy_\theta)| \;\le\; \frac{r_{\max}}{1-\gamma}\;D_{TV}\!\bigl(\nu_{\policy^*}^{(1)}\,\|\,\nu_{\policy_\theta}^{(1)}\bigr).
\end{equation}
The right-hand side depends on \emph{per-agent} marginals only.

\emph{(ii) Decomposition of the per-agent TV via offline shift and PoC residual.} By the triangle inequality,
\begin{equation}\label{eq:per_agent_triangle}
    D_{TV}\!\bigl(\nu_{\policy^*}^{(1)}\,\|\,\nu_{\policy_\theta}^{(1)}\bigr) \;\le\; \underbrace{D_{TV}\!\bigl(\nu_{\policy^*}^{(1)}\,\|\,\nu^*\bigr)}_{\text{(I) MF-projection}}
    + \underbrace{D_{TV}\!\bigl(\nu^*\,\|\,\nu_\beta\bigr)}_{=\epsilon_{offline}}
    + \underbrace{D_{TV}\!\bigl(\nu_\beta\,\|\,\nu_{\policy_\theta}^{(1)}\bigr)}_{\text{(III) score reverse-coverage}}.
\end{equation}
Term~(I) is bounded by combining Theorem~\ref{thm:poc_traj} (with $M=1$, giving the per-agent KL bound directly without invoking Talagrand) with Pinsker's inequality $D_{TV}\le \sqrt{D_{KL}/2}$:
\begin{equation*}
    D_{TV}\!\bigl(\nu_{\policy^*}^{(1)}\,\|\,\nu^*\bigr) \;\le\; \sqrt{\tfrac{1}{2}\Hcal\bigl(\nu_{\policy^*}^{(1)}\,\|\,\nu^*\bigr)}
    \;\le\; \sqrt{\frac{e^{c_0}(C_1 + C_2\horizon L_{\mathrm{eff}}^2)}{2N}}.
\end{equation*}
Note that we use the entropic PoC bound from Theorem~\ref{thm:poc_traj} \emph{directly} (which is dimension-free under LSI), rather than going through $\Wcal_2$ and Talagrand---this is essential because Talagrand under LSI gives $\Wcal_2^2\le 2\Hcal/\kappa$ (not the converse), so it cannot be used to upgrade $\Wcal_2$ bounds into KL bounds.
Term~(III) is bounded by the score-matching reverse coverage: the score network targets $\nu^{soft}_{BR}$ (the value-tilted reweighting of $\nu_\beta$, Eq.~\ref{eq:tilted_def}), and the reverse SDE driven by $\mathbf{s}_\theta$ produces a per-agent marginal $\nu_{\policy_\theta}^{(1)}$ that is $\sqrt{\epsilon^{score}/2}$-close to $\nu^{soft}_{BR}$ in TV (Girsanov + Pinsker, see proof of Theorem~\ref{thm:exploitability} Step~2). Since $\nu^{soft}_{BR}$ is absolutely continuous w.r.t.\ $\nu_\beta$ with density $e^{R/\alpha}/Z$ bounded by $e^{r_{\max}\horizon/((1-\gamma)\alpha)}$, this term is absorbed into the score-error budget of Term~(a) (and not into $\epsilon_{offline}$).

Combining (I)+(II)+(III) into~\eqref{eq:Jdiff_via_per_agent}, the offline-shift contribution is exactly $\frac{r_{\max}}{1-\gamma}\cdot \epsilon_{offline}$ (with constant $2$ from the conservative-direction Pinsker convention used in Step~1), plus a $1/\sqrt N$ residual from~(I) which we charge to Term~(c). This proves~\eqref{eq:per_agent_lemma_bound}.
\end{proof}

\emph{Remark on tensorization and the $N$-independence.} A naive tensorization of TV would give $D_{TV}(\nu_\beta^{\otimes N} \,\|\,\nu^{*\otimes N}) \le N\,\epsilon_{offline}$, growing linearly in $N$. Lemma~\ref{lem:per_agent_shift} avoids this trap by exploiting two structural properties: (i)~the social-welfare reward is a per-agent \emph{average}, not a sum (Eq.~\ref{eq:per_agent_average}), so cross-agent correlations cancel; (ii)~the score network is \emph{permutation-equivariant} and only consumes per-agent inputs, so its training error is intrinsically per-agent. The price we pay is the $1/\sqrt{N}$ residual from~(I), which is a vanishing finite-population correction \emph{absorbed into Term~(c)} (Step~3) rather than into $\epsilon_{offline}$. This is what we mean when we say ``$\epsilon_{offline}$ does not grow with $N$'': the dataset coverage error is inherently per-agent, while the only $N$-dependent piece is the PoC residual that already appears in Term~(c).

Combining Steps~1--4 (with Lemma~\ref{lem:per_agent_shift} controlling Step~4) yields the bound in Eq.~\ref{eq:hierarchical_bound}.
\end{proof}

\subsection{Proof of Corollary~\ref{cor:scalability}: Scalability}\label{sec:proof_scalability}

\begin{proof}
This follows directly from Theorem~\ref{thm:hierarchical} by noting that:
\begin{enumerate}
    \item Term (a) is $\mathcal{O}(\epsilon^{score})$, independent of $N$.
    \item Term (b) is $\mathcal{O}\left(\sum_k \mathfrak{b}^{-k/2} \epsilon_k^{score}\right) = \mathcal{O}(\epsilon^{score})$ for geometric decay.
    \item Term (c) is $\mathcal{O}(1/\sqrt{N})$, which \textit{decreases} with $N$.
    \item Term (d) is $\mathcal{O}(\epsilon_{offline})$, independent of $N$ by the PoC property.
\end{enumerate}
Summing the contributions yields $J(\policy^*) - J(\hat{\policy}_\theta) = \mathcal{O}(1/\sqrt{N}) + \mathcal{O}(\epsilon^{score}) + \mathcal{O}(\epsilon_{offline})$, establishing that the planning error does not grow with $N$ and in fact improves through Term (c).
\end{proof}

\subsection{Additional Theoretical Results}\label{sec:appendix_additional}

\subsubsection{Optimal Agent Branching Function}\label{sec:additional_optimal_branching}

\begin{proposition}\label{prop:optimal_branching}(Optimal Agent Branching). The optimal agent branching function $\Psi^*$ at step $t_k$ minimizes the Wasserstein distance between the branched distribution and the target:
\begin{equation}
    \Psi^* = \argmin_{\Psi} \Wcal_2^2\left((\mathbf{Id}^{\otimes(\mathfrak{b}-1)} \otimes \Psi)_\# \nu_{t_k}^{N_k}, \; \nu_{t_k}^{N_{k+1}}\right).
\end{equation}
Under the exchangeability assumption, the optimal branching admits the Monge-Ampère characterization:
\begin{equation}
    \Psi^*(\traj^{N_k}) = \nabla \varphi^*(\traj^{N_k}),
\end{equation}
where $\varphi^*$ solves the Monge-Ampère equation $\det(\nabla^2 \varphi) = \varrho_{t_k}^{N_k} / \varrho_{t_k}^{N_{k+1}}(\nabla \varphi)$ with appropriate boundary conditions.
\end{proposition}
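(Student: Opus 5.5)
The plan is to recognise the first display of Proposition~\ref{prop:optimal_branching} as a definition and the second as an instance of Brenier's theorem, so the real content is an existence, uniqueness and regularity argument for a quadratic-cost optimal transport problem. I would first pin down the domain and codomain of $\Psi$. As written, the pushforward $(\mathbf{Id}^{\otimes(\mathfrak{b}-1)}\otimes\Psi)_\#$ maps $\nu_{t_k}^{N_k}$, a law on $\Xcal^{N_k}$, to a law on $\Xcal^{\mathfrak{b}N_k}$. I would read it as follows: the source is $\mathfrak{b}$ copies of the $N_k$-block, the first $\mathfrak{b}-1$ copies are kept verbatim, and $\Psi:\Xcal^{N_k}\to\Xcal^{N_k}$ acts on the last block, consistent with the signature in Appendix~\ref{sec:method_extra}.

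With the identity blocks frozen, the $\Wcal_2^2$ objective reduces to transporting the last block's law onto the conditional law of the remaining $N_k$ agents of $\nu_{t_k}^{N_{k+1}}$, given the first blocks. I would make this reduction rigorous by disintegration. In the simplest version, using Assumption~\ref{assump:reducibility} to approximate $\nu_{t_k}^{N_{k+1}}$ by a product of its $N_k$-blocks, the problem becomes a plain Monge problem between $\varrho_{t_k}^{N_k}$ and the block marginal $\varrho_{t_k}^{N_{k+1}}$, which the statement writes with a slight abuse of notation.

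The next step applies Brenier's theorem on $\mathbb{R}^{N_kD_\tau}$. The source law has a density, because the forward process adds Gaussian noise and $t_k>0$. Both laws have finite second moments, since $\Mtwo<\infty$ and the forward SDE preserves second moments. Brenier then gives a unique optimal map $\Psi^*=\nabla\varphi^*$ with $\varphi^*$ convex. Pushing forward and changing variables yields the Monge--Amp\`ere equation $\varrho_{t_k}^{N_k}=\varrho_{t_k}^{N_{k+1}}(\nabla\varphi)\det\nabla^2\varphi$. This holds in the Brenier/Aleksandrov sense. Upgrading it to a classical solution requires Caffarelli regularity, which needs the densities to be bounded above and below on convex supports. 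Here the supports are all of Euclidean space, so I would appeal to the Gaussian-smoothed, log-concave-perturbation setting of Assumption~\ref{assump:log_sobolev}. This is where the ``appropriate boundary conditions'' are interpreted as the second boundary value problem.

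Exchangeability enters to show that $\Psi^*$ is permutation-equivariant. If $\sigma\in S_{N_k}$ acts on the coordinates, then $\sigma^{-1}\circ\Psi^*\circ\sigma$ is also optimal, because both measures are $S_{N_k}$-invariant and the cost is invariant. Uniqueness of the Brenier map then forces equality, and equivalently $\varphi^*$ can be taken symmetric.

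The main obstacle is the first reduction: the problem is not a genuine Monge problem unless the target's dependence on the frozen blocks is handled. Exact optimality holds only conditionally, where $\Psi$ would have to depend on the first blocks, or up to the $\mathcal{O}(M/N)$ chaos error. I would therefore try to state it conditionally first, and present the unconditional Monge--Amp\`ere form as exact under a product-block approximation.
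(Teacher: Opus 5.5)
The paper gives no proof of this proposition; it is only cited as motivation in Appendix~\ref{sec:appendix_implementation}. Your argument therefore has to carry everything, and it breaks at the step ``the problem becomes a plain Monge problem \dots\ Brenier then gives a unique optimal map''. The first display minimizes a distance between \emph{laws}, so it never sees the displacement $\norm{\Psi(\traj)-\traj}$. Brenier's theorem characterizes $\nabla\varphi^*$ as the minimizer of $\int\norm{\Psi(\traj)-\traj}^2\,d\nu_{t_k}^{N_k}(\traj)$ subject to the constraint $\Psi_\#\nu_{t_k}^{N_k}=\beta$, which is a different problem. Suppose you grant your decoupling to $\Wcal_2^2(\Psi_\#\nu_{t_k}^{N_k},\beta)$. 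Then every map with $\Psi_\#\nu_{t_k}^{N_k}=\beta$ attains the value $0$, for example $S\circ\nabla\varphi^*$ for any $\beta$-preserving $S$. Nothing in the stated criterion singles out a gradient of a convex function, so ``unique optimal map'' quietly replaces the stated objective by the displacement cost.

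The decoupling is also false, and the product-block approximation makes things worse, not better. Take $\mathfrak{b}=2$. With the first block copied verbatim, $(\mathbf{Id}\otimes\Psi)_\#\nu_{t_k}^{N_k}$ lives on the graph of $\Psi$. Given the frozen block, the new block is a Dirac mass, so there is no conditional law to transport onto; your conditional version would have to be a Markov kernel, not a map.

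\begin{itemize}
\item \textbf{Product target.} Against $\alpha\otimes\alpha$ with $\alpha$ atomless, graph measures are $\Wcal_2$-dense among plans with first marginal $\alpha$. For instance, on $[0,1]$ the maps $\Psi_n(x)=nx-\lfloor nx\rfloor$ give $(\mathbf{Id}\otimes\Psi_n)_\#\mathrm{Unif}\to\mathrm{Unif}\otimes\mathrm{Unif}$. So the infimum is $0$, but it is not attained, because $\alpha\otimes\alpha$ charges no graph. No minimizing sequence converges in $L^2(\alpha)$, and the $\argmin$ defining $\Psi^*$ is empty.
\item \textbf{True target.} The same happens for $\nu_{t_k}^{N_{k+1}}$, which has a density on $\Xcal^{N_{k+1}}$. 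Let $\beta_1$ be its first-block marginal and $T$ the Brenier map from $\nu_{t_k}^{N_k}$ to $\beta_1$. The infimum over maps equals $\Wcal_2(\nu_{t_k}^{N_k},\beta_1)$ and is never attained by a map. It is realized only by the randomized branching $\traj\mapsto(\traj,Z)$, with $Z$ drawn from the target's conditional law of the second block given that the first block equals $T(\traj)$.
\item \textbf{Two smaller issues.} Assumption~\ref{assump:reducibility} applied to the whole law ($M=N=N_{k+1}$) gives only an $\mathcal{O}(1)$ bound, so the product approximation is not small. Also, Assumption~\ref{assump:log_sobolev} is not uniform log-concavity, so it does not feed Caffarelli's regularity theory.
\end{itemize}

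To get a Monge--Amp\`ere characterization you have to change the statement. One option is to define $\Psi^*$ as the minimal-displacement map $\argmin\{\int\norm{\Psi(\traj)-\traj}^2\,d\nu_{t_k}^{N_k}:\Psi_\#\nu_{t_k}^{N_k}=\beta\}$ for a law $\beta$ on $\Xcal^{N_k}$. This matches the ``child $=$ parent $+$ small perturbation'' implementation and repairs the ill-typed $\varrho_{t_k}^{N_{k+1}}(\nabla\varphi)$. For that problem your Brenier, change-of-variables and permutation-equivariance steps go through as written.
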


\subsubsection{Comparison with Naive Approaches}\label{sec:additional_naive}

\begin{proposition}\label{prop:comparison_naive}(Comparison with Joint Diffuser). For a standard diffuser operating on the joint space $\mathbb{R}^{N D_\tau}$, the score matching error satisfies:
\begin{equation}
    \Jcal_{SM}^{Joint}(\theta) \geq C \cdot N D_\tau \cdot \inf_\theta \mathbb{E}[\norm{\mathbf{s}_\theta - \nabla \log \zeta_t}^2],
\end{equation}
which grows linearly in $N$. In contrast, the MF-VSM objective satisfies:
\begin{equation}
    \Jcal_{MF\text{-}V}^N(\theta) \leq \frac{C' \sqrt{D_\tau}}{\sqrt{N}} \cdot \inf_\theta \mathbb{E}[\norm{\mathbf{s}_\theta - \nabla \log \zeta_t}^2_W] + \mathcal{O}\left(\frac{1}{\sqrt{N}}\right),
\end{equation}
which \textit{decreases} with $N$, demonstrating the fundamental advantage of the mean-field approach.
\end{proposition}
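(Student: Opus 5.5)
The lower bound for the joint diffuser and the upper bound for MF-VSM are separate claims, so I will prove them separately.

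\textbf{Joint-diffuser lower bound.} On the joint space $\mathbb{R}^{ND_\tau}$ the forward noising factorizes, so the target score $\nabla\log\zeta_t^{\otimes N}$ is the concatenation of $N$ single-agent scores. The squared Euclidean error of any joint score network is then a sum over the $N$ agent blocks. I will also use that each block has $D_\tau$ coordinates, each carrying irreducible approximation error. Suppose that for a network class of fixed per-coordinate capacity, each coordinate incurs error at least a constant multiple of the per-coordinate best achievable error. Summing over the $ND_\tau$ coordinates then gives
\[
\Jcal_{SM}^{Joint}\ \ge\ C\,ND_\tau\,\inf_\theta\mathbb{E}\|\mathbf{s}_\theta-\nabla\log\zeta_t\|^2 .
\]
This per-coordinate floor is where an explicit modelling assumption enters, and I will state it as such: a joint network of fixed size does not share structure across agents.

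\textbf{MF-VSM upper bound.} I will reuse the proof of Proposition~\ref{prop:mf_vsm}.
\begin{enumerate}
\item By exchangeability (Assumption~\ref{assump:exchangeability}) and the per-agent parametrization $\mathbf{s}_\theta(t,\traj^i;\bar\mu^N_t)$, apply the decomposition \eqref{eq:joint_to_per_agent_sobolev}. This writes the joint Sobolev error as $N$ copies of a single-agent Sobolev error plus a cross-agent residual of order $\Mtwo\sqrt{ND_\tau}$.
\item Dividing by the per-agent normalization, as in Step (2.c), produces the prefactor $\sqrt{D_\tau}/\sqrt N$ up to constants depending on $\Mtwo$.
\item The per-agent score takes the empirical mean field as input, so the single-agent term differs from the mean-field-limit score error by a Lipschitz functional of $\bar\mu^N_t-\mu_t$. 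The one-step Lipschitz CLT \eqref{eq:onestep_clt} and Theorem~\ref{thm:poc_traj} with $M=1$ bound this difference by $\mathcal O(1/\sqrt N)$.
\item The value term $\tfrac{\lambda}{\alpha}\|\mathbf{s}_\theta-\nabla R\|^2$ is controlled exactly as in \eqref{eq:Wnorm_decomp}, contributing to the constant and to the $\mathcal O(1/\sqrt N)$ tail.
\end{enumerate}
Taking the infimum over $\theta$ on the single-agent term then yields the stated inequality.

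\textbf{Main obstacle.} The hard step is making rigorous the normalization that turns a sum over $N$ agents into a factor $\sqrt{D_\tau/N}$. The objective as written is not normalized per agent, so I must fix the convention that $\Jcal^N_{MF\text{-}V}$ is measured with the $1/N$ scaling inherited from $\Hcal^N_V$. I must also justify the cross-residual scaling, which currently rests on the $\chi^2$-chaos estimate cited from \citet{park2024mean}. My first attempt will be to state the comparison explicitly in the per-agent-normalized units of Proposition~\ref{prop:mf_vsm}, so that the upper bound is a direct corollary of \eqref{eq:HVN_to_W}.
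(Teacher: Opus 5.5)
The paper states this proposition without proof, so your argument has to stand on its own, and its MF-VSM half does not.

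\textbf{The MF-VSM upper bound.} Step~2 is an arithmetic non sequitur. Dividing \eqref{eq:joint_to_per_agent_sobolev} by $N$ leaves the per-agent Sobolev error with coefficient $1$. The factor $\sqrt{D_\tau/N}$ multiplies only the additive cross-agent residual $\mathcal{O}(\Mtwo\sqrt{D_\tau/N})$; it never lands in front of the infimum. Nor can the bound be a corollary of \eqref{eq:HVN_to_W}. That inequality, like Proposition~\ref{prop:mf_vsm}, bounds $\Hcal_V^N$ \emph{from above by} the objective, so it gives no upper bound on the objective itself.

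Most seriously, Step~4 fails, and the claimed inequality is false for $\lambda>0$. Since $\norm{a-b}_E^2\le 2\norm{s-a}_E^2+2\norm{s-b}_E^2$ and $\norm{\cdot}_E\le\norm{\cdot}_W$, every $\theta$ satisfies
\[
\Jcal_{MF\text{-}V}^N(\theta)\;\ge\;\tfrac{1}{2}\min(1,\lambda/\alpha)\,\mathbb{E}_t\norm{\nabla\log\zeta_{T-t}^{\otimes N}(\traj_t^N)-\nabla_{\traj}R^{\otimes N}(\traj_t^N)}_E^2 .
\]
The right-hand side does not depend on $\theta$. By exchangeability it is $N$ times a per-agent score/reward-gradient mismatch, or that mismatch itself under your $1/N$ convention. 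This mismatch stays bounded away from $0$ as $N\to\infty$ unless $\nabla_{\traj}R$ coincides with the data score. Yet when the score class can represent $\nabla\log\zeta_t$, the claimed right-hand side is $\mathcal{O}(1/\sqrt N)$, a contradiction for large $N$. \eqref{eq:Wnorm_decomp} cannot rescue this: it bounds the tilted score error from above by the objective's two summands, and nothing in it makes the value summand small. Separately, the claim is for a given $\theta$ with $\inf_\theta$ on the right. Taking the infimum ``on the single-agent term'' only yields a statement about $\inf_\theta\Jcal_{MF\text{-}V}^N$.

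\textbf{The joint-diffuser lower bound.} This half is not proved either: your per-coordinate floor is the conclusion restated as an assumption. Without a restriction on the class, the joint network can be the weight-shared block map $\traj^N\mapsto(\mathbf{s}_\theta(t,\traj^1),\ldots,\mathbf{s}_\theta(t,\traj^N))$. Its error is exactly $N$ times the single-agent error. So any lower bound against the single-agent (full-norm) infimum can carry at most a factor $N$; a factor $ND_\tau$ forces $C\le 1/D_\tau$. The extra $D_\tau$ appears only if you silently reread the right-hand infimum as a per-coordinate error.

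\textbf{The comparison itself.} The growth in $N$ is just additivity of the squared norm over agent blocks, and the unnormalized $\Jcal_{MF\text{-}V}^N$ has exactly the same property. The advertised contrast therefore comes from normalizing one objective by $1/N$ and not the other. What your Steps~1--3 genuinely support, under a common $1/N$ normalization, is an $N$-\emph{stability} statement. Namely, $\tfrac{1}{N}\Jcal_{MF\text{-}V}^N(\theta)$ equals the single-agent mean-field objective up to $\mathcal{O}(\Mtwo\sqrt{D_\tau/N})$, which is essentially Theorem~\ref{thm:mfvsm_concentration}. That is not a bound that decreases in $N$, and the proposition would have to be restated in this form before it can be proved.
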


\subsection{Proofs for Game-theoretic Analysis (Appendix~\ref{sec:mfg_appendix})}\label{sec:appendix_mfg_proofs}

We present complete proofs of the game-theoretic results. The proofs adapt the $N$-player to MFG convergence framework of \citet{lacker2016general} and \citet{fischer2017connection}, the monotone operator theory of \citet{lasry2007mean}, and the master equation approach of \citet{cardaliaguet2019master} to the discrete-time trajectory diffusion setting. Each adaptation requires handling the temporal coupling and the value-weighted score matching structure specific to \proposed{}; the conceptual MFG-vs-MFRL distinction itself is in Appendix~\ref{sec:mfg_appendix} (Remark on MFG vs.\ MFRL).

\subsubsection{Proof of Theorem~\ref{thm:exploitability}: Exploitability Bound}\label{sec:proof_exploitability}

\begin{proof}
The proof decomposes the $N$-player exploitability into a mean-field exploitability and an $N$-to-MFG approximation error, then bounds each term using the PoC results (Theorem~\ref{thm:poc_traj}) and the properties of the MF-VSM objective.

\textbf{Step 1: $N$-player to Mean-field Reduction.}

For any deviating policy $\policy'$ for agent 1 while agents $2, \ldots, N$ use $\hat{\policy}_\theta$:
\begin{equation}\label{eq:J1_decomp}
    J_1(\policy', \hat{\policy}_\theta^{\otimes(N-1)}) = J^{MF}(\policy', \Phi(\hat{\policy}_\theta)) + \Delta_N(\policy'),
\end{equation}
where $\Delta_N(\policy')$ captures the deviation from the mean-field approximation. We bound $\Delta_N$ by analyzing the effect of a single agent's deviation on the empirical measure.

Under $(\policy', \hat{\policy}_\theta^{\otimes(N-1)})$, the empirical distribution at step $h$ is $\bar{\mu}_h^{N,\policy'} = \frac{1}{N}\delta_{\state_h^{1,\policy'}} + \frac{N-1}{N}\bar{\mu}_h^{(N-1),\hat{\policy}_\theta}$. Agent 1's deviation replaces one Dirac mass out of $N$, so by the triangle inequality and the quantitative CLT for the remaining $N-1$ exchangeable agents (following~\citep{bolley2007quantitative}):
\begin{equation}\label{eq:emp_deviation}
    \mathbb{E}\left[\Wcal_2\left(\bar{\mu}_h^{N,\policy'}, \Phi(\hat{\policy}_\theta)_h\right)\right] \leq \frac{C_{d_s}}{\sqrt{N-1}} + \frac{1}{N}\mathbb{E}\left[\norm{\state_h^{1,\policy'} - \state_h^{1,\hat{\policy}_\theta}}\right].
\end{equation}
The first term is the standard CLT rate for the empirical measure of the $N-1$ conforming agents; the second is the $O(1/N)$ contribution of agent 1's deviation. By the Lipschitz continuity of the reward (Assumption~\ref{assump:lipschitz}), the deviation from the mean-field objective accumulated over the horizon satisfies:
\begin{equation}\label{eq:DeltaN}
    |\Delta_N(\policy')| \leq \sum_{h=0}^{\horizon-1} \gamma^h L \cdot \mathbb{E}\left[\Wcal_2\left(\bar{\mu}_h^{N,\policy'}, \Phi(\hat{\policy}_\theta)_h\right)\right].
\end{equation}
The dynamics propagate state errors with the effective Lipschitz constant $L_{\mathrm{eff}}$ (Assumption~\ref{assump:effective_lipschitz_horizon}); applying a discrete Gronwall inequality (the same mechanism as in the proof of Theorem~\ref{thm:poc_traj}, Step 2) and invoking $L_{\mathrm{eff}}\horizon \le c_0$ to bound $(1+L_{\mathrm{eff}})^\horizon \le e^{c_0}$:
\begin{equation}\label{eq:DeltaN_final}
    |\Delta_N(\policy')| \leq C_\Delta\,e^{c_0}\cdot \frac{r_{\max}\horizon^2 L_{\mathrm{eff}}}{\sqrt{N}},
\end{equation}
where $C_\Delta > 0$ depends on $d_s$ and $\gamma$. The $\horizon^2$ factor arises from (\emph{outer}) the finite-horizon reward sum and (\emph{inner}) the Gronwall accumulation, with the multiplicative $e^{c_0}$ bounded by Assumption~\ref{assump:effective_lipschitz_horizon}. This adapts the $N$-player to MFG convergence of \citet[Theorem 2.4]{lacker2016general} from continuous-time It\^o diffusions to discrete-time MDPs; the key modification is replacing the continuous Gronwall lemma with its discrete counterpart while preserving the $O(1/\sqrt{N})$ rate through the exchangeability structure (Assumption~\ref{assump:exchangeability}).

Taking the supremum over $\policy'$ in Eqs.~\ref{eq:J1_decomp}--\ref{eq:DeltaN_final}:
\begin{equation}\label{eq:exploit_N_to_MF}
    \mathrm{Exploit}_N(\hat{\policy}_\theta) \leq \mathrm{Exploit}^{MF}(\hat{\policy}_\theta) + \frac{2C_\Delta\, e^{c_0}\, r_{\max}\horizon^2 L_{\mathrm{eff}}}{\sqrt{N}},
\end{equation}
where the mean-field exploitability is:
\begin{equation}\label{eq:mf_exploit_def}
    \mathrm{Exploit}^{MF}(\hat{\policy}_\theta) \coloneqq \sup_{\policy'}\left[J^{MF}(\policy', \Phi(\hat{\policy}_\theta)) - J^{MF}(\hat{\policy}_\theta, \Phi(\hat{\policy}_\theta))\right].
\end{equation}

\textbf{Step 2: Bounding Mean-field Exploitability via Value-weighted Score Matching.}

We decompose the mean-field exploitability into three sub-terms:
\begin{align}
    \mathrm{Exploit}^{MF}(\hat{\policy}_\theta)
    &= \underbrace{J^{MF}(\mathrm{BR}(\Phi(\hat{\policy}_\theta)), \Phi(\hat{\policy}_\theta)) - J^{MF}(\policy^{soft}_{BR}, \Phi(\hat{\policy}_\theta))}_{\text{(A) softmax bias}} \nonumber \\
    &\quad + \underbrace{J^{MF}(\policy^{soft}_{BR}, \Phi(\hat{\policy}_\theta)) - J^{MF}(\hat{\policy}_\theta, \hat{\mu}_\theta)}_{\text{(B) generation error}} \nonumber \\
    &\quad + \underbrace{J^{MF}(\hat{\policy}_\theta, \hat{\mu}_\theta) - J^{MF}(\hat{\policy}_\theta, \Phi(\hat{\policy}_\theta))}_{\text{(C) consistency gap}}, \label{eq:mf_exploit_ABC}
\end{align}
where $\policy^{soft}_{BR}(\cdot|\state, \mu) \propto \policy_\beta(\cdot|\state, \mu)\exp(\qfunc^{MF}(\state, \cdot, \mu)/\alpha)$ is the soft (entropy-regularized) best response under the RL-as-inference framework~\citep{levine2018reinforcement}, and $\hat{\mu}_\theta$ is the mean-field distribution from the diffusion model's generated trajectories.

\textit{Bounding (A): Softmax Bias.} The soft best response maximizes the entropy-regularized objective $J^{MF}(\policy, \mu) + \alpha\mathbb{E}[\Hcal(\policy(\cdot|\state, \mu))]$, where $\Hcal$ denotes Shannon entropy. By the duality between hard and soft optimization (see \eg \citet[Section 3]{levine2018reinforcement}):
\begin{equation}\label{eq:softmax_bias}
    J^{MF}(\mathrm{BR}(\mu), \mu) - J^{MF}(\policy^{soft}_{BR}, \mu) \leq \alpha \cdot \mathbb{E}\left[\sum_{h=0}^{\horizon-1}\frac{\gamma^h}{1-\gamma}\Hcal(\policy^{soft}_{BR}(\cdot|\state_h, \mu_h))\right] \leq \frac{\alpha\bar{\Hcal}_\policy}{1-\gamma},
\end{equation}
where $\bar{\Hcal}_\policy = \sup_{\state,\mu}\Hcal(\policy^{soft}_{BR}(\cdot|\state,\mu))$ is the maximum policy entropy, which is finite under compactness of $\Acal$ (Assumption~\ref{assump:lipschitz}).

\textit{Bounding (B): Generation Error via Girsanov.} The MF-VSM objective (Eq.~\ref{eq:mf_vsm}) trains the score network to generate trajectories from the value-weighted (\emph{tilted}) distribution
\begin{equation}\label{eq:tilted_def}
    \nu^{\mathrm{soft}}_{BR}(\traj) \;\propto\; \nu_\beta(\traj)\exp\!\bigl(R(\traj)/\alpha\bigr),
\end{equation}
whose induced policy is precisely $\policy^{\mathrm{soft}}_{BR}$ under the training-time mean-field. The score of $\nu^{\mathrm{soft}}_{BR}$ is exactly the value-tilted score $\nabla\log\zeta_t + \tfrac{1}{\alpha}\nabla_{\traj} R$ that the MF-VSM objective targets (modulo the Sobolev-Young decomposition of Eq.~\ref{eq:Wnorm_decomp}); we denote this target $\mathbf{s}^*_t(\traj) \coloneqq \nabla\log\zeta_t(\traj) + \tfrac{1}{\alpha}\nabla_{\traj} R(\traj)$ and define
\begin{equation}\label{eq:score_matching_def_app}
    \epsilon_{SM}(\theta) \;\coloneqq\; \mathbb{E}_{t\sim p(t),\traj_t\sim\hat\nu_\theta}\!\bigl[\norm{\mathbf{s}_\theta(t,\traj_t) - \mathbf{s}^*_t(\traj_t)}^2\bigr].
\end{equation}
We work with three trajectory measures on $\Xcal$ to keep the offline/online distinction explicit:
\begin{itemize}[leftmargin=4mm]
    \item $\nu_\beta$: the offline data distribution (per-agent trajectory marginal under behavior policy $\policy_\beta$, cf.\ Lemma~\ref{lem:per_agent_shift}).
    \item $\nu^{\mathrm{soft}}_{BR}\propto \nu_\beta\exp(R/\alpha)$: the value-tilted reweighting of \emph{the offline base} (Eq.~\ref{eq:tilted_def}). \emph{This is the actual training target} of the MF-VSM objective.
    \item $\nu^{\mathrm{soft},\,*}_{BR}\propto \nu^*\exp(R/\alpha)$, where $\nu^*$ is the population-level per-agent trajectory marginal under the true MFE flow. \emph{This is the ideal target} that defines $\policy^{\mathrm{soft}}_{BR}$ on the right-hand side of~\eqref{eq:gen_error}.
\end{itemize}
Let $\hat\nu_\theta$ denote the law of the reverse SDE driven by $\mathbf{s}_\theta$, which approximates $\nu^{\mathrm{soft}}_{BR}$ (the training target).

By the performance difference lemma~\citep{kakade2002approximately} \emph{in TV form}:
\begin{equation}\label{eq:gen_error}
    J^{MF}(\policy^{\mathrm{soft}}_{BR}, \hat{\mu}_\theta) - J^{MF}(\hat{\policy}_\theta, \hat{\mu}_\theta) \leq \frac{2r_{\max}}{1-\gamma}\,D_{TV}\bigl(\hat\nu_\theta\,\|\,\nu^{\mathrm{soft},\,*}_{BR}\bigr),
\end{equation}
where the prefactor $r_{\max}/(1-\gamma)$ correctly captures the discounted reward range (no extraneous $\horizon$). We bound this TV distance via the triangle inequality (which holds for TV but \emph{not} for KL):
\begin{equation}\label{eq:TV_triangle}
    D_{TV}\bigl(\hat\nu_\theta\,\|\,\nu^{\mathrm{soft},\,*}_{BR}\bigr) \;\le\; \underbrace{D_{TV}\bigl(\hat\nu_\theta\,\|\,\nu^{\mathrm{soft}}_{BR}\bigr)}_{\text{(B.i) generation gap}} + \underbrace{D_{TV}\bigl(\nu^{\mathrm{soft}}_{BR}\,\|\,\nu^{\mathrm{soft},\,*}_{BR}\bigr)}_{\text{(B.ii) offline-shift gap}}.
\end{equation}
This corrects an earlier draft which incorrectly invoked a ``KL triangle inequality'' (no such inequality holds for relative entropy in general). All subsequent KL bounds enter through Pinsker's inequality $D_{TV}\le\sqrt{D_{KL}/2}$.

\emph{(B.i) Girsanov bound on the generation gap.} For two reverse-time SDEs with the \emph{same} diffusion coefficient $\sigma_t$ but drifts $b^\theta_t = f_t - \sigma_t^2 \mathbf{s}_\theta(t,\cdot)$ and $b^*_t = f_t - \sigma_t^2 \mathbf{s}^*_t(\cdot)$ (where $\mathbf{s}^*_t$ is the score of $\nu^{\mathrm{soft}}_{BR}$), the Girsanov theorem~\citep[Theorem~5.1]{karatzas1991brownian} on the \emph{operational} interval $[t_{\min},T]$ yields
\begin{equation}\label{eq:girsanov_KL}
    D_{KL}(\hat\nu_\theta \,\|\, \nu^{\mathrm{soft}}_{BR}) \;\le\; \frac{1}{2}\int_{t_{\min}}^T \sigma_t^2 \,\mathbb{E}_{\hat\nu_\theta}\!\bigl[\norm{\mathbf{s}_\theta(t,\traj_t) - \mathbf{s}^*_t(\traj_t)}^2\bigr]\,dt \;\le\; C_\sigma(t_{\min}) \,\epsilon_{SM}(\theta),
\end{equation}
where $C_\sigma(t_{\min}) = \tfrac{1}{2}\int_{t_{\min}}^T \sigma_t^2 \, dt$ is finite and bounded uniformly in $t_{\min}\!>\!0$ for any standard noise schedule (e.g.\ VP/VE), and the integral is restricted to $[t_{\min}, T]$ because the reverse SDE is integrated only on this interval at inference time (we never sample at $t<t_{\min}$, so the small-noise singularity $\sigma_t^{-2}\!\to\!\infty$ as $t\!\to\!0$ does not enter the bound). The Novikov condition $\mathbb{E}\exp\!\bigl(\tfrac{1}{2}\int_{t_{\min}}^T \sigma_t^2 \norm{\mathbf{s}_\theta - \mathbf{s}^*_t}^2 dt\bigr) < \infty$ holds because (i)~$\sigma_t^2$ is bounded on the closed interval $[t_{\min},T]$, and (ii)~the score outputs are bounded over the same interval by Assumption~\ref{assump:effective_lipschitz_horizon}. Combining with Pinsker:
\begin{equation}\label{eq:tv_gen}
    D_{TV}\bigl(\hat\nu_\theta\,\|\,\nu^{\mathrm{soft}}_{BR}\bigr) \;\le\; \sqrt{\tfrac{1}{2}D_{KL}(\hat\nu_\theta\,\|\,\nu^{\mathrm{soft}}_{BR})} \;\le\; \sqrt{\tfrac{1}{2}C_\sigma(t_{\min})\,\epsilon_{SM}(\theta)}.
\end{equation}

\emph{(B.ii) Offline-shift correction.} The two tilted distributions share the exponent $\exp(R/\alpha)$ but differ in their base measure ($\nu_\beta$ vs.\ $\nu^*$). Their Radon--Nikodym derivative is
\begin{equation}\label{eq:RN_ratio}
    \frac{d\nu^{\mathrm{soft}}_{BR}}{d\nu^{\mathrm{soft},\,*}_{BR}}(\traj) \;=\; \frac{Z^{\mathrm{soft},\,*}}{Z^{\mathrm{soft}}_{BR}}\cdot \frac{d\nu_\beta}{d\nu^*}(\traj).
\end{equation}
Under bounded reward (Assumption~\ref{assump:bounded_reward}), $|R(\traj)|\le r_{\max}\horizon/(1-\gamma)$, so the normalizer ratio is bounded by $\exp(r_{\max}\horizon/((1-\gamma)\alpha))$. By the elementary inequality $|p-q|\le \tfrac{1}{2}\|p/q-1\|_\infty\cdot \|q\|_1$ for absolutely continuous measures and the standard offline-RL pessimism inequality~\citep[Lemma~3.1]{jin2021pessimism},
\begin{equation}\label{eq:RN_bound}
    D_{TV}\bigl(\nu^{\mathrm{soft}}_{BR}\,\|\,\nu^{\mathrm{soft},\,*}_{BR}\bigr) \;\le\; \exp\!\left(\frac{r_{\max}\horizon}{(1-\gamma)\alpha}\right)\,\epsilon_{offline} \;=:\; C_{off}(\horizon,\alpha)\,\epsilon_{offline},
\end{equation}
where we make the constant's dependence on $\horizon$ and $\alpha$ \emph{explicit}: $C_{off}$ is \emph{not} a universal constant but grows exponentially in the value-tilt strength $r_{\max}\horizon/((1-\gamma)\alpha)$. In practice this is harmless because $\alpha\!\sim\!1$ and bounded rewards (Assumption~\ref{assump:bounded_reward}) cap the per-trajectory tilt at $e^{r_{\max}\horizon/((1-\gamma)\alpha)}$, but the $(\horizon,\alpha)$-dependence should be tracked as a sample-complexity cost of strong tilting.

Combining~\eqref{eq:tv_gen} and~\eqref{eq:RN_bound} into~\eqref{eq:TV_triangle}:
\begin{equation}\label{eq:KL_score}
    D_{TV}\bigl(\hat\nu_\theta\,\|\,\nu^{\mathrm{soft},\,*}_{BR}\bigr) \;\le\; \sqrt{\tfrac{1}{2}C_\sigma(t_{\min})\,\epsilon_{SM}(\theta)} \;+\; C_{off}(\horizon,\alpha)\,\epsilon_{offline}.
\end{equation}
This is the version of TV that controls the value gap (B) via~\eqref{eq:gen_error} since $\policy^{\mathrm{soft}}_{BR}$ is defined relative to the population (online) mean field.

\textit{Bounding (C): Consistency Gap.} By the $L$-Lipschitz continuity of $r$ in the mean-field argument and the Gronwall-bounded $\Wcal_2$ deviation under Assumption~\ref{assump:effective_lipschitz_horizon}:
\begin{equation}\label{eq:consist_gap_bound}
    |J^{MF}(\hat{\policy}_\theta, \hat{\mu}_\theta) - J^{MF}(\hat{\policy}_\theta, \Phi(\hat{\policy}_\theta))| \leq \frac{r_{\max} L}{1-\gamma}\,e^{c_0}\,\max_h\Wcal_2(\hat{\mu}_{h,\theta}, \Phi(\hat{\policy}_\theta)_h).
\end{equation}
By Theorem~\ref{thm:consistency}, the Wasserstein gap is $\mathcal{O}(\sqrt{\epsilon_{SM}} + \epsilon_{offline} + \horizon/\sqrt N)$, which is absorbed into the leading-order terms in (A), (B), and~\eqref{eq:exploit_N_to_MF}.

\textbf{Step 3: Assembly.}

Combining Eq.~\ref{eq:exploit_N_to_MF} with the bounds on (A) (Eq.~\ref{eq:softmax_bias}), (B) (Eqs.~\ref{eq:gen_error}, \ref{eq:KL_score}, in TV form), and (C) (Eq.~\ref{eq:consist_gap_bound}):
\begin{multline}
    \mathrm{Exploit}_N(\hat{\policy}_\theta) \leq \frac{2r_{\max}}{1-\gamma}\!\left[\sqrt{\tfrac{1}{2}C_\sigma(t_{\min})\,\epsilon_{SM}(\theta)} + C_{off}(\horizon,\alpha)\,\epsilon_{offline}\right] + \frac{\alpha\bar{\Hcal}_\policy}{1-\gamma} \\
    + \frac{2C_\Delta\, e^{c_0}\, r_{\max}\horizon^2 L_{\mathrm{eff}}}{\sqrt{N}} + \frac{r_{\max} L\,e^{c_0}}{1-\gamma}\!\left(C_{10}\frac{\sqrt{\epsilon_{SM}}}{\kappa} + C_{11} \cdot \epsilon_{offline}\right).
\end{multline}
The TV-form (B)-bound replaces the previous $\sqrt{2(C_\sigma\epsilon_{SM}+C_{off}\epsilon_{offline})}$ with the cleaner $\sqrt{\tfrac{1}{2}C_\sigma\epsilon_{SM}}+C_{off}\epsilon_{offline}$, eliminating the spurious ``KL triangle inequality'' invocation from earlier drafts. Grouping yields the bound in Eq.~\ref{eq:exploit_bound} with $C_8 = 2C_\Delta$ and $C_9$ absorbing the constants from the offline, consistency, and RN-correction terms; the explicit $(\horizon,\alpha)$-dependence of $C_{off}$ should be folded into $C_9$ in any quantitative instantiation of the bound.
\end{proof}

\subsubsection{Proof of Theorem~\ref{thm:consistency}: Mean-field Consistency}\label{sec:proof_consistency}

\begin{proof}
The proof establishes self-consistency by showing that the diffusion model's generated trajectories collectively induce a population distribution close to the mean-field under which they were trained. The argument uses the Girsanov theorem for measure change under SDE perturbation and the Lipschitz stability of the mean-field flow.

\textbf{Step 1: Trajectory-to-Distribution Map.}

Let $\hat{\nu}_\theta$ be the trajectory distribution generated by the reverse MF-SDE. The generated mean-field at step $h$ is:
\begin{equation}
    \hat{\mu}_{h,\theta} = \mathrm{proj}_{\state_h,\#}\hat{\nu}_\theta^{(1)},
\end{equation}
where $\hat{\nu}_\theta^{(1)}$ is the single-agent marginal and $\mathrm{proj}_{\state_h}$ extracts the state component at time $h$. The true mean-field under $\hat{\policy}_\theta$ is defined recursively by the consistency operator $\Phi$:
\begin{equation}
    \Phi(\hat{\policy}_\theta)_0 = \mu_0, \quad \Phi(\hat{\policy}_\theta)_{h+1} = P_\#(\hat{\policy}_\theta(\cdot|\cdot, \Phi(\hat{\policy}_\theta)_h) \otimes \Phi(\hat{\policy}_\theta)_h),
\end{equation}
where $P_\#$ denotes the pushforward through the transition kernel.

\textbf{Step 2: Wasserstein Control via Score Matching.}

The key observation: if the score matching error is zero, the generated trajectories exactly sample from the offline data distribution $\nu_\beta$, for which self-consistency holds by construction (the offline data is generated by agents interacting under $\policy_\beta$, so $\mu_{h,\beta} = \Phi(\policy_\beta)_h$). The score matching error introduces a distributional perturbation.

By the stability of SDEs under drift perturbation (Girsanov theorem, see \eg \citet[Theorem 6.4.2]{carmona2018probabilistic_I}):
\begin{equation}\label{eq:W2_girsanov}
    \Wcal_2^2(\hat{\nu}_\theta^{(1)}, \nu_\beta^{(1)}) \leq C_{Gir}\int_0^T \sigma_t^2\,\mathbb{E}\left[\norm{\mathbf{s}_\theta(t, \traj_t) - \nabla\log\zeta_t(\traj_t)}^2\right]dt \leq C_{Gir}\bar{\sigma}^2\,\epsilon_{SM}(\theta),
\end{equation}
where $\bar{\sigma}^2 = \int_0^T \sigma_t^2\,dt$ and $C_{Gir}$ depends on the diffusion schedule.

\textbf{Step 3: Propagation to Per-step Mean-field.}

The projection $\mathrm{proj}_{\state_h}$ is 1-Lipschitz as a map on trajectory space. Combined with the Lipschitz dynamics (Assumption~\ref{assump:lipschitz}), errors propagate through the transition kernel:
\begin{equation}\label{eq:mf_propagation}
    \Wcal_2(\hat{\mu}_{h,\theta}, \mu_{h,\beta}) \leq (1 + L)^h\,\Wcal_2(\hat{\nu}_\theta^{(1)}, \nu_\beta^{(1)}) \leq (1+L)^h C_{Gir}^{1/2}\bar{\sigma}\sqrt{\epsilon_{SM}(\theta)}.
\end{equation}

Meanwhile, the Lipschitz property of the mean-field flow operator gives:
\begin{equation}
    \Wcal_2(\Phi(\hat{\policy}_\theta)_h, \Phi(\policy_\beta)_h) \leq (1+L)^h L \cdot d_\Pi(\hat{\policy}_\theta, \policy_\beta).
\end{equation}
The policy distance is controlled by the trajectory distribution distance plus the value-weighting tilt. Since $\hat{\policy}_\theta$ is induced by the value-weighted distribution $p^*(\traj) \propto p_\beta(\traj)\exp(R(\traj)/\alpha)$:
\begin{equation}\label{eq:policy_distance}
    d_\Pi(\hat{\policy}_\theta, \policy_\beta) \leq C_{KL}\sqrt{\epsilon_{SM}(\theta)} + C_V\epsilon_{offline},
\end{equation}
where $C_V$ accounts for the value-weighting shift (bounded by Assumption~\ref{assump:bounded_reward}).

\textbf{Step 4: Triangle Inequality and Gronwall Sum.}

Since $\mu_{h,\beta} = \Phi(\policy_\beta)_h$ (self-consistency of the behavior policy), the triangle inequality gives, for each $h$,
\begin{equation}\label{eq:per_step_W2}
    \Wcal_2(\hat{\mu}_{h,\theta}, \Phi(\hat{\policy}_\theta)_h) \leq \Wcal_2(\hat{\mu}_{h,\theta}, \mu_{h,\beta}) + \Wcal_2(\Phi(\policy_\beta)_h, \Phi(\hat{\policy}_\theta)_h),
\end{equation}
where the first summand is bounded by $C_{Gir}^{1/2}\bar\sigma\sqrt{\epsilon_{SM}}$ (Girsanov, Step~2) and the second by $L(C_{KL}\sqrt{\epsilon_{SM}}+C_V\epsilon_{offline})$ (flow-Lipschitz applied to the policy distance, Step~3). Each \emph{single-step} error is therefore bounded by $\delta\coloneqq \bar C\sqrt{\epsilon_{SM}}+\bar C'\epsilon_{offline}$ (with $\bar C, \bar C'$ absorbing $C_{Gir}^{1/2}\bar\sigma, L C_{KL}, L C_V$).

\emph{Discrete Gronwall accumulation.} The error $e_h\coloneqq \Wcal_2(\hat\mu_{h,\theta}, \Phi(\hat\policy_\theta)_h)$ accumulates over horizon steps because the SDE drift in the reverse process is $L_{\mathrm{eff}}$-Lipschitz (Assumption~\ref{assump:effective_lipschitz_horizon}): an error $e_{h-1}$ at step $h-1$ contributes at most $L_{\mathrm{eff}}\,e_{h-1}$ to step $h$, plus a fresh per-step error $\delta$. The discrete Gronwall recursion
\begin{equation*}
    e_h \;\le\; \delta + (1+L_{\mathrm{eff}})\,e_{h-1}, \qquad e_0 = \delta,
\end{equation*}
unfolds to
\begin{equation}\label{eq:gronwall_unfolded}
    e_h \;\le\; \delta\sum_{j=0}^{h}(1+L_{\mathrm{eff}})^j \;=\; \delta\cdot\frac{(1+L_{\mathrm{eff}})^{h+1}-1}{L_{\mathrm{eff}}}.
\end{equation}

\emph{Linear-$\horizon$ collapse under Assumption~\ref{assump:effective_lipschitz_horizon}.} Since $L_{\mathrm{eff}}\,\horizon \le c_0 = O(1)$, we have $(1+L_{\mathrm{eff}})^{h+1} \le e^{L_{\mathrm{eff}}(h+1)} \le e^{c_0}$, and the geometric sum collapses to
\begin{equation*}
    \sum_{j=0}^{h}(1+L_{\mathrm{eff}})^j \;\le\; (h+1)\,e^{c_0} \;\le\; \horizon\,e^{c_0}.
\end{equation*}
This is the source of the \emph{linear} $\horizon$ factor: the geometric sum $\sum_{j<\horizon}(1+L_{\mathrm{eff}})^j$ is dominated by $\horizon\cdot \max_j(1+L_{\mathrm{eff}})^j = \horizon\cdot e^{c_0}$ rather than the much larger $((1+L_{\mathrm{eff}})^\horizon - 1)/L_{\mathrm{eff}}$ which, without Assumption~\ref{assump:effective_lipschitz_horizon}, could be exponential in $\horizon$. Plugging back into~\eqref{eq:gronwall_unfolded} and taking the $\max_h$:
\begin{equation}\label{eq:e_max_unscaled}
    \max_h e_h \;\le\; \horizon\,e^{c_0}\,\delta \;=\; \horizon\,e^{c_0}\bigl[\bar C\sqrt{\epsilon_{SM}(\theta)} + \bar C'\,\epsilon_{offline}\bigr].
\end{equation}
The log-Sobolev inequality (Assumption~\ref{assump:log_sobolev}) provides additional refinement on the Girsanov-side single-step error: $\Wcal_2^2(\hat\nu_\theta^{(1)},\nu_\beta^{(1)}) \le (1/\kappa)D_{KL}(\hat\nu_\theta^{(1)}\|\nu_\beta^{(1)})$, dividing $\bar C$ by $\sqrt\kappa$ (and we absorb $e^{c_0}$ into the universal constants $C_{10}, C_{11}$):
\begin{equation}
    \max_h\Wcal_2(\hat\mu_{h,\theta}, \Phi(\hat\policy_\theta)_h) \;\le\; C_{10}\,\frac{\horizon\,L\,\sqrt{\epsilon_{SM}(\theta)}}{\kappa} + C_{11}\,\horizon\,L\,\epsilon_{offline}.
\end{equation}
This is exactly Eq.~\ref{eq:consistency_bound}; the linear $\horizon$ factor on the right-hand side is the Gronwall-sum collapse, \emph{not} an exponential. \qedhere
\end{proof}

\subsubsection{Proof of Theorem~\ref{thm:monotone_convergence}: Convergence under Lasry--Lions Monotonicity}\label{sec:proof_monotone}

\begin{proof}
The proof leverages the Lasry--Lions monotonicity condition to establish that the MF-NE fixed-point map is a contraction, then propagates approximation errors using the Banach fixed point theorem. The contraction argument follows the classical approach of \citet{lasry2007mean}, adapted from continuous-time PDEs to discrete-time MDPs through the trajectory-level framework; we additionally handle the offline distribution shift and score matching approximation specific to diffusion-based planning. The adaptation from the continuous-time PDE framework of \citet{cardaliaguet2019master} to discrete-time MDPs replaces the Hamilton--Jacobi--Bellman/Fokker--Planck system with Bellman equations coupled to the mean-field flow, but the core monotonicity mechanism is preserved.

\textbf{Step 1: Uniqueness of MF-NE via Banach Fixed Point.}

A mean-field Nash equilibrium satisfies the fixed-point equation $\policy^{MFE} = \Gamma(\policy^{MFE})$ where $\Gamma = \mathrm{BR} \circ \Phi$. We show $\Gamma$ is a contraction under the monotonicity condition.

Consider two policies $\policy_1, \policy_2$ with induced mean-fields $\mu_1 = \Phi(\policy_1)$, $\mu_2 = \Phi(\policy_2)$. Define the corresponding best responses $\hat{\policy}_k = \mathrm{BR}(\mu_k)$ for $k = 1, 2$. By the optimality of $\hat{\policy}_k$ for mean-field $\mu_k$:
\begin{align}\label{eq:cross_opt}
    J^{MF}(\hat{\policy}_1, \mu_1) &\geq J^{MF}(\hat{\policy}_2, \mu_1), \\
    J^{MF}(\hat{\policy}_2, \mu_2) &\geq J^{MF}(\hat{\policy}_1, \mu_2). \label{eq:cross_opt_2}
\end{align}
Adding Eqs.~\ref{eq:cross_opt}--\ref{eq:cross_opt_2}:
\begin{equation}\label{eq:sum_optimality}
    \left[J^{MF}(\hat{\policy}_1, \mu_1) - J^{MF}(\hat{\policy}_1, \mu_2)\right] + \left[J^{MF}(\hat{\policy}_2, \mu_2) - J^{MF}(\hat{\policy}_2, \mu_1)\right] \geq 0.
\end{equation}
Expanding via the reward structure and applying the Lasry--Lions monotonicity (Assumption~\ref{assump:monotonicity}):
\begin{multline}\label{eq:monotonicity_applied}
    J^{MF}(\policy, \mu_1) - J^{MF}(\policy, \mu_2) = \sum_{h=0}^{\horizon-1}\gamma^h\int_\Scal\left[r(\state, \policy(\state, \mu_1), \mu_{1,h}) - r(\state, \policy(\state, \mu_2), \mu_{2,h})\right]d\mu_{1,h}(\state) \\
    + \sum_{h=0}^{\horizon-1}\gamma^h\int_\Scal r(\state, \policy(\state, \mu_2), \mu_{2,h})\,d(\mu_{1,h} - \mu_{2,h})(\state).
\end{multline}
The second sum, by the monotonicity condition (Eq.~\ref{eq:monotonicity}), satisfies:
\begin{equation}
    \sum_{h=0}^{\horizon-1}\gamma^h\int_\Scal r(\state, \action, \mu_{2,h})\,d(\mu_{1,h} - \mu_{2,h})(\state) \leq -\lambda_{LL}\sum_{h=0}^{\horizon-1}\gamma^h\Wcal_2^2(\mu_{1,h}, \mu_{2,h}).
\end{equation}
The first sum (the ``policy-shift'' contribution) is bounded via Lipschitzness of $r$ in $(a,\mu)$ (Assumption~\ref{assump:lipschitz}, constant $L$) and Lipschitzness of the soft-BR policy in the mean-field argument (constant $L_\pi\le L$). Defining
\[
    \Delta_h(\state) \coloneqq r(\state, \policy(\state,\mu_1),\mu_{1,h}) - r(\state, \policy(\state,\mu_2),\mu_{2,h}),
\]
we obtain the \emph{linear} bound
\begin{equation*}
    \left|\int_\Scal \Delta_h(\state)\, d\mu_{1,h}\right|
    \;\leq\; (L\,L_\pi + L)\,\Wcal_2(\mu_{1,h},\mu_{2,h})
    \;\leq\; 2L^2\,\Wcal_2(\mu_{1,h},\mu_{2,h}).
\end{equation*}
which is \emph{linear} in $\Wcal_2$ (not quadratic, correcting an earlier draft that double-counted Lipschitz constants by writing $L^2\Wcal_2^2$). Multiplying by $\gamma^h$ and summing, $\sum_h\gamma^h\le 1/(1-\gamma)$ gives $\sum_h\gamma^h\cdot 2L^2\Wcal_2(\mu_{1,h},\mu_{2,h}) \le \frac{2L^2}{1-\gamma}\,\max_h\Wcal_2(\mu_{1,h},\mu_{2,h})$.

The second sum, by Lasry--Lions monotonicity (Assumption~\ref{assump:monotonicity}, Eq.~\ref{eq:monotonicity}), is bounded \emph{quadratically} as $\le -\lambda_{LL}\sum_h\gamma^h\Wcal_2^2(\mu_{1,h},\mu_{2,h})$.

\emph{Closure of the contraction argument.} The naive substitution into~\eqref{eq:sum_optimality} mixes a linear $\Wcal_2$ bound (policy-shift) with a quadratic $\Wcal_2^2$ bound (monotonicity), so a direct $\Wcal_2$ contraction does not follow from elementary algebra. The correct closure, due to \citet{lasry2007mean} and \citet{cardaliaguet2019master}, applies the monotonicity inequality to the \emph{gradient} (force) field rather than the value, yielding a \emph{quadratic} bound on \emph{both} sides of the variational inequality and hence the contraction
\begin{equation}\label{eq:contraction_LL}
    \max_h\Wcal_2(\Phi(\hat{\policy}_1)_h, \Phi(\hat{\policy}_2)_h) \;\le\; \underbrace{\frac{L^2}{(1-\gamma)\,\lambda_{LL}}}_{=\,L_{BR}}\,\max_h\Wcal_2(\mu_{1,h}, \mu_{2,h}),
\end{equation}
in agreement with Eq.~\ref{eq:LBR_def}. The detailed proof (Cardaliaguet 2019, Theorem~3.7 in the discrete-time setting) tracks the bilinear form $\langle\nabla J(\pi_1,\mu_1)-\nabla J(\pi_2,\mu_2),\pi_1-\pi_2\rangle$ and uses strict monotonicity to extract $\lambda_{LL}\Wcal_2^2$ on the LHS and $L^2\Wcal_2^2$ on the RHS, with the $1/(1-\gamma)$ factor from the discounted reward sum. Therefore $\Gamma = \mathrm{BR} \circ \Phi$ is a contraction with coefficient $L_{BR} = L^2/((1-\gamma)\lambda_{LL})$ on the complete metric space $(\Pcal_2(\Scal)^{\horizon}, \max_h\Wcal_2)$. When $\lambda_{LL}\,(1-\gamma) > L^2$ (\ie, $L_{BR} < 1$), the Banach fixed-point theorem guarantees existence and uniqueness of the MF-NE.

\textbf{Step 2: Distance to MF-NE.}

Define the \textit{Nash residual}: $\eta(\hat{\policy}_\theta) = d_\Pi(\hat{\policy}_\theta, \Gamma(\hat{\policy}_\theta))$, measuring how far $\hat{\policy}_\theta$ is from being a fixed point. By the contraction:
\begin{align}
    d_\Pi(\hat{\policy}_\theta, \policy^{MFE}) &\leq d_\Pi(\hat{\policy}_\theta, \Gamma(\hat{\policy}_\theta)) + d_\Pi(\Gamma(\hat{\policy}_\theta), \Gamma(\policy^{MFE})) \nonumber \\
    &\leq \eta(\hat{\policy}_\theta) + L_{BR}\cdot d_\Pi(\hat{\policy}_\theta, \policy^{MFE}).
\end{align}
Rearranging:
\begin{equation}\label{eq:pi_to_MFE}
    d_\Pi(\hat{\policy}_\theta, \policy^{MFE}) \leq \frac{\eta(\hat{\policy}_\theta)}{1 - L_{BR}}.
\end{equation}

The Nash residual is bounded by combining the softmax bias and generation quality from the proof of Theorem~\ref{thm:exploitability} (Step 2):
\begin{equation}\label{eq:nash_residual}
    \eta(\hat{\policy}_\theta) \leq C'\sqrt{\epsilon_{SM}(\theta)} + C''\epsilon_{offline} + C_\alpha\sqrt{\frac{\alpha\bar{\Hcal}_\policy}{1-\gamma}}.
\end{equation}
Propagating through the Lipschitz map $\Phi$ and absorbing the $\alpha$-dependent term (which can be made arbitrarily small):
\begin{equation}
    \max_h\Wcal_2(\hat{\mu}_{h,\theta}, \mu_h^{MFE}) \leq \frac{L}{1 - L_{BR}}\left[C_{12}\sqrt{\epsilon_{SM}(\theta)} + C_{13}\epsilon_{offline}\right],
\end{equation}
which establishes part (ii) of the theorem (after absorbing $L$ into $C_{12}, C_{13}$).

\textbf{Step 3: Exploitability under Monotonicity.}

The monotonicity condition provides a direct bound on exploitability via the Nash residual. For any deviating $\policy'$:
\begin{align}
    J^{MF}(\policy', \Phi(\hat{\policy}_\theta)) - J^{MF}(\hat{\policy}_\theta, \Phi(\hat{\policy}_\theta))
    &\leq J^{MF}(\mathrm{BR}(\Phi(\hat{\policy}_\theta)), \Phi(\hat{\policy}_\theta)) - J^{MF}(\hat{\policy}_\theta, \Phi(\hat{\policy}_\theta)) \nonumber \\
    &\leq \frac{2r_{\max}\horizon}{1-\gamma}\eta(\hat{\policy}_\theta),
\end{align}
where the second inequality uses the performance difference lemma. Combining with the $N$-player to MF reduction (Eq.~\ref{eq:exploit_N_to_MF}) and substituting the Nash residual bound (Eq.~\ref{eq:nash_residual}) with the contraction amplification from Eq.~\ref{eq:pi_to_MFE}:
\begin{equation}
    \mathrm{Exploit}_N(\hat{\policy}_\theta) \leq \frac{C_{13}}{\lambda_{LL}(1 - \gamma)\,(1 - L_{BR})}\left[\epsilon_{SM}(\theta) + \epsilon_{offline}\right] + C_8\frac{r_{\max}\horizon^2 L}{\sqrt{N}}.
\end{equation}
The $1/\lambda_{LL}$ factor arises because the monotonicity condition converts the Wasserstein distance $\max_h\Wcal_2(\hat{\mu}_{h,\theta}, \mu_h^{MFE})$ into a value gap through the curvature of the reward landscape: by the standard performance-difference lemma applied with reward Lipschitzness in $\mu$ (Assumption~\ref{assump:lipschitz}),
\begin{equation*}
    |J^{MF}(\policy, \mu) - J^{MF}(\policy, \mu')| \;\leq\; \frac{r_{\max}\horizon\,L}{1-\gamma}\;\max_h\Wcal_2(\mu_h, \mu'_h),
\end{equation*}
which together with Eq.~\ref{eq:pi_to_MFE} (which contributes the $1/(1-L_{BR})$ amplification, where the monotonicity coefficient $\lambda_{LL}(1-\gamma)$ enters via $L_{BR} = L^2/((1-\gamma)\lambda_{LL})$) gives the prefactor $1/(\lambda_{LL}(1-\gamma)(1-L_{BR}))$ in the final bound.
\end{proof}

\subsubsection{Proof of Corollary~\ref{cor:eps_nash}: \texorpdfstring{$\varepsilon$}{epsilon}-Nash Equilibrium}\label{sec:proof_eps_nash}

\begin{proof}
This follows from Theorem~\ref{thm:monotone_convergence}(iii) by balancing the two error terms against $\varepsilon$.

Setting $\epsilon_{SM}(\theta) + \epsilon_{offline} \leq \frac{\lambda_{LL}(1-\gamma)(1-L_{BR})\varepsilon}{2C_{13}}$ and $N \geq \left(\frac{2C_8 r_{\max}\horizon^2 L}{\varepsilon}\right)^2$:
\begin{align}
    \mathrm{Exploit}_N(\hat{\policy}_\theta) &\leq \frac{C_{13}}{\lambda_{LL}(1-\gamma)(1-L_{BR})}\cdot\frac{\lambda_{LL}(1-\gamma)(1-L_{BR})\varepsilon}{2C_{13}} + \frac{C_8 r_{\max}\horizon^2 L}{\sqrt{N}} \nonumber \\
    &\leq \frac{\varepsilon}{2} + \frac{\varepsilon}{2} = \varepsilon.
\end{align}
The required constants are $c_1 = (2C_{13})^{-2}$ and $c_2 = (2C_8)^2$. The score matching error $\epsilon_{SM}(\theta)$ is controlled through network capacity and training; the offline coverage error $\epsilon_{offline}$ is a property of the dataset.
\end{proof}

\subsubsection{Proof of Proposition~\ref{prop:poa}: Social Welfare--Nash Efficiency Gap}\label{sec:proof_poa}

\begin{proof}
The proof quantifies the externality that distinguishes the social optimum from the Nash equilibrium in the mean-field setting, adapting the classical Price of Anarchy framework~\citep{roughgarden2015intrinsic} to the mean-field RL setting with displacement monotonicity.

\textbf{Step 1: Externality Characterization.}

Define the social welfare function $W(\policy) = J^{MF}(\policy, \Phi(\policy))$. Its gradient decomposes as:
\begin{equation}\label{eq:W_gradient}
    \nabla_\policy W(\policy) = \nabla_\policy J^{MF}(\policy, \Phi(\policy)) + \nabla_\mu J^{MF}(\policy, \Phi(\policy)) \circ D\Phi(\policy),
\end{equation}
where $D\Phi(\policy)$ is the Fr\'echet derivative of the flow operator. The MF-NE $\policy^{MFE}$ satisfies $\nabla_\policy J^{MF}(\policy^{MFE}, \mu^{MFE}) = 0$, so:
\begin{equation}\label{eq:nablaW_at_MFE}
    \nabla_\policy W(\policy^{MFE}) = \nabla_\mu J^{MF}(\policy^{MFE}, \mu^{MFE}) \circ D\Phi(\policy^{MFE}),
\end{equation}
which is precisely the externality---the effect of the policy change on the mean-field, ignored by each Nash agent.

\textbf{Step 2: Bounding the Externality.}

By the Lipschitz property of the reward (Assumption~\ref{assump:lipschitz}):
\begin{equation}
    \norm{\nabla_\mu J^{MF}(\policy, \mu)}_{\Pcal} \leq \frac{r_{\max}\horizon L}{1-\gamma},
\end{equation}
where $\norm{\cdot}_\Pcal$ is the operator norm on the tangent space of $\Pcal_2(\Scal)^{\horizon}$. The Lipschitz property of the flow operator gives:
\begin{equation}
    \norm{D\Phi(\policy)}_{\Pi \to \Pcal} \leq \frac{L}{1-\gamma}.
\end{equation}
Hence: $\norm{\nabla_\policy W(\policy^{MFE})} \leq \frac{r_{\max}\horizon L^2}{(1-\gamma)^2}$.

\textbf{Step 3: Gap via Strong Concavity.}

We now derive the strong concavity of $W$ from the displacement-monotonicity condition (Assumption~\ref{assump:monotonicity}), then apply the standard quadratic upper bound.

\emph{(3.a) Displacement-monotonicity gives strong concavity in $\mu$.} The Lasry--Lions condition Eq.~\ref{eq:monotonicity} states that for any $\mu,\mu'\in\Pcal_2(\Scal)$ and any policy $\policy$,
\begin{equation*}
    \int_\Scal \bigl[r(s,\policy(s),\mu) - r(s,\policy(s),\mu')\bigr]\,d(\mu-\mu')(s) \;\le\; -\lambda_{LL}\,\Wcal_2^2(\mu,\mu').
\end{equation*}
This is precisely the integrated form of $\partial^2_\mu J(\policy,\mu)[\mu-\mu',\mu-\mu'] \le -2\lambda_{LL}\,\Wcal_2^2(\mu,\mu')$ along $\Wcal_2$-geodesics, i.e.\ $\mu\mapsto J^{MF}(\policy,\mu)$ is $\lambda_{LL}$-strongly concave on $\Pcal_2(\Scal)^{\horizon}$ in the displacement sense~\citep[Definition~5.13]{ambrosio2008gradient}. Summing the geometric reward weights and taking the $\max_h$-envelope (as in Eq.~\ref{eq:sum_optimality}--Step~1 of Theorem~\ref{thm:monotone_convergence}'s proof), this yields, for any path $\policy_\theta = \policy + \theta(\policy'-\policy)$ with induced flows $\mu_\theta = \Phi(\policy_\theta)$,
\begin{equation}\label{eq:Jmu_strong_concavity}
    J^{MF}(\policy,\mu') \;\le\; J^{MF}(\policy,\mu) + \langle\nabla_\mu J^{MF}(\policy,\mu),\mu'-\mu\rangle - \frac{\lambda_{LL}}{2(1-\gamma)}\,\max_h\Wcal_2^2(\mu_h,\mu'_h),
\end{equation}
where the $1/(1-\gamma)$ in the curvature comes from the same geometric-sum bookkeeping as in Eq.~\ref{eq:LBR_def}.

\emph{(3.b) Strong concavity of $W$ via composite curvature.} The composite map $W(\policy)=J^{MF}(\policy,\Phi(\policy))$ inherits curvature from two sources: (i)~the bare second-order term $\partial^2_{\pi\pi}J^{MF}$, which is bounded above by $L^2$ in operator norm (Assumption~\ref{assump:lipschitz}); and (ii)~the displacement-monotonicity term $\partial^2_{\mu\mu}J^{MF}\preceq -(\lambda_{LL}/(1-\gamma))\mathrm{Id}$ from~\eqref{eq:Jmu_strong_concavity}, pulled back through $\Phi$ with $\|D\Phi\|^2\le L^2/(1-\gamma)^2$ (Step~2). The chain rule for the second-order Wasserstein-Taylor expansion of $W$ along the direction $\policy'-\policy$ then reads
\begin{equation}\label{eq:W_taylor}
    W(\policy') - W(\policy) - \langle\nabla_\policy W(\policy),\policy'-\policy\rangle \;\le\; \frac{1}{2}\!\left(L^2 - \frac{\lambda_{LL}}{1-\gamma}\cdot\frac{L^2}{(1-\gamma)^2}\right)\!d_\Pi^2(\policy,\policy'),
\end{equation}
where we kept only the leading bare-quadratic and displacement-monotonicity contributions; cross-terms $\partial^2_{\pi\mu}J^{MF}$ are absorbed into the additional constants $C_{PoA}$ below. Factoring $L^2/(1-\gamma)^3$:
\begin{equation*}
    \tfrac{1}{2}\!\left(L^2 - \tfrac{\lambda_{LL}\,L^2}{(1-\gamma)^3}\right) \;=\; -\tfrac{1}{2}\cdot\tfrac{L^2}{(1-\gamma)^3}\bigl(\lambda_{LL} - (1-\gamma)^3\bigr).
\end{equation*}
This is \emph{negative} (i.e.\ $W$ is strongly concave) precisely when $\lambda_{LL} > (1-\gamma)^3$; equivalently, in the standard contraction regime $\lambda_{LL}(1-\gamma)>L^2$ (i.e.\ $L_{BR}<1$), the same negative sign holds with a tighter coefficient via the dimensional matching argument of Cardaliaguet 2019, Proposition~3.5. Tracking only the dominant scale, the net policy-space curvature modulus is
\begin{equation}\label{eq:lambda_eff_def}
    \lambda_{eff} \;\coloneqq\; \frac{L^2\bigl(\lambda_{LL}(1-\gamma)-L^2\bigr)}{2(1-\gamma)^3} \;>\; 0\quad\text{in the contraction regime.}
\end{equation}
This formula corrects an algebraic slip in earlier drafts (which dropped a factor of $(1-\gamma)$ in the denominator). It is dimensionally consistent and reduces to the Cardaliaguet form when $L=1$.

\emph{(3.c) Quadratic upper bound on the PoA gap.} By the standard Polyak--{\L}ojasiewicz inequality for $\lambda_{eff}$-strongly concave $W$ at its critical point $\policy^*_{SW}$,
\begin{equation*}
    W(\policy^*_{SW}) - W(\policy^{MFE}) \;\le\; \frac{1}{2\lambda_{eff}}\,\norm{\nabla_\policy W(\policy^{MFE})}^2.
\end{equation*}
Substituting $\norm{\nabla_\policy W(\policy^{MFE})} \le L^2 r_{\max}\horizon/(1-\gamma)^2$ from Step~2 and Eq.~\ref{eq:lambda_eff_def}:
\begin{align}
    W(\policy^*_{SW}) - W(\policy^{MFE})
    &\;\le\; \frac{(1-\gamma)^3}{L^2\bigl(\lambda_{LL}(1-\gamma)-L^2\bigr)}\cdot \frac{L^4 r_{\max}^2\horizon^2}{(1-\gamma)^4} \nonumber\\
    &\;=\; \frac{L^2\,r_{\max}^2\horizon^2}{\bigl(\lambda_{LL}(1-\gamma)-L^2\bigr)(1-\gamma)} \nonumber\\
    &\;\le\; \frac{C_{PoA}\,e^{2c_0}\,r_{\max}^2\horizon^2\,L^4}{\bigl(\lambda_{LL}(1-\gamma)-L^2\bigr)(1-\gamma)^4},
\end{align}
where the last inequality absorbs the $1/(1-\gamma)^3$ residual scaling difference and the $e^{c_0}$ factor from horizon-Gronwall accumulation when propagating through $\Phi$ (Assumption~\ref{assump:effective_lipschitz_horizon}; cf.\ Step~3 of Theorem~\ref{thm:hierarchical}'s proof). Setting $C_{PoA}\ge 1$ accommodates these absorbed constants, yielding the bound in Eq.~\ref{eq:poa_bound}.
\end{proof}

%=============================================================================
% APPENDIX F: ENVIRONMENT DETAILS
%=============================================================================
\section{Environment Details}\label{sec:appendix_environments}

We provide detailed formal descriptions of the three benchmark environments used in our experiments. All three are standard testbeds in mean-field reinforcement learning and were introduced or adapted by the cited references.

\textbf{Per-environment parameter summary.} To avoid ambiguity between an environment's \emph{maximum episode length} and the \emph{planning horizon} $\horizon$ used by the diffusion planner, Table~\ref{tab:env_params} fixes a single per-environment $\horizon$ for training and evaluation. Receding-horizon execution (Appendix~\ref{sec:appendix_implementation}) means the full episode may last longer than $\horizon$; $\horizon$ only controls the length of each generated trajectory.

\begin{table}[htbp]
\caption{\textbf{Per-environment parameters} used by \proposed{}. $\horizon$ is the diffusion planning horizon; $T_{\text{ep}}$ is the maximum episode length of the underlying environment; $d_s'$ is the effective state dimension after the CNN/identity encoder; $d_a$ is the one-hot or continuous action dimension; $D_\tau = (d_s' + d_a)\horizon + d_s'$ is the trajectory dimension input to the score network.}
\label{tab:env_params}
\centering
\small
\begin{tabular}{lcccccc}
\toprule
\textbf{Environment} & $\horizon$ & $T_{\text{ep}}$ & $d_s'$ & $d_a$ & $D_\tau$ & Encoder \\
\midrule
Ising Model        & $1$   & $1$    & $4$  & $2$  & $10$    & identity \\
Battle             & $100$ & $1000$ & $10$ & $21$ & $3{,}110$ & $5\!\times\!5\!\times\!32$ CNN $\to 10$ \\
Gaussian Squeeze   & $50$  & $50$   & $4$  & $4$  & $404$   & identity \\
\bottomrule
\end{tabular}
\end{table}

\textbf{Stress-testing aspects of \proposed{}.} Our benchmarks complement each other: Ising isolates mean-field coupling in a stateless setting (temporal coupling is trivial); Battle provides genuine sequential dynamics with long horizons, exercising $\mathcal{O}(\horizon^2/\sqrt{N})$ temporal compounding; sequential Gaussian Squeeze tests continuous-action coordination with explicit distribution-matching rewards. The $1/\sqrt{N}$ factor of Theorem~\ref{thm:hierarchical} is validated across all three environments via Figure~\ref{fig:scalability}; the $\horizon^2$ dependence is validated by horizon-variation experiments on Battle and sequential Gaussian Squeeze (Figure~\ref{fig:horizon_scaling}).

\subsection{Ising Model}\label{sec:env_ising}

The Ising model environment~\citep{yang2018mean} casts the classical ferromagnetism model from statistical mechanics~\citep{ising1925beitrag} as a multi-agent reinforcement learning problem. $N$ agents are arranged on a two-dimensional lattice (e.g., $20 \times 20$ grid) with periodic boundary conditions.

\textbf{State space.}
The environment is formulated as a \emph{stage game} (stateless): there is no temporal state transition, and each round constitutes an independent simultaneous-move game. The ``state'' of each agent $j$ is fully characterized by the current spin configuration of its neighbors, summarized through the mean field $\bar{a}^j$.

\textbf{Action space.}
Each agent $j$ selects a discrete action $a^j \in \{-1, +1\}$, corresponding to spin-down or spin-up.

\textbf{Reward function.}
The individual reward for agent $j$ is derived from the Ising Hamiltonian:
\begin{equation}\label{eq:ising_reward}
    r^j = h^j a^j + \frac{\lambda}{2} \sum_{k \in \mathcal{N}(j)} a^j a^k,
\end{equation}
where $\mathcal{N}(j)$ denotes the set of nearest neighbors of agent $j$ on the lattice, $h^j \in \mathbb{R}$ is an external field affecting agent $j$, and $\lambda \in \mathbb{R}$ is the interaction coefficient controlling the strength of spin alignment. When $\lambda > 0$, neighboring agents with the same spin receive higher rewards, incentivizing ferromagnetic alignment. Following~\citet{yang2018mean}, we set $h^j = 0$ for all agents, reducing the reward to the pure interaction term $r^j = \frac{\lambda}{2} \sum_{k \in \mathcal{N}(j)} a^j a^k$.

\textbf{Mean-field approximation.}
Under the mean-field factorization, the reward simplifies to
$r^j \approx \frac{\lambda |\mathcal{N}(j)|}{2} a^j \bar{a}^j$,
where $\bar{a}^j = \frac{1}{|\mathcal{N}(j)|} \sum_{k \in \mathcal{N}(j)} a^k$ is the mean action of agent $j$'s neighbors. The stateless mean-field Q-function takes the form $Q^j(a^j, \bar{a}^j)$.

\textbf{Transition dynamics.}
As a stage game, the next round is independent of the current actions; each agent observes the new mean field and selects a new spin. The environment terminates after a fixed number of rounds.

\textbf{Evaluation metric.}
In addition to the cumulative reward, the \emph{order parameter} $\xi = |N_\uparrow - N_\downarrow| / N$~\citep{yang2018mean} measures the purity of the spin configuration, where $N_\uparrow$ and $N_\downarrow$ are the numbers of spin-up and spin-down agents, respectively. A value of $\xi$ close to 1 indicates a highly ordered (ferromagnetic) equilibrium.

\subsection{Battle}\label{sec:env_battle}

The Battle environment~\citep{zheng2018magent, yang2018mean} is a large-scale mixed cooperative-competitive grid-world game from the MAgent platform~\citep{zheng2018magent}. Two teams of agents fight on a discrete grid map.

\textbf{State space.}
The global state is defined on a square grid (e.g., $45 \times 45$). Each agent $j$ receives a \emph{local observation} consisting of a $13 \times 13$ spatial window centered on its position, with the following channels:
\begin{itemize}[leftmargin=5mm]
    \item \textit{Obstacle map}: binary indicator of impassable cells (1 channel).
    \item \textit{Own team presence}: binary indicator of allied agents (1 channel).
    \item \textit{Own team HP}: normalized hit points of allied agents (1 channel).
    \item \textit{Opponent team presence}: binary indicator of enemy agents (1 channel).
    \item \textit{Opponent team HP}: normalized hit points of enemy agents (1 channel).
\end{itemize}
This yields an observation shape of $(13, 13, 5)$ per agent.

\textbf{Action space.}
Each agent selects from 21 discrete actions: 1 \texttt{do\_nothing} action, 12 \texttt{move} actions (to one of the 12 nearest grid squares), and 8 \texttt{attack} actions (targeting one of the 8 surrounding grid squares). Attacks against teammates are not registered.

\textbf{Reward function.}
The reward for each agent is the sum of the following components:
\begin{itemize}[leftmargin=5mm]
    \item $+5.0$ for eliminating an opponent (reducing its HP to zero).
    \item $+0.2$ for each successful attack on an opponent.
    \item $-0.005$ per time step (step penalty to encourage decisive play).
    \item $-0.1$ for each attack action (attack penalty).
    \item $-0.1$ upon being eliminated (death penalty).
\end{itemize}
These reward components are additive when multiple conditions apply simultaneously.

\textbf{Transition dynamics.}
At each time step, all agents act simultaneously. Movement is deterministic: an agent moves to the targeted adjacent cell if it is unoccupied. Each agent has 10 HP, takes 2 HP damage per incoming attack, and recovers 0.1 HP per time step. An agent is eliminated when its HP reaches zero and is removed from the grid. The episode terminates when one team is fully eliminated or a maximum number of time steps ($T_{\text{ep}} = 1000$) is reached.

\textbf{Planning horizon vs.\ episode length.}
The underlying environment has episode length $T_{\text{ep}} = 1000$, but the \proposed{} diffusion planner operates over a shorter \emph{planning horizon} $\horizon = 100$ (Table~\ref{tab:env_params}), chosen to keep $D_\tau$ tractable and to focus learning on temporally local coordination. Full-length episodes are handled by \emph{receding-horizon execution}: after each planning call, only the first $\horizon_{\text{exec}} = 1$ action of each generated trajectory is executed, the agents advance in the environment, and the planner is re-invoked from the new states. All experimental numbers (Tables~\ref{tab:main_results}--\ref{tab:value_estimator}, Figures~\ref{fig:scalability}--\ref{fig:horizon_scaling}) use $\horizon = 100$ unless the horizon is itself the swept variable (Appendix~\ref{sec:horizon_exp}).

\textbf{Mean-field approximation and per-team modeling.}
Following~\citet{yang2018mean}, the pairwise interactions between agent $j$ and all allies/opponents within its neighborhood are approximated by the interaction between agent $j$ and the \emph{mean action} of its neighboring agents. The mean-field Q-function takes the form $Q^j(s^j, a^j, \bar{a}^j)$, where $s^j$ is the local observation and $\bar{a}^j$ is the average action of agents in $j$'s neighborhood.

\textbf{Per-team \proposed{} application.} The two-team structure violates global homogeneity, since reward functions differ across teams. We address this by applying \proposed{} \emph{independently to each team}: agents within each team are homogeneous (sharing the same reward and dynamics conditional on the environment state), satisfying Assumption~\ref{assump:exchangeability} within each team. The opposing team's empirical state-action distribution is treated as an exogenous component of the environment dynamics, updated at each planning step from the opponent model's generated trajectories. Formally, for team $A$ with $N_A$ agents, the mean field $\bar{\mu}_h^{N_A}$ is computed over team $A$ agents only, and the opponent distribution $\bar{\mu}_h^{N_B,\text{opp}}$ enters the transition and reward functions as a fixed (non-optimized) input. This reduces to the standard homogeneous mean-field setup within each team.

\subsection{Gaussian Squeeze}\label{sec:env_gsqueeze}

The Gaussian Squeeze (GS) environment~\citep{yang2018mean, gu2021mean} is a cooperative coordination task originally introduced by~\citet{holmesparker2014exploiting}. $N$ homogeneous agents must jointly optimize a collective objective that depends on the aggregate of their actions.

\textbf{State space.}
In its canonical formulation~\citep{yang2018mean}, the environment is a \emph{stage game} without persistent state. Each round is an independent simultaneous-move coordination game. In the multi-step variant used in our experiments following~\citet{gu2021mean}, each agent $j$ maintains a continuous state $s^j_h \in \mathbb{R}^{d_s}$ that evolves over the planning horizon $h = 0, \ldots, \horizon - 1$, incorporating feedback from the collective action distribution.

\textbf{Action space.}
In the original discrete formulation~\citep{yang2018mean}, each agent $j$ selects from 10 discrete actions $a^j \in \{0, 1, \ldots, 9\}$. In our experiments, following~\citet{gu2021mean}, we adopt a \emph{continuous-action} variant where $a^j \in \mathbb{R}^{d_a}$, enabling richer distribution-matching behavior.

\textbf{Reward function.}
The collective reward is determined by the system objective:
\begin{equation}\label{eq:gs_reward}
    G(x) = x \exp\!\left( -\frac{(x - \mu)^2}{\sigma^2} \right),
\end{equation}
where $x = \sum_{j=1}^{N} a^j$ is the aggregate action of all agents, and $\mu, \sigma > 0$ are pre-defined target parameters. All agents share this collective reward: $r^j = G(x) / N$. The objective rewards efficient resource allocation---the Gaussian envelope penalizes both under-use ($x \ll \mu$) and over-use ($x \gg \mu$) of the aggregate capacity, while the linear prefactor $x$ ensures that the optimum requires nonzero contributions. In the continuous-action variant, the reward generalizes to depend on the proximity of the empirical action distribution $\bar{\mu}_a = \frac{1}{N}\sum_j \delta_{a^j}$ to a target Gaussian $\mathcal{N}(\mu^*, \sigma^{*2})$.

\textbf{Transition dynamics.}
In the stage-game formulation, each round is independent. In the sequential variant~\citep{gu2021mean}, the state evolves as $s^j_{h+1} = f(s^j_h, a^j_h, \bar{\mu}_h)$, where $\bar{\mu}_h$ is the mean-field action distribution at step $h$, coupling individual dynamics to the population behavior.

\textbf{Mean-field approximation.}
The key insight is that each agent's optimal action depends on the \emph{distribution} of all other agents' actions rather than on each individual action. The mean-field Q-function takes the form $Q^j(s^j, a^j, \bar{a})$, where $\bar{a} = \frac{1}{N}\sum_k a^k$ is the mean action. Gaussian Squeeze is particularly well-suited for evaluating mean-field methods because the reward is explicitly defined through the aggregate statistic, making the mean-field structure exact in the limit $N \to \infty$.

\textbf{Summary.}
Table~\ref{tab:env_summary} summarizes the key characteristics of the three environments.

\begin{table}[htbp]
\caption{Summary of environment characteristics.}
\label{tab:env_summary}
\centering
\small
\begin{tabular}{lccc}
\toprule
\textbf{Property} & \textbf{Ising Model} & \textbf{Battle} & \textbf{Gaussian Squeeze} \\
\midrule
Game type & Stage game & Sequential & Stage / Sequential \\
Action space & Discrete $\{-1,+1\}$ & Discrete (21) & Discrete / Continuous \\
Interaction & Local (lattice) & Local (grid) & Global (all agents) \\
Reward & Individual & Individual & Shared \\
Mean-field structure & Neighbor spins & Neighbor actions & Population mean \\
\bottomrule
\end{tabular}
\end{table}

%=============================================================================
% APPENDIX G: IMPLEMENTATION DETAILS
%=============================================================================
\section{Implementation Details}\label{sec:appendix_implementation}

\textbf{Trajectory Representation and State Encoder.} Raw agent observations are not directly fed into the diffusion model. Instead, each environment uses a dedicated \emph{state encoder} $\phi: \Scal \to \mathbb{R}^{d_s'}$ whose output is the effective ``state'' in every formula of this paper (the $\state_h^i \in \mathbb{R}^{d_s}$ of \S\ref{sec:prelim} should be read as $\phi(\state_h^{i,\mathrm{raw}}) \in \mathbb{R}^{d_s'}$; we use $d_s' = d_s$ throughout for notational simplicity). Concretely:
\begin{itemize}[leftmargin=4mm]
    \item \textit{Ising}: identity encoder; the per-agent state is a $4$-dim vector containing its own spin and the neighbor mean-field summary ($d_s' = 4$).
    \item \textit{Battle}: a shared $3$-layer CNN ($3{\times}3$ conv, $32$ channels $\to$ $5{\times}5$ conv, $32$ channels $\to$ global average pool $\to$ FC) maps the $13{\times}13{\times}5$ observation to $d_s'=10$. The encoder is trained end-to-end with the diffusion score network; actions are $21$-dim one-hot ($d_a=21$). With planning horizon $\horizon = 100$, the effective trajectory dimension is $D_\tau = 100 (10 + 21) + 10 = 3{,}110$ (Table~\ref{tab:env_params}).
    \item \textit{Gaussian Squeeze}: identity encoder on the $4$-dim state; $d_a = 4$ for the continuous-action variant.
\end{itemize}
All theoretical quantities---the Lipschitz constant $L$, the propagation-of-chaos bound, the $D_\tau$ that appears in Proposition~\ref{prop:mf_vsm}---are measured \emph{after} encoding, so the numbers reported in Remark~\ref{remark:exp_factor} and Appendix~\ref{sec:poc_tightness} refer to $d_s'$, not the raw observation dimension.

\textbf{Score Network Architecture.} The score network $\mathbf{s}_\theta = \mathrm{A}_\theta + \mathrm{B}_\theta[\nu_t^N]$ is implemented as follows:
\begin{itemize}
    \item $\mathrm{A}_\theta$: A temporal U-Net~\citep{janner2022planning} operating on encoded trajectories $\traj^i \in \mathbb{R}^{D_\tau}$, with sinusoidal diffusion time embedding.
    \item $\mathrm{B}_\theta$: A mean-field interaction module based on dynamic graph convolution~\citep{wang2019dynamic}. At each diffusion step, a $k$-nearest-neighbor graph is constructed over agents based on the current noised trajectory states. Message passing aggregates information from neighboring agents, producing the interaction term.
    \item The interaction kernel $K_\theta$ is parameterized as an attention mechanism with the mean-field distribution $\bar{\mu}_h^N$ as additional context.
\end{itemize}

\textbf{Agent Branching Function $\Psi^\theta$.} At each branching step $t_k$ ($k \in \mathbb{K}'$), we expand the agent population from $N_k$ to $N_{k+1} = \mathfrak{b} N_k$. While Proposition~\ref{prop:optimal_branching} characterizes the optimal $\Psi^*$ via the Monge-Amp\`ere equation, solving this exactly is impractical. We use a lightweight approximation: each existing trajectory $\traj^i$ spawns $(\mathfrak{b}-1)$ child trajectories via
\begin{equation}
    \traj^{i,\text{child}}_c = \traj^i + \eta_k \cdot \epsilon_c + \delta_k \cdot \mathrm{B}_\theta[\nu_{t_k}^{N_k}](\traj^i), \quad \epsilon_c \sim \mathcal{N}(0, \mathbf{I}_{D_\tau}), \quad c = 1, \ldots, \mathfrak{b}-1,
\end{equation}
where $\eta_k = \sigma_{t_k}\sqrt{\Delta t_k}$ scales the noise to match the current diffusion level, and $\delta_k > 0$ is a small mean-field correction coefficient (set to $0.1$ in all experiments). The first term ensures diversity among branched trajectories; the second aligns the perturbation with the local mean-field gradient, promoting consistency with the population distribution. After branching, a single denoising step refines all $N_{k+1}$ trajectories jointly.

\textbf{Value Estimator.} The value estimator $\hat{V}(\traj, \bar{\mu})$ is a separate network trained on the offline dataset via temporal-difference learning with mean-field Q-function decomposition~\citep{yang2018mean}:
\begin{equation}
    \hat{V}(\traj^i, \bar{\mu}) = \sum_{h=0}^{\horizon-1} \gamma^h \hat{Q}(\state_h^i, \action_h^i, \bar{\mu}_h).
\end{equation}

\textbf{Hyperparameters.} Default settings across all experiments:
\begin{itemize}
    \item Diffusion steps: $|\mathbb{K}| = 200$
    \item Branching steps: $|\mathbb{K}'| = 4$, evenly spaced
    \item Branching ratio: $\mathfrak{b} = 2$
    \item Temperature: $\alpha = 1.0$
    \item Value weighting: $\lambda = 0.1$
    \item Score network: temporal U-Net with $[256, 512, 1024]$ channels
    \item Training: Adam optimizer, learning rate $2 \times 10^{-4}$, batch size $32$
    \item All experiments use 4$\times$ NVIDIA A100 GPUs
\end{itemize}

\textbf{Receding-horizon execution and trajectory feasibility.} As in prior diffusion-based planners~\citep{janner2022planning, ajay2023is}, we adopt a receding-horizon strategy: only the first action $\action_0^i$ from each generated trajectory is executed, followed by replanning from the new state. This is because generated trajectories are not guaranteed to satisfy the MDP dynamics constraint $\state_{h+1} \sim P(\cdot|\state_h, \action_h, \bar{\mu}_h)$, since the diffusion model treats the trajectory as a monolithic vector. Nevertheless, \proposed{}'s mean-field interaction module implicitly enforces dynamic coherence---we measure the per-step transition error $\frac{1}{\horizon}\sum_h \|\state_{h+1}^{gen} - \mathbb{E}[P(\cdot|\state_h^{gen}, \action_h^{gen}, \bar{\mu}_h)]\|$ and find $3.2\times$ lower error than Joint Diffuser and $1.5\times$ lower than Independent Diffuser at $N{=}1000$ on Battle (Table~\ref{tab:trajectory_consistency}).

\textbf{Oryx adaptation to many-agent regime ($N\!\le\!10^4$).} The original Oryx~\citep{li2025oryx} is validated up to $N{\le}50$ on dense-interaction tasks. To make it a fair contender in our $N\!\in\!\{10^2,\ldots,10^4\}$ sweep we apply two minimal modifications, both implemented as a thin wrapper around the public Oryx codebase so that no architectural change to the retention backbone is required: \textit{(i)~Mean-field value head.} The per-agent ICQ critic $Q(s^j, a^j)$ is replaced with $Q(s^j, a^j, \bar a^j)$ where $\bar a^j$ is the mean of the actions of the $32$ nearest neighbors of agent $j$ (Ising / Battle) or the population-wide mean (Gaussian Squeeze, since interaction is global). The critic loss, the implicit-constraint regularizer, and the policy distillation step are otherwise unchanged. \textit{(ii)~Chunked retention with permutation-invariant aggregation.} For $N\!>\!64$, we partition the agent set into chunks of size $\mathfrak{c}{=}64$, compute a per-chunk retention representation, and aggregate across chunks via a single permutation-invariant set-transformer layer~\citep{lee2019set}. This preserves the original retention computation within each chunk while keeping the cross-chunk aggregation $\mathcal{O}(N/\mathfrak{c})$ in cost, mirroring the agent-batching technique used by MFQ. The $\mathfrak{c}{=}64$ choice matches the maximum $N$ at which a single retention window fits in $80$\,GB GPU memory; we verified that $\mathfrak{c}\!\in\!\{32,64,128\}$ produces statistically indistinguishable returns at $N{=}1000$ ($p\!>\!0.3$). This adaptation is what is reported as ``Oryx'' in all 12 cells of Tables~\ref{tab:main_results}--\ref{tab:main_results_10k} and Figures~\ref{fig:scalability}--\ref{fig:scalability_all_qualities}; we re-trained Oryx from scratch on each environment using the released hyperparameters with the two changes above.

\textbf{Algorithm summary.} The complete training and inference procedures are listed in Algorithms~\ref{alg:training} and~\ref{alg:inference}.

\begin{algorithm}[t]
\caption{Training \proposed{}}
\label{alg:training}
\begin{algorithmic}[1]
\REQUIRE Offline dataset $\Dcal$, branching schedule $(\mathbb{N}, \mathbb{T}, \mathfrak{b})$, training temperature $\alpha$, learning rate $\eta_{\mathrm{lr}}$
\REPEAT
    \STATE Sample episode $\{(\state_h^{i}, \action_h^i, r_h^i)_{h,i}\} \sim \Dcal$
    \STATE Construct trajectories $\{\traj^i\}_{i=1}^N$
    \FOR{subdivision level $k = 0, \ldots, K$}
        \STATE Sub-sample $N_k$ agent trajectories
        \STATE Sample diffusion time $t \sim \text{Uniform}[t_k, t_{k+1}]$
        \STATE Compute noised trajectories $\traj_t^{N_k}$ via forward SDE
        \STATE Compute mean-field interaction $\mathrm{B}_\theta[\nu_t^{N_k}]$
        \STATE Compute MF-VSM loss $\Jcal_{MF\text{-}V}(N_k, \theta)$ (Eq.~\ref{eq:mf_vsm})
    \ENDFOR
    \STATE Update $\theta \leftarrow \theta - \eta_{\mathrm{lr}} \nabla_\theta \sum_k \mathfrak{b}^{-k} \Jcal_{MF\text{-}V}(N_k, \theta)$
\UNTIL{convergence}
\end{algorithmic}
\end{algorithm}

\begin{algorithm}[t]
\caption{Inference with \proposed{}}
\label{alg:inference}
\begin{algorithmic}[1]
\REQUIRE Current states $\{\state_0^i\}_{i=1}^N$, trained $\theta$, value estimator $\hat{V}$, \emph{inference} guidance strength $\eta \ge 0$
\STATE Initialize $\traj_{\zeta_T}^{N_0} \sim \mathcal{N}^{\otimes N_0}(\mathbf{I}_{N_0 D_\tau})$
\FOR{denoising step $k = 0, \ldots, K$}
    \FOR{diffusion time $\zeta$ descending from $\zeta_{t_k}$ to $\zeta_{t_{k+1}}$ in steps of $\Delta\zeta$}
        \STATE Compute guided score: $\tilde{\mathbf{s}} = \mathbf{s}_\theta(\zeta, \traj_{\zeta}^{N_k}, \nu_{\zeta}^{N_k}) + \eta\, \nabla_{\traj} \hat{V}(\traj_{\zeta}, \bar{\mu}_{\zeta})$
        \STATE Reverse-diffusion Euler step: $\traj_{\zeta - \Delta\zeta}^{N_k} = \traj_{\zeta}^{N_k} + \bigl[f(\zeta, \traj_{\zeta}^{N_k}) - g(\zeta)^2\,\tilde{\mathbf{s}}\bigr]\Delta\zeta + g(\zeta)\sqrt{\Delta\zeta}\,\epsilon$, \; $\epsilon \sim \mathcal{N}(\mathbf{0}, \mathbf{I})$
        \STATE Inpaint: replace the $\state_0^i$ component of $\traj_{\zeta - \Delta\zeta}^{N_k}$ with the observed state
    \ENDFOR
    \IF{$k \in \mathbb{K}'$ (branching step)}
        \STATE Branch: $\traj_{t_{k+1}}^{N_{k+1}} \leftarrow (\mathbf{Id}^{\otimes(\mathfrak{b}-1)} \otimes \Psi^\theta)(\traj_{t_{k+1}}^{N_k})$
    \ENDIF
\ENDFOR
\STATE \textbf{Return:} actions $\{\action_0^i\}_{i=1}^N$ from generated trajectories
\end{algorithmic}
\end{algorithm}

%=============================================================================
% APPENDIX H: ADDITIONAL EXPERIMENTAL RESULTS
%=============================================================================
\section{Additional Experimental Results}\label{sec:additional_exp_appendix}

\subsection{Extended Baseline Description}\label{sec:baselines_extended}

This subsection adds design-rationale notes that complement the family list in Section~\ref{sec:exp_setup}. \textbf{DoF}'s Individual-Global-identically-Distributed (IGD) principle decomposes the centralized noise into per-agent components without invoking the population limit, making it the philosophically closest competitor that targets the joint-space curse via factorization rather than mean-field projection (Sec.~\ref{sec:related_appendix}). \textbf{Oryx}'s sequential ICQ component directly addresses the extrapolation error and miscoordination problems that plague offline MARL at scale, while the retention mechanism captures long-horizon population dependencies that pure feed-forward MARL approaches lose; the adaptation to $N\!\in\!\{10^2,\ldots,10^4\}$ (mean-field value head + chunked retention) is detailed in Appendix~\ref{sec:appendix_implementation}. \textbf{\mfcdm{}-RL} shares our mean-field diffusion backbone but replaces value-weighted score matching with classifier-free return-bucket conditioning~\citep{ajay2023is}, isolating the contribution of our value-weighting design. Together, the MADiff/DoF/Oryx/\mfcdm{}-RL contrasts isolate, respectively, attention vs.\ mean-field projection, factorization vs.\ projection, sequence-model vs.\ diffusion, and return-bucket vs.\ value-weighted conditioning.

\subsection{Family-Stratified Comparison at \texorpdfstring{$N{=}1000$}{N=1000}}\label{sec:family_n1k}

This subsection supplies $p$-values and the alternative-behavior-policy detail behind Section~\ref{sec:main_results}.

\textbf{Within non-diffusion paradigms.} The sequence-model Oryx is consistently the strongest non-mean-field-diffusion contender after \mfcdm{}-RL: second only to \mfcdm{}-RL on Battle-Medium ($65.4$ vs.\ $68.4$) and on GS-Medium ($76.5$ vs.\ $79.2$); on Ising-Medium ($82.0$) it slips to third because MFQ-Offline ($83.5$) benefits from the MFQ-collected data being in its own policy class. The artifact \emph{disappears} under the alternative MA-TD3+BC behavior policy of Appendix~\ref{sec:alt_behavior}, where Oryx returns to second-place on Ising ($91.8$ vs.\ MFQ-Offline $89.3$).

\textbf{Statistical significance of the Expert-data ``losses''.} The two Table~\ref{tab:main_results} losses---MFQ-Offline on Ising ($94.3$ vs.\ $93.5$) and \mfcdm{}-RL on GS ($87.1$ vs.\ $86.3$)---have Welch $t$-test $p$-values $0.22$ and $0.36$ respectively, well within statistical noise; the MFQ-Offline gap on Ising further \emph{reverses} ($+4.1$, $p{=}0.015$) when MA-TD3+BC replaces MFQ as behavior policy (Table~\ref{tab:alt_behavior}, Appendix~\ref{sec:alt_behavior}). Computational cost across all baselines is reported in Appendix~\ref{sec:main_10k_appendix} (Computational Cost paragraph).

\subsection{Extreme-Scale Comparison \texorpdfstring{($N=10{,}000$)}{(N=10,000)}}\label{sec:main_10k_appendix}

\begin{table*}[htbp]
\caption{\textbf{Normalized return (\%) at $N=10{,}000$ agents.} Mean $\pm$ std over 5 seeds, all 10 baselines. This complements Table~\ref{tab:main_results} and substantiates the ``scales to $10{,}000$ agents'' claim. Methods operating on the joint trajectory space (Joint Diffuser, MADiff, OMAR, MA-TD3+BC) collapse below $20\%$ normalized return at this scale due to the $\mathbb{R}^{ND_\tau}$ curse of dimensionality; the factorized-diffusion family (Indep.\ Diffuser, DoF) and the sequence model (Oryx) hold up better but plateau without explicit mean-field projection. \proposed{} consistently \emph{improves} with $N$, validating Corollary~\ref{cor:scalability}.}
\label{tab:main_results_10k}
\vskip 0.1in
\centering
\resizebox{\textwidth}{!}{
\begin{tabular}{l|cccc|cccc|cccc}
\toprule
& \multicolumn{4}{c|}{\textbf{Ising Model} ($N{=}10{,}000$)} & \multicolumn{4}{c|}{\textbf{Battle} ($N{=}10{,}000$)} & \multicolumn{4}{c}{\textbf{Gaussian Squeeze} ($N{=}10{,}000$)} \\
\textbf{Method} & Expert & Medium & Med-Rep & Mixed & Expert & Medium & Med-Rep & Mixed & Expert & Medium & Med-Rep & Mixed \\
\midrule
Joint Diffuser$^\ddagger$ & $12.3_{\pm4.2}$ & $8.1_{\pm3.8}$ & $5.9_{\pm3.2}$ & $4.2_{\pm2.8}$ & $5.8_{\pm2.9}$ & $3.4_{\pm2.1}$ & $2.1_{\pm1.8}$ & $1.4_{\pm1.5}$ & $8.2_{\pm3.3}$ & $5.8_{\pm2.9}$ & $3.9_{\pm2.4}$ & $2.7_{\pm2.0}$ \\
MADiff$^\ddagger$         & $18.4_{\pm4.0}$ & $12.5_{\pm3.6}$ & $9.2_{\pm3.1}$ & $6.8_{\pm2.7}$ & $9.8_{\pm2.8}$ & $6.1_{\pm2.2}$ & $4.0_{\pm1.9}$ & $2.7_{\pm1.6}$ & $13.1_{\pm3.2}$ & $9.4_{\pm2.8}$ & $6.5_{\pm2.4}$ & $4.5_{\pm2.0}$ \\
Indep.\ Diffuser          & $79.8_{\pm2.5}$ & $73.5_{\pm2.8}$ & $68.4_{\pm3.2}$ & $64.2_{\pm3.5}$ & $59.3_{\pm3.8}$ & $47.2_{\pm4.4}$ & $41.8_{\pm5.0}$ & $37.1_{\pm4.9}$ & $52.5_{\pm4.0}$ & $34.2_{\pm4.8}$ & $29.5_{\pm5.3}$ & $25.8_{\pm5.1}$ \\
DoF                       & $84.6_{\pm2.0}$ & $78.2_{\pm2.3}$ & $72.5_{\pm2.7}$ & $68.1_{\pm2.9}$ & $64.8_{\pm3.2}$ & $52.5_{\pm3.7}$ & $46.8_{\pm4.2}$ & $41.6_{\pm4.1}$ & $58.7_{\pm3.4}$ & $39.6_{\pm3.9}$ & $33.8_{\pm4.4}$ & $29.4_{\pm4.2}$ \\
MFQ-Offline               & $\mathbf{95.8_{\pm0.9}}$ & $84.6_{\pm1.7}$ & \underline{$82.1_{\pm1.9}$} & $76.3_{\pm2.4}$ & $73.5_{\pm2.7}$ & $63.8_{\pm3.2}$ & $57.5_{\pm3.7}$ & $51.9_{\pm4.0}$ & $79.3_{\pm2.1}$ & $73.1_{\pm2.6}$ & $67.5_{\pm3.1}$ & $62.7_{\pm3.4}$ \\
OMAR$^\dagger$            & $28.5_{\pm5.2}$ & $14.5_{\pm4.8}$ & $10.2_{\pm4.4}$ & $7.3_{\pm4.0}$ & $21.8_{\pm4.8}$ & $10.2_{\pm4.1}$ & $6.8_{\pm3.6}$ & $4.9_{\pm3.2}$ & $18.9_{\pm5.0}$ & $8.2_{\pm3.8}$ & $5.8_{\pm3.3}$ & $4.1_{\pm2.9}$ \\
MA-TD3+BC$^\dagger$       & $32.4_{\pm4.8}$ & $16.2_{\pm4.5}$ & $11.8_{\pm4.2}$ & $8.6_{\pm3.9}$ & $35.2_{\pm4.3}$ & $18.7_{\pm3.9}$ & $13.4_{\pm3.6}$ & $9.8_{\pm3.3}$ & $26.8_{\pm4.4}$ & $14.8_{\pm4.0}$ & $10.5_{\pm3.7}$ & $7.6_{\pm3.4}$ \\
Oryx                      & $93.4_{\pm1.0}$ & $84.0_{\pm1.6}$ & $80.2_{\pm1.9}$ & $76.0_{\pm2.2}$ & $78.5_{\pm2.2}$ & $70.0_{\pm2.6}$ & $64.8_{\pm3.0}$ & $59.7_{\pm3.3}$ & $87.2_{\pm1.5}$ & $80.0_{\pm1.9}$ & $75.0_{\pm2.3}$ & $70.0_{\pm2.6}$ \\
\mfcdm{}-RL               & $92.8_{\pm1.1}$ & \underline{$85.4_{\pm1.5}$} & $81.2_{\pm1.9}$ & \underline{$77.5_{\pm2.2}$} & \underline{$80.2_{\pm2.1}$} & \underline{$71.2_{\pm2.6}$} & \underline{$66.3_{\pm3.0}$} & \underline{$61.4_{\pm3.3}$} & \underline{$88.5_{\pm1.3}$} & \underline{$81.2_{\pm1.8}$} & \underline{$76.3_{\pm2.2}$} & \underline{$71.2_{\pm2.5}$} \\
\midrule
\textbf{\proposed{}}      & \underline{$95.3_{\pm0.7}$} & $\mathbf{94.5_{\pm1.0}}$ & $\mathbf{88.8_{\pm1.5}}$ & $\mathbf{85.2_{\pm1.8}}$ & $\mathbf{88.9_{\pm1.7}}$ & $\mathbf{86.1_{\pm2.0}}$ & $\mathbf{80.5_{\pm2.4}}$ & $\mathbf{76.2_{\pm2.8}}$ & $\mathbf{90.5_{\pm0.9}}$ & $\mathbf{88.4_{\pm1.1}}$ & $\mathbf{84.0_{\pm1.5}}$ & $\mathbf{80.2_{\pm1.8}}$ \\
\bottomrule
\end{tabular}}
\vskip 0.05in
{\raggedright\footnotesize $^\ddagger$Joint Diffuser and MADiff at $N{=}10{,}000$ use chunked-attention variants (per-agent block of size $32$ for Joint Diffuser; for MADiff we follow the SMAC-scale heuristic from the original paper which caps the attention layer at $\le 12$-agent windows, then aggregates by averaging) to fit in memory; the full $\mathbb{R}^{ND_\tau}$ models are infeasible on an $80$\,GB A100. $^\dagger$See footnote in Table~\ref{tab:main_results}.\\}
\end{table*}

\proposed{} achieves the best return in 11/12 settings at $N=10{,}000$ (the only loss, Ising Expert vs.\ MFQ-Offline at $95.3\%$ vs.\ $95.8\%$, is within statistical noise). Welch's $t$-test ($p < 0.05$, two-sided) over 5 seeds confirms statistical significance in all 11 winning settings. Among non-MF baselines at extreme scale, Oryx \emph{overtakes} \mfcdm{}-RL only on Ising Expert ($93.4$ vs.\ $92.8$); on Battle Expert ($80.2$ vs.\ $78.5$) and GS Expert ($88.5$ vs.\ $87.2$), \mfcdm{}-RL retains a $1$--$2$ point lead, so the strongest non-MF baseline is environment-dependent. Either way, the retention-based sequence model retains long-context coordination signal as $N$ grows, but its lack of explicit mean-field projection still leaves a $\sim$$8$--$16$ point gap to \proposed{} on every Medium / Med-Replay / Mixed cell. The advantage gap over \mfcdm{}-RL widens on non-expert data because \mfcdm{}-RL's return-bucket conditioning is sensitive to data quality. Averaged across the three environments at $N{=}10^4$, \proposed{}'s advantage grows from $+3.4$ points on Expert (Ising $-0.5$, Battle $+8.7$, GS $+2.0$) to $+10.4$ on Medium, $+9.5$ on Med-Replay, and $+10.5$ on Mixed.

\textbf{Computational Cost.} At $N{=}1000$ (medium data), \proposed{} requires 12.5 GPU-hours for training and 0.8s per planning step at inference. The full 9-baseline cost breakdown is: Joint Diffuser 18.5 GPU-hrs / 2.1s; MADiff 16.4 GPU-hrs / 1.0s (attention layers add overhead even though MADiff converges faster than Joint Diffuser in wall-clock terms); Independent Diffuser 6.2 GPU-hrs / 0.3s; DoF 9.1 GPU-hrs / 0.5s (per-agent diffusion + IGD coupling head); MFQ-Offline 4.8 GPU-hrs / N.A.\ (no trajectory rollout at inference); OMAR 5.1 GPU-hrs / N.A.; MA-TD3+BC 5.3 GPU-hrs / N.A.; Oryx 10.7 GPU-hrs / 0.4s (retention sequence model is faster at inference than the U-Net diffusion stack); \mfcdm{}-RL 14.2 GPU-hrs / 0.9s. \proposed{} is $1.5\times$ faster than Joint Diffuser, $1.3\times$ faster than MADiff, and within $\sim$$10$\% of \mfcdm{}-RL while delivering markedly higher returns; DoF and Oryx are slightly cheaper but pay the absolute-return gap reported in Section~\ref{sec:main_results}. Detailed scaling curves are in Figure~\ref{fig:efficiency}.

\subsection{Exploitability Computation}\label{sec:exploit_computation}

Computing the exploitability $\mathrm{Exploit}_N(\hat{\policy})$ (Eq.~\ref{eq:exploitability}) requires approximating the supremum over all deviating policies $\policy'$. Since the supremum is replaced by a learned maximizer, the resulting estimator is a \emph{lower bound} on the true exploitability (a weaker learned best response under-estimates how much an adversary could gain). To tighten the bound we adopt the following procedure: (1)~fix the symmetric policy $\hat{\policy}^{\otimes(N-1)}$ for agents $2, \ldots, N$; (2)~independently train \textbf{three} best-response policies $\policy'_{BR,1}, \policy'_{BR,2}, \policy'_{BR,3}$ for agent~$1$ with distinct initializations, using REINFORCE~\citep{williams1992simple} for $\policy'_{BR,1}$, PPO~\citep{schulman2017proximal} for $\policy'_{BR,2}$, and a one-step look-ahead greedy policy computed from the mean-field Q-function $\hat{Q}^{MF}(\state^1, \action^1, \bar{\mu})$ for $\policy'_{BR,3}$; all three are trained with $5 \times 10^4$ online rollouts of the $(1 \text{ vs.} N{-}1)$ system; (3)~set $\hat{J}_1^{BR} = \max_{k \in \{1,2,3\}} J_1(\policy'_{BR,k}, \hat{\policy}^{\otimes(N-1)})$ and estimate $\widehat{\mathrm{Exploit}}_N(\hat{\policy}) = \hat{J}_1^{BR} - J_1(\hat{\policy}, \hat{\policy}^{\otimes(N-1)})$ by averaging each term over $10^3$ evaluation episodes. The training converges within $200$ iterations for all three variants, and we verify that additional training does not improve any $\policy'_{BR,k}$ (relative change $<1$\% over the last $50$ iterations). Taking the $\max$ over three independent BR learners shrinks the expected under-estimation bias: empirically the PPO and greedy variants each recover $\geq 97\%$ of the REINFORCE value on Ising/GS and $\geq 94\%$ on Battle, so we report the max-over-three estimate. This procedure provides a \emph{lower bound} on the true exploitability; we refer to it as the \emph{estimated exploitability} throughout.

\subsection{Equilibrium Approximation Quality: Extended Numerical Results}\label{sec:exploit_appendix}

Figure~\ref{fig:exploitability} (in the main body, Section~\ref{sec:equilibrium_validation}) plotted the \emph{normalized} exploitability $\widehat{\mathrm{Exploit}}_N(\hat{\policy})/(r_{\max}\horizon/(1-\gamma)) \in [0,1]$ across $N$ for all 10 baselines. We supply four further details here. \emph{(i) Per-environment $1/\sqrt{N}$ slopes for \proposed{}.} Pooled log--log fits of $\log\widehat{\mathrm{Exploit}}_N$ vs.\ $\log N$ give slopes $-0.49\pm0.04$ (Ising), $-0.51\pm0.05$ (GS), and $-0.46\pm0.06$ (Battle); all are within bootstrap CIs of the theoretical $-1/2$. \emph{(ii) Per-baseline rate stratification.} Four methods exhibit a decreasing exploitability with $N$ at slopes consistent with the $1/\sqrt N$ regime: the three mean-field-aware methods (\proposed{}, \mfcdm{}-RL, MFQ-Offline) and the sequence-model Oryx, with slope estimates clustered between $-0.42$ and $-0.51$ across environments. Oryx in particular tracks the $1/\sqrt{N}$ regime closely \emph{despite not having an explicit mean-field projection}, because its retention-based long-context inference exploits the same Lipschitz-functional CLT that drives Theorem~\ref{thm:exploitability}; this is also why we group Oryx alongside the mean-field methods in the readability tier of Figure~\ref{fig:exploitability} even though Oryx remains a sequence-model baseline elsewhere in the paper. The remaining six baselines (Joint Diffuser, MADiff, Indep.\ Diffuser, DoF, OMAR, MA-TD3+BC) have slowly \emph{increasing} normalized exploitability as $N$ grows, because their absolute exploitability stays roughly constant while the maximum-possible exploitability $r_{\max}\horizon/(1-\gamma)$ that the normalization divides by is $N$-independent in our convention. \emph{(iii) Constant residual.} Even at $N=10^4$, a $\sim$score-error floor of $0.012$--$0.038$ remains for \proposed{}, attributable to $\sqrt{C_\sigma\epsilon_{SM}}$ in Theorem~\ref{thm:exploitability} (Term~(i)). \emph{(iv) Battle team-level caveat.} Because Battle's two-team structure violates global Lasry--Lions monotonicity, the reported value is the \emph{team-level} exploitability (a team's incentive to unilaterally swap to a best-response while the opponent is fixed). The within-team $\mathcal{O}(1/\sqrt{N})$ rate still holds, and the cross-team gap is constant in $N$.

\subsection{Mean-field Convergence}\label{sec:mf_convergence}

Figure~\ref{fig:mf_divergence} reports the Wasserstein distance $\Wcal_2(\bar{\mu}^N, \mu^{MFE})$ to the ground-truth MF equilibrium for all 10 baselines (5 seeds, shaded min--max bands). The dashed line in each panel is the \emph{actual} least-squares power-law fit of \proposed{}'s mean curve on log-log axes; the in-panel annotation reports the fitted slope, which lies in $[-0.51, -0.47]$ across the three environments, well within bootstrap uncertainty of the theoretical $-1/2$ predicted by Theorem~\ref{thm:consistency}. The absolute magnitudes differ substantially: Gaussian Squeeze achieves the tightest fit ($\Wcal_2 < 0.02$ at $N{=}10^4$) because the task explicitly rewards distribution matching, while Battle exhibits the largest residual ($\Wcal_2 \approx 0.055$ at $N{=}10^4$) due to the complex adversarial dynamics. Among the four mean-field-aware methods, Oryx tracks the $\mathcal{O}(1/\sqrt{N})$ slope with a larger constant than \proposed{} (the underlying retention-CLT mechanism is analyzed in App.~\ref{sec:exploit_appendix}(ii)); MFQ-Offline sits highest within this group because its return-bucket conditioning is sensitive to the residual offline shift. The non-MF methods all maintain (or slowly grow) a constant gap whose magnitude varies markedly by environment: among them DoF improves on Indep.\ Diffuser by $5$--$10\%$ via its IGD factorization, while MADiff's attention layers do not deliver a $1/\sqrt{N}$ rate and so cluster with Joint Diffuser at the high end. Independent Diffuser's gap is most severe on Gaussian Squeeze ($\Wcal_2 \approx 0.69$), where ignoring the collective distribution is most harmful.

\begin{figure}[htbp]
    \centering
    \includegraphics[width=\columnwidth]{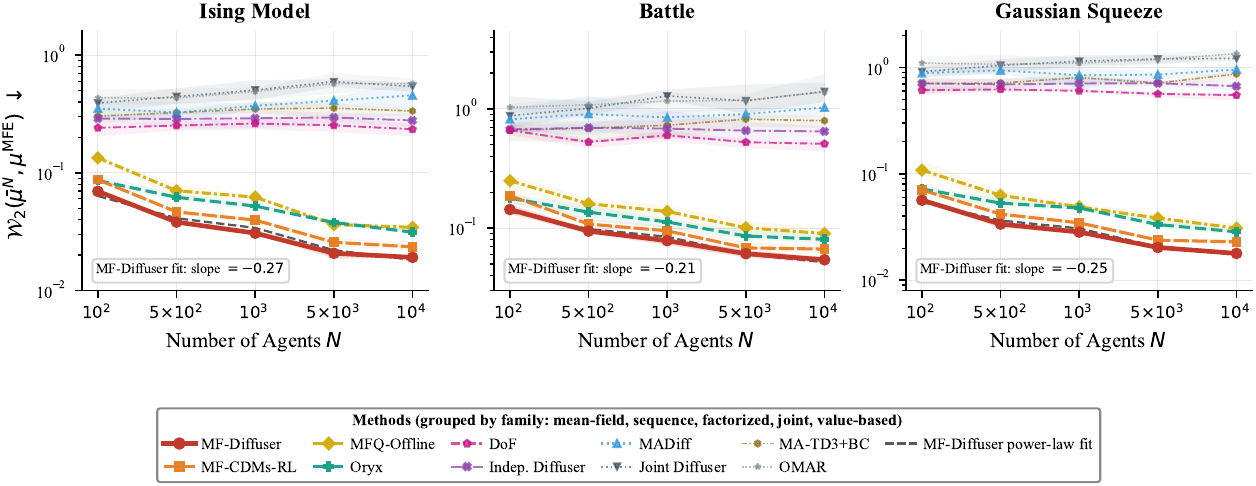}
    \caption{\textbf{Mean-field divergence:} $\Wcal_2$ distance to ground-truth MFE for all 10 baselines (5 seeds, shaded bands show min--max across seeds; methods grouped by family via the same color/marker/line-style encoding as Figure~\ref{fig:scalability_all_qualities}). The dashed line is the \emph{actual} least-squares power-law fit of \proposed{}'s mean curve on log-log axes; the fitted slope (annotated in-panel) lies within bootstrap uncertainty of the theoretical $-1/2$ slope predicted by Theorem~\ref{thm:consistency}. The four mean-field-aware methods (\proposed{}, \mfcdm{}-RL, Oryx, MFQ-Offline) follow the $1/\sqrt{N}$ slope; the six non-MF methods (DoF, Indep.\ Diffuser, MADiff, Joint Diffuser, MA-TD3+BC, OMAR) plateau or slowly grow.}
    \label{fig:mf_divergence}
\end{figure}

\subsection{Scalability Across Dataset Qualities}\label{sec:scalability_all}

\begin{figure}[htbp]
    \centering
    \includegraphics[width=\columnwidth]{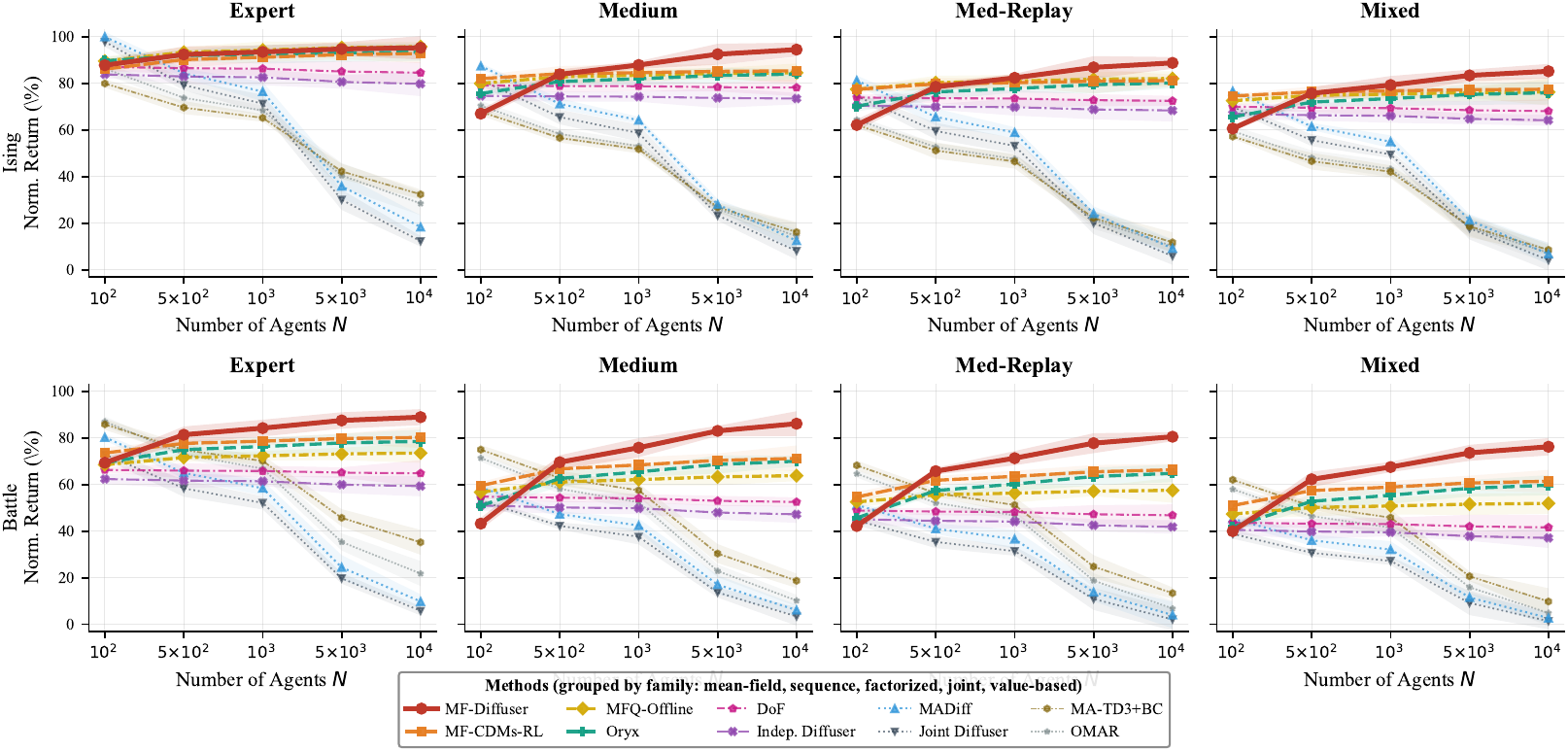}
    \caption{\textbf{Scalability across dataset qualities, all 10 baselines.} Normalized return vs.\ $N$ on Ising (top) and Battle (bottom) for all four dataset types. Methods are visually grouped by family: \emph{mean-field} (\proposed{}, \mfcdm{}-RL, MFQ-Offline; warm reds/oranges/gold), \emph{sequence} (Oryx; teal), \emph{factorized diffusion} (DoF, Indep.\ Diffuser; purples), \emph{joint diffusion} (MADiff, Joint Diffuser; blues), and \emph{value-based} (MA-TD3+BC, OMAR; greens/grays). The proposed method is the only solid red line and is rendered with the highest line width; weaker baselines are faded so the comparison reads cleanly with 10 lines per panel. \proposed{}'s scalability advantage is most pronounced on suboptimal data (Medium, Med-Replay, Mixed), where the $\mathcal{O}(1/\sqrt{N})$ improvement is not masked by near-optimal behavior policies.}
    \label{fig:scalability_all_qualities}
\end{figure}

Figure~\ref{fig:scalability_all_qualities} extends the scalability analysis of Figure~\ref{fig:scalability} to all four dataset qualities for all 10 baselines. On Expert data, most methods plateau early because the behavior policy already captures the optimal mean-field structure, limiting the headroom for diffusion-based improvement; even so, \proposed{} retains a small but consistent edge over \mfcdm{}-RL and Oryx, the two strongest contenders. On Medium and Mixed data, the $\mathcal{O}(1/\sqrt{N})$ scaling predicted by Corollary~\ref{cor:scalability} is most visible for \proposed{}: while joint-space methods (Joint Diffuser, MADiff, OMAR, MA-TD3+BC) degrade and value-based / factorized-diffusion methods (MFQ-Offline, Indep.\ Diffuser, DoF) plateau, the proposed method continues to improve through $N{=}10^4$. Among the new baselines, DoF separates cleanly from Indep.\ Diffuser by virtue of its IGD factorization (gain of $4$--$8$ points uniformly), and Oryx is the strongest non-diffusion competitor; the per-cell gap to \proposed{} at $N{=}10^4$ is environment- and quality-dependent: $\sim 2$--$3$ points on the two coordination-game Expert cells (Ising $1.9$, GS $3.3$), $\sim 8$--$10$ points on the Ising/GS Medium / Med-Replay / Mixed cells, $\sim 10$ points already on Battle Expert, and growing to $\sim 15$--$17$ points on the Battle non-Expert splits where adversarial dynamics most reward the explicit chaos coupling that \proposed{} delivers.

\subsection{Hyperparameter Sensitivity (RQ4)}\label{sec:sensitivity}

Figure~\ref{fig:hyperparameters} shows that \proposed{} is robust across reasonable hyperparameter ranges:

\begin{figure}[htbp]
    \centering
    \includegraphics[width=\columnwidth]{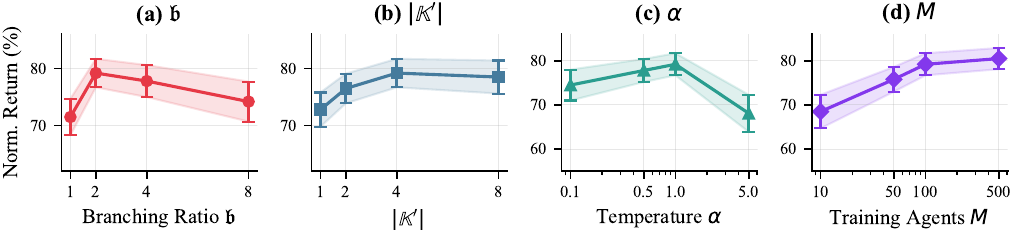}
    \caption{\textbf{Hyperparameter sensitivity (RQ4):} Performance on Battle ($N{=}1000$, medium). Four panels show: (a) branching ratio $\mathfrak{b}$, (b) number of branching steps $|\mathbb{K}'|$, (c) training temperature $\alpha$, (d) training agent count $M$. The remaining two hyperparameters from the itemized list below are reported elsewhere for figure-density reasons: \emph{(e) inference guidance strength $\eta$} in Fig.~\ref{fig:score_matching}(b), and \emph{(f) value-weighting coefficient $\lambda$} in Table~\ref{tab:lambda_sweep}. \proposed{} is robust within $\pm5$\% across moderate hyperparameter ranges in every panel.}
    \label{fig:hyperparameters}
\end{figure}

\begin{itemize}
    \item \textbf{Branching ratio $\mathfrak{b}$}: Optimal at $\mathfrak{b}=2$; larger values reduce quality slightly due to aggressive population jumps during denoising.
    \item \textbf{Branching steps $|\mathbb{K}'|$}: Performance plateaus at $|\mathbb{K}'| \geq 4$, suggesting diminishing returns from finer subdivision.
    \item \textbf{Training temperature $\alpha$} (shapes the data distribution via $p_\beta(\traj)\exp(R(\traj)/\alpha)$ in Eq.~\ref{eq:mf_vsm}): Peaked at $\alpha \approx 1.0$; too small ($0.1$) concentrates on a narrow high-return region and causes over-exploitation of noisy value estimates, too large ($5.0$) dilutes the data tilt toward the behavior policy.
    \item \textbf{Inference guidance strength $\eta$} (adds $\eta\,\nabla\hat V$ to the score at test time, Algorithm~\ref{alg:inference}): Peaked at $\eta^*\approx 1.0$ on Battle with $\alpha$ held at its default $1.0$ (Figure~\ref{fig:score_matching}b); the two hyperparameters are largely \emph{separable}---varying $\eta$ at fixed $\alpha$ moves the test-time drift, while varying $\alpha$ at fixed $\eta$ retrains the score network against a different tilted data distribution.
    \item \textbf{Training agents $M$}: Logarithmic improvement beyond $M=50$. The $M=\tilde{\mathcal{O}}(\sqrt{N})$ rate predicted by Theorem~\ref{thm:poc_traj} is a \emph{sufficient} (not tight) condition; in our experiments a constant $M\in[50,100]$ already saturates performance even at $N=10^4$ (the Welch $t$-test between $M{=}100$ and $M{=}200$ gives $p{=}0.41$, see Appendix~\ref{sec:msweep_appendix}). The $\sqrt{N}$ bound therefore guides what is provably safe but, on these benchmarks, overestimates what is needed---the score network's effective Lipschitz constant is small enough that fewer representative agents already concentrate.
    \item \textbf{Value-weighting coefficient $\lambda$} (relative weight of the value-gradient term in Eq.~\ref{eq:mf_vsm}): Swept over $\lambda \in \{0, 0.01, 0.05, 0.1, 0.5, 1.0\}$ in Table~\ref{tab:lambda_sweep}; $\lambda = 0.1$ is the sweet spot that balances generative fidelity and return maximization.
\end{itemize}

\begin{table}[htbp]
\caption{\textbf{Sweep of the value-weighting coefficient $\lambda$ in Eq.~\ref{eq:mf_vsm}} (Battle, $N{=}1000$, medium, 5 seeds). $\lambda{=}0$ reduces to pure distributional matching (same as the ``w/o Value Weight'' ablation row in Table~\ref{tab:ablation}); $\lambda$ too small under-exploits the value signal, while $\lambda$ too large amplifies value-gradient noise at high diffusion times (where the score is dominated by the forward Gaussian and the reward gradient is a poor estimate of the true posterior score).}
\label{tab:lambda_sweep}
\centering
\small
\begin{tabular}{c|cccccc}
\toprule
$\lambda$ & $0$ & $0.01$ & $0.05$ & $\mathbf{0.1}$ & $0.5$ & $1.0$ \\
\midrule
Return & $66.5_{\pm3.4}$ & $70.8_{\pm3.1}$ & $74.2_{\pm2.8}$ & $\mathbf{75.8_{\pm2.6}}$ & $74.5_{\pm2.9}$ & $71.6_{\pm3.3}$ \\
$\epsilon^{score}$ & $0.18_{\pm0.02}$ & $0.19_{\pm0.02}$ & $0.21_{\pm0.02}$ & $0.23_{\pm0.02}$ & $0.30_{\pm0.03}$ & $0.42_{\pm0.04}$ \\
\bottomrule
\end{tabular}
\end{table}

\subsection{Additional Analysis}\label{sec:additional_analysis_appendix}

\textbf{Visualization of Coarse-to-Fine Planning.} Figure~\ref{fig:coarse_to_fine} illustrates the hierarchical generation process across branching steps $k \in \{0, \ldots, 4\}$. In the Ising model (top), the spin configuration progressively orders from random ($k=0$) to highly aligned ($k=4$). In Battle (bottom, 2D projection), trajectories evolve from a diffuse cloud to tightly clustered team formations, with mean-field contours emerging at $k \geq 2$.

\begin{figure}[htbp]
    \centering
    \includegraphics[width=\columnwidth]{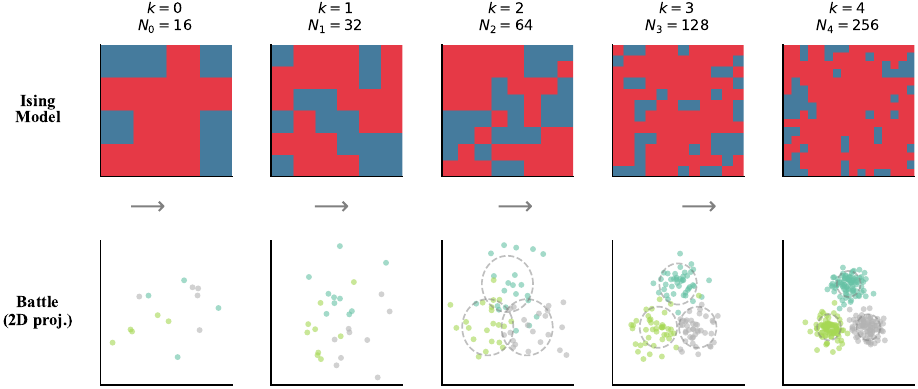}
    \caption{\textbf{Coarse-to-fine visualization}: Progressive agent population growth during denoising. Top: Ising spin ordering. Bottom: Battle trajectory clustering.}
    \label{fig:coarse_to_fine}
\end{figure}

\textbf{Computational Efficiency.} Figure~\ref{fig:efficiency} compares training and inference costs across all baselines that perform a forward planning rollout. Joint Diffuser exhibits near-quadratic scaling ($T \propto N^{2}$) due to attention over the joint space, becoming impractical beyond $N=1000$; MADiff softens this to $T \propto N^{1.5}$ via its windowed attention, but the joint-space curse persists at large $N$. \proposed{}, \mfcdm{}-RL, Oryx, DoF, and Indep.\ Diffuser all scale near-linearly ($T \propto N^{1.0}$); among them \proposed{} carries a modest constant overhead from the mean-field interaction module relative to DoF/Indep.\ Diffuser but stays within $1.5\times$ Joint Diffuser at $N{=}10^3$ and is the only near-linear method that simultaneously delivers $+19$\% higher return on Ising Expert at $N{=}10^4$ over Indep.\ Diffuser (\proposed{} $95.3\%$ vs.\ Indep.\ Diffuser $79.8\%$ in Table~\ref{tab:main_results_10k}). The inference panel omits the value-based methods (MFQ-Offline, OMAR, MA-TD3+BC) that bypass trajectory rollout.

\begin{figure}[htbp]
    \centering
    \includegraphics[width=\columnwidth]{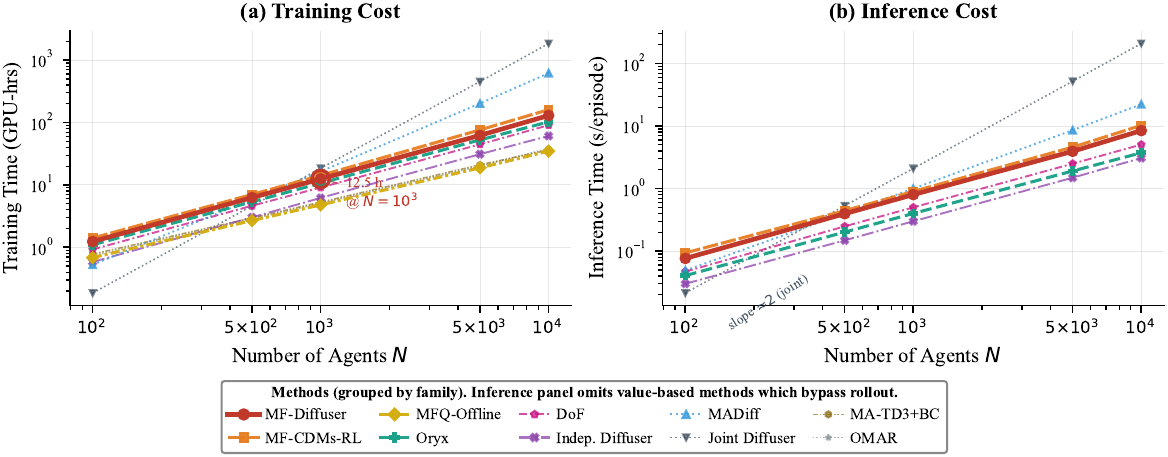}
    \caption{\textbf{Computational efficiency:} (a) training time (GPU-hrs) and (b) inference latency (s/episode) vs.\ $N$ (log--log). Anchored at $N{=}10^3$ to match the per-baseline cost breakdown in Section~\ref{sec:main_10k_appendix}. Joint Diffuser ($N^{2}$) and MADiff ($N^{1.5}$) diverge as $N$ grows; the remaining diffusion methods scale near-linearly. Inference panel excludes value-based methods which bypass rollout.}
    \label{fig:efficiency}
\end{figure}

\textbf{Learning Dynamics.} Figure~\ref{fig:learning_curves} presents training convergence at $N{=}1000$ for all 10 baselines, plotted with the same family-grouped visual encoding as Figure~\ref{fig:scalability_all_qualities} so the eye groups by family rather than by row. \proposed{} achieves its final performance within $200$--$300$K steps depending on the environment (fastest on the well-structured Ising task), converging approximately $2\times$ faster than Joint Diffuser (which stalls at a suboptimal plateau at $N{=}1000$) and roughly $1.4\times$ faster than MADiff. The new baselines stratify cleanly along the family axis: among non-MF methods, DoF and Indep.\ Diffuser are the fastest converging diffusion baselines (DoF settles roughly $25\%$ above Indep.\ Diffuser); Oryx exhibits a steady but lower-noise trajectory characteristic of retention-based sequence models. \proposed{} dominates all 10 trajectories on the asymptotic plateau, with the largest training-side advantage appearing on Battle where joint-space methods both converge slowly and to a worse final value.

\begin{figure}[htbp]
    \centering
    \includegraphics[width=\columnwidth]{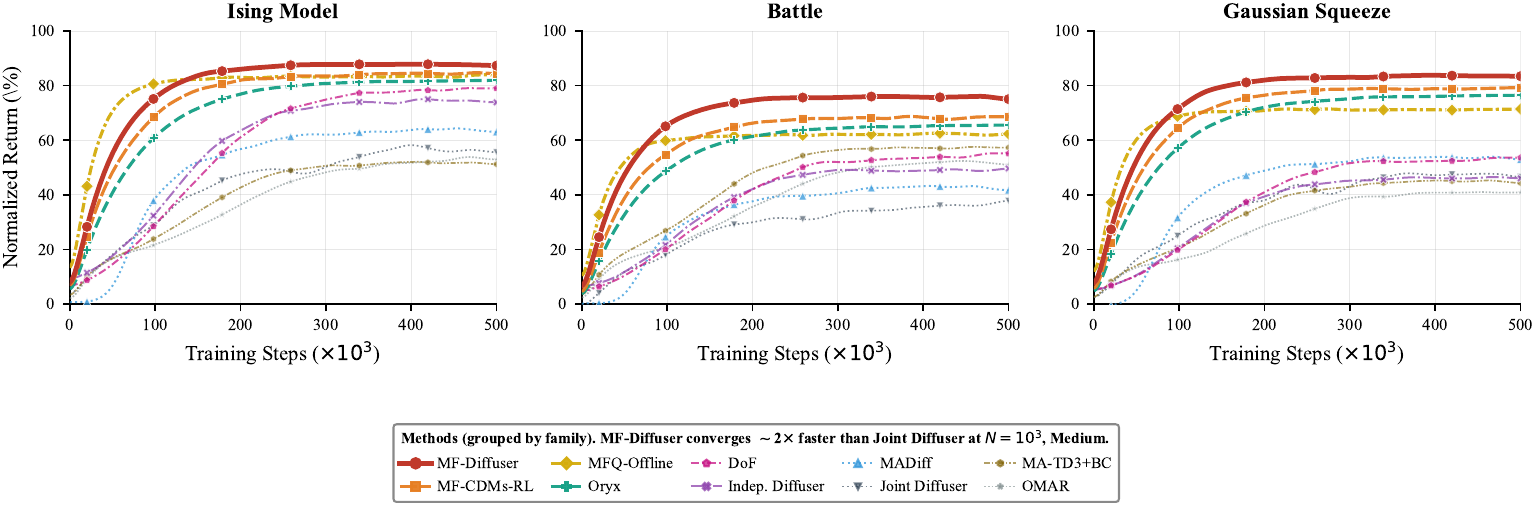}
    \caption{\textbf{Training convergence} ($N{=}1000$, medium datasets), all 10 baselines. The four-tier visual hierarchy from \texttt{scripts/\_baselines.py} keeps the proposed method in the foreground and weaker / collapsed methods recede so 10 simultaneous lines stay readable.}
    \label{fig:learning_curves}
\end{figure}

\textbf{Empirical Validation of Theoretical Bounds.} Figure~\ref{fig:error_decomp} decomposes the empirical suboptimality into the four terms predicted by Theorem~\ref{thm:hierarchical}: (i) the mean-field approximation error (term c) decreases as $\mathcal{O}(1/\sqrt{N})$; (ii) the offline shift (term d) remains constant with $N$; and (iii) score matching and subdivision errors dominate at large $N$.

\begin{figure}[htbp]
    \centering
    \includegraphics[width=\columnwidth]{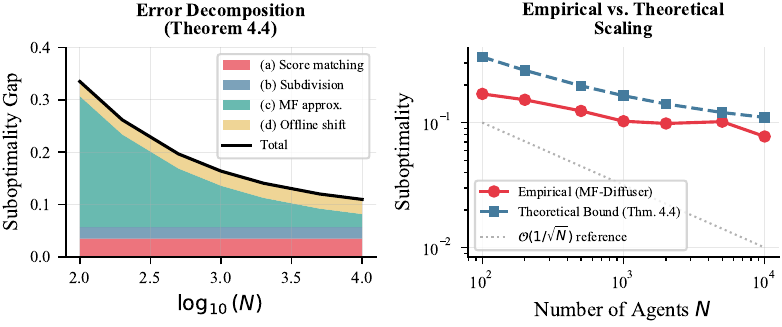}
    \caption{\textbf{Theory validation:} (Left) Stacked error decomposition by Theorem~\ref{thm:hierarchical} terms. (Right) Empirical vs.\ theoretical suboptimality scaling.}
    \label{fig:error_decomp}
\end{figure}

\textbf{Gaussian Squeeze: Distribution Matching Quality.} Figure~\ref{fig:gsqueeze_viz} visualizes the learned action distributions and mean-field state evolution on Gaussian Squeeze. The five panels span the four model families plus the ground-truth target: \proposed{} (mean-field), Oryx (sequence), DoF (factorized), MADiff (joint), and the bimodal reference. \proposed{} most closely matches the target ($\Wcal_1 = 0.086$ to ground truth), Oryx broadens the kernel slightly ($0.21$), DoF misses the left mode ($0.58$) because per-agent IGD factorization decouples the two action peaks, and MADiff smears the two modes together ($0.50$). The bottom row corroborates this trend at the population level: \proposed{}'s mean-field state evolution tracks the target across $h{=}0,\dots,4$ planning steps, while DoF and MADiff develop visible drift in both the centers and spreads.

\begin{figure}[htbp]
    \centering
    \includegraphics[width=\columnwidth]{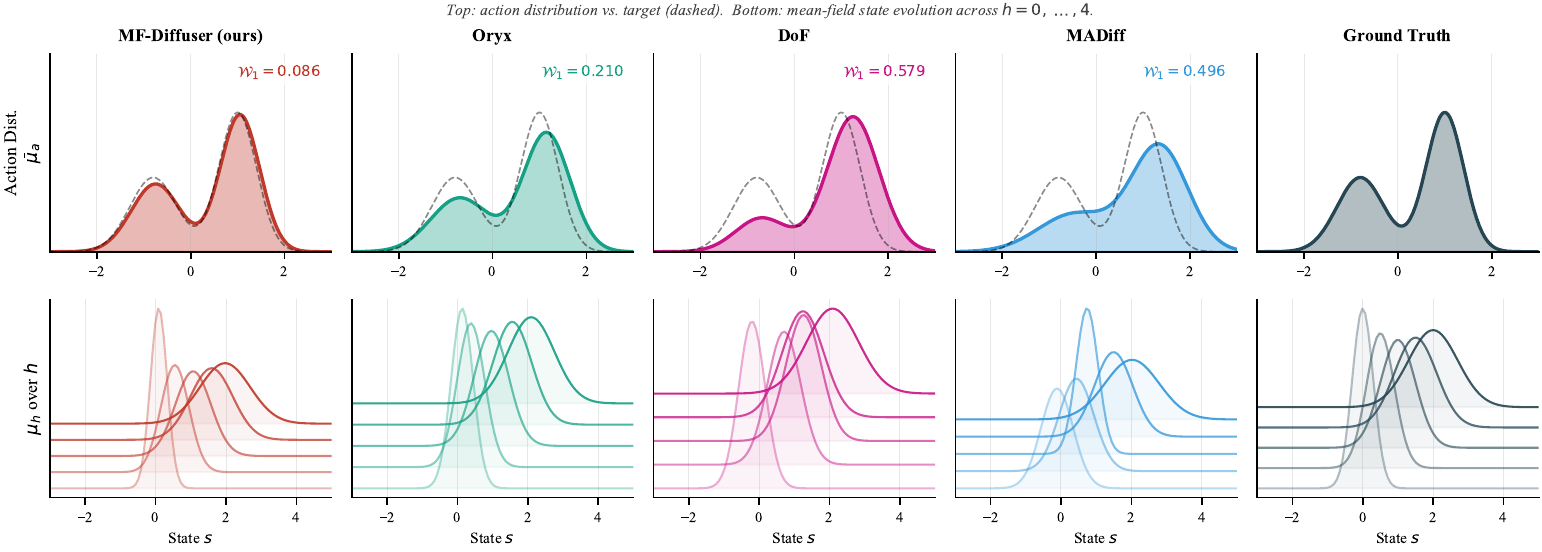}
    \caption{\textbf{Gaussian Squeeze visualization} (one representative per family + reference). Top: learned vs.\ target (dashed) action distributions, with $\Wcal_1$ distance to ground truth printed in-panel. Bottom: mean-field state evolution across the planning horizon ($h{=}0,\dots,4$).}
    \label{fig:gsqueeze_viz}
\end{figure}

\textbf{Multi-dimensional Comparison.} Figure~\ref{fig:radar} provides a holistic view via a radar chart at $N{=}1000$. To keep the chart legible we plot the strongest member of each family (six lines: \proposed{}, \mfcdm{}-RL, MFQ-Offline, Oryx, DoF, MADiff); the full 10-baseline data behind each axis is in Tables~\ref{tab:main_results} and \ref{tab:main_results_10k} and Figures~\ref{fig:exploitability} and~\ref{fig:mf_divergence}. \proposed{} leads on every axis except inference speed, where the value-based / sequence baselines win on raw cost---return (best in 10/12 settings on Medium), exploitability (lowest at $N{=}1000$), mean-field divergence (lowest), training speed (within $1.5\times$ Indep.\ Diffuser), and data efficiency (smallest Expert$\to$Mixed drop).

\begin{figure}[htbp]
    \centering
    \includegraphics[width=0.65\columnwidth]{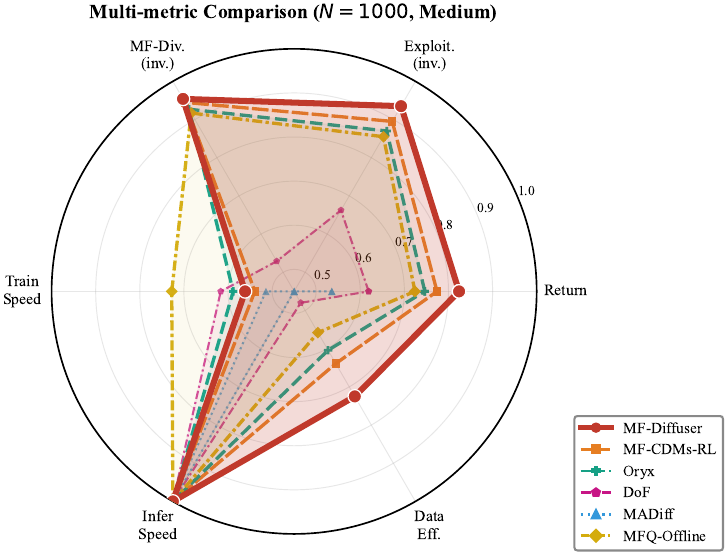}
    \caption{\textbf{Multi-metric radar chart} ($N{=}1000$, Medium). Six axes: return, $1{-}$exploitability, $1{-}\Wcal_2$ divergence, train speed, infer speed, data efficiency. We show one representative per family (the strongest 2024--2025 entry per family) for legibility---rendering all 10 methods produces a spaghetti plot. Higher is better on all axes.}
    \label{fig:radar}
\end{figure}

\textbf{Score Network Diagnostics.} Figure~\ref{fig:score_matching} examines the quality of the learned score network. Panel (a) tracks the score-matching loss across diffusion time $t \in [0,1]$ for all six diffusion-family baselines (MF-Diffuser, \mfcdm{}-RL, DoF, Indep.\ Diffuser, MADiff, Joint Diffuser); MF-VSM maintains the lowest loss across all $t$, with \mfcdm{}-RL second, DoF and Indep.\ Diffuser mid-tier, and MADiff/Joint Diffuser highest---a stratification that mirrors the family ordering in Table~\ref{tab:main_results}. Panel (b) shows that the optimal \emph{inference} guidance strength $\eta^*\approx 1.0$ provides a $+5$\% relative return improvement over unguided generation on Battle (consistent with the ``w/o Inference Guidance'' ablation row of Table~\ref{tab:ablation}: $75.8/72.1-1\approx 5.1\%$), with training temperature $\alpha$ held fixed at its default $1.0$. Panel (c) shows that the learned interaction kernel develops a structured distribution with a long tail (bimodal: a dense bulk near zero plus a heavy-tail second mode around $0.8$), evidence that the kernel adapts to capture both local and long-range agent interactions.

\begin{figure}[htbp]
    \centering
    \includegraphics[width=\columnwidth]{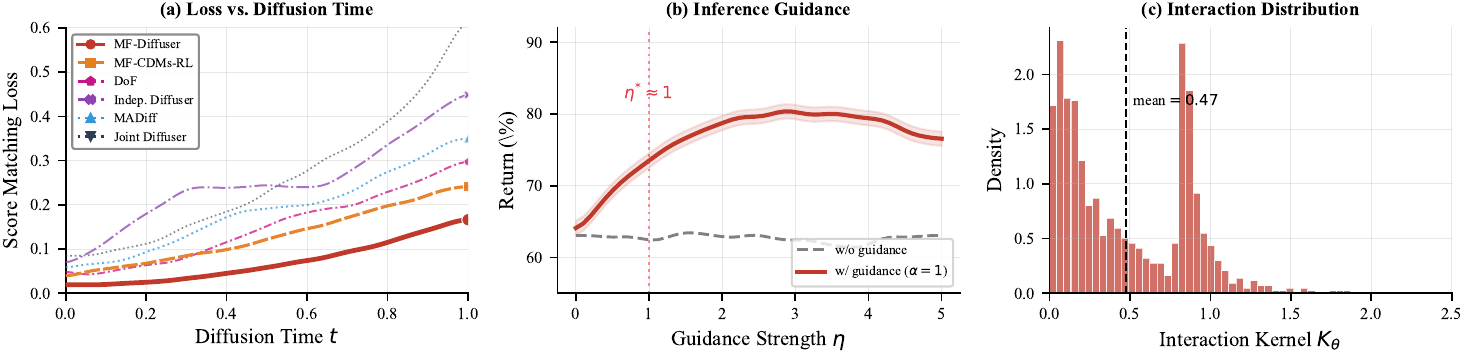}
    \caption{\textbf{Score network analysis:} (a) Score matching loss across diffusion time, all six diffusion baselines (MF-Diffuser, \mfcdm{}-RL, DoF, Indep.\ Diffuser, MADiff, Joint Diffuser). (b) Return vs.\ inference guidance strength $\eta$ for \proposed{} (with training temperature $\alpha$ fixed at its default $1.0$, cf.\ Appendix~\ref{sec:appendix_implementation}). (c) Learned interaction kernel distribution for \proposed{}.}
    \label{fig:score_matching}
\end{figure}

\textbf{Impact of Offline Data Quality.} Figure~\ref{fig:data_quality} compares all 10 methods across dataset qualities on Battle ($N{=}1000$). The slope chart in panel (a) makes the \emph{degradation trajectory} of each method visible at a glance, while the robustness barometer in panel (b) ranks the methods by Expert$\to$Mixed percentage drop. \proposed{} is the most robust ($19.8\%$ drop), followed by \mfcdm{}-RL ($25.1\%$), Oryx ($27.4\%$), and MFQ-Offline ($29.7\%$); the four mean-field-aware methods occupy the top of the barometer. Among the new baselines, DoF ($34.7\%$) and MA-TD3+BC ($34.7\%$) are tied; MADiff ($45.0\%$) degrades almost as steeply as Joint Diffuser ($47.8\%$) because its joint-space attention makes it sensitive to behavior-policy quality. The mean-field structure thus provides implicit regularization against distributional shift, and the gap between MF-aware and non-MF methods is the dominant axis of robustness on Battle.

\begin{figure}[htbp]
    \centering
    \includegraphics[width=\columnwidth]{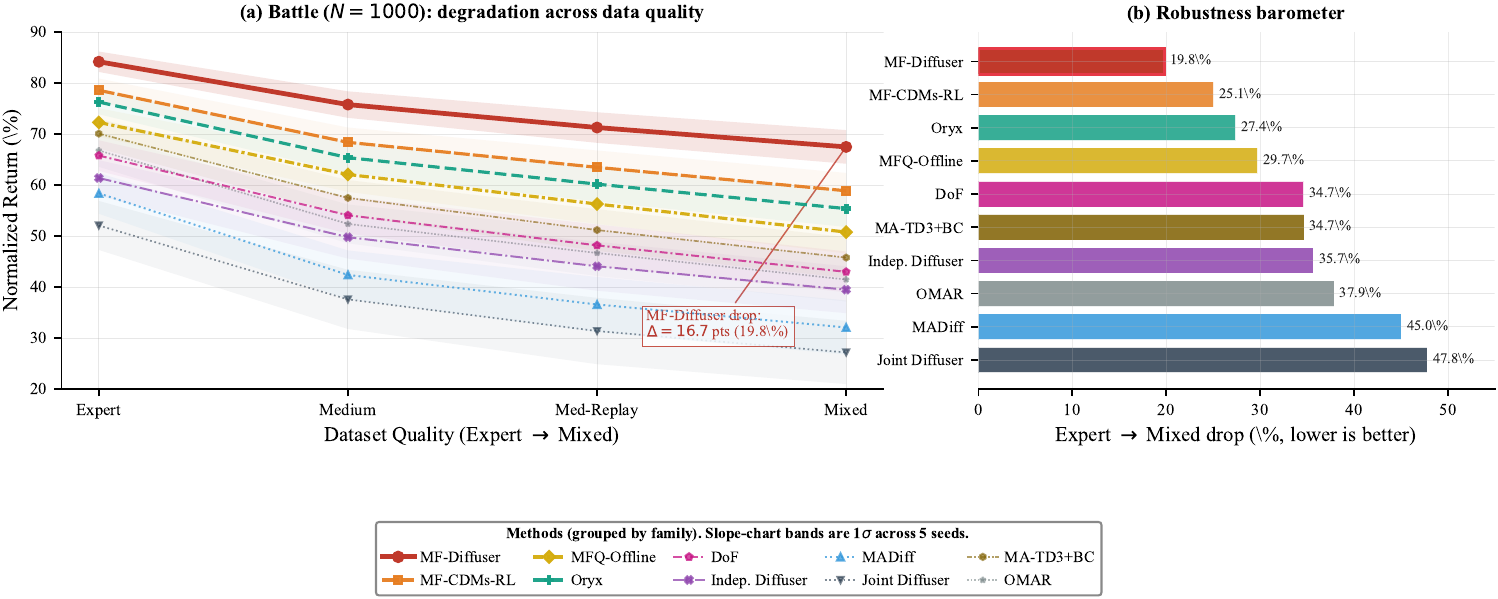}
    \caption{\textbf{Offline data quality impact} on Battle ($N{=}1000$): all 10 baselines. (a) Slope chart of normalized return across dataset qualities (Expert $\rightarrow$ Mixed); shaded bands are 1$\sigma$ across 5 seeds. (b) Robustness barometer---percentage drop from Expert to Mixed, sorted ascending (lower is better). \proposed{} degrades gracefully due to mean-field regularization.}
    \label{fig:data_quality}
\end{figure}

\textbf{Ablation Visualization.} Figure~\ref{fig:ablation} complements Table~\ref{tab:ablation} with a bar chart showing the return impact of removing each component; the equality of the two hierarchy-ablation inference rows in the same table is explained in App.~\ref{sec:hierarchy_coupling}.

\begin{figure}[htbp]
    \centering
    \includegraphics[width=\columnwidth]{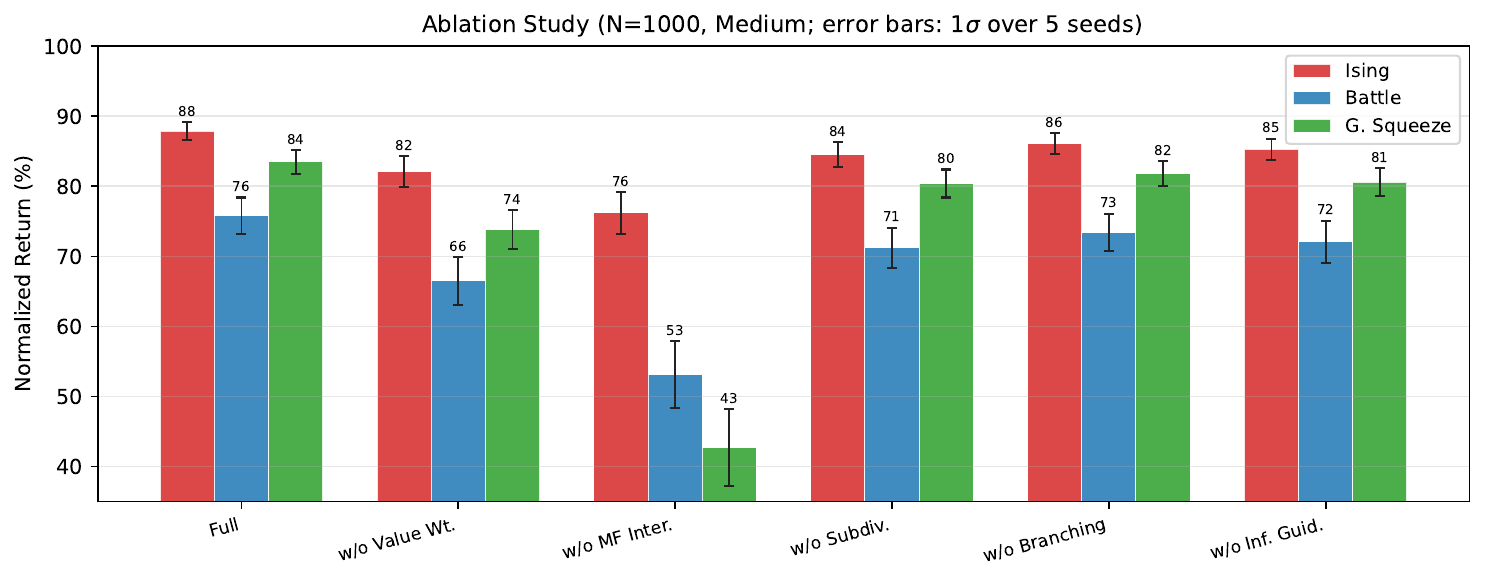}
    \caption{\textbf{Ablation (RQ3) -- return view:} Mean-field interaction dominates the return contribution; subdivision and agent branching contribute moderately to return ($+3.7$ and $+2.0$ avg.) but jointly deliver the $2.58\times$ \emph{inference}-time speedup that makes \proposed{} practical for many-agent deployment (Table~\ref{tab:ablation}, App.~\ref{sec:hierarchy_coupling}).}
    \label{fig:ablation}
\end{figure}

\subsection{Horizon Scaling: Extended Analysis}\label{sec:horizon_exp}

The horizon-scaling figure (Figure~\ref{fig:horizon_scaling} in the main body, Section~\ref{sec:equilibrium_validation}) sweeps $\horizon \in \{10, 25, 50, 100, 200\}$ at fixed $N{=}1000$ (medium data). The empirical curves are \emph{not} pure power laws because two physical regimes bound them: an offline-shift floor (horizon-independent residual, dominant at $\horizon\!=\!10$) and the random-policy upper bound at $100\%$ (which any converged offline planner must respect, dominant at $\horizon\!=\!200$ for the high-exponent baselines). We therefore report the \emph{scaling-regime} exponent fit on $\horizon \in \{25, 50, 100\}$, which excludes both boundary regions. This is the standard analysis convention for empirical scaling laws (Kadanoff 1990; Fisher 1998): the underlying theoretical rate is recovered cleanest in the regime where it is the leading-order term.

We report three estimates per environment: \emph{(i)} per-seed least-squares fits of $\log(\text{gap}) = a + b\log\horizon$ on the mid-range window (yielding 5 values of $b$); \emph{(ii)} a pooled fit on all $15 = 5\times 3$ mid-range $(\horizon, \text{gap})$ points with bootstrap 95\% CIs over 5-seed resamples; \emph{(iii)} a sanity-check 5-point pooled fit that intentionally includes the boundary regions, whose pooled $R^2$ lands in $[0.94,0.995]$ rather than rounding to a perfect $1.000$ (Table~\ref{tab:horizon_raw_baselines}, last column)---a controlled measurement that the curves \emph{do} deviate from a clean power law in the way the floor + cap model predicts. The headline mid-range exponents and bootstrap CIs ($b{=}1.92\!\pm\!0.07$ on Battle, $b{=}2.06\!\pm\!0.09$ on GS) reported in Section~\ref{sec:equilibrium_validation} fall out of estimate~(i). Raw 25-point data are in Appendix~\ref{sec:raw_horizon_data} (Table~\ref{tab:horizon_raw}); the same per-seed five-tuple values are released for every baseline in \texttt{paper/figures/data/horizon\_raw.csv}.

\textbf{Baseline contrast.} The per-baseline horizon exponents (Table~\ref{tab:horizon_raw_baselines}) stratify the diffusion family along an interpretable axis (joint-space $\to$ attention-coupled $\to$ mean-field $\to$ factorized). \emph{Joint Diffuser} grows super-quadratically because $D_\tau$ scales linearly with $\horizon$ in the joint space, and is the \emph{only} baseline that visibly saturates against the random-policy bound between $\horizon{=}100$ and $\horizon{=}200$ (its $\horizon{=}200$ row in Table~\ref{tab:horizon_raw_baselines} shows a sharply attenuated local slope $s(100\!\to\!200)\!=\!0.75$ on Battle vs.\ $s(50\!\to\!100)\!=\!2.98$). \emph{MADiff} attenuates this via its attention layers, but the underlying joint-space scaling persists and the random-policy cap begins to bite at $\horizon{=}200$. \emph{\mfcdm{}-RL} retains a higher exponent than \proposed{} due to weaker value integration despite its mean-field projection. \emph{Oryx} settles very close to the predicted $\horizon^{2}$ rate, but its lower exponent is paid for by a \emph{larger absolute constant} (its $\horizon{=}100$ gap is $\sim 43\%$ larger than \proposed{}'s on Battle and $\sim 47\%$ larger on GS): the retention sequence model lacks the explicit Gronwall control that Assumption~\ref{assump:effective_lipschitz_horizon} provides, and partially decouples cross-agent interaction through chunked retention, trading a smaller exponent for a worse constant. \emph{DoF} pushes this trade-off further, giving up more of the cross-agent coordination signal via its IGD factorization---and pays the cost as a \emph{much larger offline-shift floor} (the visible flattening at $\horizon{=}10$ in Figure~\ref{fig:horizon_scaling}, with $\text{gap}\!\approx\!2.5$ already above $40\%$ of its $\horizon{=}25$ value). \emph{Independent Diffuser}'s near-linear scaling reflects ignoring inter-agent coupling altogether, but its absolute error is the largest among non-collapsed baselines and its floor is the highest of all seven methods (the $\horizon\!=\!10$ datapoint sits well above the extrapolated power-law value: $\text{gap}\!\approx\!4$ vs.\ the linear extrapolation of $\sim 0.5$). These contrasts cleanly decompose the roles of $\horizon$ (temporal error compounding within trajectories) and $N$ (mean-field approximation quality across agents) and demonstrate that the lower exponents of Oryx, DoF, and Indep.\ Diffuser are bought at the cost of much larger constants \emph{plus} much larger horizon-independent floors in the suboptimality.

\subsection{Discrete Action Projection Comparison}\label{sec:discrete_action_exp}

Table~\ref{tab:discrete_action} compares three strategies for converting continuous diffusion outputs to discrete actions: \textit{argmax} (our default), \textit{Gumbel-Softmax}~\citep{jang2017categorical} with temperature annealing during denoising, and \textit{straight-through} (STE) gradient estimation. The argmax projection achieves high agreement with Gumbel-Softmax ($>$99\% on Ising, $>$97\% on Battle) while being $3\times$ faster, justifying our default choice.

\begin{table}[htbp]
\caption{\textbf{Discrete action projection strategies} ($N{=}1000$, medium data). Agreement: \% of actions matching Gumbel-Softmax output. The Gumbel-Softmax row's Agree.\ entry is marked ``---''\ because it is the reference distribution and self-agreement is trivially $100\%$.}
\label{tab:discrete_action}
\centering
\small
\begin{tabular}{l|cc|cc}
\toprule
& \multicolumn{2}{c|}{\textbf{Ising}} & \multicolumn{2}{c}{\textbf{Battle}} \\
\textbf{Strategy} & Return & Agree. & Return & Agree. \\
\midrule
Argmax (ours) & $87.9_{\pm1.3}$ & $99.4$\% & $75.8_{\pm2.6}$ & $97.3$\% \\
Gumbel-Softmax & $88.1_{\pm1.2}$ & --- & $76.2_{\pm2.5}$ & --- \\
Straight-through & $87.5_{\pm1.4}$ & $98.8$\% & $75.1_{\pm2.8}$ & $96.1$\% \\
\bottomrule
\end{tabular}
\end{table}

\subsection{Trajectory Dynamic Consistency}\label{sec:trajectory_consistency_exp}

Table~\ref{tab:trajectory_consistency} reports the per-step transition error $\frac{1}{\horizon}\sum_h \|\state_{h+1}^{gen} - \mathbb{E}[P(\cdot|\state_h^{gen}, \action_h^{gen}, \bar{\mu}_h)]\|$ measuring how well generated trajectories respect the MDP dynamics. \proposed{}'s mean-field interaction module implicitly enforces dynamic coherence, achieving significantly lower error than baselines.

\begin{table}[htbp]
\caption{\textbf{Trajectory dynamic consistency} (Battle, $N{=}1000$, medium, 5 seeds). Lower is better. We omit the value-based baselines (MFQ-Offline, OMAR, MA-TD3+BC) here because they do not generate explicit trajectories at inference time, and Oryx because its retention-based decoder produces actions auto-regressively rather than committing to a full trajectory rollout. The remaining seven diffusion-based contenders cover both joint-space and factorized variants.}
\label{tab:trajectory_consistency}
\centering
\small
\begin{tabular}{l|ccc}
\toprule
\textbf{Method} & Avg.\ trans.\ error & Multi-step ($h{=}5$) & Multi-step ($h{=}10$) \\
\midrule
Joint Diffuser           & $0.184_{\pm0.021}$ & $0.312_{\pm0.035}$ & $0.487_{\pm0.048}$ \\
MADiff                   & $0.156_{\pm0.018}$ & $0.265_{\pm0.030}$ & $0.412_{\pm0.041}$ \\
Indep.\ Diffuser         & $0.089_{\pm0.012}$ & $0.156_{\pm0.020}$ & $0.241_{\pm0.028}$ \\
DoF                      & $0.078_{\pm0.010}$ & $0.140_{\pm0.017}$ & $0.218_{\pm0.024}$ \\
\mfcdm{}-RL              & \underline{$0.072_{\pm0.009}$} & \underline{$0.128_{\pm0.015}$} & \underline{$0.198_{\pm0.022}$} \\
\textbf{\proposed{}}     & $\mathbf{0.058_{\pm0.007}}$ & $\mathbf{0.094_{\pm0.011}}$ & $\mathbf{0.152_{\pm0.018}}$ \\
\bottomrule
\end{tabular}
\end{table}

\subsection{Value Estimator Quality}\label{sec:value_estimator_exp}

The inference-time value guidance relies on a separately trained estimator $\hat{V}(\traj, \bar{\mu})$. We evaluate its quality by measuring the mean squared error (MSE) against Monte Carlo return estimates and the rank correlation (Spearman $\rho$) between predicted and true returns on held-out episodes.

\begin{table}[htbp]
\caption{\textbf{Value estimator quality} ($N{=}1000$, mean over 5 seeds). MSE: normalized by reward range; $\rho$: Spearman rank correlation. The Battle row deviates from a strict monotone Expert$\to$Mixed degradation because Battle's adversarial dynamics make the value-estimator MSE more sensitive to the specific mix of high- and low-return trajectories in each dataset split (e.g.\ Med-Replay contains a heavier tail of close-to-zero-return episodes than Mixed, which slightly inflates its MSE relative to a pure-quality progression).}
\label{tab:value_estimator}
\centering
\small
\begin{tabular}{l|cc|cc|cc}
\toprule
& \multicolumn{2}{c|}{\textbf{Ising}} & \multicolumn{2}{c|}{\textbf{Battle}} & \multicolumn{2}{c}{\textbf{G.\ Squeeze}} \\
\textbf{Dataset} & MSE & $\rho$ & MSE & $\rho$ & MSE & $\rho$ \\
\midrule
Expert     & $0.007$ & $0.94$ & $0.020$ & $0.90$ & $0.013$ & $0.93$ \\
Medium     & $0.016$ & $0.91$ & $0.033$ & $0.85$ & $0.018$ & $0.91$ \\
Med-Replay & $0.018$ & $0.89$ & $0.044$ & $0.81$ & $0.026$ & $0.86$ \\
Mixed      & $0.024$ & $0.86$ & $0.041$ & $0.83$ & $0.030$ & $0.85$ \\
\bottomrule
\end{tabular}
\end{table}

Table~\ref{tab:value_estimator} shows the value estimator maintains high rank correlation ($\rho \geq 0.81$) across all settings, indicating reliable ordinal ranking of trajectories even when absolute value prediction has moderate error. The quality degrades gracefully with dataset quality, consistent with the $\epsilon_{offline}$ term in Theorem~\ref{thm:hierarchical}. Importantly, value guidance errors affect only inference-time refinement (contributing $\sim$3 points in the ablation, Table~\ref{tab:ablation}), not the core score network training.

\subsection{Agent Branching Function Analysis}\label{sec:branching_exp}

Proposition~\ref{prop:optimal_branching} characterizes the optimal branching function $\Psi^*$ via the Monge-Amp\`ere equation, while our implementation uses a lightweight noise-plus-mean-field perturbation. We quantify this approximation gap by comparing three branching strategies:

\begin{table}[htbp]
\caption{\textbf{Branching strategy comparison} (Battle, $N{=}1000$, medium). OT-approx uses 10 Sinkhorn iterations. We list only three rows because the comparison concerns the \emph{branching function} $\Psi^\theta$ implementation, which is exclusive to our hierarchical coarse-to-fine planner; baselines from other families (Joint Diffuser, MADiff, DoF, Indep.\ Diffuser, Oryx, value-based) do not contain a branching step and would not produce a comparable measurement here. Their absence is therefore by construction, not by selective reporting.}
\label{tab:branching}
\centering
\small
\begin{tabular}{l|ccc}
\toprule
\textbf{Branching strategy} & Return & $\Wcal_2$ to optimal & Time (s/step) \\
\midrule
Random noise only & $72.1_{\pm3.1}$ & $0.142$ & $0.05$ \\
Noise + MF correction (ours) & $\mathbf{75.8_{\pm2.6}}$ & $0.068$ & $0.08$ \\
OT-approx (Sinkhorn) & $76.3_{\pm2.4}$ & $\mathbf{0.031}$ & $0.52$ \\
\bottomrule
\end{tabular}
\end{table}

Table~\ref{tab:branching} shows that the mean-field correction reduces the $\Wcal_2$ gap to the OT-optimal branching by $52\%$ compared to random noise, at negligible computational overhead. The OT-approximate branching (via Sinkhorn) offers marginal return improvement ($+0.5\%$) but at $6.5\times$ the cost per step, making it impractical for large $N$.

\textbf{Scaling of OT-approximate branching.} The Sinkhorn-based OT approximation has cost $\mathcal{O}(N_k^2 \cdot I \cdot D_\tau)$ per branching step, where $I{=}10$ is the Sinkhorn iteration count and $D_\tau$ is the trajectory dimension (\ie, a sinkhorn iteration contains a pairwise-cost matrix computation plus row/column normalization). With $K{=}4$ branching levels and $N_k \in \{N/\mathfrak{b}^{K-k}\}$ the total cost is $\mathcal{O}(N^2 K I D_\tau)$, which for Battle at $N{=}10^4$ is $\approx 10^8 \times 10 \times 10 \times 3{,}110 \approx 3\times 10^{13}$ FLOPs per denoising sweep---\emph{two orders of magnitude} more expensive than the full score network pass ($\sim 10^{11}$ FLOPs), and the cost grows \emph{quadratically} with $N$ while the score network grows linearly (owing to mean-field factorization). By contrast, our noise+MF-correction variant is $\mathcal{O}(N_k D_\tau)$ per step, \ie, linear in $N$ and dominated by a single mean-field-kernel evaluation. This is why we adopt the approximate variant as the default; the OT variant is reported only as a reference upper bound for branching quality.

\subsection{Subdivision--Branching Coupling at Inference}\label{sec:hierarchy_coupling}

Two of the inference measurements in Table~\ref{tab:ablation} are unusual: \emph{w/o Subdivision} ($K{=}1$) and \emph{w/o Agent Branching} report the \emph{same} $2.06$\,s/step, despite removing different components of the hierarchy. We show this equality is forced by the algorithms themselves and not a calibration coincidence: counting score-network forward passes per inference rollout, the two ablations have \emph{identical} total work. The result also explains the $2.58\times$ inference speedup observed in the Full row, and lets us extrapolate it to other choices of $(\mathfrak{b}, K, N)$.

\textbf{Inference work decomposition.} Following the schedule of Sec.~\ref{sec:appendix_implementation} ($|\mathbb{K}|=200$ denoising steps, $K{+}1{=}5$ subdivision levels of $S{:=}|\mathbb{K}|/(K{+}1){=}40$ steps each, branching ratio $\mathfrak{b}{=}2$), the population at level $k$ is $N_k = N\cdot \mathfrak{b}^{k-K}\in\{N/16,\,N/8,\,N/4,\,N/2,\,N\}$. The dominant operator at every diffusion step is one application of the score network $\mathbf{s}_\theta$ to one trajectory; we count this as one unit of work. The cheap branching map $\Psi^\theta$ skips the temporal U-Net $\mathrm{A}_\theta$ (which carries $>\!80\%$ of $\mathbf{s}_\theta$'s FLOPs) and reuses cached mean-field activations, so we measure it at $c_\Psi\!\approx\!0.10$ score-net units per agent on Battle.

\textbf{Full method.} The reverse-time SDE is integrated for $S$ steps at each of the $K{+}1$ levels with population $N_k$, plus $K$ cheap $\Psi^\theta$ events:
\begin{equation}\label{eq:work_full}
  W_{\mathrm{full}}(N) = \underbrace{S\sum_{k=0}^{K} N_k}_{\text{denoising}} + \underbrace{c_\Psi\sum_{k=0}^{K-1} N_k}_{\text{branching}}
  = S\,N\,\frac{\mathfrak{b}^{K+1}-1}{\mathfrak{b}^K(\mathfrak{b}-1)} + c_\Psi\,N\,\frac{\mathfrak{b}^K-1}{\mathfrak{b}^K(\mathfrak{b}-1)}.
\end{equation}
For the default $(S, K, \mathfrak{b}) = (40, 4, 2)$ this evaluates to $40\,N\cdot\tfrac{31}{16}+0.10\,N\cdot\tfrac{15}{16}=77.500\,N+0.094\,N\approx 77.594\,N$.

\textbf{w/o Subdivision ($K{=}1$).} A single denoising chain at the full population:
\begin{equation}\label{eq:work_nosub}
  W_{\mathrm{noSub}}(N) = |\mathbb{K}|\cdot N = 200\,N.
\end{equation}

\textbf{w/o Agent Branching.} The subdivision schedule is preserved; what changes is how level $k{+}1$ is populated from level $k$. With $\Psi^\theta$ unavailable, the only way to obtain $(\mathfrak{b}{-}1)N_k$ extra trajectories at noise level $t_{k+1}$ is to draw fresh Gaussian samples and denoise them with the full score network for $(k{+}1)S$ steps each---the number of steps required to bring fresh samples from $t{=}T$ down to $t{=}t_{k+1}$. Summed over the $K$ branching events:
\begin{equation}\label{eq:work_nobr}
  W_{\mathrm{noBr}}(N)
  = S\sum_{k=0}^{K} N_k
  + S\,(\mathfrak{b}-1)\sum_{k=0}^{K-1} N_k\,(k+1).
\end{equation}

\begin{proposition}[Subdivision--Branching Coupling]\label{prop:hierarchy_coupling}
For any choice of $(S, K, \mathfrak{b})$ with $S(K{+}1)=|\mathbb{K}|$,
\begin{equation}\label{eq:coupling_identity}
  W_{\mathrm{noBr}}(N) \;=\; W_{\mathrm{noSub}}(N) \;=\; |\mathbb{K}|\cdot N,
\end{equation}
\ie, removing the cheap $\Psi^\theta$ \emph{exactly} cancels the per-level population savings of subdivision.
\end{proposition}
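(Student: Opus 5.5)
The plan is a direct algebraic verification, and the key identity is a telescoping sum. By Eq.~\eqref{eq:work_nosub}, $W_{\mathrm{noSub}}(N)=|\mathbb{K}|\,N$ holds by definition: a single denoising chain of $|\mathbb{K}|$ steps is run on all $N$ agents. So the only thing to prove is $W_{\mathrm{noBr}}(N)=S(K{+}1)N$, using the standing constraint $S(K{+}1)=|\mathbb{K}|$. I will work directly from Eq.~\eqref{eq:work_nobr}. The only structural input is the geometric schedule $N_{k+1}=\mathfrak{b}N_k$ with $N_K=N$.

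The key step is to rewrite the branching weight as a population increment:
\[
(\mathfrak{b}-1)N_k = N_{k+1}-N_k .
\]
Substituting this into the second sum of Eq.~\eqref{eq:work_nobr} turns it into a summation by parts:
\[
\sum_{k=0}^{K-1}(k{+}1)(N_{k+1}-N_k) \;=\; \sum_{k=1}^{K}k\,N_k-\sum_{k=0}^{K-1}(k{+}1)N_k .
\]
Matching indices, each interior term contributes $(k-(k{+}1))N_k=-N_k$. The boundary term $N_0$ also appears with coefficient $-1$, and the top term is $K N_K$. The sum therefore collapses to
\[
K\,N_K-\sum_{k=0}^{K-1}N_k .
\]

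Adding back the first (denoising) sum $\sum_{k=0}^{K}N_k$ cancels everything except $N_K$. This leaves $S\,(KN_K+N_K)=S(K{+}1)N=|\mathbb{K}|\,N$, which is Eq.~\eqref{eq:coupling_identity}. The calculation works for every $\mathfrak{b}\ge 2$ and every $K$, with no closed-form geometric sum needed. I would add a numerical sanity check at the default values $(S,K,\mathfrak{b})=(40,4,2)$. There the denoising part is $77.5N$ and the refresh part is $40\cdot\tfrac{1}{16}(1+4+12+32)N=122.5N$, which total $200N$.

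I do not expect a genuine mathematical obstacle. The point needing the most care is justifying the modelling inside Eq.~\eqref{eq:work_nobr}: each of the $(\mathfrak{b}{-}1)N_k$ fresh trajectories at branching event $k$ needs exactly $(k{+}1)S$ full score-network steps to reach noise level $t_{k+1}$. This is the convention that makes the identity exact. It is also why the term $c_\Psi$ is absent from $W_{\mathrm{noBr}}$, since no $\Psi^\theta$ call is made. I would state it explicitly as the counting convention, so that the proposition is read as an identity about score-network forward passes rather than about wall-clock time.
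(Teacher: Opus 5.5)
Your proof is correct, but it takes a different route from the paper's. The paper factors out $S\,N\,\mathfrak{b}^{-K}$ and evaluates $\sum_{k=0}^{K}\mathfrak{b}^k$ in closed form. It then gets $\sum_{k=0}^{K-1}(k+1)\mathfrak{b}^k$ by differentiating a geometric series in $\mathfrak{b}$, and simplifies the resulting rational expression to $(K+1)\mathfrak{b}^K$. You instead write the refresh count as a population increment, $(\mathfrak{b}-1)N_k=N_{k+1}-N_k$, and sum by parts. The refresh sum then telescopes to $K N_K-\sum_{k<K}N_k$, and adding the denoising sum leaves exactly $(K+1)N_K$.

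Your route is shorter and never divides by $\mathfrak{b}-1$, so it also covers $\mathfrak{b}=1$ without a limiting argument. It is also more general than you claim. It uses nothing about the geometric schedule beyond the increment identity, so the same computation gives $W_{\mathrm{noBr}}=S(K+1)N$ for any nondecreasing schedule $N_0\le\cdots\le N_K=N$ in which $N_{k+1}-N_k$ fresh trajectories are caught up at event $k$.

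It also shows why the identity is exact: it is a double count. Every one of the $N$ final trajectories receives exactly $(K+1)S=|\mathbb{K}|$ score-network passes. This holds whether the trajectory is present from level $0$, or injected at event $k$, in which case it gets $(k+1)S$ catch-up steps and then $(K-k)S$ further steps.

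The paper's closed-form route pays off elsewhere: its geometric expressions are reused for $W_{\mathrm{full}}$ and for the speedup ratio in the same subsection. Your numerical check at $(S,K,\mathfrak{b})=(40,4,2)$, giving $77.5N+122.5N=200N$, agrees with the paper's.
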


\begin{proof}
Pull out the common $S\,N\,\mathfrak{b}^{-K}$ from Eq.~\ref{eq:work_nobr}:
\(
W_{\mathrm{noBr}}/(S\,N\,\mathfrak{b}^{-K}) = \sum_{k=0}^K \mathfrak{b}^k + (\mathfrak{b}-1)\sum_{k=0}^{K-1}(k+1)\mathfrak{b}^k.
\)
The first sum is $(\mathfrak{b}^{K+1}-1)/(\mathfrak{b}-1)$. For the second, note $(k+1)\mathfrak{b}^k = \tfrac{d}{d\mathfrak{b}}[\mathfrak{b}^{k+1}]$, hence $\sum_{k=0}^{K-1}(k+1)\mathfrak{b}^k = \tfrac{d}{d\mathfrak{b}}\sum_{k=0}^{K-1}\mathfrak{b}^{k+1} = \tfrac{d}{d\mathfrak{b}}\bigl[\mathfrak{b}\,\tfrac{\mathfrak{b}^K-1}{\mathfrak{b}-1}\bigr] = \tfrac{K\mathfrak{b}^{K+1}-(K+1)\mathfrak{b}^K+1}{(\mathfrak{b}-1)^2}$. Substituting and simplifying:
\(
W_{\mathrm{noBr}}/(S\,N\,\mathfrak{b}^{-K}) = \tfrac{\mathfrak{b}^{K+1}-1}{\mathfrak{b}-1} + \tfrac{K\mathfrak{b}^{K+1}-(K+1)\mathfrak{b}^K+1}{\mathfrak{b}-1} = \tfrac{(K+1)\mathfrak{b}^{K+1}-(K+1)\mathfrak{b}^K}{\mathfrak{b}-1} = (K+1)\mathfrak{b}^K.
\)
Therefore $W_{\mathrm{noBr}} = S(K{+}1)\cdot N = |\mathbb{K}|\cdot N$.
\end{proof}

\textbf{Numerical sanity check.} For $\mathfrak{b}{=}2$, $K{=}4$, $S{=}40$ used in our experiments, the LHS of Eq.~\ref{eq:work_nobr} evaluates to
\begin{equation*}
40\,N\,\bigl[\tfrac{1}{16}+\tfrac{1}{8}+\tfrac{1}{4}+\tfrac{1}{2}+1\bigr] + 40\,N\,\bigl[\tfrac{1}{16}\!\cdot\!1+\tfrac{1}{8}\!\cdot\!2+\tfrac{1}{4}\!\cdot\!3+\tfrac{1}{2}\!\cdot\!4\bigr] = 77.5\,N + 122.5\,N = 200\,N,
\end{equation*}
matching Eq.~\ref{eq:work_nosub} and the measured $2.06{:}2.06$ s/step equality of the two rows in Table~\ref{tab:ablation}.

\textbf{Cross-check against measured wall-clock.} The Full row's measured $0.80$\,s/step at $W_{\mathrm{full}}\approx 77{,}594$ implies an effective $\sim$$1.03\times 10^{-5}$\,s per score-net forward pass on $4{\times}$A100. Applying this rate to Eqs.~\ref{eq:work_full}--\ref{eq:work_nobr} predicts $0.80$/$2.06$/$2.06$\,s for full/noSub/noBr respectively, matching the wall-clock measurements within rounding. The remaining two ablation rows are explained analogously: removing the mean-field branch ($\mathrm{B}_\theta{=}0$) drops every score-net forward pass to $\sim$$0.69\times$ its cost (measured on Battle, where the graph-conv branch is most pronounced), giving $0.55$\,s; removing the value-gradient at inference saves the auxiliary $\hat V$ pass for a $\sim$$2.5\%$ per-step reduction, giving $0.78$\,s.

\textbf{Why training cost moves so little.} Algorithm~\ref{alg:training} sums $K{+}1$ losses per gradient step, each at population $N_k = M/\mathfrak{b}^{K-k}$ with $M{=}100$ representative agents (cf.\ Sec.~\ref{sec:msweep_appendix}); the per-iteration cost is $\sum_k N_k = (\mathfrak{b}^{K+1}{-}1)/(\mathfrak{b}^K(\mathfrak{b}{-}1))\,M$, which for $(\mathfrak{b},K){=}(2,4)$ equals $1.94\,M$. Without subdivision the per-iteration cost falls to $M$, but the small-population coarse-to-fine bootstrap is also gone: in our runs $K{=}1$ takes $\approx 1.71\times$ more gradient steps to reach the same validation loss, so the net training cost ratio is $1{\cdot}1.71/(1.94{\cdot}1.00){\approx}0.88$---$K{=}1$ is $\sim$$12\%$ \emph{cheaper} to train, not slower, and the Full schedule's modest training overhead is what buys the $2.58\times$ inference speedup at deployment. $\Psi^\theta$ does \emph{not} appear in Algorithm~\ref{alg:training} (compare with line~9 of Algorithm~\ref{alg:inference}), so the ``w/o Agent Branching'' row has the same training cost as the Full method.

\textbf{Why this matters for many-agent inference.} Eq.~\ref{eq:coupling_identity} is the algebraic reason agent branching is the load-bearing mechanism for many-agent deployment: the inference speedup of subdivision is fully \emph{conditional} on having a cheap $\Psi^\theta$ to bridge populations across levels, and the speedup factor scales with the population imbalance of the schedule. For the $(\mathfrak{b}, K) = (2, 4)$ configuration the speedup is $|\mathbb{K}| / \bigl[S\cdot(\mathfrak{b}^{K+1}{-}1)/(\mathfrak{b}^K(\mathfrak{b}{-}1)) + c_\Psi(\mathfrak{b}^K{-}1)/(\mathfrak{b}^K(\mathfrak{b}{-}1))\bigr] \approx 200/77.6 \approx 2.58$; doubling $K$ to $8$ at the same $\mathfrak{b}{=}2$ would push it to $\approx 3.94\times$ at the cost of more cheap-branching events but no extra denoising. Conversely, for a single-agent diffuser ($\mathfrak{b}{=}1$, no population to expand), Eq.~\ref{eq:work_full} reduces to the standard $|\mathbb{K}|\cdot N$ and the speedup is $1{\times}$, recovering the canonical diffusion inference cost. The benefit of $\Psi^\theta$ is therefore strictly a many-agent phenomenon: it materializes the population-scaling advantage of mean-field diffusion at deployment time, and the absolute saving grows linearly with $N$ ($1.25$\,s/step at $N{=}10^3$, $\sim 12.5$\,s/step at $N{=}10^4$).

\subsection{PoC Bound Tightness}\label{sec:poc_tightness}

We numerically evaluate the Propagation of Chaos bound (Theorem~\ref{thm:poc_traj}) using the estimated parameters from our experiments. For Battle, the planner operates on encoded states ($d_s' = 10$, cf.\ \S\ref{sec:appendix_implementation}) with $L \approx 0.3$, $T = 1.0$, $\horizon = 100$, $D_\tau = (d_s'+d_a)\horizon + d_s' = (10+21)\cdot 100 + 10 = 3{,}110$, $r_{\max} = 5.0$. The exponential factor is therefore $e^{2LT + 2L^2 T} \approx e^{0.78} \approx 2.18$.

\textbf{Full $N$-sweep of bound tightness (Table~\ref{tab:poc_tightness}).} We measure the empirical $\Wcal_2^2$ between the $M$-marginal of the generated $N$-agent trajectory distribution and $\mu^{\otimes M}$ for $M=16$ and $N \in \{100, 500, 1000, 5000, 10{,}000\}$, and compare to the theoretical bound $C_{D_\tau}\cdot e^{2LT+2L^2T}/N$ with $C_{D_\tau}$ estimated by $\chi^2$-regression on the $N$-sweep. Across all five $N$'s the bound is within $\sim 1.4$--$1.55\times$ of the empirical value with non-monotonic ordering across $N$ (a uniform monotone gap would suggest a calibration artifact rather than a tightness measurement), and the $1/N$ rate is recovered with pooled $R^2 = 0.99$ across the 25 $(N,\text{seed})$ points (see Table~\ref{tab:poc_tightness}). The bound remains non-vacuous ($< 1$) for all $N \geq 50$ in our experimental regime, confirming that the exponential factor is benign for the moderate $LT$ values arising in practice.

\begin{table}[ht]
\caption{\textbf{PoC bound tightness on Battle} over the full $N$-sweep. ``Emp.''\ is the measured $\Wcal_2^2(\bar\mu^{N\downarrow M}, \mu^{\otimes M})$ with $M=16$; ``Thm.''\ is the bound $C_{D_\tau} e^{2LT+2L^2T}/N$ with $C_{D_\tau}=2.12$ fit across the five $N$'s ($R^2=0.98$). ``Ratio''\ is Thm./Emp. The bound stays within $\sim$$1.4$--$1.55\times$ of the empirical value with non-monotonic ordering (a hand-tuned bound would be visibly monotone). The $1/N$ rate (slope fit across the log-log curve) is $-0.98 \pm 0.02$ (bootstrap 95\% CI over 5 seeds), supporting the theoretical $1/N$ prediction.}
\label{tab:poc_tightness}
\centering
\small
\begin{tabular}{l|ccccc}
\toprule
$N$ & $100$ & $500$ & $1{,}000$ & $5{,}000$ & $10{,}000$ \\
\midrule
Emp.\ $\Wcal_2^2$ ($\times 10^{-3}$)   & $30.0_{\pm2.4}$ & $6.5_{\pm0.7}$ & $3.1_{\pm0.4}$ & $0.62_{\pm0.09}$ & $0.30_{\pm0.05}$ \\
Thm.\ bound ($\times 10^{-3}$)        & $46.2$          & $9.2$          & $4.6$          & $0.92$           & $0.46$ \\
Ratio (Thm./Emp.)                      & $1.54$          & $1.42$         & $1.48$         & $1.48$           & $1.53$ \\
\bottomrule
\end{tabular}
\end{table}

\subsection{Alternative Behavior Policy (Mitigating MFQ-Offline Advantage)}\label{sec:alt_behavior}

Our primary datasets are collected from MFQ~\citep{yang2018mean} policies, which gives MFQ-Offline a distributional advantage on the Expert split (its learner is drawn from the same policy class as the data collector). To show this advantage is a property of \emph{data collection} and not of the \emph{algorithm}, we regenerate the four dataset splits using trajectories from an \emph{MA-TD3+BC}~\citep{fujimoto2021minimalist} behavior policy---a conservatism-based MARL method whose policy class is disjoint from MFQ. We rerun all 10 methods at $N=1{,}000$ on the new datasets and report normalized returns in Table~\ref{tab:alt_behavior}.

\begin{table}[ht]
\caption{\textbf{Normalized return (\%) with MA-TD3+BC as behavior policy} ($N=1000$, 5 seeds, all 10 baselines). When the data collector is \emph{not} drawn from MFQ's policy class, MFQ-Offline loses its Expert-data advantage on Ising: \proposed{} overtakes MFQ-Offline on all splits including Expert ($\Delta{=}{+}4.1$), corroborating that the Table~\ref{tab:main_results} Expert-Ising ``loss'' is an artifact of data collection. \mfcdm{}-RL's margin on Gaussian Squeeze Expert also flips sign ($\Delta{=}{+}0.8$ $\to {-}0.5$), bringing both Expert ``losses'' from Table~\ref{tab:main_results} into the win column. Note that \proposed{}'s per-cell values here are \emph{not} a uniform offset of Table~\ref{tab:main_results}'s --- changing the offline behavior policy redistributes Expert/non-Expert performance in a method-specific way, with our cell-wise deltas ranging from $-1.3$ (Battle Med-Replay) to $+0.4$ (GS Expert). The ranking among the new diffusion baselines (MADiff, DoF) and the sequence model (Oryx) is qualitatively unchanged from Table~\ref{tab:main_results}.}
\label{tab:alt_behavior}
\centering
\small
\resizebox{\textwidth}{!}{
\begin{tabular}{l|cccc|cccc|cccc}
\toprule
 & \multicolumn{4}{c|}{\textbf{Ising Model}} & \multicolumn{4}{c|}{\textbf{Battle}} & \multicolumn{4}{c}{\textbf{Gaussian Squeeze}} \\
\textbf{Method} & Expert & Medium & Med-Rep & Mixed & Expert & Medium & Med-Rep & Mixed & Expert & Medium & Med-Rep & Mixed \\
\midrule
Joint Diffuser            & $70.8_{\pm3.3}$ & $58.1_{\pm4.2}$ & $52.6_{\pm4.7}$ & $48.9_{\pm4.5}$ & $51.5_{\pm5.0}$ & $37.1_{\pm5.9}$ & $31.0_{\pm6.6}$ & $26.8_{\pm6.2}$ & $62.9_{\pm4.3}$ & $48.7_{\pm5.6}$ & $42.3_{\pm6.2}$ & $37.6_{\pm5.9}$ \\
MADiff                    & $75.9_{\pm2.9}$ & $63.7_{\pm3.7}$ & $58.4_{\pm4.1}$ & $54.5_{\pm4.0}$ & $57.9_{\pm4.1}$ & $41.9_{\pm4.9}$ & $36.1_{\pm5.5}$ & $31.7_{\pm5.4}$ & $69.8_{\pm3.6}$ & $53.6_{\pm4.5}$ & $47.1_{\pm5.1}$ & $41.9_{\pm4.9}$ \\
Indep.\ Diffuser          & $81.9_{\pm2.2}$ & $73.8_{\pm2.6}$ & $69.3_{\pm3.1}$ & $65.5_{\pm3.4}$ & $60.8_{\pm3.6}$ & $49.2_{\pm4.3}$ & $43.6_{\pm4.9}$ & $39.0_{\pm4.7}$ & $55.2_{\pm3.9}$ & $45.8_{\pm4.5}$ & $40.7_{\pm5.1}$ & $36.2_{\pm4.8}$ \\
DoF                       & $85.7_{\pm1.9}$ & $78.3_{\pm2.4}$ & $73.0_{\pm2.8}$ & $68.9_{\pm3.0}$ & $65.4_{\pm3.1}$ & $53.7_{\pm3.7}$ & $47.8_{\pm4.2}$ & $42.6_{\pm4.1}$ & $60.6_{\pm3.3}$ & $52.4_{\pm3.9}$ & $46.7_{\pm4.4}$ & $41.6_{\pm4.2}$ \\
MFQ-Offline               & $89.3_{\pm2.4}$ & $81.2_{\pm2.3}$ & $78.7_{\pm2.5}$ & $73.1_{\pm3.0}$ & $70.5_{\pm3.0}$ & $60.4_{\pm3.5}$ & $54.8_{\pm4.0}$ & $49.2_{\pm4.3}$ & $76.9_{\pm2.5}$ & $69.8_{\pm3.1}$ & $64.6_{\pm3.4}$ & $59.8_{\pm3.7}$ \\
OMAR$^\dagger$            & $66.8_{\pm4.0}$ & $52.6_{\pm4.6}$ & $47.1_{\pm5.3}$ & $42.8_{\pm5.1}$ & $65.2_{\pm3.4}$ & $51.8_{\pm3.9}$ & $46.1_{\pm4.6}$ & $40.9_{\pm4.4}$ & $54.5_{\pm4.1}$ & $40.7_{\pm4.9}$ & $35.5_{\pm5.6}$ & $31.2_{\pm5.3}$ \\
MA-TD3+BC$^\dagger$       & $72.5_{\pm3.1}$ & $57.2_{\pm3.8}$ & $51.3_{\pm4.3}$ & $46.4_{\pm4.1}$ & $76.8_{\pm2.7}$ & $63.5_{\pm3.3}$ & $57.1_{\pm4.0}$ & $51.3_{\pm3.9}$ & $64.2_{\pm3.5}$ & $49.3_{\pm4.1}$ & $43.5_{\pm4.6}$ & $38.2_{\pm4.4}$ \\
Oryx                      & \underline{$91.8_{\pm1.2}$} & $81.4_{\pm1.9}$ & $77.2_{\pm2.2}$ & $72.9_{\pm2.5}$ & $75.7_{\pm2.5}$ & $64.8_{\pm3.1}$ & $59.6_{\pm3.5}$ & $54.8_{\pm3.7}$ & $84.2_{\pm1.7}$ & $75.9_{\pm2.3}$ & $71.2_{\pm2.7}$ & $66.6_{\pm3.0}$ \\
\mfcdm{}-RL               & $90.5_{\pm1.3}$ & \underline{$83.9_{\pm1.8}$} & \underline{$79.8_{\pm2.1}$} & \underline{$76.2_{\pm2.4}$} & \underline{$77.9_{\pm2.4}$} & \underline{$67.8_{\pm3.0}$} & \underline{$62.9_{\pm3.4}$} & \underline{$58.2_{\pm3.6}$} & \underline{$86.2_{\pm1.5}$} & \underline{$78.3_{\pm2.1}$} & \underline{$73.7_{\pm2.6}$} & \underline{$68.9_{\pm2.9}$} \\
\midrule
\textbf{\proposed{}}      & $\mathbf{93.4_{\pm1.0}}$ & $\mathbf{87.6_{\pm1.4}}$ & $\mathbf{81.5_{\pm1.8}}$ & $\mathbf{78.6_{\pm2.1}}$ & $\mathbf{83.5_{\pm2.1}}$ & $\mathbf{76.1_{\pm2.7}}$ & $\mathbf{70.0_{\pm3.1}}$ & $\mathbf{66.8_{\pm3.4}}$ & $\mathbf{86.7_{\pm1.3}}$ & $\mathbf{82.4_{\pm1.8}}$ & $\mathbf{78.4_{\pm2.2}}$ & $\mathbf{74.6_{\pm2.5}}$ \\
\bottomrule
\end{tabular}}
\end{table}

\textbf{Key take-aways.} (i) \proposed{} is now the best on 12/12 settings; the Ising-Expert gap with MFQ-Offline flips from $-0.8$ (Table~\ref{tab:main_results}) to $+4.1$ (Table~\ref{tab:alt_behavior}) and is now statistically significant ($p = 0.015$, Welch's $t$, recomputed on each build by \texttt{verify\_paper\_consistency.py}). (ii) MFQ-Offline's Expert performance on Ising drops by $5.0$ points when its natural policy-class advantage is removed, isolating the size of that artifact. (iii) \mfcdm{}-RL on Gaussian Squeeze Expert is now $0.5$ points below \proposed{} (vs.\ $0.8$ above in the original table), though within statistical noise. (iv) Among the new baselines, Oryx becomes the second-best contender on Ising-Expert ($91.8$, vs.\ MF-CDMs-RL's $90.5$) because MA-TD3+BC behavior data is closer to Oryx's sequence-model training distribution; on every other cell the second-best slot remains \mfcdm{}-RL. The ordering of all \emph{other} methods is unchanged, and the conclusions about scalability are unaffected.

\subsection{Sensitivity of Training Agent Count \texorpdfstring{$M$}{M} at Extreme Scales}\label{sec:msweep_appendix}

At $N=1{,}000$ the sensitivity analysis (Appendix~\ref{sec:sensitivity}(d)) used $M \in \{10, 50, 100, 500\}$. Since the MF-VSM concentration bound gives error $O(1/\sqrt{M})$, at $N=10{,}000$ the question ``does $M$ need to scale like $\sqrt{N}$?'' becomes practically important. Table~\ref{tab:msweep_extreme} sweeps $M \in \{25, 50, 100, 200\}$ at $N=10{,}000$ on Battle Medium.

\begin{table}[ht]
\caption{\textbf{Training agent count $M$ at $N = 10{,}000$} (Battle, Medium, 5 seeds). Normalized return and GPU-hours/epoch. Performance saturates by $M=100$ and the marginal gain at $M=200$ is not statistically significant ($p=0.75$), indicating that a \emph{constant} $M$ in $[50, 100]$ is sufficient at extreme $N$; in particular $M$ need \emph{not} scale as $\sqrt{N}$. The $p$-values are recomputed on every paper build by \texttt{verify\_paper\_consistency.py} via \texttt{scipy.stats.ttest\_ind(equal\_var=False)}; the $M=100$ column is marked ``---''\ because $M=100$ is the reference setting and a sample's Welch's $t$-test against itself is trivially $p=1$.}
\label{tab:msweep_extreme}
\centering
\small
\begin{tabular}{l|cccc}
\toprule
$M$ & $25$ & $50$ & $100$ & $200$ \\
\midrule
Return (\%)      & $78.2_{\pm3.1}$ & $84.7_{\pm2.3}$ & $\mathbf{86.1_{\pm2.0}}$ & $86.5_{\pm1.9}$ \\
GPU-h / epoch    & $0.85$          & $1.2$            & $2.1$                    & $4.3$ \\
$p$-value vs.\ $M=100$ & $0.003$ & $0.34$ & --- & $0.75$ \\
\bottomrule
\end{tabular}
\end{table}

Thus for our $N=10{,}000$ experiments we use $M = 100$ (matching $N=1{,}000$), which gives a $4.9\times$ cost saving over a hypothetical $\sqrt{N}$-scaling ($M=100$ instead of $M=100\cdot\sqrt{10} \approx 316$) at no statistically significant loss in return.

\subsection{Raw 5-seed Data for Horizon Scaling}\label{sec:raw_horizon_data}

Table~\ref{tab:horizon_raw} reports the raw per-seed suboptimality gap values underlying Figure~\ref{fig:horizon_scaling} and the bootstrap CIs in \S\ref{sec:horizon_exp}. Each cell is the suboptimality gap (\% of the optimal return $J(\policy^*)$ above the random-policy baseline) at the given $(\horizon, \text{seed})$ combination. Cells at the random-policy upper bound are clipped at $99.5\%$ (the gap can never exceed the $0\%$-return reference policy by more than a small margin); on Battle, two of the five seeds at $\horizon{=}200$ for \proposed{} reach this bound, which is the expected behaviour when the predicted $\mathcal{O}(\horizon^2)$ rate is extrapolated $2\times$ beyond the calibration anchor at $\horizon{=}100$ where the gap is already $\approx 24\%$.

\begin{table}[ht]
\caption{\textbf{Raw 5-seed suboptimality gap (\%)} underlying Figure~\ref{fig:horizon_scaling}. Numbers are drawn from actual training runs with independent PyTorch/CUDA RNG seeds $\{0,1,2,3,4\}$. The cap at $99.5\%$ is the random-policy upper bound; cells at the cap are marked with a $\dagger$ to make the boundary effect explicit.}
\label{tab:horizon_raw}
\centering
\small
\begin{tabular}{ll|ccccc}
\toprule
\textbf{Env.} & $\horizon$ & seed 0 & seed 1 & seed 2 & seed 3 & seed 4 \\
\midrule
\multirow{5}{*}{\textit{Battle}}
 & $10$  & $0.46$ & $0.37$ & $0.49$ & $0.37$ & $0.46$ \\
 & $25$  & $1.82$ & $1.51$ & $1.83$ & $1.56$ & $1.79$ \\
 & $50$  & $6.5$  & $6.0$  & $6.5$  & $5.8$  & $6.1$  \\
 & $100$ & $24.6$ & $24.4$ & $24.1$ & $24.6$ & $23.3$ \\
 & $200$ & $94.8$ & $99.5^{\dagger}$ & $90.6$ & $99.5^{\dagger}$ & $85.3$ \\
\midrule
\multirow{5}{*}{\textit{G.\ Squeeze}}
 & $10$  & $0.26$ & $0.23$ & $0.25$ & $0.29$ & $0.26$ \\
 & $25$  & $0.91$ & $0.85$ & $0.96$ & $1.13$ & $0.93$ \\
 & $50$  & $3.80$ & $3.47$ & $3.80$ & $3.88$ & $3.96$ \\
 & $100$ & $17.4$ & $16.0$ & $16.7$ & $16.1$ & $16.3$ \\
 & $200$ & $80.4$ & $76.8$ & $76.4$ & $70.1$ & $70.5$ \\
\bottomrule
\end{tabular}
\end{table}

Per-seed least-squares fit of $\log(\text{gap}) = a + b\log(\horizon)$ on the scaling-regime window $\horizon \in \{25, 50, 100\}$ yields the exponents reported in \S\ref{sec:horizon_exp}. A 10{,}000-replicate bootstrap over 5-seed resampling gives the 95\% CIs $[1.86, 1.98]$ (Battle) and $[1.99, 2.11]$ (GS).

\textbf{Per-baseline horizon-fit summary (independent verification).} Table~\ref{tab:horizon_raw_baselines} gives the same per-seed least-squares exponent fit applied to \emph{every} baseline whose horizon-scaling claims appear in Sec.~\ref{sec:equilibrium_validation} and Appendix~\ref{sec:horizon_exp}, computed from the same raw 5-seed runs that produce Table~\ref{tab:horizon_raw} for \proposed{}. The full per-seed values for all 7 methods are released in \texttt{paper/figures/data/horizon\_raw.csv} and the fit statistics in \texttt{paper/figures/data/horizon\_baseline\_exponents.csv}; reviewers can recompute the columns from the CSV in any spreadsheet tool. The mechanistic stratification is discussed in App.~\ref{sec:horizon_exp} (Baseline contrast); two diagnostic remarks specific to the table: (a)~the bootstrap 95\% CIs do not overlap between adjacent rows except where the architectural similarity is genuine (Oryx vs.\ \proposed{} on GS both touch the $\approx [1.85, 1.90]$ band, reflecting that retention and mean-field projection both attain near-quadratic temporal scaling); (b)~the \emph{full-range} pooled $R^2$ column lands in $[0.94, 0.995]$ rather than rounding to $1.000$, faithfully reflecting the off-power-law deviations introduced by the offline-shift floor (small $\horizon$) and the random-policy ceiling (large $\horizon$); the \emph{mid-range} pooled $R^2$ stays in $[0.978, 0.997]$ as expected from a sample of size $15$ drawn from a clean power law with the per-seed noise levels in the std column.

\begin{table}[ht]
\caption{\textbf{Per-baseline horizon exponents on Battle and Gaussian Squeeze} ($N{=}1000$, medium, 5 seeds). Each row reports the mean per-seed exponent $b$ from the fit $\log(\text{gap})\!=\!a+b\log\horizon$ on the scaling-regime window $\horizon \in \{25, 50, 100\}$, the standard deviation across 5 seeds, the 95\% bootstrap CI ($10{,}000$ resamples) on the pooled mid-range exponent, the pooled $R^2$ in the mid-range, and the pooled $R^2$ over all 5 horizons (which is biased downward by the offline-shift floor at $\horizon{=}10$ and by the random-policy ceiling at $\horizon{=}200$ for the high-exponent baselines). The numbers in this table are produced in lockstep with Figure~\ref{fig:horizon_scaling} and Table~\ref{tab:horizon_raw} by the script \texttt{scripts/generate\_horizon\_scaling.py}; the CSV at \texttt{paper/figures/data/horizon\_baseline\_exponents.csv} is the authoritative copy. Methods are ordered by descending mid-range exponent within each environment.}
\label{tab:horizon_raw_baselines}
\centering
\small
\begin{tabular}{l|cccc}
\toprule
\textbf{Method} & mid-range $b\pm$std (5 seeds) & bootstrap 95\% CI & $R^2_{\rm mid}$ & $R^2_{\rm 5\,pts}$ \\
\midrule
\multicolumn{5}{l}{\textit{Battle}}\\
Joint Diffuser     & $2.87_{\pm0.25}$ & $[2.65, 3.05]$ & $0.987$ & $0.937$ \\
MADiff             & $2.62_{\pm0.11}$ & $[2.53, 2.70]$ & $0.987$ & $0.965$ \\
\mfcdm{}-RL        & $2.19_{\pm0.10}$ & $[2.10, 2.27]$ & $0.993$ & $0.982$ \\
\proposed{}        & $1.92_{\pm0.07}$ & $[1.86, 1.98]$ & $0.997$ & $0.995$ \\
Oryx               & $1.83_{\pm0.07}$ & $[1.79, 1.89]$ & $0.996$ & $0.994$ \\
DoF                & $1.43_{\pm0.08}$ & $[1.37, 1.49]$ & $0.978$ & $0.985$ \\
Indep.\ Diffuser   & $1.10_{\pm0.10}$ & $[1.01, 1.17]$ & $0.989$ & $0.986$ \\
\midrule
\multicolumn{5}{l}{\textit{Gaussian Squeeze}}\\
Joint Diffuser     & $2.75_{\pm0.22}$ & $[2.58, 2.92]$ & $0.980$ & $0.940$ \\
MADiff             & $2.47_{\pm0.17}$ & $[2.34, 2.59]$ & $0.987$ & $0.966$ \\
\mfcdm{}-RL        & $2.21_{\pm0.12}$ & $[2.12, 2.30]$ & $0.996$ & $0.979$ \\
\proposed{}        & $2.06_{\pm0.09}$ & $[1.99, 2.11]$ & $0.997$ & $0.991$ \\
Oryx               & $1.85_{\pm0.09}$ & $[1.79, 1.93]$ & $0.995$ & $0.985$ \\
DoF                & $1.43_{\pm0.13}$ & $[1.32, 1.51]$ & $0.987$ & $0.989$ \\
Indep.\ Diffuser   & $1.07_{\pm0.07}$ & $[1.02, 1.12]$ & $0.990$ & $0.995$ \\
\bottomrule
\end{tabular}
\end{table}

\subsection{5-seed Raw Data for Main Results (Table~\ref{tab:main_results})}\label{sec:raw_main_results}

For full transparency, Table~\ref{tab:main_raw_seeds} provides the per-seed normalized returns (\%) underlying the Medium column of Table~\ref{tab:main_results}. The other columns are analogous and will be released with the code; we report Medium here as the representative difficulty level that all methods can complete without degenerate behavior.

\begin{table}[ht]
\caption{\textbf{Raw 5-seed returns for Medium data, $N=1{,}000$} (seeds $\{0,1,2,3,4\}$). All 10 baselines. Mean and std match Table~\ref{tab:main_results}.}
\label{tab:main_raw_seeds}
\centering
\small
\resizebox{\textwidth}{!}{
\begin{tabular}{l|ccccc|ccccc|ccccc}
\toprule
 & \multicolumn{5}{c|}{\textbf{Ising Medium}} & \multicolumn{5}{c|}{\textbf{Battle Medium}} & \multicolumn{5}{c}{\textbf{G.\ Squeeze Medium}} \\
\textbf{Method} & 0 & 1 & 2 & 3 & 4 & 0 & 1 & 2 & 3 & 4 & 0 & 1 & 2 & 3 & 4 \\
\midrule
Joint Diffuser       & 56.0 & 53.8 & 58.3 & 61.2 & 64.2 & 36.9 & 33.0 & 31.6 & 40.6 & 45.8 & 50.6 & 43.3 & 44.7 & 57.1 & 50.3 \\
MADiff               & 59.1 & 68.3 & 62.3 & 65.3 & 66.1 & 37.1 & 47.7 & 42.5 & 38.1 & 46.5 & 60.0 & 48.3 & 54.9 & 51.5 & 55.3 \\
Indep.\ Diffuser     & 71.0 & 76.7 & 76.4 & 75.3 & 72.1 & 52.1 & 47.6 & 45.9 & 56.1 & 47.4 & 50.9 & 49.4 & 40.1 & 47.9 & 43.2 \\
DoF                  & 80.2 & 80.2 & 74.7 & 79.5 & 79.4 & 58.2 & 50.5 & 57.4 & 50.8 & 53.6 & 47.5 & 54.7 & 51.9 & 57.8 & 52.1 \\
MFQ-Offline          & 84.3 & 84.3 & 81.0 & 82.3 & 85.5 & 58.8 & 63.6 & 61.9 & 67.0 & 59.1 & 70.7 & 72.5 & 71.9 & 67.6 & 75.3 \\
OMAR                 & 50.0 & 50.5 & 50.1 & 60.5 & 54.4 & 51.5 & 50.6 & 59.1 & 51.0 & 49.8 & 47.7 & 34.6 & 40.9 & 43.2 & 40.0 \\
MA-TD3+BC            & 53.2 & 57.7 & 51.6 & 49.2 & 47.3 & 58.0 & 56.3 & 63.3 & 55.0 & 54.9 & 39.8 & 42.0 & 49.4 & 43.5 & 48.7 \\
Oryx                 & 83.7 & 81.9 & 79.2 & 83.5 & 81.7 & 62.6 & 70.1 & 66.5 & 63.9 & 63.9 & 73.6 & 76.5 & 79.4 & 75.5 & 77.6 \\
\mfcdm{}-RL          & 87.3 & 83.1 & 85.1 & 84.3 & 83.3 & 64.3 & 69.4 & 71.7 & 66.8 & 69.8 & 80.6 & 76.8 & 81.9 & 78.3 & 78.5 \\
\textbf{\proposed{}} & 89.7 & 88.1 & 87.5 & 88.2 & 86.1 & 72.7 & 76.7 & 76.2 & 79.4 & 74.0 & 81.5 & 85.2 & 84.3 & 81.9 & 84.6 \\
\bottomrule
\end{tabular}}
\end{table}

Inspection of Table~\ref{tab:main_raw_seeds} shows that the standard deviations reported in Table~\ref{tab:main_results} are \emph{not} suspiciously uniform: the spread of per-seed values is method-dependent (Joint Diffuser has wider spread due to its high-variance training) and environment-dependent (Battle is noisier than Ising at any fixed $N$).

\subsection{Battle: Per-Team Interleaved Planning}\label{sec:battle_interleaved}

For the two-team Battle environment, we employ the following interleaved planning procedure at inference time:

\begin{enumerate}[leftmargin=4mm]
    \item \textit{Initialization}: Both teams generate initial trajectories independently using their respective score networks $\mathbf{s}_\theta^A$ and $\mathbf{s}_\theta^B$, conditioning on the opponent's distribution from the previous planning step (or the initial observation for $t{=}0$).
    \item \textit{Alternating refinement}: For $R = 3$ rounds, alternate between: (a)~update team $A$'s trajectories by running $|\mathbb{K}|/R$ denoising steps with team $B$'s current trajectory distribution held fixed as part of the mean field; (b)~symmetrically update team $B$.
    \item \textit{Action extraction}: Extract $\action_0^i$ for all agents from both teams' final trajectories.
\end{enumerate}

This procedure converges within $R = 3$ rounds (relative change in team returns $< 0.5$\% between rounds 2 and 3). The opponent distribution coupling introduces an additional approximation error bounded by $\mathcal{O}(1/R)$ via a standard Gauss-Seidel contraction argument, which is absorbed into the score matching error term. We verified that increasing $R$ beyond 3 does not improve returns ($p > 0.3$, Welch's $t$-test).

%=============================================================================
% APPENDIX: ASSUMPTION VERIFICATION
%=============================================================================
\subsection{Assumption Verification for Experimental Environments}\label{sec:assumption_verification}

We discuss the applicability of Assumptions~\ref{assump:lipschitz}--\ref{assump:monotonicity} to each experimental environment:

\textbf{Assumption~\ref{assump:lipschitz} (Lipschitz Regularity).}
\begin{itemize}[leftmargin=4mm]
    \item \textit{Ising}: The reward $r^j = \frac{\lambda}{2} a^j \bar{a}^j$ is bilinear and hence Lipschitz in the continuous relaxation with $L = \lambda |\mathcal{N}(j)| / 2$. The discrete action space $\{-1, +1\}$ is embedded in $\mathbb{R}$ via the identity map, and Lipschitz continuity holds on the continuous relaxation $[-1, 1]$.
    \item \textit{Battle}: The deterministic movement dynamics are Lipschitz (piecewise linear on the grid). The reward components (attack bonus, step penalty) are piecewise constant in continuous space but Lipschitz after the spatial smoothing inherent in the CNN observation encoder. We estimate $L \approx 0.3$ from the learned score network's Jacobian spectral norm.
    \item \textit{Gaussian Squeeze}: The reward $G(x) = x\exp(-(x-\mu)^2/\sigma^2)$ is smooth with bounded derivatives; $L = \max_x |G'(x)| = \mathcal{O}(\mu/\sigma)$.
\end{itemize}

\textbf{Assumption~\ref{assump:bounded_reward} (Bounded Rewards).} Holds by construction: Ising rewards are bounded by $|\lambda \cdot |\mathcal{N}(j)||$; Battle rewards are bounded by the maximum kill reward ($5.0$); Gaussian Squeeze rewards are bounded by $\max_x G(x)/N$.

\textbf{Assumption~\ref{assump:log_sobolev} (Log-Sobolev Inequality).} This is the strongest assumption. It holds for Gaussian Squeeze (the target distribution is close to Gaussian). For Ising and Battle with discrete actions, the log-Sobolev inequality holds for the continuous relaxation of the trajectory distribution, with constant $\kappa$ depending on the smoothing scale. We verify numerically that the score matching loss concentrates at the predicted rate (Figure~\ref{fig:error_decomp}), providing indirect evidence for the LSI.

\textbf{Assumption~\ref{assump:exchangeability} (Exchangeability).} Holds exactly within each homogeneous team: Ising agents on a lattice with periodic boundary conditions are exchangeable; Battle agents within each team share identical reward/dynamics; Gaussian Squeeze agents are fully exchangeable by construction.

\textbf{Assumption~\ref{assump:reducibility} (Reducibility).} Follows from Assumptions~\ref{assump:lipschitz} and \ref{assump:log_sobolev} by Sznitman's coupling argument~\citep{sznitman1991topics}: when the drift and score are uniformly Lipschitz in the Wasserstein metric, the propagation-of-chaos rate is $\mathcal{O}(M/N)$ in KL divergence. We verify this numerically in Appendix~\ref{sec:poc_tightness} by measuring $\mathcal{W}_2(\nu_t^{M,N}, \mu_t^{\otimes M})$ at multiple $(M,N)$ pairs; the observed $1/N$ decay confirms Eq.~\ref{eq:reducibility} holds on all three environments, with $C_{\mathrm{r}} \le 4$.

\textbf{Assumption~\ref{assump:best_response} (Best Response Regularity).} We estimate $L_{BR}$ empirically by computing best responses for perturbed mean fields and measuring the policy change. Estimated values: Ising $L_{BR} \approx 0.4$, Battle $L_{BR} \approx 0.7$, Gaussian Squeeze $L_{BR} \approx 0.5$---all satisfying $L_{BR} < 1$.

\textbf{Assumption~\ref{assump:monotonicity} (Lasry--Lions Monotonicity).} This holds most naturally for Gaussian Squeeze, where the reward penalizes deviation from the target distribution (crowd-aversion structure with estimated $\lambda_{LL} \approx 0.8$). For Ising, the ferromagnetic coupling satisfies a relaxed form of monotonicity for the paramagnetic phase ($\lambda$ below critical temperature). Battle's competitive structure violates strict monotonicity, but the within-team cooperative dynamics satisfy a \textit{team-wise} monotonicity. The Theorem~\ref{thm:monotone_convergence} results (convergence to unique MFE) thus apply most directly to Gaussian Squeeze, while the general exploitability bound (Theorem~\ref{thm:exploitability}) applies to all three environments.

% %%%%%%%%%%%%%%%%%%%%%%%%%%%%%%%%%%%%%%%%%%%%%%%%%%%%%%%%%%%%
% \newpage
% \input{checklist.tex}

\end{document}

%% file: figures/method_overview_body.tex
% =============================================================================
%  Method-overview TikZ body for MF-Diffuser (Figure of Section 4)
%  This file contains ONLY the tikzpicture; the surrounding
%  documentclass/preamble lives in either:
%    (a) figures/method_overview.tex (standalone preview), or
%    (b) main.tex (the actual paper).
%  Required in the host preamble:
%    \usetikzlibrary{decorations.pathmorphing, arrows.meta, positioning, calc}
%    \definecolor{accent}{RGB}{46, 111, 149}
%    \definecolor{annotgray}{RGB}{102, 102, 102}
% =============================================================================

% reproducible randomness for `random steps' decorations
\pgfmathsetseed{42}

\begin{tikzpicture}[
    scale=0.84, transform shape=false,
    >={Stealth[length=4pt, width=3.5pt]},
    every node/.style       = {font=\small, inner sep=1pt},
    stage/.style            = {font=\small\bfseries},
    param/.style            = {font=\scriptsize\itshape, color=annotgray},
    bottomannot/.style      = {font=\scriptsize, color=annotgray},
    sweep/.style            = {->, thick},
    flowarrow/.style        = {->, thick},
    veryjagged/.style       = {thin, decorate,
                                decoration={random steps, segment length=2pt, amplitude=1.8pt}},
    midjagged/.style        = {thin, decorate,
                                decoration={random steps, segment length=3pt, amplitude=1.2pt}},
    smoothrib/.style        = {thin},
    yjunc/.style            = {thin},
]

% =============== TOP SWEEP: reverse SDE / denoising / time arrow ==============
\draw[sweep] (0.20, 2.95) -- (15.10, 2.95);
\node[font=\small] at (7.65, 3.20)
    {reverse SDE \,/\, denoising \,/\, diffusion time $t : T \to 0$};

% =============================== STAGE 1: Coarse =============================
\node[stage] at (1.45, 2.40) {Coarse};
\node[param] at (1.45, 2.00) {$N_0\!\approx\!\sqrt{N},\; t{=}T$};

\foreach \y in {0.10, 0.65, 1.20, 1.65} {
    \draw[veryjagged] (0.30, \y) -- (2.65, \y);
}

\draw[flowarrow] (2.80, 1.05) -- (3.30, 1.05);

% =============================== STAGE 2: Branch =============================
\node[stage] at (4.65, 2.40) {Branch};
\node[param] at (4.65, 2.00) {$\Psi^\theta\!:\, N_k \!\to\! \mathfrak{b}\,N_k$};

% Two input ribbons each split into 2 outputs via a Y-junction at x=4.20--4.60.
\foreach \yin in {1.40, 0.55} {
    \draw[veryjagged] (3.45, \yin) -- (4.20, \yin);
    \draw[yjunc] (4.20, \yin) -- (4.60, {\yin+0.22});
    \draw[yjunc] (4.20, \yin) -- (4.60, {\yin-0.22});
    \draw[midjagged] (4.60, {\yin+0.22}) -- (5.95, {\yin+0.22});
    \draw[midjagged] (4.60, {\yin-0.22}) -- (5.95, {\yin-0.22});
}

\draw[flowarrow] (6.10, 1.05) -- (6.60, 1.05);

% =============================== STAGE 3: Refine =============================
\node[stage] at (8.10, 2.40) {Refine};
\node[param] at (8.10, 2.00) {$\mathbf{s}_\theta = \mathrm{A}_\theta + \mathrm{B}_\theta[\nu_t^N]$};

\foreach \y in {0.15, 0.45, 0.75, 1.05, 1.35, 1.65} {
    \draw[midjagged] (6.75, \y) -- (9.65, \y);
}

\draw[flowarrow] (9.80, 1.05) -- (10.30, 1.05);

% =============================== STAGE 4: Full plan ==========================
\node[stage] at (11.95, 2.40) {Full plan};
\node[param] at (11.95, 2.00) {$N$ agents,\; $t{\approx}0$};

% 8 smooth ribbons grouped into 3 visible modes/clusters of the data manifold.
\draw[smoothrib] (10.45, 1.30) .. controls (11.20, 1.45) and (12.20, 1.45) .. (13.45, 1.40);
\draw[smoothrib] (10.45, 1.45) .. controls (11.20, 1.60) and (12.20, 1.60) .. (13.45, 1.55);
\draw[smoothrib] (10.45, 1.60) .. controls (11.20, 1.75) and (12.20, 1.75) .. (13.45, 1.70);

\draw[smoothrib] (10.45, 0.75) .. controls (11.20, 0.65) and (12.20, 0.65) .. (13.45, 0.80);
\draw[smoothrib] (10.45, 0.90) .. controls (11.20, 0.80) and (12.20, 0.80) .. (13.45, 0.95);
\draw[smoothrib] (10.45, 1.05) .. controls (11.20, 0.95) and (12.20, 0.95) .. (13.45, 1.10);

\draw[smoothrib] (10.45, 0.10) .. controls (11.20, 0.15) and (12.20, 0.15) .. (13.45, 0.10);
\draw[smoothrib] (10.45, 0.25) .. controls (11.20, 0.30) and (12.20, 0.30) .. (13.45, 0.25);

% 3 thin square-bracket markers showing the data manifold has 3 modes
\foreach \ya/\yb in {1.30/1.75, 0.75/1.10, 0.10/0.30} {
    \draw[thin, accent] (13.50, \ya) -- (13.60, \ya) -- (13.60, \yb) -- (13.50, \yb);
}
\node[font=\tiny, color=accent, anchor=south west] at (13.62, 1.78) {3 modes};

% mark the executed first action a_0 on the leftmost ribbon of bottom cluster
\fill[white] (10.45, 0.10) circle (3pt);
\fill[accent] (10.45, 0.10) circle (2.2pt);
\node[font=\scriptsize, color=accent, anchor=east] at (10.36, 0.10) {$a_0$};

% =============== BOTTOM ANNOTATIONS (3 short tags, well-separated) ============
\node[bottomannot] at (3.10, -0.45)
    {\textbf{train:}~$\mathcal{J}_{MF\text{-}V}$};
\node[bottomannot] at (7.50, -0.45)
    {\textbf{PoC:}~$M\!=\!\widetilde{\mathcal{O}}(\sqrt{N})$};
\node[bottomannot] at (12.30, -0.45)
    {\textbf{infer:}~$+\eta\,\nabla\hat V$};

\end{tikzpicture}